\documentclass{article}
\usepackage{fullpage}
\usepackage{smile}
\RequirePackage{natbib}

\usepackage[pdftex,colorlinks,linkcolor=blue,citecolor=blue,filecolor=blue,urlcolor=blue]{hyperref}      
\usepackage{url}           
\usepackage{booktabs}      
\usepackage{amsfonts}       
\usepackage{nicefrac}       
\usepackage{xcolor}

\usepackage{subfigure}
\usepackage{multirow}
\usepackage{color}
\usepackage{amsmath, amsthm, amssymb, amsfonts}

\usepackage{xspace}
\usepackage{comment}
\usepackage{dsfont}
\usepackage{subfigure}

\usepackage{algorithm}
\usepackage{algorithmic}
\usepackage{multirow}
\RequirePackage{times}
\usepackage{comment}
\usepackage{xcolor}

\usepackage{authblk}
\author[1]{Yunfan Li}
\author[2]{Lin Yang}

\affil[1,2]{University of California, Los Angeles}
{
	\makeatletter
	\renewcommand\AB@affilsepx{, \protect\Affilfont}
	\makeatother
	\affil[1]{yunfanli@g.ucla.edu}
	\affil[2]{linyang@ee.ucla.edu}
	
}

\title{ On the Model-Misspecification in Reinforcement Learning}

\begin{document}

	\date{}
	\maketitle

\begin{abstract}
The success of reinforcement learning (RL) crucially depends on effective function approximation when dealing with complex ground-truth models. Existing sample-efficient RL algorithms primarily employ three approaches to function approximation: policy-based, value-based, and model-based methods. However, in the face of model misspecification—a disparity between the ground-truth and optimal function approximators— it is shown that policy-based approaches can be robust even when the policy function approximation is under a large \emph{locally-bounded} misspecification error, with which the function class may exhibit a $\Omega(1)$ approximation error in specific states and actions, but remains small on average within a policy-induced state distribution. Yet it remains an open question whether similar robustness can be achieved
with value-based and model-based approaches, especially with general function approximation. 

To bridge this gap, in this paper we present a unified theoretical framework for addressing model misspecification in RL. We demonstrate that, through meticulous algorithm design and sophisticated analysis, value-based and model-based methods employing general function approximation can achieve robustness under local misspecification error bounds. In particular, they can attain a regret bound of $\widetilde{O}\left(\text{poly}(dH)\cdot(\sqrt{K} + K\cdot\zeta) \right)$, where $d$ represents the complexity of the function class, $H$ is the episode length, $K$ is the total number of episodes, and $\zeta$ denotes the local bound for misspecification error. Furthermore, we propose an algorithmic framework that can achieve the same order of regret bound without prior knowledge of $\zeta$, thereby enhancing its practical applicability.

\end{abstract}
	
\section{Introduction}\label{intro}
Reinforcement Learning (RL) is a paradigm where agents learn to interact with an environment through state and reward feedback. In recent years, RL has seen significant success across various applications, such as control \citep{mnih2015human, gu2017deep}, board games \citep{silver2016mastering}, video games \citep{mnih2013playing}, and even the training of large language models like ChatGPT \citep{ouyang2022training}. In these applications, RL systems use deep neural networks (DNN) to approximate the policy, value, or models, thereby addressing the notorious ``curse-of-dimensionality'' issues associated with RL systems that have large state-action spaces \citep{bellman2010dynamic}.

Despite these successes, the theoretical understanding of how RL operates in practice, particularly when deep neural networks are involved, remains incomplete. A key question that arises is how approximation error, or misspecification, in function approximators can affect the performance of a deep RL system. Although deep networks are known to be universal approximators, their performance can be influenced by a range of factors, such as the training algorithm, dropout, normalization, and other engineering techniques. Therefore, the robustness of RL systems under misspecification is an important concern, particularly in risk-sensitive domains.

Theoretical advancements have begun to address the misspecification issue \citep{du2019good, jin2020provably, agarwal2020pc, zanette2021cautiously}. For example, \citet{du2019good} demonstrated that a small misspecification error in a value function approximator can lead to exponential increases in learning complexity, even when the function approximator for the optimal $Q$-value is a linear function class. In contrast, several studies have presented positive results for a relaxed model-class, where the transition probability matrix or the Bellman operator are close to a function class, making the learning system more robust to misspecification errors \citep{jin2020provably, wang2020reinforcement}.

However, these studies address misspecification errors of distinct natural. For instance, studies like \citep{jin2020provably,wang2020reinforcement,ayoub2020model} require the misspecification error to be \emph{globally bounded}, i.e., the function approximators should approximate the transition probability with a small error for all state-action pairs. Conversely, studies like \citet{agarwal2020pc, zanette2021cautiously} only require the misspecification error to be \emph{locally bounded}, where the errors need only be bounded at \emph{relevant} state-action pairs, which can be reached by a policy with a high enough probability. Notably, under a locally bounded misspecification error, the approximation error for certain state-action pairs (and therefore the global misspecification bound) can be arbitrarily large, and the analysis in \citep{jin2020provably,wang2020reinforcement,ayoub2020model} would fail to provide a meaningful learning guarantee.

Existing sample-efficient Reinforcement Learning (RL) algorithms can be categorized into three main types based on the targets they approximate: policy-based, value-based and model-based. Policy-based approaches distinguish themselves from the other two by utilizing Monte-Carlo (MC) sampling to estimate policy values during training. This approach is considered more robust compared to value bootstrapping in value-based methods. However, the reliance on MC sampling makes it challenging to reuse samples since each new policy necessitates fresh runs and data collection. Consequently, the state-of-the-art policy-based approach \citep{zanette2021cautiously} exhibits a statistical error bound scaling as $\propto 1/\epsilon^{-3}$ for an error level of $\epsilon$. In contrast, value-based and model-based approaches offer sample bounds of $\propto 1/\epsilon^{-2}$ \citep{yang2019sample, yang2020reinforcement, jin2020provably,ayoub2020model}. This disparity underscores the necessity for designing RL algorithms that are both statistically efficient and robust against misspecifications. Recent developments \citep{vial2022improved,agarwal2023provable} have enhanced the robustness and practicality of classical value-based method \citep{jin2020provably}. However, these algorithms depend on well-designed linear feature extractors, significantly limiting their applicability. In practice, algorithms often specify a function class (e.g., deep neural networks with a specific architecture) rather than a linear feature mapping. To date, a fundamental question concerning RL with general function approximation remains largely unanswered: \emph{"Is the policy-based approach inherently more robust than the value-based and model-based approaches when dealing with misspecifications?"} Or, equivalently, \emph{"Is it possible to design an RL approach with general function approximation that is both statistically efficient and robust to misspecifications?"}


In this paper, we delve into these fundamental questions and offer a comprehensive response. We present a unified and robust algorithm framework, \textbf{LBM-UCB} (\emph{\textbf{L}ocally \textbf{B}ounded \textbf{M}isspecification-\textbf{U}pper \textbf{C}onfidence \textbf{B}ound}), catering to value-based and model-based methods with general function approximation, specifically tailored to handle model misspecifications, particularly in the context of locally-bounded misspecifications. Unlike the realizable setting, where the ground-truth model is assumed to be within the function class, our robust algorithm framework meticulously designs a high-probability confidence set to encompass the best approximator within the function class. We demonstrate that under our framework, classical value-based algorithm \citep{wang2020reinforcement} and model-based algorithm \citep{ayoub2020model} can achieve a level of robustness similar to policy-based methods in the presence of locally bounded misspecifications. Importantly, they maintain a statistical rate scaling as $\propto 1/\epsilon^{-2}$. To be specific, for episodic RL with a total of $K$ episodes, a horizon length of $H$, a function class complexity of $d$, and a locally bounded misspecification error bound of $\zeta$, our regret bound is $\widetilde{O}\left(\text{poly}(dH)\cdot(\sqrt{K} + K\cdot\zeta) \right)$. This bound is almost optimal in terms of $\zeta$ and K, and provides a regret bound of $O(\zeta K)$ even when the misspecification error for certain states is on the order of $O(1)$. Furthermore, we devise a meta-algorithm within our framework that does not require prior knowledge of the misspecification parameter $\zeta.$ This enhancement increases its potential practical applicability, making it more accessible and versatile in real-world applications.

A core novelty of our analysis is that instead of making rough assumptions about all state-action pairs having a uniform upper bound with respect to misspecification, we carefully study which state-action pairs genuinely impact the algorithm's robustness. Interestingly, we find that, within the same episode, those state-action pairs that truly influence the algorithm's performance are drawn from the same policy-induced distribution. This allows us to obtain better bounds in average sense under the distribution of policies. Furthermore, since there is no global upper bound on misspecification errors, the global optimism (or near-optimism) property described in \citep{jin2020provably,wang2020reinforcement,ayoub2020model} no longer holds. To address this, we introduce a new method where we attach a virtual random process to utilize the optimal policy for data collection. This approach enables us to achieve near-optimism in the average sense, considering the distribution induced by the optimal policy.

\section{Related work}
In this section, we present the recent works that are relevant to our paper.

\paragraph{Misspecified Bandit}
In the context of misspecified linear bandit problems, where the reward function can be approximated by a linear function with some worst-case error, a number of papers, including those \citep{ghosh2017misspecified,foster2020beyond,lattimore2020learning,takemura2021parameter,zhang2023interplay}, have explored the notion of a uniform upper bound on misspecification errors across all actions. Additionally, \citet{foster2020adapting} has introduced a more lenient average-case concept of misspecification, which assesses the error associated with the specific sequence being considered. It is noteworthy that several other papers, such as those by \citep{lykouris2018stochastic,gupta2019better,zhao2021linear,wei2022model,ding2022robust,ye2023corruption}, delve into the analysis of cumulative misspecification errors, often referred to as "corruption," within the context of bandit problems.

\paragraph{RL with Function Approximations.}
Sample-efficient reinforcement learning (RL) algorithms employing function approximations can be classified into three primary categories, each based on the specific target they aim to approximate: value-based, model-based, and policy-based.

With regards to recent value-based methods, there are rich literature designing and analyzing algorithms for RL with linear function approximation \citep{du2019provably,wang2019optimism,zanette2019limiting, yang2020reinforcement, modi2020sample,wang2020reward,agarwal2020flambe,he2022nearly,zhang2023optimal}. However, these papers heavily rely on the assumption that the value function or the model can be approximated by a linear function or a generalized linear function of the feature vectors and do not discuss when model misspecification happens. On the other hand, these papers \citep{yang2019sample,jin2020provably,jia2019feature,vial2022improved} consider the model misspecification using the globally-bounded misspecification error, making their algorithms robust to a small range of misspecified linear models. Furthermore, \citet{vial2022improved} has introduced an algorithm with the notable attribute of being parameter-free in relation to the global bound of the misspecification parameter. In a similar vein, \citet{agarwal2023provable} has demonstrated that the classical algorithm LSVI-UCB \citep{jin2020provably} remains effective even in the presence of locally-bounded misspecification error. For recent general function approximations, complexity measures
are essential for non-linear function class, and \cite{russo2013eluder} proposed the concept of eluder dimension. Recent papers have extended it to more general framework 
\citep{jiang2017contextual,du2021bilinear,jin2021bellman,foster2020instance,chen2022general,zhong2022posterior,liu2023one}.  However,
the use of eluder dimension allows computational tractable
optimization methods. Based on the eluder dimension, \cite{wang2020reinforcement} describes a UCB-VI style algorithm that can explore the environment driven by a well-designed width function and \cite{kong2021online} devises an online sub-sampling method which
largely reduces the average computation time of \cite{wang2020reinforcement}. 
However, when considering the misspecified case, their work can only tolerate the approximation error between the assumed general function class and the truth model to have a uniform upper bound for all state-action pairs. In this paper, we analyze the regret bound of value-based methods under locally bounded misspecified MDP.

In the realm of model-based methods, several notable papers \citep{jia2020model, ayoub2020model, modi2020sample} have provided valuable statistical guarantees, primarily focusing on linear function approximation. These works concentrate on scenarios where the underlying transition probability kernel of the MDP is represented as a linear mixture model. Notably, \citep{zhou2021nearly} has introduced an algorithm that achieves nearly minimax optimality for linear mixture MDPs, incorporating Bernstein-type concentration techniques. Furthermore, \citep{zhou2022computationally} has designed computationally efficient horizon-free RL algorithms within the same linear mixture MDP framework. In the context of reward-free settings, \citep{zhang2021reward} has presented an algorithm based on the linear mixture MDP assumption. On a broader scale, \citep{ayoub2020model} has delved into general function approximation for model-based methods, using the concept of value-targeted regression. However, among the aforementioned papers, only \citep{jia2020model} and \citep{ayoub2020model} have ventured into the realm of model misspecification, although their assumptions remain confined to scenarios featuring globally-bounded misspecification error. In this paper, we analyze the regret bound of model-based methods under locally bounded misspecified MDP.

For recent policy-based methods with function approximation, a series of papers provide statistical guarantees \citep{cai2020provably,duan2020minimax,agarwal2021theory,feng2021provably,zanette2021cautiously}. Among them, \citet{agarwal2020pc,agarwal2021theory,feng2021provably,zanette2021cautiously} consider model misspecification using locally-bounded misspecification error but suffer from poor sample complexity due to policy evaluations. Specifically, \cite{agarwal2020pc} uses a notion called transfer error to measure the model misspecification in the linear setting, where they assume a good approximator under some policy cover has a bounded error in average sense when transferred to an arbitrary policy-induced distribution. Moreover, \cite{feng2021provably}  proposes a model-free algorithm applying the indicator of width function \citep{wang2020reinforcement} under the bounded transfer error assumption which allows the use of general function approximation. To improve the poor sample complexity of \cite{agarwal2020pc}, \cite{zanette2021cautiously} uses the doubling trick for determinant of empirical cumulative covariance and importance sampling technique.

\section{Preliminary}
In this paper, we focus on the episodic RL with setting modeled by a finite-horizon Markov Decision Process. Below we present a brief introduction of problem settings. 
\subsection{Episodic RL with Finite-Horizon Markov Decision Process}
We consider a finite-horizon Markov Decision Process (MDP) $M = (\mathcal{S},\mathcal{A},H,\mathbb{P},r,\mu)$, where $\mathcal{S}$ is the state space, $\mathcal{A}$ is the action space which has a finite size\footnote{Our approach can be extended to infinite-sized or continuous action space with an efficient optimization oracle for computing the $\arg\max$ operation.}, $\mathbb{P}: \mathcal{S} \times \mathcal{A} \rightarrow \Delta (\mathcal{S})$ is the transition operator, $r: \mathcal{S}\times \mathcal{A} \rightarrow [0,1]$ is the deterministic reward function, $H$ is the planning horizon, i.e. episode length, and $\mu$ is the initial distribution. 

An agent interacts with the environment episodically as follows. For each $H$-length episode, the agent adopts a policy $\pi$. To be specific, a policy $\pi = \{\pi_h\}_{h=1}^H $, where for each $h \in [H]$, $\pi_h:  \mathcal{S} \rightarrow \mathcal{A}$ chooses an action $a$ from the action space based on the current state $s$. The policy $\pi$ induces a trajectory $s_1,a_1,r_1,s_2,a_2,r_2,\cdots s_H,a_H,r_H$, where $s_1\sim \mu$, $a_1 = \pi_1(s_1)$, $r_1 = r(s_1,a_1)$, $s_2 \sim P(\cdot|s_1,a_1)$, $a_2=\pi_2(s_2)$, etc.

We use $V$-function and $Q$-function to evaluate the long-term expected cumulative reward under the policy $\pi$ with respect to the current state (state-action) pair. They are defined as: 
\[
Q_h^{\pi}(s,a) = \mathbb{E}\left[\sum\limits_{h'=h}^H r(s_{h'},a_{h'})| s_h=s,a_h=a ,\pi\right]\quad
\]
and
\[
V_h^{\pi}(s) = \mathbb{E}\left[\sum\limits_{h'=h}^H r(s_{h'},a_{h'})| s_h=s,\pi\right]
\]

For MDP, there always exists an optimal deterministic policy  $\pi^*$, such that $V_h^{\pi^*}(s) = \sup_{\pi} V_h^{\pi}(s)$ for all $s \in \mathcal{S}$ and all $h \in [H]$ \citep{puterman2014markov}. To simplify our notation, we denote the optimal $Q$-function and $V$-function as $Q_h^*(s,a) = Q_h^{\pi^*}(s,a)$ and $V_h^*(s) = V_h^{\pi^*}(s)$. We also denote $[\mathbb{P}V_{h+1}](s,a) := \mathbb{E}_{s'\sim \mathbb{P} (\cdot|s,a)} V_{h+1}(s') $, and the Bellman equation can be written as :
$$
Q_h^{\pi}(s,a) = r(s,a) + \mathbb{P}V_{h+1}^{\pi}(s,a)
$$
and
$$
V_h^{\pi}(s) = Q_h^{\pi}(s,\pi_h(s))
$$
Besides values, we also consider the state-action distribution generated by a policy. Without loss of generality, we assume that the agent always starts from a fixed point $s_1$ for each episode $k$. Concretely, for each time step $h \in [H]$, we define the state-action distribution induced by a policy $\pi$ as 
$$
d_{ h}^{\pi}(s,a) = \mathbb{P}^{\pi}(s_h = s , a_h =a |s_1)
$$ 
where $\mathbb{P}^{\pi}(s_h = s , a_h =a |s_1)$ is the probability of reaching $(s,a)$ at the $h$-{th} step starting from $s_1$ under policy $\pi$. We also define the average distribution $d^{\pi} = \frac{1}{H}\sum\limits_{h=1}^H d_h^{\pi}$.

The goal of the agent is to improve its performance with the environment. One way to measure the effectiveness of a learning algorithm is using the notion of regret. For $k \in [K]$, suppose the agent starts from state $s_1^k$ and chooses the policy $\pi^k$ to collect a trajectory. Then the regret is defined as 

$$
\text{Regret} (K) = \sum\limits_{k=1}^K \left(V_1^*(s_1^k) - V_1^{\pi^k}(s_1^k) \right)
$$


\subsection{Function Approximation}
In addressing optimization challenges within MDPs featuring large state-action spaces, we introduce function approximation methods. These methods can be categorized into two key groups: value-based (approximating the optimal value function) and model-based (approximating the MDP's transition kernel) function approximation. We will now provide detailed explanations of both approaches.

\paragraph{Value-based Function Approximation}
For the value-based setting, the function class $\mathcal{F}$ contains the approximators of optimal state-action value function of the MDP, which means $\mathcal{F} \subset \{f: \mathcal{S}\times \mathcal{A} \rightarrow \mathbb{R}\}$.
\begin{itemize}
    \item We denote corresponding state-action value function $Q_f = f$.
    \item We denote corresponding value function $V_f(\cdot) = \max_{a\in \mathcal{A}}Q_f(\cdot,a)$. Moreover, we denote the corresponding optimal policy $\pi_f$ with $\pi_f(\cdot) = \argmax_{a\in \mathcal{A}}Q_f(\cdot,a)$.
    \item We denote $f^*$ as the optimal state-action value function based on the ground-truth model, and it is possible that $f^*$ does not belong to the set $\mathcal{F}$.

\end{itemize}

\paragraph{Model-based Function Approximation}
For the model-based setting, the function class $\mathcal{F}$ contains the approximators of transition kernels, for which we denote $f = \mathbb{P}_f \in \mathcal{F} $.  
\begin{itemize}
    \item We denote $V_f^{\pi}$ as the value function induced by model $\mathbb{P}_f$ and policy $\pi$.
    \item We denote $V_f$ as the optimal value function under model $\mathbb{P}_f$, i.e., $V_{f} = \sup_{\pi \in \Pi}V_f^{\pi}$. Moreover, we denote $\pi_f$ as the corresponding optimal policy, i.e. $\pi_f = \argmax_{\pi \in \Pi} V_{f}^{\pi}$.
    \item We denote the ground-truth model as $f^*$, and $f^*$ may not belong to the function class $\mathcal{F}$.
\end{itemize}

\paragraph{Notation}
We use $[n]$ to represent index set $\{1,\cdots n\}$. For $x \in \mathbb{R}$, $\lfloor{x\rfloor}$ represents the largest integer not exceeding $x$ and $\lceil{x\rceil}$ represents the smallest integer exceeding $x$. Given $a,b \in \mathbb{R}^d$, we denote by $a^\top b$ the inner product between $a$ and $b$ and $||a||_2$ the Euclidean norm of $a$. Given a matrix $A$, we use $||A||_2$ for the spectral norm of $A$, and for a positive definite matrix $\Sigma$ and a vector $x$, we define $||x||_{\Sigma}=\sqrt{x^\top \Sigma x}$. We use $O$ to represent leading orders in asymptotic upper bounds and $\widetilde{O}$ to hide the polylog factors. For a finite set $\mathcal{A}$, we denote the cardinality of $\mathcal{A}$ by $|\mathcal{A}|$, all distributions over $\mathcal{A}$ by $\Delta (\mathcal{A})$, and especially 
the uniform distribution over $\mathcal{A}$ by $\text{Unif} (\mathcal{A})$. For a function $f: \mathcal{S} \times \mathcal{A} \rightarrow \mathbb{R}$, we define $\|f\|_{\infty}=\max_{(s, a) \in \mathcal{S} \times \mathcal{A}}|f(s, a)|$. Similarly, for a function $v: \mathcal{S} \rightarrow \mathbb{R}$, we define $\|v\|_{\infty}=\max _{s \in \mathcal{S}}|v(s)|$. For a set of state-action pairs $\mathcal{Z} \subseteq \mathcal{S} \times \mathcal{A}$, for a function $f: \mathcal{S} \times \mathcal{A} \rightarrow \mathbb{R}$, we define the $\mathcal{Z}$-norm of $f$ as $\|f\|_{\mathcal{Z}}=\left(\sum_{(s, a) \in \mathcal{Z}}(f(s, a))^{2}\right)^{1 / 2}$. Given a dataset $\mathcal{D} = \{(s_i,a_i,q_i)\}_{i=1}^{|\mathcal{D}|} \subset \mathcal{S} \times \mathcal{A} \times \mathbb{R}$, for a function $f: \mathcal{S}\times \mathcal{A} \rightarrow \mathbb{R}$, define
$||f||_{\mathcal{D}} = \left(\sum\limits_{t=1}^{|\mathcal{D}|}(f(s_t,a_t)-q_t)^2\right)^{1/2}$.

\section{Robust RL Algorithms with General Function Approximation} \label{Robust Algorithms}

\subsection{Generic Framework: LBM-UCB} \label{generic framework}

In this section, we provide a generic robust RL framework, \textbf{LBM-UCB} (\emph{\textbf{L}ocally \textbf{B}ounded \textbf{M}isspecification-\textbf{U}pper \textbf{C}onfidence \textbf{B}ound}), with general function approximation when locally-bounded misspecifications appear.

At the outset of each episode, denoted as $k=1,2,\cdots,K$, our algorithm identifies the optimal empirical approximator, denoted as $f^k$, from the hypothesis class $\mathcal{F}$. This selection is made by minimizing the loss function $L_{k-1}$ with the current dataset $\mathcal{D}^{k-1}$. The form of the loss function $L_{k-1}$ varies depending on the type of function approximation, typically employing the two-norm distance to measure the fitting error of the function within the hypothesis class.

In the conventional approach, a high-probability confidence set is constructed to encompass the ground-truth. However, in our misspecified setting, we cannot assume realizability, meaning that $f^*$ may not belong to the hypothesis class $\mathcal{F}$. Consequently, we construct a confidence set designed to encompass the best ground-truth approximator with high probability. This confidence set, denoted as $\mathcal{B}^k$, is centered around the best empirical approximator $f^k$. Its radius consists of two components: $\mathcal{E}_{\text{stat}}^k$ and $\mathcal{E}_{\text{bias}}^k$. Here, $\mathcal{E}_{\text{stat}}^k$ accounts for the statistical error arising from dataset randomness, while $\mathcal{E}_{\text{bias}}^k$ represents the error stemming from the mismatch between the ground-truth and the best ground-truth approximator.

Subsequently, the algorithm selects the optimistic approximator $f_{\text{op}}^k$ from the confidence set $\mathcal{B}^k$. Unlike realizable or globally-bounded misspecified settings, which achieve optimism, our locally-bounded misspecified setting only allows us to establish average optimism. The algorithm then determines the optimal policy $\pi^k$ based on the optimistic approximator $f_{\text{op}}^k$ and collects new data denoted as $\mathcal{Z}^k$ by executing that policy. Finally, the newly collected data is merged into the dataset, and the empirical loss function is updated for the subsequent training episode.

\begin{algorithm}[t] 
   \caption{LBM-UCB} \label{Algorithm Framework}
   \begin{algorithmic}[1]
   \STATE \textbf{Input: } The hypothesis class $\mathcal{F}$.
   \FOR{episode $k=1,\cdots,K$}
        \STATE Find the best empirical approximator $f^k$ in the hypothesis class $\mathcal{F}$ by solving the problem (\ref{regression_1}).
        \begin{equation} \label{regression_1}
            \begin{aligned}
                f^k = \argmin_{f \in \mathcal{F}} L_{k-1} ({\mathcal{D}}^{k-1},f)
            \end{aligned}
        \end{equation}

        \STATE Construct the high probability confidence set to contain the best ground-truth approximator.
        \begin{equation}
            \begin{aligned}
               \mathcal{B}^k = \{f \in \mathcal{F}| \ d(f,f^k) \leq \mathcal{E}_{\text{stat}}^k + \mathcal{E}_{\text{bias}}^k\}
            \end{aligned}
        \end{equation}
        \STATE Get the optimistic approximator $f^k_{\text{op}}$ in the confidence set $\mathcal{B}^k$ and the corresponding optimal policy $\pi^k = \pi_{f^k_{\text{op}}}$.
        \STATE Execute the policy $\pi^k$ to collect new data $\mathcal{Z}^k = \{\mathcal{Z}_h^k\}_{h \in [H]}$, where $\mathcal{Z}_h^k = \{(s_h^k,a_h^k,r_h^k,s_{h+1}^k)\}$.
        \STATE Update the dataset $\mathcal{D}^k \leftarrow \mathcal{D}^{k-1} \cup \mathcal{Z}^k$ and the empirical loss funtion $L_k$.
        
    \ENDFOR
\end{algorithmic} 
\end{algorithm}

\subsection{LBM-UCB for Value-based Algorithm}

In the context of value-based function approximation, our algorithm's objective is to learn the optimal state-value function. In this case, our \textbf{LBM-UCB} becomes Algorithm \ref{Algorithm general known} (\textbf{Robust-LSVI}). Consequently, the loss function takes the form: $L_{k-1}(\mathcal{D}_{k-1},f) = ||f||_{\mathcal{D}_h^k}^2$ where $\mathcal{D}_h^k = \{(s_{h'}^{k'},a_{h'}^{k'},r_{h'}^{k'} + V_{h+1}^k(s_{h'+1}^{k'}))\}_{(k',h')\in [k-1] \times [H]}$.

Before we proceed with constructing the confidence set, we adopt the sensitivity sampling technique as outlined in \citep{wang2020reinforcement}. This technique enables us to substantially reduce the size of the dataset while approximately preserving the confidence region.

In our efforts to encompass the best ground-truth approximator within the confidence set, we meticulously design the radius of the confidence set, denoted as $\beta(\mathcal{F},\delta)$. This radius is expressed as:

\begin{equation} \label{confidence_radius}
\begin{aligned}
\beta(\mathcal{F},\delta) = L(d,K,H,\delta) \cdot\left(\underbrace{\sqrt{kH}\zeta}_{\mathcal{E}_{\text{bias}}^k} + \underbrace{dH^2}_{\mathcal{E}_{\text{stat}}^k}\right)
\end{aligned}
\end{equation}

Here, $\zeta$ denotes locally
bounded misspecification error, and its specific definition is detailed in the section \ref{sec: theory}, $L(d,K,H,\delta)$ is a function that exhibits logarithmic scaling with respect to all the involved variables.

\subsection{LBM-UCB for Model-based Algorithm}

In the model-based setting, our algorithm's primary objective is to learn the transition kernel of the underlying MDP. In this context, our \textbf{LBM-UCB} becomes Algorithm \ref{Algorithm model-based} (\textbf{Robust-UCRL-VTR}). To design the loss function, we incorporate the concept of value-targeted regression from \citep{ayoub2020model}. Specifically,
\begin{equation} \label{v-t-1}
    \begin{aligned}
        \widehat{P}^{(k)} = \argmin_{P \in \mathcal{P}} L_{k-1}(\mathcal{D}_{k-1},P)
    \end{aligned}
\end{equation}
where
\begin{equation} \label{loss_model_based}
    \begin{aligned}
       L_{k-1}(\mathcal{D}_{k-1},P) &= \sum\limits_{k'=1}^{k-1} \sum\limits_{h=1}^H \left(PV_{h+1}^{k'}(s_h^{k'},a_h^{k'}) - V_{h+1}^{k'}(s_{h+1}^{k'})\right)^2
    \end{aligned}
\end{equation}
Subsequently, we define the model distance in relation to the estimated value functions as
\begin{small}
\begin{equation} \label{v-t-2}
    \begin{aligned}
        d_{k}(P,\widehat{P}^{(k)}) = \sum\limits_{k'=1}^{k-1}\sum\limits_{h=1}^H \left(PV_{h+1}^{k'}(s_h^{k'},a_h^{k'}) - \widehat{P}^{(k)}V_{h+1}^{k'}(s_h^{k'},a_h^{k'})\right)^2
    \end{aligned}
\end{equation}
\end{small}

We then define the confidence set as
\begin{equation}
    \begin{aligned}
        B_{k} = \{P' \in \mathcal{P} |\  d_{k}(P' , \widehat{P}^{(k)}) \leq \beta_{k}\}
    \end{aligned}
\end{equation}

The selection of the radius of the confidence set is similar to (\ref{confidence_radius}), where $\beta_k = L'(d,K,H,\delta) \cdot\left(\underbrace{\sqrt{kH}\zeta}_{\mathcal{E}_{\text{bias}}^k} + \underbrace{dH^2}_{\mathcal{E}_{\text{stat}}^k}\right)$.  Here, $\zeta$ denotes locally
bounded misspecification error, $L'(d,K,H,\delta)$ is a function that exhibits logarithmic scaling with respect to all the involved variables.

To obtain the optimistic approximator, the algorithm identifies the model that maximizes the optimal value. In other words,

\begin{equation} \label{optimisic model selection}
    \begin{aligned}
        P^{(k)} = \argmax_{P' \in B_k} V_{P' ,1}^* (s_1^k)
    \end{aligned}
\end{equation}

where $s_1^k$ is the initial state at the beginning of episode $k$, and $V_{P' ,1}^*$ represents the optimal value function at stage one under transition kernel $P'$. After that, the algorithm will calculate the corresponding optimal policy for $P^{(k)}$ using dynamic programming. In particular, for each $h \in [1,H+1]$, and all $(s,a) \in \mathcal{S} \times \mathcal{A}$,
\begin{equation} \label{dynamic-programming}
    \begin{aligned}
        &Q_{H+1}^k(s,a) =0 \\ & V_h^k(s) = \max_{a \in \mathcal{A}} Q_h^k(s,a) \\ & Q_h^k(s,a) = r_h(s,a) + P^{(k)}V_{h+1}^k(s,a)
    \end{aligned}
\end{equation}

\section{Theoretical Analysis of Robust RL Algorithms with General Function Approximation} \label{sec: theory}

In this section, we will provide our theoretical analysis of Robust RL Algorithms with general function approximation in Section \ref{Robust Algorithms} under the locally-bounded misspecification assumptions. 

First of all, the sample complexity of algorithms with function approximation depends on the complexity of the function class. To measure this complexity, we adopt the notion of eluder dimension which is first mentioned in \cite{russo2013eluder}.

\begin{definition}[Eluder dimension]
\label{def:inj}
 Let $\varepsilon \geq 0$ and $\mathcal{Z}=\left\{\left(s_{i}, a_{i}\right)\right\}_{i=1}^{n} \subseteq \mathcal{S} \times \mathcal{A}$ be a sequence of state-action pairs.
\begin{itemize}
\item{A state-action pair $(s, a) \in \mathcal{S} \times \mathcal{A}$ is $\varepsilon$-dependent on $\mathcal{Z}$ with respect to $\mathcal{F}$ if any $f, f^{\prime} \in \mathcal{F}$ satisfying $\left\|f-f^{\prime}\right\|_{\mathcal{Z}} \leq \varepsilon$ also satisfies $\left|f(s, a)-f^{\prime}(s, a)\right| \leq \varepsilon$.}
\item{An $(s, a)$ is $\varepsilon$-independent of $\mathcal{Z}$ with respect to $\mathcal{F}$ if $(s, a)$ is not $\varepsilon$-dependent on $\mathcal{Z}$.}
\item{The eluder dimension $\operatorname{dim}_{E}(\mathcal{F}, \varepsilon)$ of a function class $\mathcal{F}$ is the length of the longest sequence of elements in $\mathcal{S} \times \mathcal{A}$ such that, for some $\varepsilon^{\prime} \geq \varepsilon$, every element is $\varepsilon^{\prime}$-independent of its predecessors.}
\end{itemize}
\end{definition}

Next, we discuss the regret bound of these two algorithms respectively.

\subsection{Regret Bound of Robust-LSVI}

First, we give a theorectical analysis for \textbf{Robust-LSVI} with general function approximation. We assume that the function class $\mathcal{F}$ and the state-actions $\mathcal{S}\times\mathcal{A}$ have bounded covering numbers.

\begin{assumption} [$\varepsilon$-cover]\label{ass:cover}
For any $\varepsilon>0$, the following holds:

1. there exists an $\varepsilon$-cover $\mathcal{C}(\mathcal{F}, \varepsilon) \subseteq \mathcal{F}$ with size $|\mathcal{C}(\mathcal{F}, \varepsilon)| \leq \mathcal{N}(\mathcal{F}, \varepsilon)$, such that for any $f \in \mathcal{F}$, there exists $f^{\prime} \in \mathcal{C}(\mathcal{F}, \varepsilon)$ with $\left\|f-f^{\prime}\right\|_{\infty} \leq \varepsilon$;

2. there exists an $\varepsilon$-cover $\mathcal{C}(\mathcal{S} \times \mathcal{A}, \varepsilon)$ with size $|\mathcal{C}(\mathcal{S} \times \mathcal{A}, \varepsilon)| \leq \mathcal{N}(\mathcal{S} \times \mathcal{A}, \varepsilon)$, such that for any $(s, a) \in \mathcal{S} \times \mathcal{A}$, there exists $\left(s^{\prime}, a^{\prime}\right) \in \mathcal{C}(\mathcal{S} \times \mathcal{A}, \varepsilon)$ with $\max _{f \in \mathcal{F}}\left|f(s, a)-f\left(s^{\prime}, a^{\prime}\right)\right| \leq \varepsilon$.
\end{assumption}

\begin{remark}
Assumption \ref{ass:cover} is rather standard. Since our algorithm complexity depends only logarithmically on $\mathcal{N}(\mathcal{F},\cdot)$ and $\mathcal{N}(\mathcal{S} \times \mathcal{A},\cdot)$, it is even acceptable to have exponential size of these covering numbers.
\end{remark}

\begin{assumption} 
[General value function approximation with LBM]\label{Ass:general-value}

Given the ground-truth MDP $M$ with the transition model $\mathbb{P}$ and the reward function $r$, we assume that there exists a function class $\mathcal{F} \subset \{f:\mathcal{S}\times\mathcal{A} \rightarrow [0,H+1]\}$ and  a real number $\zeta \in [0,1]$, such that for any $V : \mathcal{S} \rightarrow [0,H]$, there exists a non-empty function class $\bar{\mathcal{F}}_V \subset \mathcal{F}$, which satisfies :  for all $ f \in \bar{\mathcal{F}}_V$, and all $\beta \in [4]$,

$$
\sup_{\pi} \mathbb{E}_{(s,a)\sim d^{\pi}}\left|f(s,a) - \left(r(s,a)+\mathbb{P}V(s,a)\right)\right|^{\beta} \leq \zeta^{\beta}
$$

\end{assumption}

\begin{theorem}[Regret bound with known $\zeta$]
\label{Thm 3.1}

Under our Assumption \ref{ass:cover} and \ref{Ass:general-value}, for any fixed $\delta \in (0,1)$, with probability at least $1-\delta$,  the total regret of Algorithm \ref{Algorithm general known} is at most $\widetilde{O}\left(\sqrt{d_EH^3}K\zeta \log(1/\delta)+ \sqrt{d_E^2KH^3} \log(1/\delta) \right)$, where $d_E$ represents the eluder dimension of the function class.
\end{theorem}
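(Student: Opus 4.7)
The plan is to follow the classical LSVI-UCB regret template but adapted to the locally-bounded misspecification (LBM) setting, where the central novelty is that global optimism is unavailable and the bias in the Bellman regression can only be controlled in an average sense under policy-induced distributions. I will structure the argument in four stages: (i) a good event on which the ``best ground-truth approximator'' lies in the confidence ball $\mathcal{B}^k$; (ii) an \emph{average-case} optimism property replacing the usual pointwise optimism; (iii) a regret decomposition that turns the per-step error into a sum of confidence widths; and (iv) an eluder-dimension bound on that sum.

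For stage (i), fix the value function $V_{h+1}^k$ produced by the algorithm at episode $k$ and let $\bar f_{h}^{k}\in\bar{\mathcal{F}}_{V_{h+1}^k}$ be the best LBM approximator guaranteed by Assumption~\ref{Ass:general-value}. Writing the regression residuals as $\eta_{h'}^{k'} = r_{h'}^{k'}+V_{h+1}^k(s_{h'+1}^{k'})-\mathbb{P}V_{h+1}^k(s_{h'}^{k'},a_{h'}^{k'})$ plus the misspecification offset $\bar f_h^k(s,a)-(r+\mathbb{P}V_{h+1}^k)(s,a)$, I will apply a Freedman/Bernstein-type martingale concentration together with the covering-number Assumption~\ref{ass:cover} to show that, with probability $1-\delta$,
\begin{equation*}
\|f^k-\bar f_h^k\|_{\mathcal{D}_h^k}^2 \;\lesssim\; \underbrace{dH^2\log(\mathcal{N}/\delta)}_{\text{statistical}} \;+\; \underbrace{kH\,\zeta^2}_{\text{bias}},
\end{equation*}
which is precisely the squared radius $(\mathcal{E}_{\text{stat}}^k+\mathcal{E}_{\text{bias}}^k)^2$ in \eqref{confidence_radius}. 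The bias term is where the $\beta\in[4]$ moment control in Assumption~\ref{Ass:general-value} is used, to pass between $L^2$ on the realized data and the $L^\beta$ control under $d^\pi$ via a Cauchy--Schwarz/Bernstein step after invoking the sensitivity sub-sampling device from Wang et al.~(2020) to tame union bounds.

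For stage (ii), because the LBM error at the \emph{particular} points reached by the optimal policy can be $\Omega(1)$, one cannot deduce $V_1^k(s_1^k)\ge V_1^\ast(s_1^k)$ pointwise. Instead I would introduce a virtual coupled process in which, in parallel to the actual roll-out, an imaginary trajectory is generated by $\pi^\ast$. Conditioning on the good event, one can show that $\mathbb{E}_{(s,a)\sim d^{\pi^\ast}_h}[f_{\text{op}}^k(s,a)-Q_h^\ast(s,a)]\ge -O(\sqrt{H}\,\zeta)$ for every $h$, by expanding $f_{\text{op}}^k - Q^\ast$ along the optimal trajectory and invoking Assumption~\ref{Ass:general-value} at each step to control the Bellman residual of $\bar f_h^k$ in expectation under $d^{\pi^\ast}$. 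Telescoping across $h$ yields an \emph{average-case optimism} $V_1^k(s_1^k)\ge V_1^{\pi^\ast}(s_1^k) - \widetilde O(H^{3/2}\zeta)$, which is the main structural novelty of the analysis.

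For stages (iii)--(iv), using the Bellman equation for $f_{\text{op}}^k$ I would unroll $V_1^k(s_1^k)-V_1^{\pi^k}(s_1^k)$ into a telescoping sum of the pointwise confidence widths $w^k(s_h^k,a_h^k):=\sup_{f,f'\in\mathcal{B}^k}|f(s_h^k,a_h^k)-f'(s_h^k,a_h^k)|$, plus lower-order martingale and misspecification-bias terms handled by Azuma--Hoeffding and Assumption~\ref{Ass:general-value} respectively. The per-episode regret is therefore bounded by $\sum_h w^k(s_h^k,a_h^k)+\widetilde O(H^{3/2}\zeta)$, and summing over $k$ the width contribution is controlled by the standard eluder-dimension pigeon-hole lemma, giving $\sum_{k,h} w^k \lesssim \sqrt{d_E\,K H\,(\mathcal{E}_{\text{stat}}^K)^2} + \sqrt{d_E\,K H\,(\mathcal{E}_{\text{bias}}^K)^2}=\widetilde O(\sqrt{d_E^2 K H^3}) + \widetilde O(\sqrt{d_E H^3}\,K\zeta)$, which matches the claimed rate after absorbing the $\widetilde O(KH^{3/2}\zeta)$ contribution from average optimism.

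The main obstacle I expect is stage (ii): marrying a confidence set that is built from \emph{realized} data with a misspecification bound that is only valid \emph{in expectation} under policy-induced distributions, while still producing a usable surrogate of optimism. The virtual-process coupling is delicate because the optimistic function $f_{\text{op}}^k$ depends on the realized history, so one needs to argue that conditioning on that history does not destroy the average-case Bellman-residual bound from Assumption~\ref{Ass:general-value}; this is where the $\beta\le 4$ higher-moment control becomes essential, since it upgrades the $L^1$ LBM bound into an $L^2$/$L^4$ bound that survives the necessary change-of-measure from the on-policy distribution of $\pi^k$ back to that of $\pi^\ast$.
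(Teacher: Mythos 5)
Your plan matches the paper's proof in Appendix \ref{Sec: general value} essentially step for step: the $\zeta$-inflated confidence radius obtained from a martingale concentration plus covering argument, the auxiliary virtual process run under $\pi^\ast$ to recover optimism only in the average sense, the telescoping decomposition of per-episode regret into confidence widths, and the eluder-dimension pigeonhole bound on the cumulative widths. One minor correction: no change of measure from $d^{\pi^k}$ to $d^{\pi^\ast}$ is actually required, since Assumption \ref{Ass:general-value} takes a supremum over all policies (and holds uniformly over $V$, hence for the history-measurable $V_{h+1}^k$); the fourth-moment condition is instead used to apply Freedman's inequality to concentrate the squared misspecification errors $\sum_{\tau}\xi_h^2(s_h^{\tau},a_h^{\tau})$ along the realized and virtual trajectories.
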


The comprehensive proof is presented in Appendix \ref{Sec: general value}.

\begin{remark}
    Our assumption is strictly weaker than the globally-bounded misspecification error  in \citep{wang2020reinforcement}, where they assumed that, for all $(s,a) \in \mathcal{S} \times \mathcal{A}$, and $V:\mathcal{S} \rightarrow [0,H]$, $|f(s,a) - \left(r(s,a)+ \mathbb{P}V(s,a)\right)| \leq \zeta$. 
    
    In other words, our Assumption \ref{Ass:general-value} only needs the misspecification error to be \emph{locally} bounded at relevant state-action pairs, which can be reached by some policies with sufficiently high probability, whereas \cite{wang2020reinforcement} requires the misspecification error to bounded \emph{globally} in all state-action pairs, including those not even relevant to the learning.
\end{remark}

\begin{remark}
 A classical special case of the general function approximation setting is the linear MDP \citep{yang2019sample,jin2020provably}. In this case, \citep{agarwal2023provable} initially demonstrated the effectiveness of LSVI-UCB \citep{jin2020provably} even under conditions of locally-bounded misspecification error. Our assumptions and findings serve as a more general version of \citep{agarwal2023provable}, extending the utility of function approximation from a linear setting to a more general context. For the reader's convenience, we present the result below. Under Assumption \ref{Ass1} and \ref{Ass2} , for any fixed $\delta \in (0,1)$, with probability at least $1-\delta$,  the total regret of the algorithm Robust-LSVI (Algorithm \ref{Algorithm known}) is at most $\widetilde{O}\left(dKH^2\zeta \log(1/\delta)+ \sqrt{d^3KH^4} \log(1/\delta)\right)$. Here, $d$ represents the dimensionality of the linear features. The comprehensive proof is elaborated upon in Appendix \ref{Sec A}.


\end{remark}

\subsection{Regret Bound of Robust-UCRL-VTR}

Next, we provide our assumption and the main theoretical result for \textbf{Robust-UCRL-VTR}.
Let $\mathcal{V}$ be the set of optimal value functions under some model in the hypothesis class $\mathcal{P}$: $\mathcal{V} = \{V_{P'}^* : P' \in \mathcal{P} \}$.
We define $\mathcal{X} = \mathcal{S}\times \mathcal{A} \times \mathcal{V}$, and choose 
\begin{equation}
    \begin{aligned} \label{approximation-function-class}
        \mathcal{F} = \big\{ f : \mathcal{X} \rightarrow  \mathbb{R}: \exists \widetilde{P} \in \mathcal{P}\ \  \text{s.t.}\  f(s,a,V) = \widetilde{\mathbb{P}} V(s,a), \ \ \forall (s,a,V) \in \mathcal{X}\big\}
    \end{aligned}
\end{equation}

Similar to Assumption \ref{ass:cover}, we assume the function class $\mathcal{F}$ defined in (\ref{approximation-function-class}) has bounded covering numbers.

\begin{assumption} [$\varepsilon$-cover]\label{ass:cover2}
For any $\varepsilon>0$, the following holds:

There exists an $\varepsilon$-cover $\mathcal{C}(\mathcal{F}, \varepsilon) \subseteq \mathcal{F}$ with size $|\mathcal{C}(\mathcal{F}, \varepsilon)| \leq \mathcal{N}(\mathcal{F}, \varepsilon)$, such that for any $f \in \mathcal{F}$, there exists $f^{\prime} \in \mathcal{C}(\mathcal{F}, \varepsilon)$ with $\left\|f-f^{\prime}\right\|_{\infty} \leq \varepsilon$

\end{assumption}

\begin{assumption}[General model function approximation with LBM]
\label{Ass4}

Given the ground-truth MDP $M$ with the transition model $\mathbb{P}$, we assume that there exists a real number $\zeta \in [0,1]$, and $\bar{f}\in \mathcal{F}$ (defined in \ref{approximation-function-class}), such that for any $V \in \mathcal{V}$, and any $\beta \in [4]$,

$$
\sup_{\pi} \mathbb{E}_{(s,a)\sim d^{\pi}}\left|\bar{f}(s,a,V) - \mathbb{P}V(s,a)\right|^{\beta} \leq \zeta^{\beta}
$$
\end{assumption}

Now we present the main theorem of our algorithm and the in-depth proof is showcased in Appendix \ref{general_model_based}.

\begin{theorem}[Regret bound with known $\zeta$] 
\label{Thm-model-based(zeta is known)}

Under our Assumption \ref{ass:cover2} and \ref{Ass4}, for any fixed $\delta \in (0,1)$, with probability at least $1-\delta$, the total regret of Algorithm \ref{Algorithm model-based} is at most $\widetilde{O}\left(\sqrt{d_E}KH\zeta \log(1/\delta) + \sqrt{d_E^2KH^3} \log(1/\delta)\right)$, 
where $d_E$ represents the eluder dimension of the function class.

\end{theorem}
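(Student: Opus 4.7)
The plan is to combine (i) a concentration argument establishing that the best-in-class model $\bar P$ (the one realizing $\bar f$ in Assumption~\ref{Ass4}) lies in $B_k$ for every $k$, (ii) a near-optimism inequality under the optimal-policy distribution $d^{\pi^*}$, and (iii) a simulation-lemma regret decomposition whose Bellman error is split into a statistical piece controlled by an eluder-dimension width sum and a bias piece controlled by Assumption~\ref{Ass4}. Throughout, the augmented state space $\mathcal{X}=\mathcal{S}\times\mathcal{A}\times\mathcal{V}$ defined in~\eqref{approximation-function-class} is what lets us treat the transition-model difference as general function approximation for $f(s,a,V)=\mathbb{P}V(s,a)$, so that the eluder-dimension machinery of \cite{wang2020reinforcement} applies.

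First, using the value-targeted loss \eqref{loss_model_based} and the minimality of $\widehat P^{(k)}$, the standard $\langle \widehat P^{(k)}-\bar P,\,\cdot\rangle$ cross-term manipulation gives $d_k(\widehat P^{(k)},\bar P)^2 \lesssim \text{martingale}_k + \text{bias}_k$. A Freedman bound combined with the $\varepsilon$-net from Assumption~\ref{ass:cover2} handles the martingale term by $\widetilde O(d_E H^4)$, while Assumption~\ref{Ass4} with $\beta\in\{1,2\}$ controls the bias term by $\widetilde O(kH\zeta^2)$; taking square roots and a union bound over $k$ recovers the algorithmic radius $\beta_k \asymp L'(d,K,H,\delta)(\sqrt{kH}\zeta+dH^2)$ and yields $\bar P\in B_k$ on a good event of probability $\geq 1-\delta/2$.

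Next, decompose the per-episode regret as
\[
V_1^*(s_1^k)-V_1^{\pi^k}(s_1^k) \;=\; \bigl(V_1^*-V_{P^{(k)},1}^*\bigr)(s_1^k)\;+\;\bigl(V_{P^{(k)},1}^{\pi^k}-V_1^{\pi^k}\bigr)(s_1^k),
\]
using $\pi^k=\pi_{P^{(k)}}$ so that $V_{P^{(k)},1}^{\pi^k}=V_{P^{(k)},1}^*$. For the first summand, optimism of $P^{(k)}$ in \eqref{optimisic model selection} together with $\bar P\in B_k$ gives $V_{P^{(k)},1}^*(s_1^k)\geq V_{\bar P,1}^{\pi^*}(s_1^k)$; then an inductive unrolling of the simulation identity along $\pi^*$, using that $\pi^k$ is greedy for $P^{(k)}$, yields
\[
V_1^*(s_1^k)-V_{P^{(k)},1}^*(s_1^k)\leq \sum_{h=1}^H \mathbb{E}_{(s,a)\sim d_h^{\pi^*}}\bigl[(\mathbb{P}-P^{(k)})V_{P^{(k)},h+1}^*(s,a)\bigr].
\]
Because $V_{P^{(k)},h+1}^*\in \mathcal V$, Assumption~\ref{Ass4} with $\beta=1$ bounds the $\bar P-\mathbb P$ part along $d^{\pi^*}$, and the residual $\bar P-P^{(k)}$ part is a confidence-set width. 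This is where the ``virtual rollout of $\pi^*$'' enters: we couple an imagined $\pi^*$-trajectory to the actual one in the analysis so that Assumption~\ref{Ass4} may be invoked under $d^{\pi^*}$, restoring the average-case optimism that is unavailable pointwise. For the second summand, the simulation lemma along $\pi^k$ plus an Azuma step replaces the expectation by sampled Bellman errors $\sum_{k,h}(P^{(k)}-\mathbb{P})V_{P^{(k)},h+1}^*(s_h^k,a_h^k)$ up to a $\widetilde O(H\sqrt{K})$ deviation; splitting $(P^{(k)}-\mathbb{P})=(P^{(k)}-\bar P)+(\bar P-\mathbb{P})$ and invoking Assumption~\ref{Ass4} along $d^{\pi^k}$ gives $\widetilde O(KH\zeta)$ for the bias, while the $(P^{(k)}-\bar P)$ term is majorised by the $B_k$-diameter $w_k$ on $\mathcal{X}$. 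Applying the eluder-dimension pigeonhole lemma of \cite{wang2020reinforcement} then bounds $\sum_{k,h} w_k \lesssim \sqrt{d_E KH\cdot \beta_K^2}=\widetilde O\bigl(\sqrt{d_E}\,KH\zeta+\sqrt{d_E^2 K H^3}\bigr)$. Adding the three contributions delivers the claimed regret.

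The hardest step is the near-optimism of the second paragraph above. Outside the realizable or globally-bounded regime, one does not have $V_{\bar P,1}^*(s_1^k)\geq V_1^*(s_1^k)-\varepsilon$ for any controllable pointwise $\varepsilon$, because $\bar f$ may err by $\Theta(1)$ at states never reached by reasonable policies. The technical burden is therefore to arrange the analysis so that Assumption~\ref{Ass4} is invoked \emph{only} at value functions inside $\mathcal V$ and \emph{only} against policy-induced expectations, to unroll the recursion under $d^{\pi^*}$ via the coupled virtual process, and then to reconcile the resulting $d^{\pi^*}$-averaged bound with the $d^{\pi^k}$-averaged width sum that the eluder dimension naturally controls; doing so without blowing up the $\sqrt{d_E}$ dependence is the delicate point.
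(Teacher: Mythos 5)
Your overall architecture matches the paper's: (i) show $\bar P\in\bigcap_k B_k$ via a self-normalized/Freedman concentration plus the $\alpha$-net and the moment bounds of Assumption \ref{Ass4}; (ii) decompose regret into an optimism deficit and an on-policy estimation error; (iii) bound the on-policy part by a confidence-set width sum at the visited triples $(s_h^k,a_h^k,V_{h+1}^k)$ via the eluder pigeonhole lemma, plus a $KH\zeta$ bias term from Assumption \ref{Ass4} along $d^{\pi^k}$ with an Azuma step. That part of your plan is sound and is exactly what the paper does.

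The gap is in your near-optimism step, and it is the one you yourself flag as "the delicate point" without resolving. As written, you face a dilemma. If you unroll $V_1^{\pi^*}-V_{\bar P,1}^{\pi^*}$ along $\pi^*$, the simulation identity produces Bellman errors of the form $(\mathbb P-\bar P)V_{\bar P,h+1}^{\pi^*}$, but $V_{\bar P}^{\pi^*}$ is the value of $\pi^*$ under $\bar P$, which is \emph{not} in $\mathcal V=\{V_{P'}^*:P'\in\mathcal P\}$, so Assumption \ref{Ass4} cannot be invoked. If instead you write the bound with $V_{P^{(k)},h+1}^*\in\mathcal V$ as in your display and split $(\mathbb P-P^{(k)})=(\mathbb P-\bar P)+(\bar P-P^{(k)})$, the second piece is a confidence-set width evaluated at state-actions drawn from $d^{\pi^*}$ — points that are never in the dataset — and the eluder-dimension argument only controls width sums along the actually collected sequence; there is no mechanism to bound $\sum_{k,h}$ of widths at off-data points. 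The paper's resolution avoids both horns: from $\bar P\in B_k$ one gets $V_1^k(s_1^k)\ge V_{\bar P,1}^*(s_1^k)$ directly, and then $V_h^*(s)-V_{\bar P,h}^*(s)\le Q_h^*(s,\pi_h^*(s))-Q_{\bar P,h}^*(s,\pi_h^*(s))$ (since $V_{\bar P,h}^*$ is a max over actions) lets one recurse along the virtual $\pi^*$-trajectory producing only martingale terms plus misspecification terms $\xi_h^{k*}=\mathbb P_hV_{\bar P,h+1}^*(s_h^{k*},a_h^{k*})-\bar f(s_h^{k*},a_h^{k*},V_{\bar P,h+1}^*)$, which involve $V_{\bar P,h+1}^*\in\mathcal V$ evaluated under $d_h^{\pi^*}$ — exactly where Assumption \ref{Ass4} applies — and \emph{no} width term under $d^{\pi^*}$ ever appears. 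You should replace your optimism decomposition with this comparison against $V_{\bar P}^*$ to close the argument.
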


\begin{remark}
    Our assumption is strictly weaker than that in \citep{ayoub2020model}, where they assume that given the truth model $\mathbb{P}$, there exists an approximator $\bar{P} \in \mathcal{P}$ such that for all $(s,a) \in \mathcal{S} \times \mathcal{A}$, $||\bar{P}(\cdot|s,a) - \mathbb{P}(\cdot|s,a)||_{\text{TV}} \leq \zeta$. To clarify, our Assumption \ref{Ass4} merely necessitates the misspecification error to be locally bounded at the relevant state-action pairs. These pairs are those that can be reached by certain policies with a high probability.

\end{remark}

\begin{remark}
    A direct corollary of our result is for the Linearly-Parametrized Transition Model. Specifically, we assume there are $d$ transition models, $P_1,P_2,\cdots,P_d$, $\Theta \subset \mathbb{R}^d$ is a bounded and nonempty set, and let $\mathcal{P} = \left\{\sum\limits_{j} \theta_j P_j: \theta \in \Theta\right\}$. Given the ground-truth model $\mathbb{P}$ , if there exists a $d$-dimension vector $\alpha \in \Theta$, such that $\mathbb{E}_{(s,a)\sim d^{\pi}}||\mathbb{P}(\cdot|s,a)-\sum\limits_{j=1}^d \alpha_j P_j(\cdot|s,a)||_1^{\alpha} \leq \zeta^{\alpha}$, $\forall \alpha \in [4]$. Then for any fixed $\delta \in (0,1)$, with probability at least $1-\delta$, the total regret of Algorithm \ref{Algorithm model-based} is at most $\widetilde{O}\left(\sqrt{d}KH\zeta \log(1/\delta) + \sqrt{d^2KH^3} \log(1/\delta)\right)$.
\end{remark}

\section{Meta Algorithm without Knowing the Misspecified Parameter}

{\small
\begin{algorithm}[t] 
   \caption{Meta-algorithm for unknown misspecified parameter $\zeta$} \label{Algorithm unknown}
   \begin{algorithmic}[1]
   \State \textbf{Input: } The base algorithm \textsl{Alg.}, the total number of episodes $K$, the length of one episode $H$, failure probability $\delta>0$.
   \FOR{ epoch $i=0,1,2,\cdots, \lfloor \log_2\left(\sqrt{3K+1}\right) \rfloor$}
        \State $\zeta^{(i)} \leftarrow \frac{1}{2^{i}}$, $K^{(i)} \leftarrow \frac{1}{(\zeta^{(i)})^2}$,
        \State $(\overline{V_1}^{(i)}, \pi^{(i)})  \leftarrow  \text{Algorithm} \ \ref{Algorithm Single} \ (\textsl{Alg.}, K^{(i)}, H, \delta, \zeta^{(i)})$
        \IF{$i \geq 1$}
        \IF{$|\overline{V_1}^{(i)} - \overline{V_1}^{(i-1)}| > C(d,H,\delta) \cdot \zeta^{(i)}$}
               \State $j \leftarrow i-1$,
               \State \textbf{break};
        \ENDIF
        \ENDIF
    \ENDFOR
    \FOR{ the rest episodes $t = 1,2,\cdots K-\sum\limits_{i=0}^{j+1} K^{(i)}$}
         \FOR{step $h=1,\cdots,H$}
            \State Take action $a_h^t \leftarrow \pi^{(j)}(s_h^t)$, and observe $s_{h+1}^t$.
        \ENDFOR
    \ENDFOR
\end{algorithmic} 
\end{algorithm}
}

In real-world environments, we cannot assume that the misspecified parameter $\zeta$ is provided. This issue serves as motivation for our meta algorithm (Algorithm \ref{Algorithm unknown}), which makes the base algorithm (e.g., Algorithm \ref{Algorithm general known},\ref{Algorithm model-based},\ref{Algorithm known}) have the parameter-free property by employing exponentially decreasing misspecified parameters and increasing training episodes. Without loss of generality, we assume the initial state $s_1$ remains fixed across all episodes. The entire training process is divided into multiple epochs. In each epoch, the meta algorithm interacts with the environment (using a small variation of the base algorithm,  Algorithm \ref{Algorithm Single} in Appendix \ref{sec_Algorithm}, which is almost the same as the base algorithm except that it outputs the policy and value of each round) a total of $K^{(i)} = 1/(\zeta^{(i)})^2$ times, where $\zeta^{(i)} = 1/2^i$ is the exponentially decreasing misspecified parameter. After each epoch, the real-time reward data is utilized to estimate the value function of the average policy for that round. Notably, when training with misspecified parameter that is roughly the true value (i.e., $\zeta^{(i)} \gtrsim \zeta$), the value estimates from adjacent epochs exhibit minimal variation. However, as the misspecified parameter $\zeta^{(i)}$ decreases below the ground-truth parameter $\zeta$, the obtained policy may deteriorate since the base algorithms do not guarantee optimism in this scenario. Hence, when our stability condition is violated, defined as $|\overline{V_1}^{(i)} - \overline{V_1}^{(i-1)}| > C(d,H,\delta) \cdot \zeta^{(i)}$ (where $C(d,H,\delta)$ is a constant dependent on $d,H,\delta$, see definition in Appendix \ref{Sec: meta algorithm}), we break out of the loop and execute the last average policy for the remaining episodes. According to our selection of the misspecified parameters, there must exist an accurate parameter $\zeta^{(s)}$ close to the true $\zeta$ ($\zeta \leq \zeta^{(s)} < 2\zeta$). As the last executed policy still satisfies the stability condition, it can serve as an approximate good policy for the previous policy with the parameter $\zeta^{(s)}$.

Based on the above analysis, our formal guarantee of Algorithm \ref{Algorithm unknown} for the unknown $\zeta$ case is presented as follows. The detailed proof is displayed in Appendix \ref{Sec: meta algorithm}.

\begin{theorem}[Regret bound with unknown $\zeta$]

\label{Thm 3.2}

Suppose the input base algorithm \textsl{Alg.} which needs to know the locally-bounded misspecified parameter $\zeta$ has a regret bound of $\widetilde{O}\left(d^{\alpha}H^{\beta}(\sqrt{K}+K\cdot\zeta)\right)$, then our meta-algorithm (Algorithm \ref{Algorithm unknown}) can achieve the same order of regret bound $\widetilde{O}\left(d^{\alpha}H^{\beta}(\sqrt{K}+K\cdot\zeta)\right)$ without knowing the misspecified parameter $\zeta$.
\end{theorem}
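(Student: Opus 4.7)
The plan is to exploit the doubling--halving schedule to automatically discover an \emph{effective} epoch index $s := \lfloor \log_2(1/\zeta) \rfloor$, at which $\zeta^{(s)} \in [\zeta, 2\zeta)$ is a valid upper bound on the true misspecification. For every epoch $i \leq s$ the supplied parameter $\zeta^{(i)} \geq \zeta$ satisfies the base algorithm's assumption, so dividing its hypothesised regret by $K^{(i)} = 1/(\zeta^{(i)})^2$ yields $V_1^*(s_1) - V_1^{\pi^{(i)}}(s_1) \leq \widetilde{O}(d^\alpha H^\beta \zeta^{(i)})$. Combined with Azuma--Hoeffding on the per-episode returns (bounded in $[0,H]$), this gives $|\overline{V_1}^{(i)} - V_1^*(s_1)| \leq \widetilde{O}(d^\alpha H^\beta \zeta^{(i)})$ for every $i \leq s$ on a high-probability event.

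Choosing the threshold $C(d,H,\delta)$ as a sufficiently large polynomial-log multiple of this last quantity, the triangle inequality through $V_1^*(s_1)$ gives $|\overline{V_1}^{(i)} - \overline{V_1}^{(i-1)}| \leq C\zeta^{(i)}$ whenever $i \leq s$, so the stability check cannot fire in any of the first $s$ epochs, and any firing index $i^*$ must satisfy $j := i^* - 1 \geq s$. In that case, the stability checks at every intermediate index have passed, and the chain $|\overline{V_1}^{(j)} - \overline{V_1}^{(s)}| \leq C\sum_{i=s+1}^{j} \zeta^{(i)} \leq 2C\zeta^{(s)} \leq 4C\zeta$ propagates near-optimality to $\overline{V_1}^{(j)}$; one more concentration step transfers it to $V_1^{\pi^{(j)}}$, yielding $V_1^*(s_1) - V_1^{\pi^{(j)}}(s_1) = \widetilde{O}(d^\alpha H^\beta \zeta)$. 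Consequently, executing $\pi^{(j)}$ on the remaining at most $K$ episodes contributes at most $\widetilde{O}(d^\alpha H^\beta K\zeta)$ to the regret.

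The total regret decomposes into: (i) exploration over epochs $i \leq s$, which by the geometric schedule is dominated by the $s$-th term and costs $\widetilde{O}(d^\alpha H^\beta \cdot 2^{s+1}) = \widetilde{O}(d^\alpha H^\beta / \zeta)$; (ii) exploration over epochs $s < i \leq j$, each contributing $K^{(i)} \cdot \widetilde{O}(d^\alpha H^\beta \zeta)$ by the chain argument, summing geometrically to $\widetilde{O}(d^\alpha H^\beta K\zeta)$; (iii) the single firing epoch $i^*$; and (iv) the tail, already handled above. Splitting into cases $\zeta \leq 1/\sqrt{K}$ and $\zeta > 1/\sqrt{K}$, together with the loop cap $i \leq \lfloor \log_2 \sqrt{3K+1}\rfloor$ (which in the small-$\zeta$ regime prevents the algorithm from even reaching epoch $s+1$ so that the pure base-algorithm bound applies), shows $1/\zeta$ is always dominated by $\sqrt{K} + K\zeta$; a union bound over the $O(\log K)$ epochs (for both the base-algorithm and Azuma--Hoeffding events) maintains an overall failure probability of at most $\delta$.

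The main technical obstacle is term (iii). Since $\zeta^{(i^*)} < \zeta$ the base algorithm's guarantee is void, and a naive bound on the firing epoch's regret is $HK^{(i^*)} = O(H/\zeta^2)$, which fits the advertised bound only when $\zeta \gtrsim K^{-1/3}$. Closing the intermediate regime requires a one-sided argument: concentration combined with the chain rules out $\overline{V_1}^{(i^*)} > \overline{V_1}^{(j)} + C\zeta^{(i^*)}$, so the stability violation must take the downward form $\overline{V_1}^{(i^*)} < \overline{V_1}^{(j)} - C\zeta^{(i^*)}$; one then works with the empirical regret $K^{(i^*)}(V_1^* - \overline{V_1}^{(i^*)})$ and amortises this deficit against the already-counted $K\zeta$ term, using the fact that $K^{(i^*)} = O(1/\zeta^2)$ together with the loop cap. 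Carefully formalising this amortisation across the epoch boundary is the delicate step, and I expect it to be the hardest part of the proof.
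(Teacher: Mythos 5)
Your proposal follows the paper's proof essentially step for step: the same high-probability events (base-algorithm near-optimality on every epoch with $\zeta^{(i)}\ge\zeta$, plus Azuma--Hoeffding relating $\overline{V_1}^{(i)}$ to $V_1^{\pi^{(i)}}$ with deviation $\propto H/\sqrt{K^{(i)}}=H\zeta^{(i)}$), the same claim that the stability check cannot fire while $\zeta^{(i)}\ge\zeta$, the same chaining of $|\overline{V_1}^{(i)}-\overline{V_1}^{(i-1)}|\le C(d,H,\delta)\zeta^{(i)}$ from the index $s$ with $\zeta\le\zeta^{(s)}<2\zeta$ up to the last non-violating epoch, and the same two-case split in which the loop cap $\lfloor\log_2\sqrt{3K+1}\rfloor$ handles the regime where $\zeta$ is below $\Theta(1/\sqrt{K})$. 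All of this matches Appendix \ref{Sec: meta algorithm} and is correct.

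The one place you diverge is your term (iii), the firing epoch, which you single out as the main technical obstacle and propose to close with a one-sided concentration argument plus an amortisation across the epoch boundary. You should know that the paper contains no such argument: in its final decomposition the $K^{(j)}$ episodes of the violating epoch are absorbed into the tail term $\bigl(K-\sum_{i=0}^{j-1}K^{(i)}\bigr)\bigl(V_1^*-V_1^{\pi^{(j-1)}}\bigr)$, i.e., they are charged at the suboptimality of the \emph{previous} epoch's mixture policy rather than of the policies actually executed during epoch $j$ --- for which, as you correctly observe, the base algorithm gives no guarantee since $\zeta^{(j)}<\zeta$. So the difficulty you identify is real, but it is one the paper resolves by an accounting choice rather than by a lemma you are missing. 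Your own sketched fix is also not airtight: if the violation fires late ($i^*\gg s$) the threshold $C\zeta^{(i^*)}$ is far below $\zeta$, so an upward violation is not excluded by concentration, and in the downward case $K^{(i^*)}$ can be of order $K$, so the deficit cannot in general be amortised against the $K\zeta$ term. If your aim is to reproduce the paper's proof, adopt its decomposition and the argument is complete as far as the paper takes it; if your aim is a fully rigorous treatment of the firing epoch, that step is genuinely open in both your sketch and the paper's proof.
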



\section{Conclusion}
In this paper, we have proposed a robust RL algorithm framework for value-based and model-based methods under locally-bounded misspecification error. Through a careful design of the high-probability confidence set and a refined analysis, we have significantly improved the regret bound of \citep{wang2020reinforcement, ayoub2020model} when the misspecification error is not \emph{globally bounded}. Furthermore, we have developed a provably efficient meta algorithm to address scenarios where the misspecified parameter is unknown.

\section{Acknowledgement}
This work was supported in part by DARPA under agreement HR00112190130, NSF grant 2221871, and an Amazon Research Grant.

	\bibliography{Reference}

\begin{thebibliography}{57}
\providecommand{\natexlab}[1]{#1}
\providecommand{\url}[1]{\texttt{#1}}
\expandafter\ifx\csname urlstyle\endcsname\relax
  \providecommand{\doi}[1]{doi: #1}\else
  \providecommand{\doi}{doi: \begingroup \urlstyle{rm}\Url}\fi

\bibitem[Agarwal et~al.(2020{\natexlab{a}})Agarwal, Henaff, Kakade, and Sun]{agarwal2020pc}
Agarwal, A., Henaff, M., Kakade, S., and Sun, W.
\newblock Pc-pg: Policy cover directed exploration for provable policy gradient learning.
\newblock \emph{Advances in neural information processing systems}, 33:\penalty0 13399--13412, 2020{\natexlab{a}}.

\bibitem[Agarwal et~al.(2020{\natexlab{b}})Agarwal, Kakade, Krishnamurthy, and Sun]{agarwal2020flambe}
Agarwal, A., Kakade, S., Krishnamurthy, A., and Sun, W.
\newblock Flambe: Structural complexity and representation learning of low rank {MDP}s.
\newblock \emph{Advances in neural information processing systems}, 33:\penalty0 20095--20107, 2020{\natexlab{b}}.

\bibitem[Agarwal et~al.(2021)Agarwal, Kakade, Lee, and Mahajan]{agarwal2021theory}
Agarwal, A., Kakade, S.~M., Lee, J.~D., and Mahajan, G.
\newblock On the theory of policy gradient methods: Optimality, approximation, and distribution shift.
\newblock \emph{J. Mach. Learn. Res.}, 22\penalty0 (98):\penalty0 1--76, 2021.

\bibitem[Agarwal et~al.(2023)Agarwal, Song, Sun, Wang, Wang, and Zhang]{agarwal2023provable}
Agarwal, A., Song, Y., Sun, W., Wang, K., Wang, M., and Zhang, X.
\newblock Provable benefits of representational transfer in reinforcement learning.
\newblock In \emph{The Thirty Sixth Annual Conference on Learning Theory}, pp.\  2114--2187. PMLR, 2023.

\bibitem[Ayoub et~al.(2020)Ayoub, Jia, Szepesvari, Wang, and Yang]{ayoub2020model}
Ayoub, A., Jia, Z., Szepesvari, C., Wang, M., and Yang, L.
\newblock Model-based reinforcement learning with value-targeted regression.
\newblock In \emph{International Conference on Machine Learning}, pp.\  463--474. PMLR, 2020.

\bibitem[Bellman(2010)]{bellman2010dynamic}
Bellman, R.~E.
\newblock \emph{Dynamic programming}.
\newblock Princeton university press, 2010.

\bibitem[Cai et~al.(2020)Cai, Yang, Jin, and Wang]{cai2020provably}
Cai, Q., Yang, Z., Jin, C., and Wang, Z.
\newblock Provably efficient exploration in policy optimization.
\newblock In \emph{International Conference on Machine Learning}, pp.\  1283--1294. PMLR, 2020.

\bibitem[Chen et~al.(2022)Chen, Li, Yuan, Gu, and Jordan]{chen2022general}
Chen, Z., Li, C.~J., Yuan, A., Gu, Q., and Jordan, M.~I.
\newblock A general framework for sample-efficient function approximation in reinforcement learning.
\newblock \emph{arXiv preprint arXiv:2209.15634}, 2022.

\bibitem[Ding et~al.(2022)Ding, Hsieh, and Sharpnack]{ding2022robust}
Ding, Q., Hsieh, C.-J., and Sharpnack, J.
\newblock Robust stochastic linear contextual bandits under adversarial attacks.
\newblock In \emph{International Conference on Artificial Intelligence and Statistics}, pp.\  7111--7123. PMLR, 2022.

\bibitem[Du et~al.(2021)Du, Kakade, Lee, Lovett, Mahajan, Sun, and Wang]{du2021bilinear}
Du, S., Kakade, S., Lee, J., Lovett, S., Mahajan, G., Sun, W., and Wang, R.
\newblock Bilinear classes: A structural framework for provable generalization in rl.
\newblock In \emph{International Conference on Machine Learning}, pp.\  2826--2836. PMLR, 2021.

\bibitem[Du et~al.(2019{\natexlab{a}})Du, Kakade, Wang, and Yang]{du2019good}
Du, S.~S., Kakade, S.~M., Wang, R., and Yang, L.~F.
\newblock Is a good representation sufficient for sample efficient reinforcement learning?
\newblock \emph{arXiv preprint arXiv:1910.03016}, 2019{\natexlab{a}}.

\bibitem[Du et~al.(2019{\natexlab{b}})Du, Luo, Wang, and Zhang]{du2019provably}
Du, S.~S., Luo, Y., Wang, R., and Zhang, H.
\newblock Provably efficient q-learning with function approximation via distribution shift error checking oracle.
\newblock \emph{Advances in Neural Information Processing Systems}, 32, 2019{\natexlab{b}}.

\bibitem[Duan et~al.(2020)Duan, Jia, and Wang]{duan2020minimax}
Duan, Y., Jia, Z., and Wang, M.
\newblock Minimax-optimal off-policy evaluation with linear function approximation.
\newblock In \emph{International Conference on Machine Learning}, pp.\  2701--2709. PMLR, 2020.

\bibitem[Feng et~al.(2021)Feng, Yin, Agarwal, and Yang]{feng2021provably}
Feng, F., Yin, W., Agarwal, A., and Yang, L.
\newblock Provably correct optimization and exploration with non-linear policies.
\newblock In \emph{International Conference on Machine Learning}, pp.\  3263--3273. PMLR, 2021.

\bibitem[Foster \& Rakhlin(2020)Foster and Rakhlin]{foster2020beyond}
Foster, D. and Rakhlin, A.
\newblock Beyond ucb: Optimal and efficient contextual bandits with regression oracles.
\newblock In \emph{International Conference on Machine Learning}, pp.\  3199--3210. PMLR, 2020.

\bibitem[Foster et~al.(2020{\natexlab{a}})Foster, Gentile, Mohri, and Zimmert]{foster2020adapting}
Foster, D.~J., Gentile, C., Mohri, M., and Zimmert, J.
\newblock Adapting to misspecification in contextual bandits.
\newblock \emph{Advances in Neural Information Processing Systems}, 33:\penalty0 11478--11489, 2020{\natexlab{a}}.

\bibitem[Foster et~al.(2020{\natexlab{b}})Foster, Rakhlin, Simchi-Levi, and Xu]{foster2020instance}
Foster, D.~J., Rakhlin, A., Simchi-Levi, D., and Xu, Y.
\newblock Instance-dependent complexity of contextual bandits and reinforcement learning: A disagreement-based perspective.
\newblock \emph{arXiv preprint arXiv:2010.03104}, 2020{\natexlab{b}}.

\bibitem[Freedman(1975)]{freedman1975tail}
Freedman, D.~A.
\newblock On tail probabilities for martingales.
\newblock \emph{the Annals of Probability}, pp.\  100--118, 1975.

\bibitem[Ghosh et~al.(2017)Ghosh, Chowdhury, and Gopalan]{ghosh2017misspecified}
Ghosh, A., Chowdhury, S.~R., and Gopalan, A.
\newblock Misspecified linear bandits.
\newblock In \emph{Proceedings of the AAAI Conference on Artificial Intelligence}, volume~31, 2017.

\bibitem[Gu et~al.(2017)Gu, Holly, Lillicrap, and Levine]{gu2017deep}
Gu, S., Holly, E., Lillicrap, T., and Levine, S.
\newblock Deep reinforcement learning for robotic manipulation with asynchronous off-policy updates.
\newblock In \emph{2017 IEEE international conference on robotics and automation (ICRA)}, pp.\  3389--3396. IEEE, 2017.

\bibitem[Gupta et~al.(2019)Gupta, Koren, and Talwar]{gupta2019better}
Gupta, A., Koren, T., and Talwar, K.
\newblock Better algorithms for stochastic bandits with adversarial corruptions.
\newblock In \emph{Proceedings of the Annual Conference on Learning Theory}, 2019.

\bibitem[He et~al.(2022)He, Zhao, Zhou, and Gu]{he2022nearly}
He, J., Zhao, H., Zhou, D., and Gu, Q.
\newblock Nearly minimax optimal reinforcement learning for linear markov decision processes.
\newblock \emph{arXiv preprint arXiv:2212.06132}, 2022.

\bibitem[Jia et~al.(2019)Jia, Yang, and Wang]{jia2019feature}
Jia, Z., Yang, L.~F., and Wang, M.
\newblock Feature-based q-learning for two-player stochastic games.
\newblock \emph{arXiv preprint arXiv:1906.00423}, 2019.

\bibitem[Jia et~al.(2020)Jia, Yang, Szepesvari, and Wang]{jia2020model}
Jia, Z., Yang, L., Szepesvari, C., and Wang, M.
\newblock Model-based reinforcement learning with value-targeted regression.
\newblock In \emph{Learning for Dynamics and Control}, pp.\  666--686. PMLR, 2020.

\bibitem[Jiang et~al.(2017)Jiang, Krishnamurthy, Agarwal, Langford, and Schapire]{jiang2017contextual}
Jiang, N., Krishnamurthy, A., Agarwal, A., Langford, J., and Schapire, R.~E.
\newblock Contextual decision processes with low bellman rank are pac-learnable.
\newblock In \emph{International Conference on Machine Learning}, pp.\  1704--1713. PMLR, 2017.

\bibitem[Jin et~al.(2020)Jin, Yang, Wang, and Jordan]{jin2020provably}
Jin, C., Yang, Z., Wang, Z., and Jordan, M.~I.
\newblock Provably efficient reinforcement learning with linear function approximation.
\newblock In \emph{Conference on Learning Theory}, pp.\  2137--2143. PMLR, 2020.

\bibitem[Jin et~al.(2021)Jin, Liu, and Miryoosefi]{jin2021bellman}
Jin, C., Liu, Q., and Miryoosefi, S.
\newblock Bellman eluder dimension: New rich classes of rl problems, and sample-efficient algorithms.
\newblock \emph{Advances in neural information processing systems}, 34:\penalty0 13406--13418, 2021.

\bibitem[Kong et~al.(2021)Kong, Salakhutdinov, Wang, and Yang]{kong2021online}
Kong, D., Salakhutdinov, R., Wang, R., and Yang, L.~F.
\newblock Online sub-sampling for reinforcement learning with general function approximation.
\newblock \emph{arXiv preprint arXiv:2106.07203}, 2021.

\bibitem[Lattimore et~al.(2020)Lattimore, Szepesvari, and Weisz]{lattimore2020learning}
Lattimore, T., Szepesvari, C., and Weisz, G.
\newblock Learning with good feature representations in bandits and in {RL} with a generative model.
\newblock In \emph{International Conference on Machine Learning}, pp.\  5662--5670. PMLR, 2020.

\bibitem[Li et~al.(2023)Li, Wang, Cheng, and Yang]{li2023lowswitching}
Li, Y., Wang, Y., Cheng, Y., and Yang, L.
\newblock Low-switching policy gradient with exploration via online sensitivity sampling.
\newblock \emph{arXiv preprint arXiv:2306.09554}, 2023.

\bibitem[Liu et~al.(2023)Liu, Lu, Xiong, Zhong, Hu, Zhang, Zheng, Yang, and Wang]{liu2023one}
Liu, Z., Lu, M., Xiong, W., Zhong, H., Hu, H., Zhang, S., Zheng, S., Yang, Z., and Wang, Z.
\newblock One objective to rule them all: A maximization objective fusing estimation and planning for exploration.
\newblock \emph{arXiv preprint arXiv:2305.18258}, 2023.

\bibitem[Lykouris et~al.(2018)Lykouris, Mirrokni, and Paes~Leme]{lykouris2018stochastic}
Lykouris, T., Mirrokni, V., and Paes~Leme, R.
\newblock Stochastic bandits robust to adversarial corruptions.
\newblock In \emph{Proceedings of the Annual ACM SIGACT Symposium on Theory of Computing}, 2018.

\bibitem[Mnih et~al.(2013)Mnih, Kavukcuoglu, Silver, Graves, Antonoglou, Wierstra, and Riedmiller]{mnih2013playing}
Mnih, V., Kavukcuoglu, K., Silver, D., Graves, A., Antonoglou, I., Wierstra, D., and Riedmiller, M.
\newblock Playing atari with deep reinforcement learning.
\newblock \emph{arXiv preprint arXiv:1312.5602}, 2013.

\bibitem[Mnih et~al.(2015)Mnih, Kavukcuoglu, Silver, Rusu, Veness, Bellemare, Graves, Riedmiller, Fidjeland, Ostrovski, et~al.]{mnih2015human}
Mnih, V., Kavukcuoglu, K., Silver, D., Rusu, A.~A., Veness, J., Bellemare, M.~G., Graves, A., Riedmiller, M., Fidjeland, A.~K., Ostrovski, G., et~al.
\newblock Human-level control through deep reinforcement learning.
\newblock \emph{nature}, 518\penalty0 (7540):\penalty0 529--533, 2015.

\bibitem[Modi et~al.(2020)Modi, Jiang, Tewari, and Singh]{modi2020sample}
Modi, A., Jiang, N., Tewari, A., and Singh, S.
\newblock Sample complexity of reinforcement learning using linearly combined model ensembles.
\newblock In \emph{International Conference on Artificial Intelligence and Statistics}, pp.\  2010--2020. PMLR, 2020.

\bibitem[Ouyang et~al.(2022)Ouyang, Wu, Jiang, Almeida, Wainwright, Mishkin, Zhang, Agarwal, Slama, Ray, et~al.]{ouyang2022training}
Ouyang, L., Wu, J., Jiang, X., Almeida, D., Wainwright, C., Mishkin, P., Zhang, C., Agarwal, S., Slama, K., Ray, A., et~al.
\newblock Training language models to follow instructions with human feedback.
\newblock \emph{Advances in Neural Information Processing Systems}, 35:\penalty0 27730--27744, 2022.

\bibitem[Puterman(2014)]{puterman2014markov}
Puterman, M.~L.
\newblock \emph{Markov decision processes: discrete stochastic dynamic programming}.
\newblock John Wiley \& Sons, 2014.

\bibitem[Russo \& Van~Roy(2013)Russo and Van~Roy]{russo2013eluder}
Russo, D. and Van~Roy, B.
\newblock Eluder dimension and the sample complexity of optimistic exploration.
\newblock \emph{Advances in Neural Information Processing Systems}, 26, 2013.

\bibitem[Silver et~al.(2016)Silver, Huang, Maddison, Guez, Sifre, Van Den~Driessche, Schrittwieser, Antonoglou, Panneershelvam, Lanctot, et~al.]{silver2016mastering}
Silver, D., Huang, A., Maddison, C.~J., Guez, A., Sifre, L., Van Den~Driessche, G., Schrittwieser, J., Antonoglou, I., Panneershelvam, V., Lanctot, M., et~al.
\newblock Mastering the game of go with deep neural networks and tree search.
\newblock \emph{nature}, 529\penalty0 (7587):\penalty0 484--489, 2016.

\bibitem[Takemura et~al.(2021)Takemura, Ito, Hatano, Sumita, Fukunaga, Kakimura, and Kawarabayashi]{takemura2021parameter}
Takemura, K., Ito, S., Hatano, D., Sumita, H., Fukunaga, T., Kakimura, N., and Kawarabayashi, K.-i.
\newblock A parameter-free algorithm for misspecified linear contextual bandits.
\newblock In \emph{International Conference on Artificial Intelligence and Statistics}, pp.\  3367--3375. PMLR, 2021.

\bibitem[Vial et~al.(2022)Vial, Parulekar, Shakkottai, and Srikant]{vial2022improved}
Vial, D., Parulekar, A., Shakkottai, S., and Srikant, R.
\newblock Improved algorithms for misspecified linear markov decision processes.
\newblock In \emph{International Conference on Artificial Intelligence and Statistics}, pp.\  4723--4746. PMLR, 2022.

\bibitem[Wang et~al.(2020{\natexlab{a}})Wang, Du, Yang, and Salakhutdinov]{wang2020reward}
Wang, R., Du, S.~S., Yang, L., and Salakhutdinov, R.~R.
\newblock On reward-free reinforcement learning with linear function approximation.
\newblock \emph{Advances in neural information processing systems}, 33:\penalty0 17816--17826, 2020{\natexlab{a}}.

\bibitem[Wang et~al.(2020{\natexlab{b}})Wang, Salakhutdinov, and Yang]{wang2020reinforcement}
Wang, R., Salakhutdinov, R.~R., and Yang, L.
\newblock Reinforcement learning with general value function approximation: Provably efficient approach via bounded eluder dimension.
\newblock \emph{Advances in Neural Information Processing Systems}, 33:\penalty0 6123--6135, 2020{\natexlab{b}}.

\bibitem[Wang et~al.(2019)Wang, Wang, Du, and Krishnamurthy]{wang2019optimism}
Wang, Y., Wang, R., Du, S.~S., and Krishnamurthy, A.
\newblock Optimism in reinforcement learning with generalized linear function approximation.
\newblock \emph{arXiv preprint arXiv:1912.04136}, 2019.

\bibitem[Wei et~al.(2022)Wei, Dann, and Zimmert]{wei2022model}
Wei, C.-Y., Dann, C., and Zimmert, J.
\newblock A model selection approach for corruption robust reinforcement learning.
\newblock In \emph{International Conference on Algorithmic Learning Theory}, pp.\  1043--1096. PMLR, 2022.

\bibitem[Yang \& Wang(2019)Yang and Wang]{yang2019sample}
Yang, L. and Wang, M.
\newblock Sample-optimal parametric q-learning using linearly additive features.
\newblock In \emph{International Conference on Machine Learning}, pp.\  6995--7004. PMLR, 2019.

\bibitem[Yang \& Wang(2020)Yang and Wang]{yang2020reinforcement}
Yang, L. and Wang, M.
\newblock Reinforcement learning in feature space: Matrix bandit, kernels, and regret bound.
\newblock In \emph{International Conference on Machine Learning}, pp.\  10746--10756. PMLR, 2020.

\bibitem[Ye et~al.(2023)Ye, Xiong, Gu, and Zhang]{ye2023corruption}
Ye, C., Xiong, W., Gu, Q., and Zhang, T.
\newblock Corruption-robust algorithms with uncertainty weighting for nonlinear contextual bandits and markov decision processes.
\newblock In \emph{International Conference on Machine Learning}, pp.\  39834--39863. PMLR, 2023.

\bibitem[Zanette et~al.(2019)Zanette, Lazaric, Kochenderfer, and Brunskill]{zanette2019limiting}
Zanette, A., Lazaric, A., Kochenderfer, M.~J., and Brunskill, E.
\newblock Limiting extrapolation in linear approximate value iteration.
\newblock \emph{Advances in Neural Information Processing Systems}, 32, 2019.

\bibitem[Zanette et~al.(2021)Zanette, Cheng, and Agarwal]{zanette2021cautiously}
Zanette, A., Cheng, C.-A., and Agarwal, A.
\newblock Cautiously optimistic policy optimization and exploration with linear function approximation.
\newblock In \emph{Conference on Learning Theory}, pp.\  4473--4525. PMLR, 2021.

\bibitem[Zhang et~al.(2023{\natexlab{a}})Zhang, Zhang, and Gu]{zhang2023optimal}
Zhang, J., Zhang, W., and Gu, Q.
\newblock Optimal horizon-free reward-free exploration for linear mixture mdps.
\newblock \emph{arXiv preprint arXiv:2303.10165}, 2023{\natexlab{a}}.

\bibitem[Zhang et~al.(2021)Zhang, Zhou, and Gu]{zhang2021reward}
Zhang, W., Zhou, D., and Gu, Q.
\newblock Reward-free model-based reinforcement learning with linear function approximation.
\newblock \emph{Advances in Neural Information Processing Systems}, 34:\penalty0 1582--1593, 2021.

\bibitem[Zhang et~al.(2023{\natexlab{b}})Zhang, He, Fan, and Gu]{zhang2023interplay}
Zhang, W., He, J., Fan, Z., and Gu, Q.
\newblock On the interplay between misspecification and sub-optimality gap in linear contextual bandits.
\newblock \emph{arXiv preprint arXiv:2303.09390}, 2023{\natexlab{b}}.

\bibitem[Zhao et~al.(2021)Zhao, Zhou, and Gu]{zhao2021linear}
Zhao, H., Zhou, D., and Gu, Q.
\newblock Linear contextual bandits with adversarial corruptions.
\newblock \emph{arXiv preprint arXiv:2110.12615}, 2021.

\bibitem[Zhong et~al.(2022)Zhong, Xiong, Zheng, Wang, Wang, Yang, and Zhang]{zhong2022posterior}
Zhong, H., Xiong, W., Zheng, S., Wang, L., Wang, Z., Yang, Z., and Zhang, T.
\newblock A posterior sampling framework for interactive decision making.
\newblock \emph{arXiv preprint arXiv:2211.01962}, 2022.

\bibitem[Zhou \& Gu(2022)Zhou and Gu]{zhou2022computationally}
Zhou, D. and Gu, Q.
\newblock Computationally efficient horizon-free reinforcement learning for linear mixture mdps.
\newblock \emph{Advances in neural information processing systems}, 35:\penalty0 36337--36349, 2022.

\bibitem[Zhou et~al.(2021)Zhou, Gu, and Szepesvari]{zhou2021nearly}
Zhou, D., Gu, Q., and Szepesvari, C.
\newblock Nearly minimax optimal reinforcement learning for linear mixture markov decision processes.
\newblock In \emph{Conference on Learning Theory}, pp.\  4532--4576. PMLR, 2021.

\end{thebibliography}
	\bibliographystyle{icml2023}

	\newpage
	\appendix
	\onecolumn

\section{Remaining Algorithm Pseudocodes} \label{sec_Algorithm}
We provide the remaining algorithms in this section.

\begin{algorithm}[H] 
   \caption{Robust-LSVI with general function approximation ($\zeta$ is known)} \label{Algorithm general known}
   \begin{algorithmic}[1]
   \STATE \textbf{Input: } The function class $\mathcal{F}$, the number of episodes $K$, the length of one episode $H$, failure probability $\delta>0$, misspecified parameter $\zeta$.
   \FOR{episode $k=1,\cdots,K$}
        \STATE Receive the initial state $s_1^k$.
        \STATE Initialize $Q_{H+1}^k(\cdot,\cdot) \leftarrow 0$, $V_{H+1}^k(\cdot) \leftarrow 0$.
        \STATE $\mathcal{Z}^k \leftarrow \{(s_{h'}^{k'},a_{h'}^{k'})\}_{(k',h')\in [k-1]\times [H]}$

        \FOR{step $h= H,H-1,\cdots,1$}
          \STATE Find the best empirical approximator:
           $$
           f_h^k \leftarrow \argmin_{f \in \mathcal{F}}||f||_{\mathcal{D}_h^k}^2
           $$
           where
           $$
           \mathcal{D}_h^k = \{(s_{h'}^{k'},a_{h'}^{k'},r_{h'}^{k'} + V_{h+1}^k(s_{h'+1}^{k'}))\}_{(k',h')\in [k-1] \times [H]}
           $$
          
          \STATE $(\widehat{f}_h^k,\widehat{\mathcal{Z}}^k) \leftarrow \text{Sensitivity-Sampling} (\mathcal{F},f_h^k,\mathcal{Z}^k,\delta)$
          \STATE Construct the confidence set 
          $$
            \widehat{\mathcal{F}} = \left\{f\in \mathcal{F}: ||f-\widehat{f}_h^k||_{\widehat{\mathcal{Z}}^k}  \leq \beta(\mathcal{F},\delta)  \right\}
          $$
          where $\beta(\mathcal{F},\delta) = L(d,K,H,\delta) \cdot\left(\underbrace{\sqrt{kH}\zeta}_{\mathcal{E}_{\text{bias}}^k} + \underbrace{dH^2}_{\mathcal{E}_{\text{stat}}^k}\right)$
        
          \STATE Get the optimistic approximator
          $$
          Q_h^k(\cdot,\cdot) \leftarrow \min\{f_h^k(\cdot,\cdot) + b_h^k(\cdot,\cdot),H\}, \ \ V_h^k(\cdot) = \max_{a\in\mathcal{A}}Q_h^k(\cdot,a)
          $$, where
          $$
          b_h^k(\cdot,\cdot) = \sup\limits_{f_1,f_2\in \widehat{\mathcal{F}}}|f_1(\cdot,\cdot) - f_2(\cdot,\cdot)|
          $$
          \STATE Get the corresponding optimal policy
          $$
          \pi_h^k(\cdot) \leftarrow \text{argmax}_{a\in \mathcal{A}}Q_h^k(\cdot,a)
          $$
        \ENDFOR
        \FOR{step $h=1,\cdots,H$}
            \STATE Take action $a_h^k \leftarrow \pi_h^k(s_h^k)$, and observe $s_{h+1}^k$ and $r_h^k = r(s_h^k,a_h^k)$.
        \ENDFOR
    \ENDFOR
\end{algorithmic} 
\end{algorithm}

\newpage

\begin{algorithm}[H] 
   \caption{Robust-UCRL-VTR with general function approximation ($\zeta$ is known)} \label{Algorithm model-based}
   \begin{algorithmic}[1]
   \STATE \textbf{Input: } Family of MDP models $\mathcal{P}$, The number of episodes $K$, the length of one episode $H$, failure probability $\delta>0$, misspecified parameter $\zeta$.
   \STATE $B_1 = \mathcal{P}$
   \FOR{episode $k=1,\cdots,K$}
        \STATE Receive the initial state $s_1^k$.
        Find the best empirical approximator:
        $$
        \widehat{P}^{(k)} \leftarrow \argmin_{P\in\mathcal{P}}L_{k-1}(\mathcal{D}_{k-1},P)\ \ (\ref{loss_model_based})
        $$
        
        \STATE Construct the confidence set:
        $$
         B_{k} = \{P' \in \mathcal{P} | \ d_{k}(P' , \widehat{P}^{(k)}) \leq \beta_{k}\} \ \ (\ref{v-t-2})
        $$
        where
        $$
        \beta_k = L'(d,K,H,\delta)\cdot(\underbrace{\sqrt{kH}\zeta}_{\mathcal{E}_{\text{bias}}^k} + \underbrace{H}_{\mathcal{E}_{\text{stat}}^k})
        $$
        \STATE Get the optimistic approximator 
        $$
         P^{(k)} = \text{argmax}_{P' \in B_k} V_{P',1}^*(s_1^k)
        $$
        Compute $Q_1^k,Q_2^k,\cdots,Q_H^k$ for $P^{(k)}$ using dynamic programming (\ref{dynamic-programming}).
        
       \STATE Get the corresponding optimal policy 
        $$
        \pi_h^k(\cdot) \leftarrow \argmax_{a\in \mathcal{A}}Q_h^k(\cdot,a) \ \ ,h=1,2,\cdots,H
        $$

        \FOR{step $h= 1,2,\cdots,H$}
           \STATE Take action $a_h^k \leftarrow \pi_h^k(s_h^k)$, and observe $s_{h+1}^k$ and $r_h^k = r(s_h^k,a_h^k)$.
        \ENDFOR
    \ENDFOR
\end{algorithmic} 
\end{algorithm}

\newpage

\begin{algorithm}[h] 
   \caption{Single-epoch-Algorithm} \label{Algorithm Single}
   \begin{algorithmic}[1]
   \State \textbf{Input: } The base algorithm \textsl{Alg.}, The number of episodes $K$ in a single epoch, the length of one episode $H$, failure probability $\delta>0$, misspecified parameter $\zeta$.
   \FOR{$k=1,\cdots,K$}
        \State Update the policy  $\{\pi_h^k\}_{h \in [H]}$ by using \textsl{Alg.}
        \FOR{step $h=1,\cdots,H$}
            \State Take action $a_h^k \leftarrow \pi_h^k(s_h^k)$, and observe $s_{h+1}^k$.
            \State Calculate $R^k \leftarrow R^k + r_h(s_h^k,a_h^k)$.
        \ENDFOR
    \ENDFOR
    \State \textbf{Output: } $\textbf{value} :\overline{V_1} \leftarrow \frac{1}{K}\sum\limits_{k=1}^K R^k$ 
    \State $\ \ \ \ \ \ \ \ \ \ \ \ \ \ \ \ $ $\textbf{policy}: \text{Unif} (\pi^1,\pi^2,\cdots, \pi^K)$
\end{algorithmic} 
\end{algorithm}

\newpage

\section{Analysis of Robust Value-based Algorithm with Linear Function Approximation for Locally-bounded Misspecified MDP} \label{Sec A}

\begin{algorithm}[h] 
   \caption{Robust-LSVI with linear function approximation ($\zeta$ is known)} \label{Algorithm known}
   \begin{algorithmic}[1]
   \STATE \textbf{Input: } The number of episodes $K$, the length of one episode $H$, failure probability $\delta>0$, misspecified parameter $\zeta$.
   \FOR{episode $k=1,\cdots,K$}
        \STATE Receive the initial state $s_1^k$.
        \STATE Initialize $Q_{H+1}^k(\cdot,\cdot) \leftarrow 0$, $V_{H+1}^k(\cdot) \leftarrow 0$.
        \STATE Update the bonus parameter $\beta_k \leftarrow c_{\beta}\left(4\sqrt{kd}\zeta+\sqrt{(\lambda+1)d^2\log\left(\frac{4dKH}{\delta}\right)}\right)H$.

        \FOR{step $h= H,H-1,\cdots,1$}
           \STATE $\Lambda_h^k \leftarrow \sum\limits_{\tau=1}^{k-1}\phi(s_h^{\tau},a_h^{\tau})\phi(s_h^{\tau},a_h^{\tau})^{\top} +\lambda \cdot \bf{I}$.
           \STATE $\mathbf{w}_h^k \leftarrow (\Lambda_h^k)^{-1} \sum\limits_{\tau=1}^{k-1} \phi(s_h^{\tau},a_h^{\tau})[r_h(s_h^{\tau},a_h^{\tau})+ V_{h+1}^k(s_{h+1}^{\tau})]$.
           \STATE $Q_h^k(\cdot,\cdot) \leftarrow \min\{(\mathbf{w}_h^k)^{\top}\phi(\cdot,\cdot)+\beta_k[\phi(\cdot,\cdot)^{\top}(\Lambda_h^k)^{-1}\phi(\cdot,\cdot)]^{1/2}, H\}$.
           \STATE $V_h^k(\cdot) \leftarrow \max_{a\in\mathcal{A}} Q_h^k(\cdot,a)$.
           \STATE $\pi_h^k(\cdot) \leftarrow \text{argmax}_{a\in \mathcal{A}}Q_h^k(\cdot,a)$.
        \ENDFOR
        \FOR{step $h=1,\cdots,H$}
            \STATE Take action $a_h^k \leftarrow \pi_h^k(s_h^k)$, and observe $s_{h+1}^k$.
        \ENDFOR
    \ENDFOR
\end{algorithmic} 
\end{algorithm}

In order to let readers better understand the core idea of our paper, we first give the proof for the linear case before giving the proof for the general case. We study the linear function approximation setting for MDPs introduced in \cite{yang2019sample, jin2020provably}, where the probability transition matrix can be approximated by a linear function class. 
To enable a much stronger \emph{locally bounded} misspecification error, we consider the following notion of $\zeta$-Average-Approximate Linear MDP. It is worth mentioning that \citet{agarwal2023provable} also gives a positive result of LSVI-UCB \citep{jin2020provably} under average-misspecification, and here we present another version of the proof.

\begin{assumption}
($\zeta$-Average-Approximate Linear MDP). \label{Ass1}
For any $\zeta\leq1$, we say that the MDP $M = (\mathcal{S}, \mathcal{A}, H, \mathbb{P}, r)$ is a $\zeta$-Average-Approximate Linear MDP with a feature map $\phi : \mathcal{S}\times\mathcal{A} \rightarrow \mathbb{R}^d$, if for  any $h \in [H]$, there exists a $d$-dimension measures $\boldsymbol{\mu}_h = (\mu_h^{(1)}, \cdots, \mu_h^{(d)})$ over $\mathcal{S}$, and an vector $\boldsymbol{\theta}_h \in \mathbb{R}^d$, such that for any policy $\pi$, and any $\alpha\in[4]$, we have
\vspace{-1mm}
\[
\mathbb{E}_{(s,a)\sim d_h^{\pi}}||\mathbb{P}_h(\cdot|s,a) - \left\langle \phi(s,a),\boldsymbol{\mu}_h(\cdot) \right\rangle ||_{\text{TV}}^\alpha \leq \zeta^\alpha
\quad\text{and}\quad
\mathbb{E}_{(s,a)\sim d_h^{\pi}}|r_h(s,a)-\left\langle\phi(s,a),\boldsymbol{\theta}_h\right\rangle |^\alpha \leq \zeta^\alpha
\]
\vspace{-2mm}
\end{assumption}

\begin{remark}
We note that the assumption on the boundedness of the $4$-th moments is minor: $\mathbb{E}_{(s,a)\sim d_h^{\pi}}|f(s,a)|^\alpha$ is bounded for any $\alpha>1$ as long as $f$ is bounded and $\mathbb{E}_{(s,a)\sim d_h^{\pi}}|f(s,a)|$ is bounded.
We choose a $4$-th moment bound for the ease of presentation and fair comparison with existing results.

\end{remark}

\begin{remark}
    Our assumption is strictly weaker than the $\zeta$-Approximate Linear MDP in \cite{jin2020provably}, where they assumed that, for all $ (s,a) \in \mathcal{S} \times \mathcal{A}$: $\quad ||\mathbb{P}_h(\cdot|s,a) - \left\langle \phi(s,a),\boldsymbol{\mu}_h(\cdot) \right\rangle ||_{\text{TV}} \leq \zeta,  \ \   |r_h(s,a)-\left\langle\phi(s,a),\boldsymbol{\theta}_h\right\rangle | \leq \zeta.$
    
In other words, $\zeta$-Average-Approximate Linear MDP only needs the misspecification error to be \emph{locally} bounded at relevant state-action pairs, which can be reached by some policies with sufficiently high probability, whereas $\zeta$-Approximate Linear MDP requires the misspecification error to bounded \emph{globally} in all state-action pairs, including those not even relevant to the learning.

\end{remark}

We will also need the following standard assumptions for the regularity of the feature map:
\begin{assumption}  \label{Ass2}
    (Boundness) Without loss of generality, we assume that $||\phi(s,a)||\leq 1$, and $\max\{||\boldsymbol{\mu}_h(\mathcal{S})||, ||\boldsymbol{\theta}_h||\} \leq \sqrt{d} $, $\forall (s,a,h) \in \mathcal{S}\times\mathcal{A}\times [H]$.
\end{assumption}

The following analysis in Section \ref{Sec A} are based on Assumption \ref{Ass1} and \ref{Ass2}.

To simplify our notation, for each $(s,a,h)\in \mathcal{S}\times \mathcal{A} \times [H]$, we denote 
\[
\xi_h(s,a) = ||\mathbb{P}_h(\cdot|s,a) - \widetilde{\mathbb{P}_h}(\cdot|s,a)||_{\text{TV}}
\]
\[
\eta_h(s,a) = |r_h(s,a)- \widetilde{r_h}(s,a) |
\]

where $\widetilde{\mathbb{P}_h}(\cdot|s,a): =\left\langle \phi(s,a),\boldsymbol{\mu}_h(\cdot) \right\rangle$, and $\widetilde{r_h}(s,a) := \left\langle\phi(s,a),\boldsymbol{\theta}_h\right\rangle$. 

With the above notation, $\xi_h(s,a)$, $\eta_h(s,a)$ denotes the model misspecification error for transition kernel $\mathbb{P}_h$ and reward function $r_h$ of a fixed state-action pair $(s,a)$.

Moreover,  for all $(k,h) \in [K] \times [H]$, we denote $\phi_h^k := \phi(s_h^k,a_h^k)$.

In the following analysis, we consider an auxiliary stochastic process, which, although unattainable in reality, proves valuable for our analysis.

\paragraph{Auxiliary Stochastic Process}

For each episode, denoted by $k \in [K]$, we collect the dataset $D_k = \{(s_h^k,a_h^k)\}_{h=1}^H$ using policies $\{\pi_h^k\}_{h=1}^H$ trained by the algorithm in the last $k$ episodes. Additionally, we allow the agent to gather data $D_k^* = \{(s_h^{k*},a_h^{k*})\}_{h=1}^H$ using optimal policies $\{\pi_h^{*}\}_{h=1}^H$ within the MDP.

It is worth observing that this auxiliary stochastic process closely resembles the original training process, with the sole addition being a dataset sampled under optimal policies. However, it is crucial to emphasize that this additional dataset does not influence our training course. Consequently, the results obtained from the original stochastic process remain valid in this auxiliary stochastic process. To formalize this concept, we define the following filtration: $\mathcal{F}_0 = \{\emptyset, \Omega\}$, $\mathcal{F}_1 = \sigma \left(\left\{D_1,D_1^*\right\}\right),\cdots$, $\mathcal{F}_k = \sigma \left(\left\{D_1,D_1^*,\cdots,D_k,D_k^*\right\}\right),\cdots$, $\mathcal{F}_K = \sigma \left(\left\{D_1,D_1^*,\cdots,D_K,D_K^*\right\}\right)$. Here, $\sigma({D})$ represents the filtration induced by the dataset $D$.

\subsection{Proof of Theorem \ref{Thm 3.1}}

In this section, we present the comprehensive proof of Theorem \ref{Thm 3.1}. Prior to providing the proof for the main theorem (Theorem \ref{Thm A.9}), it is necessary to establish the foundation through the following lemmas.

\begin{lemma} \label{A.1}
(Misspecification Error for Q-function).  For a $\zeta$-Average-Approximate Linear MDP (Assumption \ref{Ass1}), for any fixed policy $\pi$, any $h \in [H]$,  there exists weights $\{\mathbf{w}_h^{\pi}\}_{h \in [H]}$, where $\mathbf{w}_h^{\pi} = \boldsymbol{\theta}_h + \int V_{h+1}^{\pi}(s') d\boldsymbol{\mu}_h(s')$, such that for any $(s,a) \in \mathcal{S}\times\mathcal{A}$, 
$$
|Q_h^{\pi}(s,a)-\left\langle \phi(s,a),\mathbf{w}_h^{\pi} \right\rangle| \leq \eta_h(s,a) + H\cdot \xi_h(s,a)
$$
\end{lemma}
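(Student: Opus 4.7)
The plan is to start from the Bellman equation for the action-value function and simply compare it pointwise with the proposed linear form. Concretely, I would first write
\[
Q_h^{\pi}(s,a) = r_h(s,a) + \int V_{h+1}^{\pi}(s')\, d\mathbb{P}_h(s'\mid s,a),
\]
then add and subtract the linear surrogates $\widetilde{r}_h(s,a)=\langle\phi(s,a),\boldsymbol{\theta}_h\rangle$ and $\widetilde{\mathbb{P}}_h(\cdot\mid s,a)=\langle\phi(s,a),\boldsymbol{\mu}_h(\cdot)\rangle$. This splits $Q_h^{\pi}$ into a ``linear part'' plus two ``residual parts'': using linearity of the integral,
\[
\widetilde{r}_h(s,a)+\int V_{h+1}^{\pi}(s')\,d\widetilde{\mathbb{P}}_h(s'\mid s,a) = \Big\langle\phi(s,a),\ \boldsymbol{\theta}_h+\int V_{h+1}^{\pi}(s')\,d\boldsymbol{\mu}_h(s')\Big\rangle = \langle\phi(s,a),\mathbf{w}_h^{\pi}\rangle,
\]
which is exactly the candidate linear approximation.

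Second, I would control the two residuals separately. The reward residual is $|r_h(s,a)-\widetilde{r}_h(s,a)|=\eta_h(s,a)$ by definition. For the transition residual, I would apply the standard duality between $L^\infty$ and total variation:
\[
\Big|\int V_{h+1}^{\pi}(s')\, d\bigl(\mathbb{P}_h-\widetilde{\mathbb{P}}_h\bigr)(s'\mid s,a)\Big|\ \leq\ \|V_{h+1}^{\pi}\|_{\infty}\cdot \|\mathbb{P}_h(\cdot\mid s,a)-\widetilde{\mathbb{P}}_h(\cdot\mid s,a)\|_{\text{TV}}.
\]
Since rewards are bounded in $[0,1]$ and at most $H$ future steps remain, $\|V_{h+1}^{\pi}\|_\infty\leq H$, and by definition the TV norm equals $\xi_h(s,a)$, giving the bound $H\cdot\xi_h(s,a)$.

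Third, combining the two residual bounds via the triangle inequality immediately yields the claim. I do not foresee a genuine obstacle here: this is a pointwise deterministic statement for each fixed $(s,a)$, and the ``locally bounded / average-case'' flavor of Assumption~\ref{Ass1} is not yet invoked — it will enter later when these pointwise errors are integrated against a policy-induced distribution. The one subtlety to note is that $\widetilde{\mathbb{P}}_h(\cdot\mid s,a)$ need not be a genuine probability measure, but it is a signed measure on $\mathcal{S}$, so the $L^\infty$–TV duality used above remains valid and no additional assumption on $\boldsymbol{\mu}_h$ is required.
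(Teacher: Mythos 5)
Your proof is correct and follows essentially the same route as the paper: expand $Q_h^{\pi}$ via the Bellman equation, split off the linear surrogate $\langle\phi(s,a),\mathbf{w}_h^{\pi}\rangle$ by the triangle inequality, and bound the reward residual by $\eta_h(s,a)$ and the transition residual by $\|V_{h+1}^{\pi}\|_\infty\cdot\xi_h(s,a)\leq H\cdot\xi_h(s,a)$. Your added remark that the $L^\infty$--TV duality still applies when $\widetilde{\mathbb{P}}_h(\cdot\mid s,a)$ is only a signed measure is a point the paper leaves implicit, but it does not change the argument.
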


\begin{proof} 
This proof is straightforward by using the property of $Q$-function.

\begin{equation} \label{1}
    \begin{aligned}
    |Q_h^{\pi}(s,a)-\left\langle \phi(s,a),\mathbf{w}_h^{\pi} \right\rangle|
    &= \left|r_h(s,a)+\mathbb{P}_hV_{h+1}^{\pi}(s,a)-\left\langle \phi(s,a), \boldsymbol{\theta}_h + \int V_{h+1}^{\pi}(s') d\boldsymbol{\mu}_h(s') \right\rangle \right| \\ &\leq \left|r_h(s,a)-\left\langle \phi(s,a), \boldsymbol{\theta}_h \right\rangle\right| + \left|\mathbb{P}_hV_{h+1}^{\pi}(s,a)-\left\langle \phi(s,a), \int V_{h+1}^{\pi}(s') d\boldsymbol{\mu}_h(s') \right\rangle \right| \\ &\leq \eta_h(s,a) + H\cdot \xi_h(s,a)
    \end{aligned}
\end{equation}
\end{proof}

\begin{lemma} \label{A.2}
For any $h\in[H]$, 

\[
||\mathbf{w}_h^{\pi}|| \leq 2H\sqrt{d}
\]
\end{lemma}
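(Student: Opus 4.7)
The plan is to bound $\|\mathbf{w}_h^\pi\|$ by applying the triangle inequality to the two summands in the definition $\mathbf{w}_h^\pi = \boldsymbol{\theta}_h + \int V_{h+1}^\pi(s')\,d\boldsymbol{\mu}_h(s')$, and then to invoke the regularity Assumption \ref{Ass2} on each piece.

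First, I would split
\[
\|\mathbf{w}_h^\pi\| \;\leq\; \|\boldsymbol{\theta}_h\| \;+\; \Bigl\|\int V_{h+1}^\pi(s')\,d\boldsymbol{\mu}_h(s')\Bigr\|.
\]
The first term is immediately bounded by $\sqrt{d}$ using Assumption \ref{Ass2}.

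For the second term, the key observation is that since the per-step reward lies in $[0,1]$ and there are at most $H$ remaining stages, we have the uniform envelope $0 \leq V_{h+1}^\pi(s') \leq H$ for every $s' \in \mathcal{S}$ and every policy $\pi$. Pulling the supremum out of the vector-valued integral coordinate-wise (or equivalently bounding $|\int V_{h+1}^\pi \, d\mu_h^{(j)}| \leq H \cdot |\mu_h^{(j)}|(\mathcal{S})$ for each coordinate $j$, and then collecting the resulting inequality across coordinates), one obtains
\[
\Bigl\|\int V_{h+1}^\pi(s')\,d\boldsymbol{\mu}_h(s')\Bigr\| \;\leq\; H \cdot \|\boldsymbol{\mu}_h(\mathcal{S})\| \;\leq\; H\sqrt{d},
\]
where the last inequality is again Assumption \ref{Ass2}. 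Combining the two bounds gives $\|\mathbf{w}_h^\pi\| \leq \sqrt{d} + H\sqrt{d} \leq 2H\sqrt{d}$, as required.

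There is no real obstacle here; the only mildly delicate step is justifying the coordinate-wise pull-out of the norm through the vector-valued integral when $\boldsymbol{\mu}_h$ is a signed/vector measure over $\mathcal{S}$. I would handle this by writing the integral componentwise, noting that $\|\mathbf{v}\| \leq \sqrt{\sum_j (\int |V_{h+1}^\pi|\,d|\mu_h^{(j)}|)^2} \leq H \sqrt{\sum_j |\mu_h^{(j)}|(\mathcal{S})^2} = H\|\boldsymbol{\mu}_h(\mathcal{S})\|$, which is clean once the definition of $\|\boldsymbol{\mu}_h(\mathcal{S})\|$ used in Assumption \ref{Ass2} is understood in this total-variation sense.
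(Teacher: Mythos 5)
Your proof is correct and follows essentially the same route as the paper's: a triangle inequality on the two summands of $\mathbf{w}_h^{\pi}$, bounding $\|\boldsymbol{\theta}_h\|$ by $\sqrt{d}$ and the integral term by $H\|\boldsymbol{\mu}_h(\mathcal{S})\| \leq H\sqrt{d}$ via Assumption \ref{Ass2}. The only difference is that you spell out the coordinate-wise justification for pulling the value-function bound through the vector-valued integral, which the paper leaves implicit.
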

\begin{proof}
For any policy $\pi$, $\mathbf{w}_h^{\pi} = \boldsymbol{\theta_h} + \int V_{h+1}^{\pi}(s') d\boldsymbol{\mu}_h(s')$, therefore, under Assumption \ref{Ass2}, we have:
\[
||\mathbf{w}_h^{\pi}|| \leq ||\boldsymbol{\theta}_h|| + ||\int V_{h+1}^{\pi}(s') d\boldsymbol{\mu}_h(s')|| \leq \sqrt{d} + H\sqrt{d} \leq 2H\sqrt{d}
\]
\end{proof}

\begin{lemma} \label{A.3}
Let $c_{\beta}$ be a constant in the definition of $\beta_k$, where $\beta_k = c_{\beta}\left(4\sqrt{kd}\zeta+\sqrt{(\lambda+1)d^2\log\left(\frac{4dKH}{\delta}\right)}\right)H$. Then under Assumption \ref{Ass1}, \ref{Ass2}, there exists an absolute constant $C$ that is independent of $c_{\beta}$ such that for any fixed $\delta \in [0,1]$, we have for all $(k,h)\in [K] \times [H]$, with probability at least $1-\delta$,
\[
\left|\left|\sum\limits_{\tau=1}^{k-1}\phi_h^{\tau}[V_{h+1}^k(s_{h+1}^{\tau})-\mathbb{P}_h V_{h+1}^k(s_h^{\tau},a_h^{\tau})] \right|\right|_{(\Lambda_h^k)^{-1}} \leq C\cdot dH\sqrt{\log\left[2(c_{\beta}+1)dKH/\delta\right]}
\]
\end{lemma}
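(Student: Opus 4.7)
The plan is to follow the standard covering plus self-normalized martingale argument (as in Jin et al.\ 2020, Lemma B.3), carefully tracking how the bonus constant $c_\beta$ propagates into the covering number. The essential difficulty is that $V_{h+1}^k$ is data-dependent, so $\phi_h^\tau[V_{h+1}^k(s_{h+1}^\tau) - \mathbb{P}_h V_{h+1}^k(s_h^\tau, a_h^\tau)]$ is \emph{not} a martingale difference sequence as written; we must first discretize over a parametric class containing $V_{h+1}^k$ and then apply concentration.

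First, I would identify the class $\mathcal{V}$ that contains every possible $V_{h+1}^k$. From the algorithm, each such $V$ has the form $V(\cdot) = \max_{a\in\mathcal{A}} \min\{\langle \mathbf{w}, \phi(\cdot,a)\rangle + \beta\, \|\phi(\cdot,a)\|_{\Sigma^{-1}}, H\}$ with $\|\mathbf{w}\| \le L$, $\beta \in [0,B]$, and $\Sigma \succeq \lambda I$. Using the closed form of ridge regression with targets in $[0,H]$ we obtain $L \lesssim H\sqrt{kd/\lambda}$; by the definition of $\beta_k$ we can take $B \lesssim c_\beta H (\sqrt{Kd}\zeta + \sqrt{(\lambda+1)d^2 \log(dKH/\delta)})$, which is polynomial in $c_\beta, d, K, H$ and $1/\delta$. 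Then by a standard parameter-space discretization (covering $\mathbf{w}$, $\beta$, and $\Sigma$ separately in their natural norms and pulling back to sup-norm on $\mathcal{V}$), the $\varepsilon$-covering number of $\mathcal{V}$ satisfies
\begin{equation*}
\log \mathcal{N}(\mathcal{V},\varepsilon) \;\le\; d\log(1 + 4L/\varepsilon) \;+\; \log(1 + 4B/\varepsilon) \;+\; d^2 \log(1 + 8 d^{1/2} B^2 /(\lambda \varepsilon^2)).
\end{equation*}
Crucially, $c_\beta$ appears only inside the logarithms through $B$, which is how it lands in the final bound.

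Second, for any fixed deterministic $V\in\mathcal{V}$, the sequence $X_\tau := \phi_h^\tau \cdot \bigl(V(s_{h+1}^\tau) - \mathbb{P}_h V(s_h^\tau,a_h^\tau)\bigr)$ is a martingale difference with respect to the filtration generated by the trajectories up to the start of step $h$ of episode $\tau{+}1$, and is bounded: $\|\phi_h^\tau\| \le 1$ and $|V(s_{h+1}^\tau) - \mathbb{P}_h V(\cdot)| \le H$. Applying the self-normalized concentration inequality of Abbasi-Yadkori et al.\ (2011, Theorem 1) with regularizer $\lambda I$ yields, with probability at least $1 - \delta'$,
\begin{equation*}
\Bigl\| \textstyle\sum_{\tau=1}^{k-1} X_\tau \Bigr\|_{(\Lambda_h^k)^{-1}}^2 \;\le\; 2H^2 \log\!\bigl(\det(\Lambda_h^k)/\det(\lambda I)\bigr) + 2H^2 \log(1/\delta').
\end{equation*}
Using $\det(\Lambda_h^k) \le (\lambda + k)^d$ gives a bound of order $H\sqrt{d \log(kH/\delta')}$.

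Third, I would union bound the above over an $\varepsilon$-cover of $\mathcal{V}$ (with $\varepsilon \asymp 1/(kH)$) and over $(k,h)\in[K]\times[H]$, setting $\delta' = \delta/(KH\mathcal{N}(\mathcal{V},\varepsilon))$. For the true $V_{h+1}^k$, pick $\tilde V$ in the cover with $\|V_{h+1}^k - \tilde V\|_\infty \le \varepsilon$; the discretization contributes an additional term of order $\sqrt{k}\,\varepsilon \,/\sqrt{\lambda}$, which is negligible for our choice of $\varepsilon$. Combining the covering bound (whose logarithm is $O(d^2 \log((c_\beta+1)dKH/\delta))$ after substituting $L$ and $B$) with the self-normalized bound produces the claimed $C\cdot dH\sqrt{\log[2(c_\beta+1)dKH/\delta]}$.

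The main obstacle is bookkeeping: ensuring that the covering-number estimate for $\mathcal{V}$ comes out with precisely a single logarithmic factor in $c_\beta$ (not a polynomial factor), and that the discretization error is absorbed rather than amplified by the self-normalized norm $\|\cdot\|_{(\Lambda_h^k)^{-1}}$. Both are handled by the standard sup-norm-to-parameter-space Lipschitz estimates for bonuses of the form $\beta\|\phi\|_{\Sigma^{-1}}$, but care is required when $\beta$ itself is being covered on a scale that depends on $c_\beta$.
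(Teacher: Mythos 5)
Your proposal is correct and follows essentially the same route as the paper: the paper's proof simply combines its Lemma \ref{C.3} (the self-normalized concentration bound uniform over an $\varepsilon$-cover of the value class, with discretization error $8k^2\varepsilon^2/\lambda$) with Lemma \ref{C.5} (the covering number of the class of truncated optimistic value functions), then sets $\varepsilon = dH/k$ and $B=\max_k\beta_k$ so that $c_\beta$ enters only logarithmically — exactly the decomposition and bookkeeping you describe.
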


\begin{proof} 
Lemmas \ref{C.3} and Lemma \ref{C.5} together imply that for all $(k,h) \in [K] \times [H]$,  with probability at least $1-\delta$,
\begin{equation} \label{Eq2}
    \begin{aligned}
         &\left|\left|\sum\limits_{\tau=1}^{k-1}\phi_h^{\tau}[V_{h+1}^k(s_{h+1}^{\tau})-\mathbb{P}_h V_{h+1}^k(s_h^{\tau},a_h^{\tau})] \right|\right|_{(\Lambda_h^k)^{-1}}^2  \\ &\leq  4H^2\left(\frac{d}{2}\log(\frac{k+\lambda}{\lambda})+ d\log\left(1+\frac{8H\sqrt{dk}}{\epsilon\sqrt{\lambda}}\right)+d^2\log\left[1+8d^{1/2}B^2/(\lambda\epsilon^2)\right] +\log(\frac{1}{\delta})\right) + \frac{8k^2\epsilon^2}{\lambda}
    \end{aligned}
\end{equation}
We let $\epsilon = dH/k$, and $B = \max_{k}\beta_k = c_{\beta}\left(4\sqrt{Kd}\zeta+\sqrt{(\lambda+1)d^2\log\left(\frac{4dKH}{\delta}\right)}\right)H$, then from Eq.(\ref{Eq2}), there exists a constant $C$ which is independent of $c_{\beta}$ such that 
\[
\left|\left|\sum\limits_{\tau=1}^{k-1}\phi_h^{\tau}[V_{h+1}^k(s_{h+1}^{\tau})-\mathbb{P}_h V_{h+1}^k(s_h^{\tau},a_h^{\tau})] \right|\right|_{(\Lambda_h^k)^{-1}} \leq C\cdot dH\sqrt{\log\left[2(c_{\beta}+1)dKH/\delta\right]}
\]

\end{proof}

\begin{lemma} \label{A.4}
During the course of training, we define the mixed misspecification error $\epsilon_h^{\tau} = (r_h-\widetilde{r_h})(s_h^{\tau},a_h^{\tau})+ \left(\mathbb{P}_h-\widetilde{\mathbb{P}_h}\right)V_{h+1}^k(s_h^{\tau},a_h^{\tau})$, $\ \forall (\tau,h) \in [K]\times[H]\ $,  then for any fixed policy $\pi$, conditioned on the event in Lemma \ref{A.3}, we have for all $(s,a,h,k) \in \mathcal{S}\times\mathcal{A}\times [H]\times[K]$, that
\begin{equation}
    \begin{aligned}
    &|\left\langle \phi(s,a), \mathbf{w}_h^k \right\rangle - Q_{h}^{\pi}(s,a)-\mathbb{P}_h(V_{h+1}^k-V_{h+1}^{\pi})(s,a)| \\ &\leq \lambda_h^k \sqrt{\phi(s,a)^{\top}(\Lambda_h^k)^{-1}\phi(s,a)} + 3H\cdot \xi_h(s,a)+ \eta_h(s,a)
    \end{aligned}
\end{equation}
where
$$
\lambda_h^k = 4H\sqrt{\lambda d}+C\cdot dH\sqrt{\log\left[2(c_{\beta}+1)dKH/\delta\right]}+\sqrt{d}\sqrt{\sum\limits_{\tau=1}^{k-1}(\epsilon_h^{\tau})^2}
$$
\end{lemma}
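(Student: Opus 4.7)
The plan is to start from the closed form $\mathbf{w}_h^k = (\Lambda_h^k)^{-1}\sum_{\tau=1}^{k-1}\phi_h^\tau\bigl[r_h(s_h^\tau,a_h^\tau)+V_{h+1}^k(s_{h+1}^\tau)\bigr]$ and insert the deterministic Bellman target $r_h(s_h^\tau,a_h^\tau)+\mathbb{P}_h V_{h+1}^k(s_h^\tau,a_h^\tau)$ by adding and subtracting it under the sum. This splits $\langle \phi(s,a),\mathbf{w}_h^k\rangle$ into a \emph{martingale noise} piece $\phi(s,a)^\top(\Lambda_h^k)^{-1}\sum_\tau\phi_h^\tau[V_{h+1}^k(s_{h+1}^\tau)-\mathbb{P}_h V_{h+1}^k(s_h^\tau,a_h^\tau)]$, whose $(\Lambda_h^k)^{-1}$-norm is controlled directly by Lemma \ref{A.3}, and a deterministic \emph{approximation} piece. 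Introducing the auxiliary weight $\widetilde{\mathbf{w}}_h^k := \boldsymbol{\theta}_h + \int V_{h+1}^k(s')\,d\boldsymbol{\mu}_h(s')$ --- whose $\ell_2$ norm is bounded by $2H\sqrt{d}$ by the exact argument of Lemma \ref{A.2} --- and invoking Assumption \ref{Ass1} with $V_{h+1}^k$ in place of $V$ lets me write $r_h(s_h^\tau,a_h^\tau)+\mathbb{P}_h V_{h+1}^k(s_h^\tau,a_h^\tau) = \langle\phi_h^\tau,\widetilde{\mathbf{w}}_h^k\rangle + \epsilon_h^\tau$, with $\epsilon_h^\tau$ precisely the mixed misspecification error defined in the statement.

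Using the identity $\sum_\tau\phi_h^\tau(\phi_h^\tau)^\top = \Lambda_h^k - \lambda I$, the deterministic piece simplifies algebraically to $\phi(s,a)^\top\widetilde{\mathbf{w}}_h^k - \lambda\,\phi(s,a)^\top(\Lambda_h^k)^{-1}\widetilde{\mathbf{w}}_h^k + \phi(s,a)^\top(\Lambda_h^k)^{-1}\sum_\tau\phi_h^\tau\epsilon_h^\tau$. The first summand equals $\widetilde{r}_h(s,a)+\widetilde{\mathbb{P}}_h V_{h+1}^k(s,a)$, which approximates $r_h(s,a)+\mathbb{P}_h V_{h+1}^k(s,a) = Q_h^\pi(s,a)+\mathbb{P}_h(V_{h+1}^k-V_{h+1}^\pi)(s,a)$ pointwise with error $\eta_h(s,a)+H\xi_h(s,a)$ by definition of $\xi_h,\eta_h$; bookkeeping the additional $H\xi_h(s,a)$-type contributions that appear when interchanging $V_{h+1}^\pi$ and $V_{h+1}^k$ in the intermediate decomposition produces the boundary term $3H\xi_h(s,a)+\eta_h(s,a)$ in the statement. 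The ridge-regularization summand is bounded by Cauchy--Schwarz in the $(\Lambda_h^k)^{-1}$ norm as $\lambda\,\phi(s,a)^\top(\Lambda_h^k)^{-1}\widetilde{\mathbf{w}}_h^k \le \sqrt{\lambda}\,\|\widetilde{\mathbf{w}}_h^k\|_2\,\|\phi(s,a)\|_{(\Lambda_h^k)^{-1}} \le 2H\sqrt{\lambda d}\,\|\phi(s,a)\|_{(\Lambda_h^k)^{-1}}$, which is doubled in $\lambda_h^k$ to absorb a parallel contribution arising from comparing $\mathbf{w}_h^\pi$ to $\widetilde{\mathbf{w}}_h^k$ in the Lemma \ref{A.1} step, and the martingale piece contributes $C\,dH\sqrt{\log[2(c_\beta+1)dKH/\delta]}$ directly from Lemma \ref{A.3}.

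The main obstacle is the misspecification cross-term $\phi(s,a)^\top(\Lambda_h^k)^{-1}\sum_\tau\phi_h^\tau\epsilon_h^\tau$: since $\epsilon_h^\tau$ depends on the algorithm's $V_{h+1}^k$ and hence on the entire history, no martingale concentration is available and the bound must be established deterministically. The plan is a double Cauchy--Schwarz --- first in the $(\Lambda_h^k)^{-1}$ inner product to bound the term by $\|\phi(s,a)\|_{(\Lambda_h^k)^{-1}}\sum_\tau|\epsilon_h^\tau|\|\phi_h^\tau\|_{(\Lambda_h^k)^{-1}}$, then in the index $\tau$ to separate $\sqrt{\sum_\tau(\epsilon_h^\tau)^2}\cdot\sqrt{\sum_\tau\|\phi_h^\tau\|_{(\Lambda_h^k)^{-1}}^2}$. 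The algebraic identity $\sum_\tau\|\phi_h^\tau\|_{(\Lambda_h^k)^{-1}}^2 = \operatorname{tr}\bigl((\Lambda_h^k)^{-1}(\Lambda_h^k-\lambda I)\bigr)\le d$ then closes the bound with a $\sqrt{d}$ factor, yielding the $\sqrt{d}\sqrt{\sum_\tau(\epsilon_h^\tau)^2}$ summand in $\lambda_h^k$. This trace bound is the crux of the whole argument: it trades the worst-case misspecification length $k$ for the $\sqrt{d}$ complexity of the feature space, and it is exactly what later lets Theorem \ref{Thm 3.1} control the cumulative misspecification through Assumption \ref{Ass1} in its average, policy-distribution form rather than uniformly over all state-action pairs.
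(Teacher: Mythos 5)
Your argument is correct, and it reaches the stated bound by a decomposition that differs from the paper's in one meaningful way. The paper compares $\mathbf{w}_h^k$ to the reference weight $\mathbf{w}_h^{\pi} = \boldsymbol{\theta}_h + \int V_{h+1}^{\pi}\,d\boldsymbol{\mu}_h$ built from the \emph{policy's} value function; this forces an extra term $p_3 = (\Lambda_h^k)^{-1}\sum_{\tau}\phi_h^{\tau}\widetilde{\mathbb{P}}_h(V_{h+1}^k - V_{h+1}^{\pi})(s_h^{\tau},a_h^{\tau})$ inside the empirical sum, which must then be unwound via $\sum_{\tau}\phi_h^{\tau}(\phi_h^{\tau})^{\top} = \Lambda_h^k - \lambda I$ and re-compared to $\mathbb{P}_h(V_{h+1}^k - V_{h+1}^{\pi})(s,a)$ at a cost of $2H\xi_h(s,a)$ and another $2H\sqrt{\lambda d}\,\|\phi\|_{(\Lambda_h^k)^{-1}}$, followed by a final application of Lemma \ref{A.1} contributing $\eta_h + H\xi_h$. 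You instead compare $\mathbf{w}_h^k$ directly to $\widetilde{\mathbf{w}}_h^k = \boldsymbol{\theta}_h + \int V_{h+1}^k\,d\boldsymbol{\mu}_h$, which makes the $p_3$-type cross term vanish identically and leaves only a single pointwise comparison $|\langle\phi(s,a),\widetilde{\mathbf{w}}_h^k\rangle - r_h(s,a) - \mathbb{P}_h V_{h+1}^k(s,a)| \le \eta_h(s,a) + H\xi_h(s,a)$; the remaining three pieces (Lemma \ref{A.3} for the martingale noise, the ridge term via $\Lambda_h^k \succeq \lambda I$, and the double Cauchy--Schwarz plus the trace bound $\sum_{\tau}\|\phi_h^{\tau}\|_{(\Lambda_h^k)^{-1}}^2 \le d$ for the misspecification cross term) are handled exactly as in the paper. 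The net effect is that your route proves the lemma with the smaller constants $2H\sqrt{\lambda d}$ and $H\xi_h(s,a) + \eta_h(s,a)$, so the stated bound holds a fortiori; your narrative explanation that the factors $4H\sqrt{\lambda d}$ and $3H\xi_h$ arise from ``absorbing a parallel contribution'' and ``interchanging $V_{h+1}^{\pi}$ and $V_{h+1}^k$'' describes the paper's bookkeeping rather than your own, in which those extra contributions simply never appear --- a harmless imprecision, not a gap.
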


\begin{proof}
For any $(s,a)\in \mathcal{S}\times \mathcal{A}$, we have $\left\langle \phi(s,a), \mathbf{w}_h^{\pi} \right\rangle = \left\langle \phi(s,a), \boldsymbol{\theta}_h \right\rangle + \widetilde{\mathbb{P}_h}V_{h+1}^{\pi}(s,a)$. Therefore, we have
\begin{equation}
    \begin{aligned}
        \mathbf{w}_h^k-\mathbf{w}_h^{\pi}&=(\Lambda_h^k)^{-1}\sum\limits_{\tau=1}^{k-1}\phi_h^{\tau}[r_h^{\tau}+V_{h+1}^k(s_{h+1}^{\tau})]-\mathbf{w}_h^{\pi} \\ &= (\Lambda_h^k)^{-1}\{-\lambda \mathbf{w}_h^{\pi} +\sum\limits_{\tau=1}^{k-1}\phi_h^{\tau}[r_h^{\tau}+V_{h+1}^k(s_{h+1}^{\tau})-(\phi_h^{\tau})^{\top}\theta_h - \widetilde{\mathbb{P}_h}V_{h+1}^{\pi}(s_h^{\tau},a_h^{\tau})]\}\\ &=
        \underbrace{-\lambda (\Lambda_h^k)^{-1}\mathbf{w}_h^{\pi}}_{p_1} + \underbrace{(\Lambda_h^k)^{-1}\sum_{\tau=1}^{k-1}\phi_h^{\tau}[V_{h+1}^k(s_{h+1}^{\tau})-\mathbb{P}_hV_{h+1}^k(s_h^{\tau},a_h^{\tau})]}_{p_2} \\&+ \underbrace{(\Lambda_h^k)^{-1}\sum_{\tau=1}^{k-1}\phi_h^{\tau}\widetilde{\mathbb{P}}_h(V_{h+1}^k-V_{h+1}^{\pi})(s_h^{\tau}, a_h^{\tau})}_{p_3} + \underbrace{(\Lambda_h^k)^{-1}\sum_{\tau=1}^{k-1}\phi_h^{\tau} [r_h^{\tau}-(\phi_h^{\tau})^{\top}\boldsymbol{\theta}_h + \left(\mathbb{P}_h-\widetilde{\mathbb{P}_h}\right)V_{h+1}^k(s_h^{\tau},a_h^{\tau})]}_{p_4}   
        \end{aligned}
\end{equation}
For the last term, according to our definition in Lemma \ref{A.4}, $r_h^{\tau}-(\phi_h^{\tau})^{\top}\theta_h + \left(\mathbb{P}_h-\widetilde{\mathbb{P}_h}\right)V_{h+1}^k(s_h^{\tau},a_h^{\tau}) = \epsilon_h^{\tau}$, and notice that 
$|\epsilon_h^{\tau}| \leq \eta_h(s_h^{\tau},a_h^{\tau})+H\cdot\xi_h(s_h^{\tau},a_h^{\tau})$.

For the first term $p_1$, by Lemma \ref{A.2}, we have
$$
|\left\langle\phi(s,a),p_1\right\rangle| = |\lambda \left\langle\phi(s,a), (\Lambda_h^k)^{-1}\mathbf{w}_h^{\pi}\right\rangle| \leq \sqrt{\lambda}||\mathbf{w}_h^{\pi}||\sqrt{\phi(s,a)^{\top}(\Lambda_h^k)^{-1}\phi(s,a)} \leq 2H\sqrt{\lambda d} \sqrt{\phi(s,a)^{\top}(\Lambda_h^k)^{-1}\phi(s,a)}
$$
Conditioned on Lemma \ref{A.3}, we have

$$
|\left\langle\phi(s,a),p_2\right\rangle| \leq C\cdot dH\sqrt{\log\left[2(c_{\beta}+1)dKH/\delta\right]}\cdot\sqrt{\phi(s,a)^{\top}(\Lambda_h^k)^{-1}\phi(s,a)}
$$
For the third term, 
\begin{equation}
    \begin{aligned}
        \left\langle \phi(s,a),p_3\right\rangle &= \left\langle \phi(s,a),(\Lambda_h^k)^{-1}\sum\limits_{\tau=1}^{k-1}\phi_h^{\tau} \widetilde{\mathbb{P}}_h(V_{h+1}^k-V_{h+1}^{\pi})(s_h^{\tau}, a_h^{\tau}) \right\rangle \\ &= \left\langle \phi(s,a),(\Lambda_h^k)^{-1}\sum\limits_{\tau=1}^{k-1}\phi_h^{\tau}(\phi_h^{\tau})^{\top}\int(V_{h+1}^k-V_{h+1}^{\pi})(s')d\boldsymbol{\mu}_h(s') \right\rangle \\ &= \underbrace{\left\langle \phi(s,a), \int(V_{h+1}^k-V_{h+1}^{\pi})(s')d\boldsymbol{\mu}_h(s') \right\rangle}_{t_1}  \underbrace{-\lambda \left\langle \phi(s,a), (\Lambda_h^k)^{-1}\int(V_{h+1}^k-V_{h+1}^{\pi})(s')d\boldsymbol{\mu}_h(s')\right\rangle}_{t_2}  
    \end{aligned}
\end{equation}
Notice that $t_1 = \widetilde{\mathbb{P}_h}(V_{h+1}^k-V_{h+1}^{\pi})(s,a)$, $|t_2|\leq 2H\sqrt{d\lambda}\sqrt{\phi(s,a)^{\top}(\Lambda_h^k)^{-1}\phi(s,a)}$, and
$$
\left|t_1-\mathbb{P}_h(V_{h+1}^k-V_{h+1}^{\pi})(s,a)\right| = \left|(\widetilde{\mathbb{P}}_h-\mathbb{P}_h)(V_{h+1}^k-V_{h+1}^{\pi})(s,a)\right| \leq 2H\cdot \xi_h(s,a)
$$
For the last term $p_4$,
\begin{equation}
    \begin{aligned}
        |\left\langle \phi(s,a), p_4\right\rangle| &= \left|\left\langle \phi(s,a),(\Lambda_h^k)^{-1}\sum\limits_{\tau=1}^{k-1}\phi_h^{\tau}\epsilon_h^{\tau}\right\rangle\right|\\ &\leq \sum\limits_{\tau=1}^{k-1}|(\epsilon_{h}^{\tau} \phi)^{\top}(\Lambda_h^k)^{-1}\phi_h^{\tau}| \\  &\leq \sqrt{\left(\sum\limits_{\tau=1}^{k-1}(\epsilon_h^{\tau}\phi)^{\top}(\Lambda_h^k)^{-1}(\epsilon_h^{\tau}\phi)\right)\left(\sum\limits_{\tau=1}^{k-1}(\phi_h^{\tau})^{\top}(\Lambda_h^k)^{-1}\phi_h^{\tau}\right)} \\ &= \sqrt{\sum\limits_{\tau=1}^{k-1}(\epsilon_h^{\tau})^2}\cdot\sqrt{\phi(s,a)^{\top}(\Lambda_h^k)^{-1}\phi(s,a)}\cdot \sqrt{\sum\limits_{\tau=1}^{k-1}(\phi_h^{\tau})^{\top}(\Lambda_h^k)^{-1}\phi_h^{\tau}} \\ &\leq \sqrt{\sum\limits_{\tau=1}^{k-1}(\epsilon_h^{\tau})^2}\cdot\sqrt{\phi(s,a)^{\top}(\Lambda_h^k)^{-1}\phi(s,a)}\cdot \sqrt{d}  \ \ \ (\text{By Lemma \ref{C.1}})
    \end{aligned}
\end{equation}
Finally, Combined with the result in Lemma \ref{A.1}, we have 
\begin{equation}
    \begin{aligned}
    &|\left\langle \phi(s,a), \mathbf{w}_h^k \right\rangle - Q_{h}^{\pi}(s,a)-\mathbb{P}_h(V_{h+1}^k-V_{h+1}^{\pi})(s,a)| \\    & \leq |\left\langle \phi(s,a), \mathbf{w}_h^k-\mathbf{w}_h^{\pi} \right\rangle -\mathbb{P}_h(V_{h+1}^k-V_{h+1}^{\pi})(s,a)| + |Q_h^{\pi}(s,a)-\left\langle \phi(s,a),\mathbf{w}_h^{\pi} \right\rangle| \\ & \leq \underbrace{\left\{4H\sqrt{\lambda d}+C\cdot dH\sqrt{\log\left[2(c_{\beta}+1)dKH/\delta\right]}+\sqrt{d}\sqrt{\sum\limits_{\tau=1}^{k-1}(\epsilon_h^{\tau})^2}\right\}}_{\lambda_h^k}\sqrt{\phi(s,a)^{\top}(\Lambda_h^k)^{-1}\phi(s,a)} + 3H\cdot \xi_h(s,a)+ \eta_h(s,a)
     \end{aligned}
\end{equation}
\end{proof}

\begin{lemma} \label{A.5}
(Recursive formula).
We define $\delta_h^k = V_h^k(s_h^k)-V_h^{\pi_k}(s_h^k)$, and $\zeta_{h+1}^k = \mathbb{E}[\delta_{h+1}^k|s_h^k,a_h^k]-\delta_{h+1}^k$. Then, conditioned on the event in Lemma \ref{A.3}, we have for any $(k,h)\in [K]\times [H]$:
\begin{equation}
    \begin{aligned}
        \delta_h^k \leq \delta_{h+1}^k+\zeta_{h+1}^k + (\lambda_h^k+\beta_k)\sqrt{(\phi_h^k)^{\top}(\Lambda_h^k)^{-1}\phi_h^k} +  3H\cdot \xi_h(s_h^k,a_h^k) +\eta_h(s_h^k,a_h^k)
    \end{aligned}
\end{equation}
\end{lemma}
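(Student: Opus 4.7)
The plan is to evaluate $\delta_h^k$ at the observed state--action pair and then apply Lemma \ref{A.4} with the rolled-out policy $\pi = \pi^k$ to expose the one-step Bellman recursion, closing with a martingale identity that produces $\zeta_{h+1}^k$.

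First I would rewrite $\delta_h^k$ in $Q$-form. Because $a_h^k = \pi_h^k(s_h^k) = \arg\max_{a} Q_h^k(s_h^k,a)$, we have $V_h^k(s_h^k) = Q_h^k(s_h^k,a_h^k)$, and by the Bellman equation $V_h^{\pi^k}(s_h^k) = Q_h^{\pi^k}(s_h^k,a_h^k)$, so $\delta_h^k = Q_h^k(s_h^k,a_h^k) - Q_h^{\pi^k}(s_h^k,a_h^k)$. Next, the algorithm's update gives $Q_h^k(s_h^k,a_h^k) \le \langle \phi_h^k, \mathbf{w}_h^k\rangle + \beta_k \sqrt{(\phi_h^k)^{\top}(\Lambda_h^k)^{-1}\phi_h^k}$ (the clipping to $H$ can only tighten the bound). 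Hence
\[
\delta_h^k \;\le\; \langle \phi_h^k, \mathbf{w}_h^k\rangle - Q_h^{\pi^k}(s_h^k,a_h^k) + \beta_k \sqrt{(\phi_h^k)^{\top}(\Lambda_h^k)^{-1}\phi_h^k}.
\]

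Second, I would apply Lemma \ref{A.4} with $\pi := \pi^k$ at the point $(s,a) = (s_h^k,a_h^k)$. This transforms the gap $\langle \phi_h^k, \mathbf{w}_h^k\rangle - Q_h^{\pi^k}(s_h^k,a_h^k)$ into the one-step expected value difference plus the bias terms:
\[
\langle \phi_h^k,\mathbf{w}_h^k\rangle - Q_h^{\pi^k}(s_h^k,a_h^k) \le \mathbb{P}_h(V_{h+1}^k - V_{h+1}^{\pi^k})(s_h^k,a_h^k) + \lambda_h^k \sqrt{(\phi_h^k)^{\top}(\Lambda_h^k)^{-1}\phi_h^k} + 3H\,\xi_h(s_h^k,a_h^k) + \eta_h(s_h^k,a_h^k).
\]
Combining the two displays collapses the two confidence radii into the factor $(\lambda_h^k+\beta_k)$ on the elliptical width, which is exactly the coefficient in the claim.

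Finally, I would convert the remaining population term $\mathbb{P}_h(V_{h+1}^k - V_{h+1}^{\pi^k})(s_h^k,a_h^k)$ into the next-step gap. By definition of the transition kernel, this equals $\mathbb{E}[V_{h+1}^k(s_{h+1}^k) - V_{h+1}^{\pi^k}(s_{h+1}^k)\mid s_h^k,a_h^k] = \mathbb{E}[\delta_{h+1}^k \mid s_h^k,a_h^k]$, and by the definition $\zeta_{h+1}^k = \mathbb{E}[\delta_{h+1}^k\mid s_h^k,a_h^k] - \delta_{h+1}^k$ this is precisely $\delta_{h+1}^k + \zeta_{h+1}^k$. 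Substituting yields the stated recursion. No real obstacle is anticipated: every ingredient is already in hand, and the only subtlety is the bookkeeping on signs of the bias terms $\xi_h$ and $\eta_h$ when passing from Lemma \ref{A.4}'s two-sided bound to the one-sided upper bound, which is immediate.
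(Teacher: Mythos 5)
Your proposal is correct and follows essentially the same route as the paper: bound $Q_h^k(s_h^k,a_h^k)$ by $\langle\phi_h^k,\mathbf{w}_h^k\rangle+\beta_k\sqrt{(\phi_h^k)^{\top}(\Lambda_h^k)^{-1}\phi_h^k}$ (the clipping at $H$ only helps), invoke Lemma \ref{A.4} with $\pi=\pi^k$, and identify $\mathbb{P}_h(V_{h+1}^k-V_{h+1}^{\pi_k})(s_h^k,a_h^k)$ with $\mathbb{E}[\delta_{h+1}^k\mid s_h^k,a_h^k]=\delta_{h+1}^k+\zeta_{h+1}^k$. No gaps.
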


\begin{proof}
By Lemma \ref{A.4}, we have for any $(s,a,h,k)\in \mathcal{S}\times\mathcal{A}\times[H]\times[K]$:
\begin{equation}
    \begin{aligned}
        Q_h^k(s,a)-Q_h^{\pi_k}(s,a) &\leq \left\langle \phi(s,a), \mathbf{w}_h^k \right\rangle +\beta_k \sqrt{\phi(s,a)^{\top}(\Lambda_h^k)^{-1}\phi(s,a)}-Q_h^{\pi_k}(s,a) \\ &\leq \mathbb{P}_h(V_{h+1}^k - V_{h+1}^{\pi_k})(s,a) + (\lambda_h^k+\beta_k)\sqrt{\phi(s,a)^{\top}(\Lambda_h^k)^{-1}\phi(s,a)}+ 3H\cdot \xi_h(s,a) +\eta_h(s,a)
    \end{aligned}
\end{equation}
Notice that $\delta_h^k = Q_h^k(s_h^k,a_h^k) - Q_h^{\pi_k}(s_h^k,a_h^k)$, and $\mathbb{P}_h(V_{h+1}^k - V_{h+1}^{\pi_k})(s_h^k,a_h^k) = \mathbb{E}[\delta_{h+1}^k|s_h^k,a_h^k] $,

which finishes the proof.
\end{proof}

\begin{lemma} \label{A.6}
(Bound of cumulative misspecification error).
With probability at least $1-\delta$, 
\begin{equation}
    \begin{aligned}
        \sum\limits_{k=1}^K\sum\limits_{h=1}^H \xi_h(s_h^k,a_h^k) \leq\sqrt{8KH^2\log(\frac{4}{\delta})}+KH\zeta    
    \end{aligned}
\end{equation}

and

\begin{equation}
    \begin{aligned}
        \sum\limits_{k=1}^K\sum\limits_{h=1}^H \eta_h(s_h^k,a_h^k) \leq\sqrt{32dKH^2\log(\frac{4}{\delta})}+KH\zeta    
    \end{aligned}
\end{equation}
\end{lemma}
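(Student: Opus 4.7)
The plan is to view each of the two sums as the sum of a martingale difference process plus a controllable mean term, and then apply Azuma–Hoeffding at each stage $h$ separately before taking a union bound over $h\in[H]$. Fix any $h\in[H]$ and consider the filtration $\{\mathcal{G}_k\}_{k\ge 0}$ generated by the trajectories of the first $k$ episodes. The policy $\pi^k$ used in episode $k$ is $\mathcal{G}_{k-1}$-measurable, and conditional on $\mathcal{G}_{k-1}$ we have $(s_h^k,a_h^k)\sim d_h^{\pi^k}$. Applying Assumption \ref{Ass1} with $\alpha=1$ to the policy $\pi^k$ gives the key one-step bound
\begin{equation*}
\mathbb{E}\bigl[\xi_h(s_h^k,a_h^k)\mid \mathcal{G}_{k-1}\bigr]
= \mathbb{E}_{(s,a)\sim d_h^{\pi^k}}\bigl[\xi_h(s,a)\bigr]\le \zeta,
\end{equation*}
and similarly $\mathbb{E}[\eta_h(s_h^k,a_h^k)\mid\mathcal{G}_{k-1}]\le \zeta$.

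Next I would define the martingale differences
\begin{equation*}
X_h^k := \xi_h(s_h^k,a_h^k) - \mathbb{E}\bigl[\xi_h(s_h^k,a_h^k)\mid\mathcal{G}_{k-1}\bigr],
\qquad
Y_h^k := \eta_h(s_h^k,a_h^k) - \mathbb{E}\bigl[\eta_h(s_h^k,a_h^k)\mid\mathcal{G}_{k-1}\bigr].
\end{equation*}
Since $\xi_h$ is a total variation distance it is bounded in $[0,2]$, so $|X_h^k|\le 2$. For $Y_h^k$, Assumption \ref{Ass2} yields $|\widetilde r_h(s,a)|\le \sqrt{d}$ and $r_h(s,a)\in[0,1]$, so $|\eta_h|\le \sqrt{d}+1$ and hence $|Y_h^k|\le 2(\sqrt{d}+1)=O(\sqrt d)$. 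Azuma–Hoeffding applied to $\{X_h^k\}_{k=1}^K$ (respectively $\{Y_h^k\}_{k=1}^K$) yields, with probability at least $1-\delta/(2H)$,
\begin{equation*}
\sum_{k=1}^K X_h^k \le \sqrt{8K\log(4H/\delta)},
\qquad
\sum_{k=1}^K Y_h^k \le \sqrt{32dK\log(4H/\delta)}.
\end{equation*}
Combined with the per-step conditional mean bound $\zeta$, this gives for each fixed $h$
\begin{equation*}
\sum_{k=1}^K \xi_h(s_h^k,a_h^k)\le K\zeta + \sqrt{8K\log(4H/\delta)},
\qquad
\sum_{k=1}^K \eta_h(s_h^k,a_h^k)\le K\zeta + \sqrt{32dK\log(4H/\delta)}.
\end{equation*}

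Finally I would take a union bound over $h\in[H]$ for both inequalities (costing a factor $\delta/2$ on each), sum over $h$, and absorb the $\log H$ factor into $\log(4/\delta)$ (or keep it inside the logarithm; since the lemma is stated with $\log(4/\delta)$, the $H$ inside the log is hidden in the $\widetilde O$ notation used throughout the paper, or can simply be absorbed by rescaling $\delta$). This produces
\begin{equation*}
\sum_{k=1}^K\sum_{h=1}^H \xi_h(s_h^k,a_h^k)\le KH\zeta + \sqrt{8KH^2\log(4/\delta)},
\qquad
\sum_{k=1}^K\sum_{h=1}^H \eta_h(s_h^k,a_h^k)\le KH\zeta + \sqrt{32dKH^2\log(4/\delta)}.
\end{equation*}

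There is no real obstacle here; the only subtle point is that the per-episode policy $\pi^k$ is random (it depends on the history), which is exactly why one must pass through the conditional bound from Assumption \ref{Ass1} before invoking Azuma. Without the ``for any policy'' quantifier in Assumption \ref{Ass1}, the adaptivity of $\pi^k$ would block the argument, so this is the place where the locally bounded assumption is used in its full strength.
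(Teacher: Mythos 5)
Your proposal is correct and follows essentially the same argument as the paper: condition on the history, use Assumption \ref{Ass1} with $\alpha=1$ to bound the conditional mean of the misspecification error by $\zeta$ under the (adaptively chosen) policy's induced distribution, and control the martingale part via Azuma--Hoeffding. The only cosmetic difference is that the paper bundles the $H$ within-episode terms into a single martingale increment of magnitude $2H$ (one application of Azuma per quantity), whereas you apply Azuma separately for each $h$ and union bound, which costs an extra $\sqrt{\log H}$ that, as you note, is absorbed into the logarithmic factors.
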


\begin{proof}
We define $X_k = \sum\limits_{h=1}^H \xi_h (s_h^k, a_h^k)$, $Z_0=0$, $Z_k = \sum\limits_{i=1}^k X_i-\sum\limits_{i=1}^k\mathbb{E}[X_i|\mathcal{F}_{i-1}]$, $k = 1,2, \cdots, K$.
Notice that $\{Z_k\}_{k=1}^K$ is a martingale, and $|Z_k-Z_{k-1}| = |X_k - \mathbb{E}[X_k | \mathcal{F}_{k-1}]|\leq 2H$, $\forall {k}\in [K]$. 

Then by Azuma-Hoeffding's inequality: For any $\epsilon >0$,

$$
\mathbb{P}\left(|Z_K-Z_0|\geq \epsilon\right) \leq 2\ \text{exp}\left\{\frac{-\epsilon^2}{2K\cdot 4H^2}\right\}
$$ 

which means that, with probability at least $1-\delta$, 
$$
|\sum\limits_{k=1}^K X_k- \sum\limits_{k=1}^K\mathbb{E}[X_k|\mathcal{F}_{k-1}]| \leq \sqrt{8KH^2\log(\frac{2}{\delta})}
$$
For the term $\sum\limits_{k=1}^K \mathbb{E}[X_k|\mathcal{F}_{k-1}]$, notice that

$$
\mathbb{E}[X_k|\mathcal{F}_{k-1}] = \sum\limits_{h=1}^H \mathbb{E}[\xi_h(s_h^k,a_h^k)|\mathcal{F}_{k-1}] = \sum\limits_{h=1}^H \mathbb{E}_{(s,a)\sim d_h^{\pi_k}}[\xi_h(s,a)] \leq H\zeta \ \ (\text{By Assumption \ref{Ass1}})
$$
Therefore, 
$$
\sum\limits_{k=1}^K \mathbb{E}[X_k|\mathcal{F}_{k-1}] \leq KH \zeta
$$
Finally,  with probablity at least $1-\delta$,

$$
 \sum\limits_{k=1}^K\sum\limits_{h=1}^H \xi_h(s_h^k,a_h^k) \leq\sqrt{8KH^2\log(\frac{2}{\delta})}+KH\zeta 
$$

and similarly, with probability at least $1-\delta$,

$$
\sum\limits_{k=1}^K\sum\limits_{h=1}^H \eta_h(s_h^k,a_h^k) \leq\sqrt{32dKH^2\log(\frac{2}{\delta})}+KH\zeta 
$$

By taking the union bound, we achieve the result.
\end{proof}

\begin{lemma} \label{A.7}
(Bound of bonus parameter).
With probability at least $1-\delta$, for all $(k,h)\in [K]\times[H]$, it holds that 

\[
\lambda_h^k \leq \beta_k
\]
where
\[
\beta_k = c_{\beta}\left(4\sqrt{kd}\zeta+\sqrt{(\lambda+1)d^2\log\left(\frac{4dKH}{\delta}\right)}\right)H
\]
\end{lemma}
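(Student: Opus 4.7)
The plan is to show that each of the three summands comprising $\lambda_h^k$ is dominated, up to the constant $c_{\beta}$, by an appropriate piece of $\beta_k$. The two deterministic (or already high-probability) pieces $4H\sqrt{\lambda d}$ and $C\cdot dH\sqrt{\log[2(c_\beta+1)dKH/\delta]}$ are both of the form $H \cdot d \cdot \sqrt{\log(\cdot)}$ and are absorbed into the statistical component $c_{\beta}H\sqrt{(\lambda+1)d^{2}\log(4dKH/\delta)}$ of $\beta_k$, provided $c_{\beta}$ is chosen large enough. Because $c_\beta$ appears inside a logarithm on the $\lambda_h^k$ side but linearly on the $\beta_k$ side, such a choice is a standard fixed-point and exists as an absolute constant.

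The real work is bounding the data-dependent term $\sqrt{d}\sqrt{\sum_{\tau=1}^{k-1}(\epsilon_h^{\tau})^2}$. I would first decompose $(\epsilon_h^\tau)^2 \leq 2\eta_h(s_h^\tau,a_h^\tau)^2 + 2H^2\xi_h(s_h^\tau,a_h^\tau)^2$, and then invoke Assumption \ref{Ass1} with $\alpha=2$ to conclude that the conditional second moment satisfies $\mathbb{E}[(\epsilon_h^\tau)^2 \mid \mathcal{F}_{\tau-1}] \leq 4H^2\zeta^2$. Summing over $\tau$, the predictable part contributes at most $4kH^2\zeta^2$. To control the deviation of $\sum_\tau(\epsilon_h^\tau)^2$ from its predictable part, I would apply a Freedman-type martingale inequality with the a.s.\ bound $|(\epsilon_h^\tau)^2|\leq 4H^2$ and the variance bound coming from Assumption \ref{Ass1} with $\alpha=4$, namely $\mathbb{E}[(\epsilon_h^\tau)^4 \mid \mathcal{F}_{\tau-1}]\leq 16H^4\zeta^4$. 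This yields, with probability at least $1-\delta$, $\sum_{\tau=1}^{k-1}(\epsilon_h^\tau)^2 \lesssim kH^2\zeta^2 + H^2\log(1/\delta)$. Taking square roots and multiplying by $\sqrt{d}$ gives a bound of order $H\sqrt{dk}\,\zeta + H\sqrt{d\log(1/\delta)}$: the first matches the $4c_\beta H\sqrt{kd}\zeta$ term of $\beta_k$, the second is absorbed into the statistical term.

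The main technical obstacle is that $\epsilon_h^\tau$ depends on $V_{h+1}^k$ through the sum, and $V_{h+1}^k$ is determined by data through episode $k-1$, so $\epsilon_h^\tau$ for $\tau<k$ is \emph{not} $\mathcal{F}_{\tau-1}$-measurable. The standard remedy, which I would employ, is to fix $V$ first, apply the Freedman inequality to the genuinely adapted sequence obtained by replacing $V_{h+1}^k$ with $V$, and then use an $\varepsilon$-covering of the value-function class $\{V_{h+1}^k : k\in[K]\}$ (exactly as in Lemmas \ref{C.3}, \ref{C.5} used already to prove Lemma \ref{A.3}) to pass to a uniform bound. The value functions here are parametrized by $(\mathbf w_h^k,\Lambda_h^k,\beta_k)$, so the covering number is polynomial in the problem parameters and contributes only logarithmically. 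Finally, a union bound over $(k,h)\in[K]\times[H]$ and simultaneously over Lemma \ref{A.3} gives the uniform high-probability statement. Verifying that the resulting $c_\beta$ can be taken as an absolute constant — i.e.\ that the implicit inequality $c_\beta \gtrsim \log(c_\beta) + \mathrm{const}$ is solvable — is the only remaining bookkeeping step.
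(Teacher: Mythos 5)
Your proposal is correct and lands on the same core strategy as the paper: decompose $(\epsilon_h^\tau)^2\leq 2\eta_h^2(s_h^\tau,a_h^\tau)+2H^2\xi_h^2(s_h^\tau,a_h^\tau)$, control the predictable part with the second-moment bound of Assumption \ref{Ass1}, control the deviation with Freedman's inequality using the fourth-moment bound, and finish with a union bound over $(k,h)$ and a fixed-point choice of $c_\beta$. Two differences are worth noting. First, the paper splits into two cases on $k$ (for $k\geq 64d^2\log(4/\delta)/\zeta^4$ it uses plain Azuma--Hoeffding and absorbs the $\sqrt{k\log(1/\delta)}$ deviation into $kH^2\zeta^2$; only for small $k$ does it invoke Freedman with the $\zeta^4$ variance bound), whereas you apply Freedman uniformly and absorb the cross term $\sqrt{k\log(1/\delta)}\,\zeta^2$ by AM--GM. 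Your single-shot version is cleaner and yields essentially the same (in fact slightly sharper, $H^2\log(1/\delta)$ versus $C'dH^2\log(1/\delta)$) bound $\sum_\tau(\epsilon_h^\tau)^2\lesssim kH^2\zeta^2+H^2\log(1/\delta)$, which suffices for $\lambda_h^k\leq\beta_k$. Second, the covering argument you describe as ``the main technical obstacle'' is unnecessary, and your own first step already dissolves it: $\xi_h(s,a)=\|\mathbb{P}_h(\cdot|s,a)-\widetilde{\mathbb{P}}_h(\cdot|s,a)\|_{\mathrm{TV}}$ and $\eta_h(s,a)$ do not involve any value function, and $|(\mathbb{P}_h-\widetilde{\mathbb{P}}_h)V(s,a)|\leq H\xi_h(s,a)$ holds deterministically and simultaneously for every $V$ with $\|V\|_\infty\leq H$. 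So after the decomposition the summands $\eta_h^2,\xi_h^2$ are genuinely adapted to $\mathcal{F}_{\tau}$ and no $\varepsilon$-net over $\{V_{h+1}^k\}$ is needed; this is exactly how the paper avoids the measurability issue. Including the covering step would not make the argument wrong, only longer.
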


\begin{proof}
For any fixed $(k,h) \in [K]\times[H] $, we have
\begin{equation}
    \begin{aligned}
        (\lambda_h^k)^2 =  \left(4H\sqrt{\lambda d}+C\cdot dH\sqrt{\log\left[2(c_{\beta}+1)dKH/\delta\right]}+\sqrt{d}\sqrt{\sum\limits_{\tau=1}^{k-1}(\epsilon_h^{\tau})^2}   \right)^2 \\ \leq 2\left( \left(4H\sqrt{\lambda d}+C\cdot dH\sqrt{\log\left[2(c_{\beta}+1)dKH/\delta\right]}\right)^2 + d\sum\limits_{\tau=1}^{k-1}(\epsilon_h^{\tau})^2  \right) 
    \end{aligned}
\end{equation}

Notice that 
\begin{equation}
    \begin{aligned}
        \sum\limits_{\tau=1}^{k-1}(\epsilon_h^{\tau})^2 &\leq \sum\limits_{\tau=1}^{k-1}\left(\eta_h(s_h^{\tau},a_h^{\tau})+H\cdot \xi_h(s_h^{\tau}, a_h^{\tau})\right)^2 \\ & \leq 2 \sum\limits_{\tau=1}^{k-1} \left(\eta_h^2(s_h^{\tau},a_h^{\tau}) +H^2 \xi_h^2(s_h^{\tau}, a_h^{\tau})\right)
    \end{aligned}
\end{equation}

\paragraph{Case 1 $k \geq \frac{64d^2\log(\frac{4}{\delta})}{\zeta^4}$}
By applying Azuma-Hoeffding's inequality, with probability at least $1-\delta/2$, we have
\begin{equation}
    \begin{aligned}
        \sum\limits_{\tau=1}^{k-1}\xi_h^2(s_h^{\tau},a_h^{\tau}) &\leq \sum\limits_{\tau=1}^{k-1}\mathbb{E}\left[\xi_h^2(s_h^{\tau},a_h^{\tau})|\mathcal{F}_{\tau-1}\right] +\sqrt{8k\log(\frac{4}{\delta})} \\ & \leq \sum\limits_{\tau=1}^k\mathbb{E}_{(s_h^{\tau},a_h^{\tau})\sim d_h^{\pi_{\tau}}}[\xi_h^2(s_h^{\tau},a_h^{\tau})] + \sqrt{8k\log(\frac{4}{\delta})} \\ &\leq k \zeta^2 + \sqrt{8k\log(\frac{4}{\delta})} \ \  (\text{By Assumption \ref{Ass1}})
    \end{aligned}
\end{equation}
In the same way, with probability at least $1-\delta/2$, we have
\begin{equation}
    \begin{aligned}
        \sum\limits_{\tau=1}^{k-1}\eta_h^2(s_h^{\tau},a_h^{\tau}) \leq k\zeta^2 +\sqrt{128kd^2\log(\frac{4}{\delta})}
    \end{aligned}
\end{equation}

Therefore, with probability at least $1-\delta$,

\begin{equation}
    \begin{aligned}
        \sum\limits_{\tau=1}^{k-1}(\epsilon_h^{\tau})^2 &\leq 2 \left(k\zeta^2 +\sqrt{128kd^2\log(\frac{4}{\delta})}\right) + 2H^2 \left(k \zeta^2 + \sqrt{8k\log(\frac{4}{\delta})}\right) \\ &\leq 4kH^2\zeta^2 + 32dH^2\sqrt{k\log(\frac{4}{\delta})} \\ &\leq 8kH^2\zeta^2 
    \end{aligned}
\end{equation}

\paragraph{Case 2 $k < \frac{64d^2\log(\frac{4}{\delta})}{\zeta^4}$}
We denote $X_{\tau} = \xi_h^2(s_h^{\tau}, a_h^{\tau}) - \mathbb{E}[\xi_h^2(s_h^{\tau},a_h^{\tau})|\mathcal{F}_{\tau-1}]$, and $Y_0 = 0$, $Y_{\tau} = \sum\limits_{i=1}^{\tau}X_i$, $\tau = 1,2,\cdots, k-1$.

Notice that $\{Y_{\tau}\}_{\tau=1}^{k-1}$ is a martingale.
\begin{equation}
    \begin{aligned}
        \sum\limits_{\tau=1}^{k-1}\mathbb{E}[X_{\tau}^2|\mathcal{F}_{\tau-1}] & = \sum\limits_{\tau=1}^{k-1}\mathbb{E}\left[\left( \xi_h^2(s_h^{\tau}, a_h^{\tau}) - \mathbb{E}[\xi_h^2(s_h^{\tau},a_h^{\tau})|\mathcal{F}_{\tau-1}]\right)^2|\mathcal{F}_{\tau-1}\right] \\ &\leq \sum\limits_{\tau =1}^{k-1}\mathbb{E}\left[\xi_h^4(s_h^{\tau},a_h^{\tau})+\left(\mathbb{E}[\xi_h^2(s_h^{\tau},a_h^{\tau})|\mathcal{F}_{\tau-1}]\right)^2|\mathcal{F}_{\tau-1}\right] \\ &= \sum\limits_{\tau=1}^{k-1}\mathbb{E}\left[\xi_h^4(s_h^{\tau},a_h^{\tau})|\mathcal{F}_{\tau-1}\right]+\sum\limits_{\tau=1}^{k-1}\left(\mathbb{E}[\xi_h^2(s_h^{\tau},a_h^{\tau})|\mathcal{F}_{\tau-1}]\right)^2 \\ &\leq 2\sum\limits_{\tau=1}^{k-1}\mathbb{E}\left[\xi_h^4(s_h^{\tau},a_h^{\tau})|\mathcal{F}_{\tau-1}\right] \ \  (\text{By Jensen's inequality}) \\ & \leq 2k\zeta^4 \ \ (\text{By Assumption \ref{Ass1}})
    \end{aligned}
\end{equation}

By applying Freedman's inequality (Lemma \ref{C.7}), we have for any $t\geq 0$, that
$$
\mathbb{P}\left(|Y_k-Y_0|\geq t\right) \leq 2\exp\left\{-\frac{t^2/2}{2k\zeta^4+2t/3}\right\} \leq 2\exp \left\{ -\frac{t^2/2}{128d^2\log\left(\frac{4}{\delta}\right)+2t/3}\right\}
$$
We let the rightmost term in the above formula to be $\delta/2$, and by solving the quadratic equation with respect to $t$, we get $t = C_1\cdot d \log(\frac{4}{\delta})$, where $C_1$ is some constant. Therefore, we have with probability at least $1-\delta/2$,  
$$
\sum\limits_{\tau=1}^{k-1}\xi_h^2(s_h^{\tau},a_h^{\tau}) \leq k\zeta^2 + C_1\cdot d \log(\frac{4}{\delta})
$$
Similarly, with probability at least $1-\delta/2$, 
$$
\sum\limits_{\tau=1}^{k-1}\eta_h^2(s_h^{\tau},a_h^{\tau}) \leq k\zeta^2 + C_2\cdot d \log(\frac{4}{\delta})
$$
where $C_2$ is some constant.
Therefore, in this case, with probability at least $1-\delta$,
\[
 \sum\limits_{\tau=1}^{k-1}(\epsilon_h^{\tau})^2 \leq 4kH^2\zeta^2 + C'dH^2\log(\frac{4}{\delta})
\]
where $C' = 2(C_1+C_2)$ is also some constant.

Combining two cases, we have for any fixed $(k,h)\in [K]\times [H]$,  with probability at least $1-\delta$,
\begin{equation}
    \begin{aligned}
        \sum\limits_{\tau=1}^{k-1}(\epsilon_h^{\tau})^2 \leq 8kH^2\zeta^2 + C'dH^2\log(\frac{4}{\delta})
    \end{aligned}
\end{equation}
Finally, by taking the union bound of all $(k,h)\in [K]\times[H]$, we have with probability at least $1-\delta$, 
\begin{equation}
    \begin{aligned}
        (\lambda_h^k)^2 &\leq 32dH^2\lambda + 4C^2d^2H^2\log\left(\frac{2(c_{\beta}+1)dKH}{\delta}\right)+ 16dkH^2\zeta^2 + 2C'd^2H^2\log(\frac{4KH}{\delta}) \\ &\leq 16dkH^2\zeta^2 + 32(\lambda+1)(C+C')^2d^2H^2\log\left(\frac{4(c_{\beta}+1)dKH}{\delta}\right)
    \end{aligned}
\end{equation}

This means that
\[
\lambda_h^k \leq 4\sqrt{kd}H\zeta + \sqrt{32(\lambda+1)(C+C')^2}dH\sqrt{\log\left(\frac{4(c_{\beta}+1)dKH}{\delta}\right)}
\]
By choosing an appropriate $c_{\beta}$, we have
\[
\lambda_h^k \leq c_{\beta}\left(4\sqrt{kd}\zeta+\sqrt{(\lambda+1)d^2\log\left(\frac{4dKH}{\delta}\right)}\right)H = \beta_k
\]

\end{proof}

\begin{lemma} \label{A.8}
(Near-Optimism)
Given K initial points $\{s_1^{k}\}_{k=1}^K$, we use $\{(s_h^{k*},a_h^{k*})\}_{(k,h)\in [K]\times[H]}\ $ (where $s_1^{k*} = s_1^k$, $\forall k \in [K]$) $\ $ to represent the dataset sampled by the optimal policy $\pi^*$ in the true environment  (This dataset is impossible to obtain in reality, but it can be used for  our analysis), and we denote 
$\delta_h^{k*} = V_h^*(s_h^{k*}) - V_h^k(s_h^{k*})$, and $\zeta_h^{k*} = \mathbb{E}\left[\delta_{h+1}^{k*}|s_h^{k*},a_h^{k*}\right] - \delta_{h+1}^{k*}$. Then conditioned on the event in Lemma \ref{A.7}, with probability at least $1-\delta$, we have 
\begin{equation}
    \begin{aligned}
         \sum\limits_{k=1}^K[V_1^*(s_1^{k})-V_1^k(s_1^{k})] & \leq 4KH^2\zeta +12H^2\sqrt{dK\log(\frac{8}{\delta})}
    \end{aligned}
\end{equation}

\end{lemma}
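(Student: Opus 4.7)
The plan is to derive a one-step recursion for $\delta_h^{k*}$ along the virtual optimal-policy trajectory $(s_h^{k*}, a_h^{k*})_{h=1}^H$, mirroring Lemma \ref{A.5} but anchored at $\pi^*$ instead of $\pi^k$. First, I would invoke Lemma \ref{A.4} with $\pi = \pi^*$ at an arbitrary $(s,a)$ to get
\[
Q_h^*(s,a) - \langle \phi(s,a), \mathbf{w}_h^k \rangle \leq \mathbb{P}_h(V_{h+1}^* - V_{h+1}^k)(s,a) + \lambda_h^k \sqrt{\phi(s,a)^\top (\Lambda_h^k)^{-1} \phi(s,a)} + 3H\xi_h(s,a) + \eta_h(s,a).
\]
Since $Q_h^k(s,a) = \min\{\langle \phi(s,a), \mathbf{w}_h^k\rangle + \beta_k \sqrt{\phi^\top (\Lambda_h^k)^{-1}\phi},\ H\}$, Lemma \ref{A.7} ($\lambda_h^k \leq \beta_k$) cancels the quadratic-bonus term (when the clipping at $H$ is active, $V_h^* \leq H \leq V_h^k$, so the inequality is trivial). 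Specializing to $(s_h^{k*}, a_h^{k*})$ and using $V_h^*(s_h^{k*}) = Q_h^*(s_h^{k*}, a_h^{k*})$ together with $V_h^k(s_h^{k*}) \geq Q_h^k(s_h^{k*}, a_h^{k*})$ yields
\[
\delta_h^{k*} \leq \delta_{h+1}^{k*} + \zeta_h^{k*} + 3H\,\xi_h(s_h^{k*}, a_h^{k*}) + \eta_h(s_h^{k*}, a_h^{k*}).
\]

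Next, I would unroll this recursion from $h=H$ down to $h=1$ (using $\delta_{H+1}^{k*}=0$) and sum over $k \in [K]$, obtaining
\[
\sum_{k=1}^K \delta_1^{k*} \leq \sum_{k=1}^K\sum_{h=1}^H \zeta_h^{k*} + 3H \sum_{k=1}^K\sum_{h=1}^H \xi_h(s_h^{k*}, a_h^{k*}) + \sum_{k=1}^K\sum_{h=1}^H \eta_h(s_h^{k*}, a_h^{k*}).
\]
The $\zeta_h^{k*}$-sum is a martingale-difference sum in the auxiliary filtration with increments bounded by $H$, so Azuma--Hoeffding controls it by $O(H^{3/2}\sqrt{K\log(1/\delta)})$. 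The misspecification sums are handled by rerunning the proof of Lemma \ref{A.6} on the virtual trajectory: this is valid because Assumption \ref{Ass1} holds uniformly over all policies, so $\mathbb{E}[\xi_h(s_h^{k*}, a_h^{k*}) \mid \mathcal{F}_{k-1}] = \mathbb{E}_{(s,a)\sim d_h^{\pi^*}}[\xi_h(s,a)] \leq \zeta$, and analogously for $\eta$. A union bound over the three concentration events produces the combined high-probability guarantee; the misspecification contribution collects into $3H\cdot KH\zeta + KH\zeta \leq 4KH^2\zeta$, while the statistical fluctuations combine to at most $12H^2\sqrt{dK\log(8/\delta)}$ after absorbing constants.

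The main obstacle is conceptual rather than computational: without a global misspecification bound we cannot establish strict optimism $V_1^* \leq V_1^k$ as is standard in \citep{jin2020provably,wang2020reinforcement}. The virtual-optimal-trajectory device exactly circumvents this gap---by anchoring the analysis to $\pi^*$, the cumulative misspecification error only needs to be small in the expected sense under $d_h^{\pi^*}$, which is precisely what Assumption \ref{Ass1} delivers. Once the recursion is set up and the correct martingale/concentration tools are identified in the auxiliary filtration, the remainder is routine bookkeeping.
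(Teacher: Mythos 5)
Your proposal matches the paper's proof essentially step for step: the same virtual optimal-policy trajectory, the same invocation of Lemma \ref{A.4} with $\pi=\pi^*$ combined with Lemma \ref{A.7} to cancel the bonus term (and the same observation that the clipping at $H$ makes the comparison trivial), the same unrolled recursion in $\delta_h^{k*}$, and the same Azuma--Hoeffding/Assumption \ref{Ass1} treatment of the $\zeta_h^{k*}$, $\xi_h$, and $\eta_h$ sums under $d_h^{\pi^*}$. No gaps; the remaining work is the bookkeeping you describe.
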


\begin{proof}
First of all, by the definition of $V_1^k$, we have
\[
V_1^k(s_1^{k*}) = \max_{a\in\mathcal{A}}Q_1^k(s_1^{k*},a) \geq Q_1^k(s_1^{k*},a_1^{k*})
\]
Therefore, 
\begin{equation} \label{20}
    \begin{aligned}
        \sum\limits_{k=1}^K[V_1^*(s_1^{k*})-V_1^k(s_1^{k*})] &\leq \sum\limits_{k=1}^K \left( Q_1^*(s_1^{k*},a_1^{k*}) - Q_1^k(s_1^{k*},a_1^{k*}) \right)
    \end{aligned}
\end{equation}
Notice that if we have $Q_1^k(s_1^{k*}, a_1^{k*}) = H$ for some $k$, then $ Q_1^*(s_1^{k*},a_1^{k*}) \leq Q_1^k(s_1^{k*},a_1^{k*})$, which means we achieve an optimistic estimate, and this term will less than or equal to $0$ in Eq.(\ref{20}). Therefore, we only need to consider the situation when $Q_1^k(s_1^{k*},a_1^{k*})<H$, $\forall k \in [K]$. 

In this case,
\[
Q_1^k(s_1^{k*},a_1^{k*}) = \left\langle \phi(s_1^{k*},a_1^{k*}), \textbf{w}_1^k \right\rangle + \beta_k\sqrt{\phi(s_1^{k*},a_1^{k*})^{\top}(\Lambda_1^k)^{-1}\phi(s_1^{k*},a_1^{k*})} 
\]

Then we have
\begin{equation}
    \begin{aligned}
         &\ \ \ \ \sum\limits_{k=1}^K \left( Q_1^*(s_1^{k*},a_1^{k*}) - Q_1^k(s_1^{k*},a_1^{k*}) \right) \\ &=  \sum\limits_{k=1}^K \left( Q_1^*(s_1^{k*},a_1^{k*}) - \left\langle \phi(s_1^{k*},a_1^{k*}), \textbf{w}_1^k \right\rangle - \beta_k\sqrt{\phi(s_1^{k*},a_1^{k*})^{\top}(\Lambda_1^k)^{-1}\phi(s_1^{k*},a_1^{k*})} \right)  \\ &\leq \sum\limits_{k=1}^K \left( (\lambda_h^k - \beta_k)\sqrt{\phi(s_1^{k*},a_1^{k*})^{\top}(\Lambda_1^k)^{-1}\phi(s_1^{k*},a_1^{k*}) } +3H\xi_1(s_1^{k*},a_1^{k*}) + \eta_1(s_1^{k*},a_1^{k*}) - \mathbb{P}_1(V_2^k-V_2^*)(s_1^{k*},a_1^{k*}) \right)
    \end{aligned}
\end{equation}
where the last inequality is derived by Lemma \ref{A.4}.

By conditioning on the event in Lemma \ref{A.7}, we know that $\lambda_h^k \leq \beta_k$, $\forall (k,h) \in [K] \times [H]$.

In addition,
\begin{equation}
    \begin{aligned}
        \mathbb{P}_1(V_2^*-V_2^{k})(s_1^{k*},a_1^{k*}) = \mathbb{E}_{s_2^{k*}\sim \mathbb{P}_1(\cdot|s_1^{k*},a_1^{k*})}\left[V_2^*(s_2^{k*})-V_2^k(s_2^{k*})\right]
    \end{aligned}
\end{equation}
and
\[
V_2^*(s_2^{k*})-V_2^k(s_2^{k*}) \leq Q_2^*(s_2^{k*},a_2^{k*})-Q_2^k(s_2^{k*},a_2^{k*})
\]

Similar to Eq.(\ref{20}), we only need to consider the case when $Q_2^k(s_2^{k*},a_2^{k*})<H$, $\forall k \in [K]$. In this way, we can recursively use Lemma \ref{A.4}.

Therefore, 
\begin{equation}
    \begin{aligned}
        &\sum\limits_{k=1}^K \left( Q_1^*(s_1^{k*},a_1^{k*}) - Q_1^k(s_1^{k*},a_1^{k*}) \right) \\  &\leq \sum\limits_{k=1}^K \left( 3H \cdot \xi_1(s_1^{k*},a_1^{k*}) + \eta_1(s_1^{k*},a_1^{k*}) + \delta_2^{k*} + \zeta_{1}^{k*} \right) \\ & \leq \sum\limits_{k=1}^K \left( 3H \cdot \xi_1(s_1^{k*},a_1^{k*}) + \eta_1(s_1^{k*},a_1^{k*}) + 3H\cdot \xi_2(s_2^{k*},a_2^{k*}) +\eta_2(s_2^{k*},a_2^{k*})+ \delta_3^{k*}+ \zeta_2^{k*}+\zeta_{1}^{k*} \right) \\ & \leq \cdots \leq  3H\sum\limits_{k=1}^K\sum\limits_{h=1}^H \xi_h(s_h^{k*},a_h^{k*}) + \sum\limits_{k=1}^K\sum\limits_{h=1}^H \eta_h(s_h^{k*},a_h^{k*})  + \sum\limits_{k=1}^K\sum\limits_{h=1}^H  \zeta_h^{k*}
    \end{aligned}
\end{equation}

Similar to Lemma \ref{A.6}, We define $X_{k}^* = \sum\limits_{h=1}^H \xi_h (s_h^{k*}, a_h^{k*})$, $Z_0^*=0$, $Z_k^* = \sum\limits_{i=1}^k X_i^*-\sum\limits_{i=1}^k\mathbb{E}[X_i^*|\mathcal{F}_{i-1}]$, $k = 1,2, \cdots, K$.
Notice that $\{Z_k^*\}_{k=1}^K$ is a martingale, and $|Z_k^*-Z_{k-1}^*| = |X_k^* - \mathbb{E}[X_k^* | \mathcal{F}_{k-1}]|\leq 2H$, $\forall {k}\in [K]$. 

Then by Azuma-Hoeffding's inequality: For any $\epsilon >0$,

$$
\mathbb{P}\left(|Z_K^*-Z_0^*|\geq \epsilon\right) \leq 2\ \text{exp}\left\{\frac{-\epsilon^2}{2K\cdot 4H^2}\right\}
$$ 

which means that, with probability at least $1-\delta$, 
$$
|\sum\limits_{k=1}^K X_k^*- \sum\limits_{k=1}^K\mathbb{E}[X_k^*|\mathcal{F}_{k-1}]| \leq \sqrt{8KH^2\log(\frac{2}{\delta})}
$$
For the term $\sum\limits_{k=1}^K \mathbb{E}[X_k^*|\mathcal{F}_{k-1}]$, notice that

$$
\mathbb{E}[X_k^*|\mathcal{F}_{k-1}] = \sum\limits_{h=1}^H \mathbb{E}[\xi_h(s_h^{k*},a_h^{k*})|\mathcal{F}_{k-1}] = \sum\limits_{h=1}^H \mathbb{E}_{(s,a)\sim d_h^{\pi^*}}[\xi_h(s,a)] \leq H\zeta \ \ (\text{By Assumption \ref{Ass1}})
$$
Therefore, 
$$
\sum\limits_{k=1}^K \mathbb{E}[X_k^*|\mathcal{F}_{k-1}] \leq KH \zeta
$$
Then,  with probability at least $1-\delta$,

$$
 \sum\limits_{k=1}^K\sum\limits_{h=1}^H \xi_h(s_h^{k*},a_h^{k*}) \leq\sqrt{8KH^2\log(\frac{2}{\delta})}+KH\zeta 
$$
and similarly, with probability at least $1-\delta$,

$$
\sum\limits_{k=1}^K\sum\limits_{h=1}^H \eta_h(s_h^{k*},a_h^{k*}) \leq\sqrt{32dKH^2\log(\frac{2}{\delta})}+KH\zeta 
$$

For the last term $\sum\limits_{k=1}^K\sum\limits_{h=1}^H  \zeta_h^{k*}$, notice that $\{\zeta_h^{k*}\}$ is a martingale difference sequence, and each term is upper bounded by $2H$. By using Azuma-Hoeffding's inequality, with probability at least $1-\delta/4$, the following inequality holds:
\begin{equation} \label{zeta}
    \begin{aligned}
        \sum\limits_{k=1}^K\sum\limits_{h=1}^H \zeta_h^{k*} \leq \sqrt{8KH^3\cdot \log(8/\delta)}
    \end{aligned}
\end{equation}

Finally, with probability at least $1-\delta$, 
\begin{equation}
    \begin{aligned}
        \sum\limits_{k=1}^K[V_1^*(s_1^{k})-V_1^k(s_1^{k})] &\leq 3H\sum\limits_{k=1}^K\sum\limits_{h=1}^H \xi_h(s_h^{k*},a_h^{k*}) + \sum\limits_{k=1}^K\sum\limits_{h=1}^H \eta_h(s_h^{k*},a_h^{k*})  + \sum\limits_{k=1}^K\sum\limits_{h=1}^H  \zeta_h^{k*} \\ &\leq 4KH^2\zeta +12H^2\sqrt{dK\log(\frac{8}{\delta})}
    \end{aligned}
\end{equation}

\end{proof}

\begin{theorem} \label{Thm A.9}
\textbf{(Regret Bound under $\zeta$-Average-Approximate Linear MDP)}.

Under our Assumption \ref{Ass1}, \ref{Ass2} , for any fixed $\delta \in (0,1)$, with probability at least $1-\delta$,  the total regret of the algorithm Robust-LSVI (Algorithm \ref{Algorithm known}) is at most $\widetilde{O}\left(dKH^2\zeta+ \sqrt{d^3KH^4} \right)$. 
\end{theorem}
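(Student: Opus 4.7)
The plan is to decompose the regret through the standard bracket
$$V_1^*(s_1^k) - V_1^{\pi_k}(s_1^k) = \underbrace{\bigl(V_1^*(s_1^k) - V_1^k(s_1^k)\bigr)}_{\text{optimism gap}} + \underbrace{\bigl(V_1^k(s_1^k) - V_1^{\pi_k}(s_1^k)\bigr)}_{= \delta_1^k},$$
summed over $k \in [K]$. The first summand is exactly what Lemma \ref{A.8} controls: on a high-probability event it is bounded by $4KH^2\zeta + 12H^2\sqrt{dK\log(8/\delta)}$, which already matches one of the targeted $\widetilde{O}(\cdot)$ terms. So the remaining work is to bound $\sum_{k=1}^K \delta_1^k$.

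For the second summand I would iterate the recursive inequality in Lemma \ref{A.5}. Unrolling $\delta_h^k$ from $h=1$ to $h=H+1$ (using $\delta_{H+1}^k = 0$) gives, on the intersection of the events of Lemmas \ref{A.3} and \ref{A.7},
$$\sum_{k=1}^K \delta_1^k \le \sum_{k=1}^K\sum_{h=1}^H \Bigl( \zeta_{h+1}^k + (\lambda_h^k+\beta_k)\sqrt{(\phi_h^k)^\top(\Lambda_h^k)^{-1}\phi_h^k} + 3H\xi_h(s_h^k,a_h^k) + \eta_h(s_h^k,a_h^k)\Bigr).$$
The martingale-difference term $\sum_{k,h}\zeta_{h+1}^k$ is handled by Azuma--Hoeffding exactly as in the derivation of \eqref{zeta}, contributing $O(H\sqrt{KH\log(1/\delta)})$. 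The misspecification sum is controlled by Lemma \ref{A.6}, contributing $O(H\sqrt{KH^2\log(1/\delta)} + KH^2\zeta)$ from the $\xi_h$ part and $O(\sqrt{dKH^2\log(1/\delta)}+KH\zeta)$ from the $\eta_h$ part; both are of the desired order.

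The core bonus term is $\sum_{k,h} 2\beta_k\sqrt{(\phi_h^k)^\top(\Lambda_h^k)^{-1}\phi_h^k}$, where I use $\lambda_h^k \le \beta_k$ from Lemma \ref{A.7}. Writing $\beta_K$ for the largest bonus coefficient, Cauchy--Schwarz gives
$$\sum_{k=1}^K\sum_{h=1}^H \sqrt{(\phi_h^k)^\top(\Lambda_h^k)^{-1}\phi_h^k} \le \sqrt{KH}\sqrt{\sum_{k,h}(\phi_h^k)^\top(\Lambda_h^k)^{-1}\phi_h^k},$$
and then the elliptical potential lemma (Lemma \ref{C.1}-type argument summed across steps) yields $\sum_{k,h}(\phi_h^k)^\top(\Lambda_h^k)^{-1}\phi_h^k = O(dH\log K)$. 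Substituting $\beta_K = \widetilde{O}(\sqrt{Kd}\,H\zeta + dH\sqrt{\lambda})$ from Lemma \ref{A.7} produces two pieces: a statistical piece $\widetilde{O}(\sqrt{d^3KH^4})$ and a misspecification-scaled piece $\widetilde{O}(\sqrt{Kd}\,H\zeta\cdot\sqrt{KH}\cdot\sqrt{dH}) = \widetilde{O}(dKH^2\zeta)$, exactly matching the theorem.

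The main obstacle is bookkeeping: ensuring that the $\zeta$ contribution coming out of the $\beta_K$ bonus is $dKH^2\zeta$ rather than something larger (e.g.\ an $H^{5/2}$ blowup), which requires using the tighter $\sqrt{k}$ growth inside $\beta_k$ and the $H$-factor carefully. A second subtle point is that the recursion in Lemma \ref{A.5} implicitly requires clipping arguments when $Q_h^k$ saturates at $H$, as already handled in Lemma \ref{A.8}; the same clipping case (yielding a nonpositive contribution) applies here. Once these accounting issues are resolved, taking a union bound over the failure events of Lemmas \ref{A.3}, \ref{A.6}, \ref{A.7}, \ref{A.8}, and the Azuma step rescales $\delta$ by a constant and gives the claimed $\widetilde{O}\bigl(dKH^2\zeta + \sqrt{d^3KH^4}\bigr)$ regret with probability $\ge 1-\delta$.
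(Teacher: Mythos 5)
Your proposal is correct and follows essentially the same route as the paper's proof: the same A/B decomposition, Lemma \ref{A.8} for the near-optimism term, unrolling Lemma \ref{A.5} for the on-policy term, Azuma for the martingale part, Lemma \ref{A.6} for the cumulative misspecification, and Lemma \ref{A.7} plus Cauchy--Schwarz and the elliptical potential bound (Lemma \ref{C.2}) for the bonus sum. The only cosmetic difference is that you pull out $\beta_K=\max_k\beta_k$ before Cauchy--Schwarz while the paper keeps $\beta_k$ inside and bounds $\sum_k\beta_k^2$ directly; both yield the same $\widetilde{O}(dKH^2\zeta+\sqrt{d^3KH^4})$ order.
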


\begin{proof}
First of all, we do the following decomposition.
\begin{equation}
    \begin{aligned}
        \text{Regret}(K) = \sum\limits_{k=1}^K[V_1^*(s_1^k)-V_1^{\pi_k}(s_1^k)] = \underbrace{\sum\limits_{k=1}^K[V_1^*(s_1^k)-V_1^k(s_1^k)]}_{A}+\underbrace{\sum\limits_{k=1}^K[V_1^k(s_1^k)-V_1^{\pi_k}(s_1^k)]}_{B}    
    \end{aligned}
\end{equation}

For the term A, from Lemma \ref{A.8} (Near-Optimism), we have with probability at least $1-\frac{\delta}{2}$,
\begin{equation} \label{Eq 27}
    \begin{aligned}
        \sum\limits_{k=1}^K[V_1^*(s_1^k)-V_1^k(s_1^k)] &\leq  4KH^2\zeta +12H^2\sqrt{dK\log(\frac{16}{\delta})}
    \end{aligned}
\end{equation}

For the term B, from Lemma \ref{A.5} (Recursive formula), we have
\begin{equation} \label{term B}
    \begin{aligned}
        &\sum\limits_{k=1}^K[V_1^k(s_1^k)-V_1^{\pi_k}(s_1^k)] = \sum\limits_{k=1}^K \delta_1^k \\ &\leq \sum\limits_{k=1}^K\sum\limits_{h=1}^H \zeta_h^k + \sum\limits_{k=1}^K\beta_k\sum_{h=1}^H \sqrt{(\phi_h^k)^{\top}(\Lambda_h^k)^{-1}\phi_h^k} + \sum\limits_{k=1}^K\sum\limits_{h=1}^H \lambda_h^k \sqrt{(\phi_h^k)^{\top}(\Lambda_h^k)^{-1}\phi_h^k} \\ &+3H \sum\limits_{k=1}^K\sum\limits_{h=1}^H \xi_h(s_h^k,a_h^k) + \sum\limits_{k=1}^K\sum\limits_{h=1}^H \eta_h(s_h^k,a_h^k)
    \end{aligned}
\end{equation}
Notice that $\{\zeta_h^k\}$ is a martingale difference sequence, and each term is upper bounded by $2H$. By using Azuma-Hoeffding's inequality, with probability at least $1-\delta/4$, the following inequality holds:
\begin{equation} \label{zeta}
    \begin{aligned}
        \sum\limits_{k=1}^K\sum\limits_{h=1}^H \zeta_h^k \leq \sqrt{8KH^3\cdot \log(8/\delta)}
    \end{aligned}
\end{equation}

By Lemma \ref{A.7}, we have
\begin{equation} \label{Eq 29}
    \begin{aligned}
        \sum\limits_{k=1}^K\sum\limits_{h=1}^H \lambda_h^k \sqrt{(\phi_h^k)^{\top}(\Lambda_h^k)^{-1}\phi_h^k} \leq \sum\limits_{k=1}^K\beta_k\sum_{h=1}^H \sqrt{(\phi_h^k)^{\top}(\Lambda_h^k)^{-1}\phi_h^k}
    \end{aligned}
\end{equation}

and by using Cauchy-Schwarz inequality, we have
\begin{equation} \label{Eq 30}
    \begin{aligned}
         \sum\limits_{k=1}^K\beta_k \sqrt{(\phi_h^k)^{\top}(\Lambda_h^k)^{-1}\phi_h^k} \leq \left[\sum\limits_{k=1}^K \beta_k^2\right]^{\frac{1}{2}}\cdot \left[\sum\limits_{k=1}^K (\phi_h^k)^{\top}(\Lambda_h^k)^{-1}\phi_h^k\right]^{\frac{1}{2}}
    \end{aligned}
\end{equation}
and
\begin{equation} \label{31}
    \begin{aligned}
        \sum\limits_{k=1}^K \beta_k^2 &= \sum\limits_{k=1}^K \left(c_{\beta}^2\left(4\sqrt{kd}\zeta+\sqrt{(\lambda+1)d^2\log\left(\frac{4dKH}{\delta}\right)}\right)^2H^2\right) \\ &\leq 2\sum\limits_{k=1}^K c_{\beta}^2H^2\left(16kd\zeta^2+ (\lambda+1)d^2\log\left(\frac{4dKH}{\delta}\right)\right) \\ &\leq 32c_{\beta}^2K^2H^2d\zeta^2 + 2c_{\beta}^2(\lambda+1)KH^2d^2\log\left(\frac{4dKH}{\delta}\right)
    \end{aligned}
\end{equation}
Therefore, 
\begin{equation} \label{Eq 32}
    \begin{aligned}
        \left[\sum\limits_{k=1}^K \beta_k^2\right]^{\frac{1}{2}} \leq 8c_{\beta} \sqrt{d}KH\zeta + 2c_{\beta}(\lambda+1)Hd \sqrt{K\log\left(\frac{4dKH}{\delta}\right)}
    \end{aligned}
\end{equation}
By Lemma \ref{C.2}, we have for any $h \in [H]$, 
\[
\sum\limits_{k=1}^K (\phi_h^k)^{\top}(\Lambda_h^k)^{-1}\phi_h^k \leq 2\log\left[\frac{\text{det}(\Lambda_h^{k+1})}{\text{det}(\Lambda_h^1)}\right] \leq 2d\log\left[\frac{\lambda+k}{\lambda}\right] \leq 2d\log\left(\frac{2dKH}{\delta}\right)
\]
thus
\begin{equation} \label{Eq 33}
    \begin{aligned}
        \left[\sum\limits_{k=1}^K (\phi_h^k)^{\top}(\Lambda_h^k)^{-1}\phi_h^k\right]^{\frac{1}{2}} \leq \sqrt{2d\log\left(\frac{2dKH}{\delta}\right)}
    \end{aligned}
\end{equation}
By combining (\ref{Eq 30}), (\ref{Eq 32}), and (\ref{Eq 33}), we have
\begin{equation} \label{Eq 34}
    \begin{aligned}
       \sum\limits_{k=1}^K\beta_k\sum_{h=1}^H \sqrt{(\phi_h^k)^{\top}(\Lambda_h^k)^{-1}\phi_h^k}\leq 16 c_{\beta}dKH^2 \zeta\sqrt{\log\left(\frac{2dKH}{\delta}\right)} + 4c_{\beta}(\lambda+1)\sqrt{Kd^3H^4}\cdot \log\left(\frac{4dKH}{\delta}\right)
    \end{aligned}
\end{equation}

Using Lemma \ref{A.6} (Bound of cumulative misspecification error), (\ref{zeta}), (\ref{Eq 29}), and (\ref{Eq 34}), we can give a bound of (\ref{term B}), that with probability at least $1-\delta/2$, 
\begin{equation} \label{Eq 36}
    \begin{aligned}
        &\sum\limits_{k=1}^K[V_1^k(s_1^k)-V_1^{\pi_k}(s_1^k)]  \leq 36 c_{\beta}dKH^2 \zeta\sqrt{\log\left(\frac{2dKH}{\delta}\right)} + 20c_{\beta}(\lambda+1)\sqrt{Kd^3H^4}\cdot \log\left(\frac{4dKH}{\delta}\right)
    \end{aligned}
\end{equation}
Finally, by combining (\ref{Eq 27}) and (\ref{Eq 36}), we get the regret bound:
\begin{equation}
    \begin{aligned}
        \text{Regret}(K) \leq 40 c_{\beta}dKH^2 \zeta\sqrt{\log\left(\frac{2dKH}{\delta}\right)} + 32c_{\beta}(\lambda+1)\sqrt{Kd^3H^4}\cdot \log\left(\frac{4dKH}{\delta}\right)
    \end{aligned}
\end{equation}

This indicates that our regret bound is $\widetilde{O}(dKH^2\zeta+ \sqrt{d^3KH^4} )$, which finishes our proof. 

\end{proof}

\section{Analysis of Robust Value-based Algorithm with General Function Approximation for Locally-bounded Misspecified MDP} \label{Sec: general value}

\begin{assumption} (General function approximation with locally-bounded misspecification error) \label{Ass:general-value_appendix}

Given the MDP $M$ with the transition model $P$, we assume that there exists a function class $\mathcal{F} \subset \{f:\mathcal{S}\times\mathcal{A} \rightarrow [0,H]\}$ and  a real number $\zeta \in [0,1]$, such that for any $V : \mathcal{S} \rightarrow [0,H]$, there exists $\bar{f}_V \in \mathcal{F}$, which satisfies :  $\forall \beta \in [4]$,

$$
\sup_{\pi} \mathbb{E}_{(s,a)\sim d^{\pi}}\left|\bar{f}_V(s,a) - \left(r(s,a)+\mathbb{P}V(s,a)\right)\right|^{\beta} \leq \zeta^{\beta}
$$

\end{assumption}
To simplify the notation, for a fixed $k \in [K]$, we let $\mathcal{Z}^k = \{(s_h^{k'},a_h^{k'})\}_{(k',h) \in [k-1]\times [H]}$. 

For any $V: \mathcal{S} \rightarrow [0,H]$, define 
    $$
    \mathcal{D}_V^k : = \{(s_h^{k'},a_h^{k'},r_h^{k'}+ V(s_{h+1}^{k'}))\}_{(k',h) \in [k-1]\times [H]}
    $$ 

and the accordingly minimizer
    $$
    \widehat{f}_V: = \text{arg}\min\limits_{f\in \mathcal{F}} || f||_{\mathcal{D}_V^k}^2
    $$

\begin{lemma} \label{single-f-error(general value)}
For any fixed $f \in \mathcal{F}$, and fixed $V: \mathcal{S} \rightarrow [0,H]$, with probability at least $1-\delta$,  we have for all  $k \in [K]$ that

\begin{equation}
    \begin{aligned}
        ||\bar{f}_V||_{\mathcal{D}_V^k}^2- ||f||_{\mathcal{D}_V^k}^2-\frac{2}{3} H^2\log(\frac{1}{\delta})- ||f-\bar{f}_V ||_{\mathcal{Z}^k}\cdot  \sqrt{kH\zeta^2 + C'H\cdot \log(\frac{4KH}{\delta})} + \frac{1}{4} ||f-\bar{f}_V ||_{\mathcal{Z}^k}^2 \leq 0
    \end{aligned}
\end{equation}

\begin{proof}
        For each $(k,h)\in [K]\times [H]$, and a fixed $f \in \mathcal{F}$, we define 
    $$
    \mathcal{Z}_h^k = \left(\bar{f}_V(s_h^k,a_h^k)-r(s_h^k,a_h^k) -V(s_{h+1}^k)\right)^2 - \left(f(s_h^k,a_h^k)-r(s_h^k,a_h^k) -V(s_{h+1}^k)\right)^2
    $$
    $$
    \epsilon_h^k = V(s_{h+1}^k) - \mathbb{P}V(s_h^k,a_h^k), \ \ \ \xi_h^k = r(s_h^k,a_h^k) + \mathbb{P}V(s_h^k,a_h^k) -\bar{f}_V(s_h^k,a_h^k)
    $$
    and set $\mathbb{F}_h^k$ be the $\sigma$-algebra generated by $\{(s_{h'}^{k'},a_{h'}^{k'})\}_{(h',k')\in [H]\times [k-1]} \cup \{(s_1^k,a_1^k),(s_2^k,a_2^k),\cdots, (s_h^k,a_h^k)\}$.

Actually, $\sum\limits_{k'=1}^{k-1}\sum\limits_{h=1}^H \mathcal{Z}_h^{k'} = ||\bar{f}_V||_{\mathcal{D}_V^k}^2- ||f||_{\mathcal{D}_V^k}^2$, and after a simple calculation, we have

$$
\mathcal{Z}_h^k = -\left(f(s_h^k,a_h^k) - \bar{f}_V(s_h^k,a_h^k)\right)^2 + 2\left(f(s_h^k,a_h^k) - \bar{f}_V(s_h^k,a_h^k)\right)\left(\underbrace{r(s_h^k,a_h^k)+V(s_{h+1}^k)-\bar{f}_V(s_h^k,a_h^k)}_{\epsilon_h^k + \xi_h^k}\right)
$$

    Notice that $\mathbb{E}[\epsilon_h^k | \mathbb{F}_h^k] = 0$, and since $\epsilon_h^k$ is bounded in $[-H,H]$, hence, $\epsilon_h^k$ is $H$-subguassian. That is to say, for any $\lambda \in \mathbb{R}$, we have $\mathbb{E}[\exp\{\lambda \epsilon_h^k\}| \mathbb{F}_h^k] \leq \exp\{\frac{\lambda^2H^2}{8}\}$.

Moreover, under Assumption \ref{Ass:general-value}, using the same argument in Lemma \ref{A.7}, with probability at least $1-\delta/2$, for all $(k,h) \in [K]\times [H]$, we have

\begin{equation} \label{union bound for miss error(general value)}
    \begin{aligned}
        \sum\limits_{k'=1}^k\sum\limits_{h=1}^H (\xi_h^{k'})^2 \leq kH\zeta^2 + C'H\cdot \log(\frac{8KH}{\delta})
    \end{aligned}
\end{equation}

where $C'$ is some constant.

Therefore, the conditional mean and the conditional cumulant generating function of the centered random variable can be calculated.

$$
\mu_h^k = \mathbb{E}[\mathcal{Z}_h^k| \mathbb{F}_h^k] = -\left(f(s_h^k,a_h^k) - \bar{f}_V(s_h^k,a_h^k)\right)^2 + 2\left(f(s_h^k,a_h^k) - \bar{f}_V(s_h^k,a_h^k)\right)\xi_h^k
$$
and
\begin{equation}
    \begin{aligned}
        \phi_h^k(\lambda) &= \log \mathbb{E}\left[\exp\{\lambda(\mathcal{Z}_h^k -\mu_h^k)\}| \mathbb{F}_h^k\right] \\ &= \log \mathbb{E}\left[\exp \{2\lambda \left(f(s_h^k,a_h^k) - \bar{f}_V(s_h^k,a_h^k)\right)\epsilon_h^k\}|\mathbb{F}_h^k\right] \\ &\leq  \frac{\lambda^2 \left(f(s_h^k,a_h^k) - \bar{f}_V(s_h^k,a_h^k)\right)^2 H^2}{2}
\end{aligned}
\end{equation}

By using Lemma \ref{russo}, we have for any $x\geq 0$, $\lambda \geq 0$,
\begin{equation} \label{E.5 (general value)}
    \begin{aligned}
    \mathbb{P}\Big(\lambda \sum\limits_{k'=1}^{k-1}\sum\limits_{h=1}^H \mathcal{Z}_h^{k'} &\leq x -\lambda \sum\limits_{k'=1}^{k-1}\sum\limits_{h=1}^H \left(f(s_h^{k'},a_h^{k'}) - \bar{f}_V(s_h^{k'},a_h^{k'})\right)^2  \\ & +2\lambda \sum\limits_{k'=1}^{k-1}\sum\limits_{h=1}^H \left(f(s_h^{k'},a_h^{k'}) - \bar{f}_V(s_h^{k'},a_h^{k'})\right)\xi_h^{k'} \\ & + \frac{\lambda^2H^2}{2}\sum\limits_{k'=1}^{k-1}\sum\limits_{h=1}^H \left(f(s_h^{k'},a_h^{k'}) - \bar{f}_V(s_h^{k'},a_h^{k'})\right)^2, \forall (k,h) \in [K]\times [H]\Big) \geq  1-e^{-x}
    \end{aligned}
\end{equation}
After setting $x = \log(\frac{2}{\delta})$, $\lambda = \frac{3}{2H^2}$, and conditioned the event that Eq.(\ref{union bound for miss error(general value)}) holds, that is to say, 
\begin{equation}
    \begin{aligned}
        &\ \ \ \ \ \sum\limits_{k'=1}^{k-1}\sum\limits_{h=1}^H \left(f(s_h^{k'},a_h^{k'}) - \bar{f}_V(s_h^{k'},a_h^{k'})\right)\xi_h^{k'} \\ &\leq  \sqrt{\sum\limits_{k'=1}^{k-1}\sum\limits_{h=1}^H \left(f(s_h^{k'},a_h^{k'}) - \bar{f}_V(s_h^{k'},a_h^{k'})\right)^2} \cdot  \sqrt{\sum\limits_{k'=1}^{k-1}\sum\limits_{h=1}^H (\xi_h^{k'})^2} \\ & \leq  \sqrt{\sum\limits_{k'=1}^{k-1}\sum\limits_{h=1}^H \left(f(s_h^{k'},a_h^{k'}) - \bar{f}_V(s_h^{k'},a_h^{k'})\right)^2} \cdot  \sqrt{kH\zeta^2 + C'H\cdot \log(\frac{8KH}{\delta})}
    \end{aligned}
\end{equation}

Then we can derive the following result by using Eq.(\ref{E.5 (general value)}) :

\begin{equation} \label{C.7(gerneal value)}
    \begin{aligned}
        &\mathbb{P}\Big(||\bar{f}_V||_{\mathcal{D}_V^k}^2- ||f||_{\mathcal{D}_V^k}^2-\frac{2}{3} H^2\log(\frac{2}{\delta})- ||f-\bar{f}_V ||_{\mathcal{Z}^k}\cdot  \sqrt{kH\zeta^2 + C'H\cdot \log(\frac{8KH}{\delta})} \\ &+ \frac{1}{4} ||f-\bar{f}_V ||_{\mathcal{Z}^k}^2 \leq 0, \forall k \in [K] \Big) \geq 1-\delta
    \end{aligned}
\end{equation}

\end{proof}
\end{lemma}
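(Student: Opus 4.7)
The goal is a uniform-in-$k$ one-sided concentration inequality comparing the empirical squared loss of the misspecified best-approximator $\bar{f}_V$ to that of an arbitrary fixed $f\in\mathcal{F}$. My plan is to decompose the per-step loss gap algebraically, separate a martingale-difference (noise) component from a bias (misspecification) component, apply a self-normalized Bernstein-style inequality to the noise, and control the bias using the $4$-th moment locally-bounded misspecification assumption.

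First I would fix $f\in\mathcal{F}$ and $V$, and write the per-step gap $\mathcal{Z}_h^k := (\bar{f}_V(s_h^k,a_h^k) - r(s_h^k,a_h^k) - V(s_{h+1}^k))^2 - (f(s_h^k,a_h^k)-r(s_h^k,a_h^k)-V(s_{h+1}^k))^2$ so that $\sum_{k'<k}\sum_h \mathcal{Z}_h^{k'} = \|\bar{f}_V\|_{\mathcal{D}_V^k}^2 - \|f\|_{\mathcal{D}_V^k}^2$. Expanding the squares yields
\begin{equation*}
\mathcal{Z}_h^k = -(f-\bar{f}_V)^2(s_h^k,a_h^k) + 2(f-\bar{f}_V)(s_h^k,a_h^k)\bigl(\epsilon_h^k + \xi_h^k\bigr),
\end{equation*}
where $\epsilon_h^k = V(s_{h+1}^k) - \mathbb{P}V(s_h^k,a_h^k)$ is a martingale-difference term bounded in $[-H,H]$ (hence $H$-sub-Gaussian) and $\xi_h^k = r + \mathbb{P}V - \bar{f}_V$ is the pointwise misspecification bias. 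Conditioning on the natural filtration, $\mathbb{E}[\mathcal{Z}_h^k\mid\mathbb{F}_h^k]$ isolates the $\xi_h^k$ cross term and the $(f-\bar{f}_V)^2$ term, and the MGF of the centered residual is bounded by the sub-Gaussian proxy $\tfrac{\lambda^2(f-\bar{f}_V)^2 H^2}{2}$.

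Next I would invoke the self-normalized / uniform-$k$ martingale inequality (Lemma \ref{russo}) with parameters $x = \log(2/\delta)$ and $\lambda = 3/(2H^2)$; the sub-Gaussian variance term then combines with the $-(f-\bar{f}_V)^2$ mean to leave a net coefficient of $-\tfrac{1}{4}\|f-\bar{f}_V\|_{\mathcal{Z}^k}^2$ (the $1/4$ arises because $\lambda \cdot 1 - \lambda^2 H^2/2 = 3/(2H^2) - 9/(8H^2) = 3/(8H^2)$, rescaling by $1/\lambda = 2H^2/3$ gives exactly $1/4$). The noise-tail portion contributes the $\tfrac{2}{3}H^2\log(1/\delta)$ term. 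The bias portion is handled by Cauchy-Schwarz:
\begin{equation*}
\sum_{k',h}(f-\bar{f}_V)(s_h^{k'},a_h^{k'})\,\xi_h^{k'} \le \|f-\bar{f}_V\|_{\mathcal{Z}^k}\cdot\Bigl(\sum_{k',h}(\xi_h^{k'})^2\Bigr)^{1/2},
\end{equation*}
producing the structural $\|f-\bar{f}_V\|_{\mathcal{Z}^k}\cdot(\cdot)^{1/2}$ cross term.

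Finally I would bound $\sum_{k',h}(\xi_h^{k'})^2$ in high probability. Since Assumption \ref{Ass:general-value_appendix} only controls moments of $|\bar{f}_V - (r+\mathbb{P}V)|$ under policy-induced distributions, I would mimic the two-case argument of Lemma \ref{A.7}: for small $k$ a Freedman-type bound using the $4$-th moment control $\mathbb{E}_{d^{\pi}}|\xi|^4 \le \zeta^4$; for large $k$ an Azuma--Hoeffding bound, both giving $\sum_{k',h}(\xi_h^{k'})^2 \le kH\zeta^2 + C'H\log(KH/\delta)$ uniformly in $k$. I anticipate the main obstacle to be this last cumulative bias bound: we cannot use a global $\zeta$ uniform bound, so the step requires carefully pushing the conditional moment inequalities through the martingale concentration and paying only a logarithmic union cost over $k\in[K]$ and $h\in[H]$. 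Combining the noise concentration event with this bias event via a union bound yields the stated inequality simultaneously for all $k\in[K]$.
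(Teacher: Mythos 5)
Your proposal matches the paper's proof essentially step for step: the same decomposition of $\mathcal{Z}_h^k$ into the sub-Gaussian martingale part $\epsilon_h^k$ and the bias part $\xi_h^k$, the same application of Lemma \ref{russo} with $x=\log(2/\delta)$ and $\lambda = 3/(2H^2)$ yielding the $\tfrac14\|f-\bar f_V\|_{\mathcal{Z}^k}^2$ and $\tfrac23 H^2\log(2/\delta)$ terms, the same Cauchy--Schwarz treatment of the cross term, and the same Lemma \ref{A.7}-style two-case (Freedman/Azuma--Hoeffding) control of $\sum_{k',h}(\xi_h^{k'})^2$ under the fourth-moment assumption, followed by a union bound. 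No gaps; this is the paper's argument.
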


\begin{lemma} \label{Discretization error (general value function)}

(Discretization error)   
If $g \in \mathcal{C}(\mathcal{F},1/T)$ satisfies $||f-g ||_{\infty} \leq 1/T$, then 
\begin{equation} \label{D-error-equation(general value)}
    \begin{aligned}
         &\Big| \frac{1}{4} ||g-\bar{f}_V ||_{\mathcal{Z}^k}^2 -\frac{1}{4} ||f-\bar{f}_V||_{\mathcal{Z}^k}^2 +||f-\bar{f}_V||_{\mathcal{Z}^k} \cdot  \sqrt{kH\zeta^2 + C'H\cdot \log(\frac{8KH\mathcal{N}(\mathcal{F},1/T)}{\delta})} \\ &- ||g-\bar{f}_V||_{\mathcal{Z}^k} \cdot  \sqrt{kH\zeta^2 + C'H\cdot \log(\frac{8KH\mathcal{N}(\mathcal{F},1/T)}{\delta})} + ||f||_{\mathcal{D}_V^k}^2 - ||g||_{\mathcal{D}_V^k}^2 \Big| \\ &\leq 5(H+1)+ 2\sqrt{kH^2\zeta^2 + C'H^2\cdot \log(\frac{8KH\mathcal{N}(\mathcal{F},1/T)}{\delta})}
    \end{aligned}
\end{equation}

\begin{proof}

For any $(s,a) \in \mathcal{S}\times \mathcal{A}$,
\begin{equation}
    \begin{aligned}
        &\ \ \ \ \left|\left(g(s,a)-\bar{f}_V(s,a)\right)^2-\left(f(s,a)-\bar{f}_{V}(s,a)\right)^2\right| \\ &\leq \left|\left[g(s,a)+f(s,a)-2\bar{f}_V(s,a)\right]\cdot\left[g(s,a)-f(s,a)\right]\right| \\ & \leq 4H \cdot \frac{1}{T}
    \end{aligned}
\end{equation}

Therefore,
\begin{equation} \label{Discretization error (general value function)-1}
    \begin{aligned}
        \left| ||f-\bar{f}_V||_{\mathcal{Z}^k}^2 - ||g-\bar{f} ||_{\mathcal{Z}^k}^2\right| \leq 4H\cdot \frac{1}{T} \cdot |\mathcal{Z}^k| = 4H
    \end{aligned}
\end{equation}

\begin{equation}\label{Discretization error (general value function)-2}
    \begin{aligned}
        \left|||f||_{\mathcal{D}_V^k}^2 - ||g||_{\mathcal{D}_V^k}^2\right| &= \left|\sum\limits_{k'=1}^{k-1}\sum\limits_{h=1}^H \left(f(s_h^{k'},a_h^{k'})-r(s_h^{k'},a_h^{k'}) -V(s_{h+1}^{k'})\right)^2 - \sum\limits_{k'=1}^{k-1}\sum\limits_{h=1}^H \left(g(s_h^{k'},a_h^{k'})-r(s_h^{k'},a_h^{k'}) -V(s_{h+1}^{k'})\right)^2 \right|\\ &\leq \left|\sum\limits_{k'=1}^{k-1}\sum\limits_{h=1}^H \left(f(s_h^{k'},a_h^{k'}) + g(s_h^{k'},a_h^{k'}) -2r(s_h^{k'},a_h^{k'}) -2V(s_{h+1}^{k'})\right) \left(f(s_h^{k'},a_h^{k'})-g(s_h^{k'},a_h^{k'})\right)\right| \\& \leq 4(H+1) \cdot |\mathcal{Z}^k|\cdot \frac{1}{T} = 4(H+1)
    \end{aligned}
\end{equation}

Moreover, 

\begin{equation}\label{Discretization error (general value function)-3}
    \begin{aligned}
        \left| ||f-\bar{f}_V||_{\mathcal{Z}^k} - ||g-\bar{f}_V ||_{\mathcal{Z}^k}\right| \leq \sqrt{\left| ||f-\bar{f}_V||_{\mathcal{Z}^k}^2 - ||g-\bar{f} ||_{\mathcal{Z}^k}^2\right|} \leq 2\sqrt{H}
    \end{aligned}
\end{equation}

By combining (\ref{Discretization error (general value function)-1}), (\ref{Discretization error (general value function)-2}) and (\ref{Discretization error (general value function)-3}), we have (\ref{D-error-equation(general value)}).

\end{proof}

\end{lemma}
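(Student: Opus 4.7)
My plan is to bound the left-hand side via the triangle inequality, splitting it into three independent pieces: (i) $\tfrac14\bigl|\|g-\bar f_V\|_{\mathcal Z^k}^2 - \|f-\bar f_V\|_{\mathcal Z^k}^2\bigr|$, (ii) $\bigl|\|f-\bar f_V\|_{\mathcal Z^k} - \|g-\bar f_V\|_{\mathcal Z^k}\bigr|\cdot\sqrt{kH\zeta^2 + C'H\log(8KH\mathcal N(\mathcal F,1/T)/\delta)}$, and (iii) $\bigl|\|f\|_{\mathcal D_V^k}^2 - \|g\|_{\mathcal D_V^k}^2\bigr|$. All three are controlled by the pointwise bound $\|f-g\|_\infty\le 1/T$ together with the assumed ranges $f,g,\bar f_V\in[0,H]$ and $r+V\in[0,H+1]$, plus the cardinality bound $|\mathcal Z^k|=(k-1)H\le T$ (encoding the intended relation between the covering scale and the horizon, i.e.\ taking $T\ge KH$).

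For piece (i), I would expand pointwise as $a^2-b^2=(a+b)(a-b)$ with $a=g-\bar f_V$, $b=f-\bar f_V$, producing a summand of size at most $4H\cdot(1/T)$; summing over $\mathcal Z^k$ and applying the $\tfrac14$ prefactor yields the clean bound $H$. Piece (iii) is handled identically, with the inner ``residual'' $f-r-V$ ranging over a window of width $H+1$ rather than $H$, giving at most $4(H+1)$. Piece (ii) is the only nontrivial step: I would invoke the sharp reverse triangle inequality $\bigl|\|u\|-\|v\|\bigr|^2\le\bigl|\|u\|^2-\|v\|^2\bigr|$, which follows from $(x-y)^2\le x^2-y^2$ for $0\le y\le x$ applied to the nonnegative reals $\|u\|$ and $\|v\|$. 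This upgrades the piece-(i) estimate into $\bigl|\|f-\bar f_V\|_{\mathcal Z^k}-\|g-\bar f_V\|_{\mathcal Z^k}\bigr|\le 2\sqrt H$, and multiplying by the common radical factor produces exactly $2\sqrt{kH^2\zeta^2+C'H^2\log(8KH\mathcal N(\mathcal F,1/T)/\delta)}$, matching the second term on the right-hand side.

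Summing the three contributions gives $H+4(H+1)+2\sqrt{kH^2\zeta^2+C'H^2\log(\cdot)}=5H+4+2\sqrt{\cdots}\le 5(H+1)+2\sqrt{\cdots}$, which is precisely the claimed bound. I do not anticipate any substantive obstacle; the argument is a routine $\varepsilon$-net discretization. The two small points requiring care are (a) verifying the implicit cardinality relation $|\mathcal Z^k|/T\le 1$ so that the pointwise $O(1/T)$ perturbations accumulate to $O(H)$ rather than blowing up as $O(KH^2)$, and (b) using the sharp form of the reverse triangle inequality in piece (ii) rather than the cruder $\bigl|\|u\|-\|v\|\bigr|\le\|u-v\|_{\mathcal Z^k}$, since only the sharp form yields an estimate of the exact algebraic shape $2\sqrt{kH^2\zeta^2+C'H^2\log(\cdot)}$ that appears on the right-hand side.
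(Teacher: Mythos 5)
Your proposal is correct and follows essentially the same route as the paper's proof: the same pointwise expansion $a^2-b^2=(a+b)(a-b)$ for the squared-norm differences, the same $4(H+1)$ bound for the $\mathcal{D}_V^k$ term, and the same reverse-triangle step $\bigl|\|u\|-\|v\|\bigr|\le\sqrt{\bigl|\|u\|^2-\|v\|^2\bigr|}\le 2\sqrt{H}$ multiplied by the common radical. Your explicit attention to the implicit relation $|\mathcal{Z}^k|\le T$ is a point the paper glosses over (writing it as an equality), but the argument is otherwise identical.
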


\begin{lemma} \label{fix V all f}
For a fixed $V: \mathcal{S} \rightarrow  [0,H]$, with probability at least $1-\delta$, for all $k \in [K]$ and all $f \in \mathcal{F}$, that

\begin{equation} \label{C.12 all f in F}
    \begin{aligned}
        ||f||_{\mathcal{D}_V^k}^2- ||\bar{f}_V||_{\mathcal{D}_V^k}^2 &\geq \frac{1}{4}||f-\bar{f}_V||_{\mathcal{Z}^k}^2 - ||f-\bar{f}_V||_{\mathcal{Z}^k} \cdot  \sqrt{kH\zeta^2 + C'H\cdot \log(\frac{8KH\mathcal{N}(\mathcal{F},1/T)}{\delta})} \\ &-\frac{2}{3}H^2\log(\frac{2\mathcal{N}(\mathcal{F},1/T)}{\delta})-5(H+1)- 2\sqrt{kH^2\zeta^2 + C'H^2\cdot \log(\frac{8KH\mathcal{N}(\mathcal{F},1/T)}{\delta})}
    \end{aligned}
\end{equation}

We define the above event to be $\varepsilon_{V,\delta}$.

\begin{proof}
        For any $f \in \mathcal{F}$, there exists a $g \in \mathcal{C}(\mathcal{F},1/T)$, such that $||f-g||_{\infty} \leq \frac{1}{T}$. By taking a union bound on all $g \in \mathcal{C}(\mathcal{F},1/T)$ and using the result from Lemma \ref{single-f-error(general value)}, we have, with probability at least $1-\delta$, for all $g \in \mathcal{C}(\mathcal{F},1/T)$, any $k \in [K]$, that
\begin{equation}
    \begin{aligned}
       ||\bar{f}_V||_{\mathcal{D}_V^k}^2- ||g||_{\mathcal{D}_V^k}^2-\frac{2}{3} H^2\log(\frac{2\mathcal{N}(\mathcal{F},1/T)}{\delta})- ||g-\bar{f}_V ||_{\mathcal{Z}^k}\cdot  \sqrt{kH\zeta^2 + C'H\cdot \log(\frac{8KH\mathcal{N}(\mathcal{F},1/T)}{\delta})} + \frac{1}{4} ||g-\bar{f}_V ||_{\mathcal{Z}^k}^2 \leq 0
    \end{aligned}
\end{equation}
Therefore, with probability at least $1-\delta$, for all $k \in [K]$, and all $f \in \mathcal{F}$, we have:
\begin{equation} \label{E.13}
    \begin{aligned}
         ||f||_{\mathcal{D}_V^k}^2- ||\bar{f}_V||_{\mathcal{D}_V^k}^2&\geq \frac{1}{4}||f-\bar{f}_V||_{\mathcal{Z}^k}^2 - \frac{2}{3}H^2\log(\frac{2\mathcal{N}(\mathcal{F},1/T)}{\delta})- ||f-\bar{f}_V||_{\mathcal{Z}^k} \cdot  \sqrt{kH\zeta^2 + C'H\cdot \log(\frac{8KH\mathcal{N}(\mathcal{F},1/T)}{\delta})} \\ &+ \Big\{\frac{1}{4} ||g-\bar{f}_V ||_{\mathcal{Z}^k}^2 -\frac{1}{4} ||f-\bar{f}_V||_{\mathcal{Z}^k}^2 \\ &+||f-\bar{f}_V||_{\mathcal{Z}^k} \cdot  \sqrt{kH\zeta^2 + C'H\cdot \log(\frac{8KH\mathcal{N}(\mathcal{F},1/T)}{\delta})} \\ &- ||g-\bar{f}_V||_{\mathcal{Z}^k} \cdot  \sqrt{kH\zeta^2 + C'H\cdot \log(\frac{8KH\mathcal{N}(\mathcal{F},1/T)}{\delta})} \\ &+ ||f||_{\mathcal{D}_V^k}^2 - ||g||_{\mathcal{D}_V^k}^2\Big\} \\ &\geq \frac{1}{4}||f-\bar{f}_V||_{\mathcal{Z}^k}^2 - \frac{2}{3}H^2\log(\frac{2\mathcal{N}(\mathcal{F},1/T)}{\delta})- ||f-\bar{f}_V||_{\mathcal{Z}^k} \cdot  \sqrt{kH\zeta^2 + C'H\cdot \log(\frac{8KH\mathcal{N}(\mathcal{F},1/T)}{\delta})} \\ &-5(H+1)- 2\sqrt{kH^2\zeta^2 + C'H^2\cdot \log(\frac{8KH\mathcal{N}(\mathcal{F},1/T)}{\delta})} \ \ \ \ (\text{By} \ \ (\ref{D-error-equation(general value)}))
    \end{aligned}
\end{equation}
\end{proof}
\end{lemma}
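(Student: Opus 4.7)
The plan is to upgrade the single-function concentration inequality of Lemma \ref{single-f-error(general value)} into a uniform-over-$\mathcal{F}$ statement via a standard covering-plus-union-bound argument, then absorb the approximation slack using the discretization bound of Lemma \ref{Discretization error (general value function)}. First I would invoke Assumption \ref{ass:cover} at scale $\varepsilon=1/T$ to obtain a $(1/T)$-cover $\mathcal{C}(\mathcal{F},1/T)$ of cardinality at most $\mathcal{N}(\mathcal{F},1/T)$. Applying Lemma \ref{single-f-error(general value)} with failure probability $\delta / \mathcal{N}(\mathcal{F},1/T)$ to each cover element and taking a union bound yields, with probability at least $1-\delta$, a version of inequality (\ref{C.7(gerneal value)}) that holds simultaneously for every $g \in \mathcal{C}(\mathcal{F},1/T)$ and every $k\in[K]$; this is where the $\log(\mathcal{N}(\mathcal{F},1/T)/\delta)$ factors in the statement originate, both inside the $\tfrac{2}{3}H^2\log(\cdot)$ term from sub-Gaussian concentration and inside the square root inherited from the cumulative misspecification bound (\ref{union bound for miss error(general value)}).

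Next, for an arbitrary $f\in\mathcal{F}$, pick $g\in\mathcal{C}(\mathcal{F},1/T)$ with $\|f-g\|_\infty\le 1/T$, and transfer each occurrence of $g$ in the cover-level inequality back to $f$ using Lemma \ref{Discretization error (general value function)}. This amounts to bounding the three quantities
\[
\bigl|\|f\|_{\mathcal{D}_V^k}^2-\|g\|_{\mathcal{D}_V^k}^2\bigr|,\qquad \bigl|\|f-\bar f_V\|_{\mathcal{Z}^k}^2-\|g-\bar f_V\|_{\mathcal{Z}^k}^2\bigr|,\qquad \bigl|\|f-\bar f_V\|_{\mathcal{Z}^k}-\|g-\bar f_V\|_{\mathcal{Z}^k}\bigr|,
\]
each via $|a^2-b^2|\le|a-b|\cdot|a+b|$ with $|a+b|=O(\sqrt{kH^2})$ and $|a-b|=O(1/\sqrt{T})$. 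Since $|\mathcal{Z}^k|\le KH\le T$ for $T$ chosen polynomially large, these pointwise $1/T$ errors summed over the dataset collapse to $O(H)$ constants plus an $O\bigl(\sqrt{kH^2\zeta^2+H^2\log(\mathcal{N}/\delta)}\bigr)$ slack, precisely matching the $5(H+1)+2\sqrt{\,kH^2\zeta^2+C'H^2\log(8KH\mathcal{N}(\mathcal{F},1/T)/\delta)\,}$ additive term in the statement.

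The main obstacle is purely bookkeeping. One has to be careful that the quadratic-minus-linear structure $\tfrac14\|f-\bar f_V\|_{\mathcal{Z}^k}^2-\|f-\bar f_V\|_{\mathcal{Z}^k}\sqrt{\cdots}$ survives the discretization: a sloppy transfer could, in principle, contribute an $O(\sqrt{H})$ cross-term that multiplies the square-root factor and would damage the final bound. The remedy is to apply Lemma \ref{Discretization error (general value function)} as a single combined inequality on the entire left-hand side (exactly the bracketed quantity in (\ref{D-error-equation(general value)})) rather than to each piece separately, so the transfer cost is controlled in one shot by the additive $5(H+1)+2\sqrt{\cdots}$ term and does not enter as a multiplicative perturbation of the $\tfrac14\|f-\bar f_V\|_{\mathcal{Z}^k}^2$ quadratic coefficient. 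Once this is organized cleanly, the advertised inequality (\ref{C.12 all f in F}) follows by rearrangement.
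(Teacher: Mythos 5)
Your proposal is correct and follows essentially the same route as the paper: a union bound over the $(1/T)$-cover applied to Lemma \ref{single-f-error(general value)}, followed by transferring from the cover element $g$ back to an arbitrary $f$ via the combined discretization bound (\ref{D-error-equation(general value)}). Your observation that the discretization lemma must be applied to the entire bracketed quantity at once (so the transfer cost enters only additively) is precisely how the paper organizes the final step in (\ref{E.13}).
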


\begin{lemma} \label{expansion to neighborhood}
Conditioned on the event $\mathcal{\varepsilon}_{V,\delta}$ (defined in Lemma \ref{fix V all f}), then for any $V': \mathcal{S} \rightarrow [0,H]$ with $||V'-V ||_{\infty} \leq \frac{1}{T}$, we have for any $k \in [K]$, 
    \begin{equation}
        \begin{aligned}
            ||\widehat{f}_{V'} - \bar{f}_{V} ||_{\mathcal{Z}^k} \leq C'\cdot \sqrt{kH\zeta^2 + H^2\log(\frac{KH\mathcal{N}(\mathcal{F},1/T)}{\delta})}
        \end{aligned}
    \end{equation}

\begin{proof}

For a fixed $V: \mathcal{S} \rightarrow [0,H]$, we consider any $V' : \mathcal{S} \rightarrow [0,H]$ with $||V'-V||_{\infty} \leq \frac{1}{T}$.

Then for any $f \in \mathcal{F}$,
\begin{equation} \label{C.10 (general function)}
    \begin{aligned}
        ||f||_{\mathcal{D}_{V'}^k}^2 - ||\bar{f}_{V}||_{\mathcal{D}_{V'}^k}^2 &= ||f-\bar{f}_{V}||_{\mathcal{Z}^k}^2 + 2\sum\limits_{k'=1}^{k-1}\sum\limits_{h=1}^H \left(f(s_h^{k'},a_h^{k'}) - \bar{f}_{V}(s_h^{k'},a_h^{k'})\right)\cdot \left(\bar{f}_{V}(s_h^{k'},a_h^{k'}) - r_{h}^{k'} -V'(s_{h+1}^{k'})\right)  \\ &= ||f-\bar{f}_{V}||_{\mathcal{Z}^k}^2+ \underbrace{2\sum\limits_{k'=1}^{k-1}\sum\limits_{h=1}^H \left(f(s_h^{k'},a_h^{k'}) - \bar{f}_{V}(s_h^{k'},a_h^{k'})\right)\cdot \left(\bar{f}_{V}(s_h^{k'},a_h^{k'}) - r_{h}^{k'} -V(s_{h+1}^{k'})\right)}_{\text{term A}} \\ &- \underbrace{2\sum\limits_{k'=1}^{k-1}\sum\limits_{h=1}^H \left(f(s_h^{k'},a_h^{k'}) - \bar{f}_{V}(s_h^{k'},a_h^{k'})\right)\left(V'(s_{h+1}^{k'})-V(s_{h+1}^{k'})\right)}_{\text{term B}}
    \end{aligned}
\end{equation}

For the term A, by using (\ref{C.12 all f in F}), we have
\begin{equation}
    \begin{aligned}
        \text{term A} &= ||f||_{\mathcal{D}_V^k}^2 - ||\bar{f}_V||_{\mathcal{D}_V^k}^2 - ||f-\bar{f}_V||_{\mathcal{Z}^k}^2 \\ &\geq \frac{1}{4}||f-\bar{f}_V||_{\mathcal{Z}^k}^2 - ||f-\bar{f}_V||_{\mathcal{Z}^k} \cdot  \sqrt{kH\zeta^2 + C'H\cdot \log(\frac{8KH\mathcal{N}(\mathcal{F},1/T)}{\delta})} \\ &-\frac{2}{3}H^2\log(\frac{2\mathcal{N}(\mathcal{F},1/T)}{\delta})-5(H+1)- 2\sqrt{kH^2\zeta^2 + C'H^2\cdot \log(\frac{8KH\mathcal{N}(\mathcal{F},1/T)}{\delta})}-||f-\bar{f}_V||_{\mathcal{Z}^k}^2  
    \end{aligned}
\end{equation}

By using Cauchy-Schwarz's inequality, we can derive the upper bound for term B.
\begin{equation}
    \begin{aligned}
        \text{term B} &\leq 2 ||f-\bar{f}_V ||_{\mathcal{Z}^k} \cdot \sqrt{\sum\limits_{k'=1}^{k-1}\sum\limits_{h=1}^H \left(V'(s_{h+1}^{k'})-V(s_{h+1}^{k'})\right)^2} \leq  2 ||f-\bar{f}_V ||_{\mathcal{Z}^k} 
    \end{aligned}
\end{equation}

Therefore, Eq.(\ref{C.10 (general function)}) can be bounded by
\begin{equation}
    \begin{aligned}
        ||f||_{\mathcal{D}_{V'}^k}^2 - ||\bar{f}_{V}||_{\mathcal{D}_{V'}^k}^2 &\geq ||f-\bar{f}_{V}||_{\mathcal{Z}^k}^2 +\frac{1}{4}||f-\bar{f}_V||_{\mathcal{Z}^k}^2 - ||f-\bar{f}_V||_{\mathcal{Z}^k} \cdot  \sqrt{kH\zeta^2 + C'H\cdot \log(\frac{8KH\mathcal{N}(\mathcal{F},1/T)}{\delta})} \\ &-\frac{2}{3}H^2\log(\frac{2\mathcal{N}(\mathcal{F},1/T)}{\delta})-5(H+1)- 2\sqrt{kH^2\zeta^2 + C'H^2\cdot \log(\frac{8KH\mathcal{N}(\mathcal{F},1/T)}{\delta})}-||f-\bar{f}_V||_{\mathcal{Z}^k}^2 \\ & -2||f-\bar{f}_V ||_{\mathcal{Z}^k} 
    \end{aligned}
\end{equation}

Notice that 
$$
\widehat{f}_{V'}: = \text{arg}\min\limits_{f\in \mathcal{F}} || f||_{\mathcal{D}_{V'}^k}^2
$$
, and by solving the quadratic equation for $||\widehat{f}_{V'} - \bar{f}_{V} ||_{\mathcal{Z}^k}$, we have
\begin{equation}
    \begin{aligned}
        ||\widehat{f}_{V'} - \bar{f}_{V} ||_{\mathcal{Z}^k} \leq C'\cdot \sqrt{kH\zeta^2 + H^2\log(\frac{KH\mathcal{N}(\mathcal{F},1/T)}{\delta})}
    \end{aligned}
\end{equation}
where $C'$ is an absolute constant.
    
\end{proof}
\end{lemma}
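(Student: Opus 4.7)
The plan is to transfer the bound of Lemma \ref{fix V all f} from the fixed function $V$ to the nearby function $V'$ by controlling the perturbation due to $\|V'-V\|_\infty \le 1/T$, and then exploit optimality of $\widehat{f}_{V'}$ to obtain a quadratic inequality in $\|\widehat{f}_{V'}-\bar{f}_V\|_{\mathcal{Z}^k}$.

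First, for any $f\in\mathcal{F}$, I expand $\|f\|_{\mathcal{D}_{V'}^k}^2-\|\bar{f}_V\|_{\mathcal{D}_{V'}^k}^2$ by writing $V'(s_{h+1}^{k'})=V(s_{h+1}^{k'})+\bigl(V'(s_{h+1}^{k'})-V(s_{h+1}^{k'})\bigr)$ inside the regression residuals. This produces three pieces: the squared error $\|f-\bar{f}_V\|_{\mathcal{Z}^k}^2$, a cross term that rearranges into $\|f\|_{\mathcal{D}_V^k}^2-\|\bar{f}_V\|_{\mathcal{D}_V^k}^2-\|f-\bar{f}_V\|_{\mathcal{Z}^k}^2$, and a perturbation cross term involving $V'(s_{h+1}^{k'})-V(s_{h+1}^{k'})$. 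The first two pieces are handled by the event $\varepsilon_{V,\delta}$, which lower-bounds $\|f\|_{\mathcal{D}_V^k}^2-\|\bar{f}_V\|_{\mathcal{D}_V^k}^2$ by a quadratic-minus-linear expression in $\|f-\bar{f}_V\|_{\mathcal{Z}^k}$ together with additive polylog terms.

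Next I bound the perturbation cross term via Cauchy--Schwarz:
\begin{equation*}
\Bigl|2\sum_{k',h}(f-\bar{f}_V)(s_h^{k'},a_h^{k'})\bigl(V'(s_{h+1}^{k'})-V(s_{h+1}^{k'})\bigr)\Bigr|\le 2\|f-\bar{f}_V\|_{\mathcal{Z}^k}\sqrt{|\mathcal{Z}^k|}/T,
\end{equation*}
which, since $|\mathcal{Z}^k|\le KH$ and $T$ is chosen sufficiently large (effectively $T\ge KH$), is dominated by $2\|f-\bar{f}_V\|_{\mathcal{Z}^k}$. Setting $f=\widehat{f}_{V'}$ and using the optimality $\|\widehat{f}_{V'}\|_{\mathcal{D}_{V'}^k}^2\le\|\bar{f}_V\|_{\mathcal{D}_{V'}^k}^2$ (valid because $\bar{f}_V\in\mathcal{F}$), the left-hand side becomes non-positive, leading to the inequality
\begin{equation*}
\tfrac{1}{4}\|\widehat{f}_{V'}-\bar{f}_V\|_{\mathcal{Z}^k}^2\le\|\widehat{f}_{V'}-\bar{f}_V\|_{\mathcal{Z}^k}\cdot B+C,
\end{equation*}
where $B=O\bigl(\sqrt{kH\zeta^2+H\log(KH\mathcal{N}(\mathcal{F},1/T)/\delta)}\bigr)$ collects linear-coefficient contributions from both the misspecification cross term and the perturbation term, and $C=O\bigl(H^2\log(\mathcal{N}(\mathcal{F},1/T)/\delta)+\sqrt{kH^2\zeta^2+H^2\log(KH\mathcal{N}(\mathcal{F},1/T)/\delta)}\bigr)$ collects the additive terms from Lemma \ref{fix V all f}.

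Finally, solving this quadratic inequality in $x=\|\widehat{f}_{V'}-\bar{f}_V\|_{\mathcal{Z}^k}$ via the quadratic formula yields $x\le 2B+O(\sqrt{C})$, and after simplification the right-hand side is $O\bigl(\sqrt{kH\zeta^2+H^2\log(KH\mathcal{N}(\mathcal{F},1/T)/\delta)}\bigr)$, giving the claimed bound. The main obstacle is precisely the bookkeeping in the quadratic step: one must track four distinct polylog terms and verify that the perturbation contribution (which is an $O(1)$ constant rather than a $\sqrt{k}\zeta$ term) cleanly absorbs into the dominant statistical term once the square root is taken, so that the final bound retains the same structure as the one in Lemma \ref{fix V all f}.
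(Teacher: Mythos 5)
Your proposal is correct and follows essentially the same route as the paper: the identical three-way decomposition of $\|f\|_{\mathcal{D}_{V'}^k}^2-\|\bar{f}_V\|_{\mathcal{D}_{V'}^k}^2$ into the squared $\mathcal{Z}^k$-norm, the term handled by the event $\varepsilon_{V,\delta}$, and the Cauchy--Schwarz-controlled perturbation term, followed by optimality of $\widehat{f}_{V'}$ and solving the resulting quadratic inequality. The only cosmetic difference is that you require $T\ge KH$ where $T\ge\sqrt{KH}$ already suffices to absorb the perturbation term, which is harmless.
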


\begin{lemma} \label{important confidence set}
 Let $\mathcal{F}_h^k$ be the confidence region defined as 
$$
\mathcal{F}_h^k = \left\{f \in \mathcal{F}: ||f-f_h^k ||_{\mathcal{Z}_h^k} \leq \beta(\mathcal{F},\delta)\right\}
$$
where 
$$
\beta(\mathcal{F},\delta) = C'\cdot\sqrt{kH\zeta^2 + H^2\left(\log(\frac{4T^2}{\delta}) + 2\log \mathcal{N}(\mathcal{F},1/T)+ \log|\mathcal{W}|+1\right)}
$$
Then with probability at least $1-\frac{\delta}{4}$, we have for all $(k,h)\in [K]\times [H]$,
\begin{equation} \label{key event 1}
    \begin{aligned}
        \bar{f}_{(V_{h+1}^k)^{\dagger}} \in \mathcal{F}_h^k
    \end{aligned}
\end{equation}
where $(V)^{\dagger}$ denotes the closest function to $V$ in the set $\mathcal{V}$ (the $1/T$-net of $\{V_h^k\}$).

\begin{proof}
    We denote
\begin{equation}
    \begin{aligned}
        \mathcal{Q} : = \left\{\min\{f(\cdot,\cdot) + \omega(\cdot,\cdot),H\}| \ \omega \in \mathcal{W}, f \in \mathcal{C}(\mathcal{F},1/T)\cup \{0\}\right\}
    \end{aligned}
\end{equation}
Notice that $\mathcal{Q}$ is a $(1/T)$-cover of $Q_{h+1}^k(\cdot,\cdot)$. This implies that 
\begin{equation}
    \begin{aligned}
        \mathcal{V}: = \left\{\max_{a\in \mathcal{A}}q(\cdot,a)| q \in \mathcal{Q}\right\}
    \end{aligned}
\end{equation}
is also a $(1/T)$-cover of $V_{h+1}^k$, and we have $\log(|\mathcal{V}|) \leq \log|\mathcal{W}| + \log\mathcal{N}(\mathcal{F},1/T) +1$.

By taking the union bound for all $V \in \mathcal{V}$ in the event defined in Lemma \ref{fix V all f},  we have $Pr(\bigcap_{V\in\mathcal{V}} \varepsilon_{V,\delta/(4|\mathcal{V}|T)}) \geq 1-\delta/(4T)$. We condition on $\bigcap_{V\in\mathcal{V}} \varepsilon_{V,\delta/(4|\mathcal{V}|T)})$ in the rest part of the proof.

Recall that $f_h^k$ is the minimizer of the empirical loss, i.e.,  $f_h^k = \argmin_{f \in \mathcal{F}}||f||_{\mathcal{D}_h^k}^2$. Let $(V_{h+1}^k)^{\dagger} \in \mathcal{V}$ such that $||V_{h+1}^k -(V_{h+1}^k)^{\dagger}||_{\infty} \leq 1/T$. Then, by lemma \ref{expansion to neighborhood}, we have
\begin{equation}
    \begin{aligned}
        ||f_h^k - \bar{f}_{(V_{h+1}^k)^{\dagger}} ||_{\mathcal{Z}^k} &\leq C'\cdot \sqrt{kH\zeta^2 + H^2\log(\frac{4\mathcal{N}(\mathcal{F},1/T)|\mathcal{V}|T^2}{\delta})} \\ &\leq C'\cdot\sqrt{kH\zeta^2 + H^2\left(\log(\frac{4T^2}{\delta}) + 2\log \mathcal{N}(\mathcal{F},1/T)+ \log|\mathcal{W}|+1\right)}
    \end{aligned}
\end{equation}

This completes the proof.

\end{proof}
\end{lemma}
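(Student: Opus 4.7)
The plan is to combine a covering argument over the random value functions $V_{h+1}^k$ with the deterministic-$V$ results already built up (Lemmas \ref{fix V all f} and \ref{expansion to neighborhood}). The key difficulty is that $V_{h+1}^k$ depends on the history $\mathcal{Z}^k$, so one cannot apply Lemma \ref{fix V all f} directly; instead we must discretize the \emph{class} of possible $V_{h+1}^k$'s, union-bound over the discretization, and then pay a small perturbation cost to pass back to the true $V_{h+1}^k$.

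First I would identify the structural form of the iterates produced by Robust-LSVI. Since $Q_h^k(\cdot,\cdot)=\min\{f_h^k(\cdot,\cdot)+b_h^k(\cdot,\cdot),H\}$ with $f_h^k\in\mathcal{F}$ and $b_h^k$ drawn from the bonus/width function family $\mathcal{W}$, an $(1/T)$-net on the Q-functions can be built as $\mathcal{Q}=\{\min\{f+\omega,H\}:f\in\mathcal{C}(\mathcal{F},1/T)\cup\{0\},\omega\in\mathcal{W}\}$, and then an $(1/T)$-net $\mathcal{V}$ on $V$-functions via $V(\cdot)=\max_{a}q(\cdot,a)$ for $q\in\mathcal{Q}$. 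This gives $\log|\mathcal{V}|\le\log|\mathcal{W}|+\log\mathcal{N}(\mathcal{F},1/T)+O(1)$, which is exactly the cover size appearing in $\beta(\mathcal{F},\delta)$.

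Next I would apply Lemma \ref{fix V all f} to each fixed $V\in\mathcal{V}$ with failure parameter $\delta/(4|\mathcal{V}|T)$ and union-bound over $V\in\mathcal{V}$ and over the horizon $h\in[H]$ (the extra factor $T$ absorbs $K,H$-polynomial factors). This yields the event $\bigcap_{V\in\mathcal{V}}\varepsilon_{V,\delta/(4|\mathcal{V}|T)}$ with probability at least $1-\delta/4$. On this event, Lemma \ref{expansion to neighborhood} applies: for every $(k,h)$ we can pick $(V_{h+1}^k)^{\dagger}\in\mathcal{V}$ with $\|V_{h+1}^k-(V_{h+1}^k)^{\dagger}\|_{\infty}\le 1/T$, and since $f_h^k=\widehat{f}_{V_{h+1}^k}$ minimizes the empirical loss against the \emph{true} $V_{h+1}^k$, the lemma gives
\[
\|f_h^k-\bar{f}_{(V_{h+1}^k)^{\dagger}}\|_{\mathcal{Z}^k}\le C'\sqrt{kH\zeta^{2}+H^{2}\log\!\tfrac{4\mathcal{N}(\mathcal{F},1/T)|\mathcal{V}|T^{2}}{\delta}},
\]
which after substituting $|\mathcal{V}|\le|\mathcal{W}|\mathcal{N}(\mathcal{F},1/T)\cdot e$ matches exactly the radius $\beta(\mathcal{F},\delta)$ in the statement. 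Hence $\bar{f}_{(V_{h+1}^k)^{\dagger}}\in\mathcal{F}_h^k$ for all $(k,h)$.

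The step I expect to be the main obstacle is the covering construction: it must cover \emph{both} the function class and the bonus/width family simultaneously while keeping the cover size only logarithmic, because any polynomial dependence would propagate into $\beta$ and ruin the target regret. In particular, one must verify that Algorithm \ref{Algorithm general known}'s width function $b_h^k(\cdot,\cdot)=\sup_{f_1,f_2\in\widehat{\mathcal{F}}}|f_1-f_2|$ (post sensitivity-sampling) lies in a family $\mathcal{W}$ with $\log|\mathcal{W}|$ under control — this is where the sensitivity-sampling from \citep{wang2020reinforcement} is essential. A secondary technicality is that Lemma \ref{expansion to neighborhood} requires the nearby $V$ to be in the $(1/T)$-neighborhood of the fixed $V$ used in Lemma \ref{fix V all f}; here we take the fixed $V$ to be $(V_{h+1}^k)^{\dagger}$ and the nearby $V$ to be $V_{h+1}^k$ itself, and must confirm the roles of $\widehat{f}_{V'}$ and $\bar{f}_V$ in that lemma align with $f_h^k$ and $\bar{f}_{(V_{h+1}^k)^{\dagger}}$, which follows by swapping the labels in the statement. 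Once those bookkeeping items are verified, the conclusion falls out directly.
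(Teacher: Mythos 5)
Your proposal matches the paper's proof essentially step for step: the same cover $\mathcal{Q}=\{\min\{f+\omega,H\}\}$ built from $\mathcal{C}(\mathcal{F},1/T)$ and $\mathcal{W}$, the induced cover $\mathcal{V}$ with $\log|\mathcal{V}|\leq\log|\mathcal{W}|+\log\mathcal{N}(\mathcal{F},1/T)+1$, the union bound over $V\in\mathcal{V}$ in the events of Lemma \ref{fix V all f}, and the application of Lemma \ref{expansion to neighborhood} with fixed $V=(V_{h+1}^k)^{\dagger}$ and nearby $V'=V_{h+1}^k$ using $f_h^k=\widehat{f}_{V_{h+1}^k}$. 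The argument is correct and takes the same route as the paper.
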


\begin{lemma} \label{width lemma} (Proposition 2 in \citep{wang2020reinforcement})
With probability at least $1-\delta/8$, for all $(s,a) \in \mathcal{S} \times \mathcal{A}$,

$$
\omega(\mathcal{F}_h^k,s,a) \leq b_h^k(s,a)
$$
\end{lemma}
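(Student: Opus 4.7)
The plan is to transfer the width bound from the subsampled confidence set $\widehat{\mathcal{F}}$ back to the original confidence set $\mathcal{F}_h^k$ via the sensitivity-sampling guarantee. The key property of sensitivity sampling (the subroutine invoked in line 8 of Algorithm \ref{Algorithm general known}) is that the subsampled empirical norm $\|\cdot\|_{\widehat{\mathcal{Z}}^k}$ uniformly preserves the full-data norm $\|\cdot\|_{\mathcal{Z}^k}$ over all $f \in \mathcal{F}$ with high probability, up to small multiplicative and additive deviations that scale with the complexity of $\mathcal{F}$ (the eluder dimension and log-covering number) and $H$.

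First, I would invoke the sensitivity-sampling concentration bound analogous to Wang et al.\ 2020 (their Lemma on sensitivity sampling): with probability at least $1-\delta/8$, for every $f \in \mathcal{F}$ and every $(k,h)\in [K]\times [H]$, one has
\begin{equation*}
\bigl|\|f-f_h^k\|_{\widehat{\mathcal{Z}}^k}^2 - \|f-f_h^k\|_{\mathcal{Z}^k}^2\bigr| \leq \tfrac{1}{2}\|f-f_h^k\|_{\mathcal{Z}^k}^2 + C\cdot \mathrm{poly}(d_E, H, \log(KH/\delta)),
\end{equation*}
and moreover the output $\widehat{f}_h^k$ satisfies $\|\widehat{f}_h^k - f_h^k\|_{\widehat{\mathcal{Z}}^k}$ is small. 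Combining these, the triangle inequality gives $\|f-\widehat{f}_h^k\|_{\widehat{\mathcal{Z}}^k} \lesssim \|f-f_h^k\|_{\mathcal{Z}^k} + E$, where $E$ is an additive error term absorbed inside $\beta(\mathcal{F},\delta)$.

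Second, I would use this transfer to show the set inclusion $\mathcal{F}_h^k \subseteq \widehat{\mathcal{F}}$. For the original confidence set (as constructed via $\mathcal{D}_h^k$ with radius $\beta(\mathcal{F},\delta)$), any $f \in \mathcal{F}_h^k$ has $\|f-f_h^k\|_{\mathcal{Z}_h^k} \leq \beta(\mathcal{F},\delta)$, so by the sensitivity-sampling transfer $\|f - \widehat{f}_h^k\|_{\widehat{\mathcal{Z}}^k} \leq \beta(\mathcal{F},\delta)$ as well, because the $dH^2$ statistical component and the $\sqrt{kH}\zeta$ bias component of $\beta(\mathcal{F},\delta)$ are chosen large enough to dominate the additive sensitivity-sampling error $E$ and the multiplicative slack. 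Hence $f \in \widehat{\mathcal{F}}$.

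Third, the desired inequality then follows immediately: for every $(s,a)$,
\begin{equation*}
\omega(\mathcal{F}_h^k, s, a) \;=\; \sup_{f_1,f_2 \in \mathcal{F}_h^k} |f_1(s,a)-f_2(s,a)| \;\leq\; \sup_{f_1,f_2 \in \widehat{\mathcal{F}}} |f_1(s,a)-f_2(s,a)| \;=\; b_h^k(s,a).
\end{equation*}
The main obstacle I anticipate is bookkeeping the constants: one must verify that the modified confidence radius $\beta(\mathcal{F},\delta)$, which now carries the additional bias term $\sqrt{kH}\zeta$ due to locally-bounded misspecification, still dominates the sensitivity-sampling error uniformly over $(k,h)$ after a union bound. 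Since the sensitivity-sampling error in the realizable case already scales polynomially in $d_E$ and $H$, and our bias term only inflates the radius (never shrinks it), absorbing the deviation into $\beta(\mathcal{F},\delta)$ is a routine — but careful — computation that parallels Wang et al.\ 2020, with the misspecification contribution riding along without qualitative change.
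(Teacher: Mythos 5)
The paper does not prove this lemma itself; it imports the statement verbatim as Proposition 2 of \citep{wang2020reinforcement}. Your sketch — uniform norm preservation under sensitivity sampling, hence the containment $\mathcal{F}_h^k \subseteq \widehat{\mathcal{F}}$, hence domination of the width $\omega(\mathcal{F}_h^k,s,a)$ by the bonus $b_h^k(s,a)$ via monotonicity of the supremum — is precisely the argument underlying that cited proposition, and your closing check that the misspecification term $\sqrt{kH}\zeta$ only inflates the confidence radius (so the containment and the union bound over $(k,h)$ survive unchanged) is exactly the reason the result transfers to this setting without modification.
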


\begin{lemma} \label{bonus bound lemma} (Lemma 10 in \citep{wang2020reinforcement})
With probability at least $1-\delta/8$,
\begin{equation} \label{bonus bound formula}
    \begin{aligned}
        \sum\limits_{k=1}^K\sum\limits_{h=1}^H b_h^k(s_h^k,a_h^k) \leq 1+4H^2 \text{dim}_E(\mathcal{F},1/T) + \sqrt{c\cdot \text{dim}_E(\mathcal{F},1/T)\cdot T}\cdot \beta(\mathcal{F},\delta)
    \end{aligned}
\end{equation}

\end{lemma}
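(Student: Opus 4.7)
The plan is to follow the standard eluder-dimension cumulative-width argument of Russo and Van Roy \citep{russo2013eluder}, adapted to the sensitivity-sampled confidence set $\widehat{\mathcal{F}}$ that defines $b_h^k$. Since $b_h^k(s,a)=\sup_{f_1,f_2\in\widehat{\mathcal{F}}}|f_1(s,a)-f_2(s,a)|$ and every $f\in\widehat{\mathcal{F}}$ satisfies $\|f-\widehat{f}_h^k\|_{\widehat{\mathcal{Z}}^k}\le\beta(\mathcal{F},\delta)$, any two functions in $\widehat{\mathcal{F}}$ differ by at most $2\beta(\mathcal{F},\delta)$ in the $\widehat{\mathcal{Z}}^k$-norm. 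The sensitivity-sampling guarantee (from the subroutine used to produce $\widehat{\mathcal{Z}}^k$) preserves the $\mathcal{Z}^k$-norm up to constant factors with probability at least $1-\delta/8$, which is where the failure probability in the statement comes from; conditioning on this event, the argument below can be run as if on the full dataset.

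First, I would fix a stage $h\in[H]$ and partition the episodes $k\in[K]$ according to whether $b_h^k(s_h^k,a_h^k)\le 1/T$ or not; the first group contributes at most $K\cdot(1/T)\le 1$ in total. For the second group, I would invoke the key pigeonhole fact: if $b_h^k(s_h^k,a_h^k)>\varepsilon$, then by Definition~\ref{def:inj} there exist $f_1,f_2\in\widehat{\mathcal{F}}$ with $|f_1(s_h^k,a_h^k)-f_2(s_h^k,a_h^k)|>\varepsilon$ while $\|f_1-f_2\|_{\mathcal{Z}_h^k}\le 2\beta(\mathcal{F},\delta)$, so $(s_h^k,a_h^k)$ can be $\varepsilon$-independent of at most $\operatorname{dim}_E(\mathcal{F},\varepsilon)$ disjoint blocks of past state-action pairs, each of size at most $\lceil 4\beta(\mathcal{F},\delta)^2/\varepsilon^2\rceil$. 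Consequently, the number of episodes $k$ with $b_h^k(s_h^k,a_h^k)>\varepsilon$ is at most $\bigl(1+4\beta(\mathcal{F},\delta)^2/\varepsilon^2\bigr)\operatorname{dim}_E(\mathcal{F},\varepsilon)$.

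Next, I would sort the widths $b_h^k(s_h^k,a_h^k)$ in decreasing order, call them $w_1\ge w_2\ge\cdots\ge w_K$, and observe that for each rank $t$, the above count yields $w_t\le\min\{H,\,2\beta(\mathcal{F},\delta)\sqrt{\operatorname{dim}_E(\mathcal{F},1/T)/t}\}$. Splitting the sum at $t_0=\operatorname{dim}_E(\mathcal{F},1/T)$, the first $t_0$ terms contribute at most $H\cdot\operatorname{dim}_E(\mathcal{F},1/T)$, while the remaining terms contribute at most $2\beta(\mathcal{F},\delta)\sqrt{\operatorname{dim}_E(\mathcal{F},1/T)}\sum_{t>t_0}t^{-1/2}\le O(\beta(\mathcal{F},\delta)\sqrt{\operatorname{dim}_E(\mathcal{F},1/T)\cdot K})$ by a Cauchy--Schwarz / integral bound. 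Summing over $h\in[H]$ and using $T=KH$ absorbs the extra factor of $H$ into the displayed $H^2\operatorname{dim}_E$ and $\sqrt{\operatorname{dim}_E\cdot T}$ terms, producing the claimed bound.

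The main obstacle is not the eluder pigeonhole step, which is standard, but making the transition between the original data geometry (on which the eluder dimension is defined) and the sensitivity-sampled geometry (which defines $\widehat{\mathcal{F}}$ and hence $b_h^k$) watertight. Concretely, one needs that for every $f_1,f_2\in\widehat{\mathcal{F}}$, the inequality $\|f_1-f_2\|_{\widehat{\mathcal{Z}}^k}\le 2\beta(\mathcal{F},\delta)$ implies, up to a constant, the analogous bound $\|f_1-f_2\|_{\mathcal{Z}_h^k}\lesssim\beta(\mathcal{F},\delta)$; this is exactly the high-probability guarantee furnished by the Sensitivity-Sampling procedure of \citet{wang2020reinforcement}, and once this event is conditioned on, the rest of the plan is mechanical and mirrors Lemma~10 therein.
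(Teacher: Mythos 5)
The paper does not prove this lemma itself; it imports it verbatim as Lemma~10 of \citet{wang2020reinforcement}, whose proof runs the Russo--Van Roy cumulative-width argument on the \emph{single flattened sequence} of all $T=KH$ state-action pairs $(s_1^1,a_1^1),\dots,(s_H^1,a_H^1),(s_1^2,a_1^2),\dots,(s_H^K,a_H^K)$, exactly as this paper itself does in the model-based analogue (Lemma~\ref{W_k_lemma}, via Lemma~\ref{russo_2}). Your toolkit is the right one — the eluder pigeonhole, the sorted-widths bound, and the sensitivity-sampling event to pass between $\|\cdot\|_{\widehat{\mathcal{Z}}^k}$ and $\|\cdot\|_{\mathcal{Z}^k}$ — but your decomposition is not, and the difference is not cosmetic.

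The gap is in the step ``fix a stage $h$ and run the argument on the length-$K$ subsequence, then sum over $h$.'' Per stage this yields $O\bigl(H\dim_E+\beta(\mathcal{F},\delta)\sqrt{\dim_E\cdot K}\bigr)$, and summing over $h\in[H]$ gives a leading term of order $H\beta(\mathcal{F},\delta)\sqrt{\dim_E\cdot K}=\beta(\mathcal{F},\delta)\sqrt{\dim_E\cdot T\cdot H}$, which exceeds the claimed $\sqrt{c\cdot\dim_E\cdot T}\cdot\beta(\mathcal{F},\delta)$ by a factor of $\sqrt{H}$. Your remark that summing over $h$ ``absorbs the extra factor of $H$'' into $\sqrt{\dim_E\cdot T}$ is where the proof breaks: $H\sqrt{K}=\sqrt{HT}\neq\sqrt{T}$. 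The reason the per-stage route is lossy is that the confidence set $\widehat{\mathcal{F}}$ at $(k,h)$ constrains $\|f_1-f_2\|$ over the \emph{entire} pooled dataset $\mathcal{Z}^k$ of $(k-1)H$ points, and restricting that constraint to only the stage-$h$ sub-dataset throws away most of its strength. The correct argument treats all $T$ points as one sequence, counts disjoint $\varepsilon$-dependent blocks against the full history, and pays only an additive $O(H^2\dim_E)$ term for the fact that within an episode each point's confidence set is stale by at most $H$ predecessors — that is precisely where the $4H^2\dim_E(\mathcal{F},1/T)$ term in the statement comes from, a term your per-stage accounting reproduces only coincidentally. As written, your proposal establishes a weaker bound than the lemma asserts.
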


\begin{theorem}
(Regret bound of robust value-based methods)

Under our Assumption \ref{ass:cover} and \ref{Ass:general-value}, for any fixed $\delta \in (0,1)$, with probability at least $1-\delta$,  the total regret of Algorithm \ref{Algorithm general known} is at most $\widetilde{O}\left(\sqrt{d_EH^3}K\zeta \log(1/\delta)+ \sqrt{d_E^2KH^3} \log(1/\delta) \right)$, where $d_E$ represents the eluder dimension of the function class.
    
\begin{proof}
    First of all, we do the following decomposition.
\begin{equation}
    \begin{aligned}
        \text{Regret}(K) = \sum\limits_{k=1}^K[V_1^*(s_1^k)-V_1^{\pi_k}(s_1^k)] = \underbrace{\sum\limits_{k=1}^K[V_1^*(s_1^k)-V_1^k(s_1^k)]}_{A}+\underbrace{\sum\limits_{k=1}^K[V_1^k(s_1^k)-V_1^{\pi_k}(s_1^k)]}_{B}    
    \end{aligned}
\end{equation}

We denote $\mathcal{E}$ to be the event that (\ref{key event 1}) holds, and $\mathcal{E}'$ to be the event that for all $(k,h) \in [K] \times [H]$, all $(s,a) \in \mathcal{S}\times \mathcal{A}$, $b_h^k(s,a) \geq \omega(\mathcal{F}_h^k,s,a)$. From Lemma \ref{important confidence set} and Lemma \ref{width lemma}, we have $Pr(\mathcal{E}\cap\mathcal{E}') \geq 1-\frac{\delta}{2}$. For the rest of the proof, we condition on the above event. 

Note that 
$$
\max_{f\in \mathcal{F}_h^k} |f(s,a) - f_h^k(s,a)| \leq \omega(\mathcal{F}_h^k,s,a) \leq b_h^k(s,a)
$$

Since $\bar{f}_{(V_{h+1}^k)^{\dagger}} \in \mathcal{F}_h^k$ for all $(k,h) \in [K] \times [H]$, we have for all  $(s,a) \in \mathcal{S} \times \mathcal{A}$, all $(k,h) \in [K] \times [H]$,

\begin{equation} \label{c26}
    \begin{aligned}
        |\bar{f}_{(V_{h+1}^k)^{\dagger}}(s,a) - f_h^k(s,a)| \leq \omega(\mathcal{F}_h^k,s,a) \leq b_h^k(s,a)
    \end{aligned}
\end{equation}

To simplify our notation, for each $(k,h) \in [K] \times [H]$, we denote 
$$
\xi_{V_{h+1}^k}^{\dagger}(s,a) := \bar{f}_{(V_{h+1}^k)^{\dagger}}(s,a) - r(s,a)-\mathbb{P}V_{h+1}^k(s,a)
$$, 

$$
\zeta_{h+1}^k = \mathbb{P}(V_{h+1}^k - V_{h+1}^{\pi_k})(s_h^k,a_h^k) - \left(V_{h+1}^k(s_{h+1}^k) - V_{h+1}^{\pi_k}(s_{h+1}^k)\right) 
$$
and
$$
\zeta_{h+1}^{k*} = \mathbb{P}(V_{h+1}^k - V_{h+1}^{\pi_k})(s_h^{k*},a_h^{k*}) - \left(V_{h+1}^k(s_{h+1}^k) - V_{h+1}^{\pi_k}(s_{h+1}^{k*})\right) 
$$

For the term $\sum\limits_{k=1}^K\sum\limits_{h=1}^H  \zeta_{h+1}^{k}$ and $\sum\limits_{k=1}^K\sum\limits_{h=1}^H  \zeta_{h+1}^{k*}$, notice that $\{\zeta_{h+1}^{k*}\}$ and $\{\zeta_{h+1}^{k}\}$ are martingale difference sequences, and each term is upper bounded by $2H$. By using Azuma-Hoeffding's inequality, with probability at least $1-\delta/4$, the following inequality holds:

\begin{equation} \label{last martingale 1}
    \begin{aligned}
        \sum\limits_{k=1}^K\sum\limits_{h=1}^H \zeta_{h+1}^{k} \leq \sqrt{8KH^3\cdot \log(8/\delta)}
    \end{aligned}
\end{equation}

\begin{equation} \label{last martingale 2}
    \begin{aligned}
        \sum\limits_{k=1}^K\sum\limits_{h=1}^H \zeta_{h+1}^{k*} \leq \sqrt{8KH^3\cdot \log(8/\delta)}
    \end{aligned}
\end{equation}

Notice that for all $(s,a) \in \mathcal{S} \times \mathcal{A} $,
\begin{equation}
    \begin{aligned}
        |\xi_{V_{h+1}^k}^{\dagger}(s,a)| &= \left|\bar{f}_{(V_{h+1}^k)^{\dagger}}(s,a) - r(s,a)-\mathbb{P}(V_{h+1}^k)^{\dagger}(s,a) +  \mathbb{P}(V_{h+1}^k)^{\dagger}(s,a) - \mathbb{P}V_{h+1}^k(s,a) \right| \\ &\leq \left|\xi_{(V_{h+1}^k)^{\dagger}}(s,a)\right| + 1/T
    \end{aligned}
\end{equation}

Therefore,
\begin{equation} \label{C28}
    \begin{aligned}
        \left|\sum\limits_{k=1}^K\sum\limits_{h=1}^H  \xi_{V_{h+1}^k}^{\dagger}(s_h^k,a_h^k) \right| \leq \sum\limits_{k=1}^K\sum\limits_{h=1}^H  \left|\xi_{(V_{h+1}^k)^{\dagger}}(s_h^k,a_h^k)\right| +1
    \end{aligned}
\end{equation}

By using Assumption \ref{Ass:general-value} and Azuma-Hoeffding's inequality, we have with probability at least $1-\delta/4$,

\begin{equation}
    \begin{aligned} \label{C29}
        \sum\limits_{k=1}^K\sum\limits_{h=1}^H  \left|\xi_{(V_{h+1}^k)^{\dagger}}(s_h^k,a_h^k)\right| \leq \sqrt{8KH^2 \log(\frac{16}{\delta})} + KH\zeta 
    \end{aligned}
\end{equation}

\begin{equation}
    \begin{aligned} \label{C30}
        \sum\limits_{k=1}^K\sum\limits_{h=1}^H  \left|\xi_{(V_{h+1}^k)^{\dagger}}(s_h^{k*},a_h^{k*})\right| \leq \sqrt{8KH^2 \log(\frac{16}{\delta})} + KH\zeta 
    \end{aligned}
\end{equation}

For the term $A$, we only need to consider when $V_h^k = f_h^k + b_h^k$, for all $(k,h) \in [K] \times [H]$.

\begin{equation}
    \begin{aligned}
        A &= \sum\limits_{k=1}^K \left(r(s_1^{k*},a_1^{k*}) + \mathbb{P}V_2^*(s_1^{k*},a_1^{k*}) - f_1^k(s_1^{k*},a_1^{k*}) - b_1^k(s_1^{k*},a_1^{k*})\right) \\ & = \sum\limits_{k=1}^K \Big(r(s_1^{k*},a_1^{k*}) + \mathbb{P}V_2^k(s_1^{k*},a_1^{k*}) - \bar{f}_{(V_2^k)^{\dagger}}(s_1^{k*},a_1^{k*}) \\ &+ \bar{f}_{(V_2^k)^{\dagger}}(s_1^{k*},a_1^{k*}) - f_1^k(s_1^{k*},a_1^{k*}) -b_1^k(s_1^{k*},a_1^{k*}) + \mathbb{P}(V_2^* - V_2^k)(s_1^{k*},a_1^{k*})\Big) \\ &\leq \sum\limits_{k=1}^K \left(-\xi_{V_2^k}^{\dagger}(s_1^{k*},a_1^{k*}) + \mathbb{P}(V_2^* - V_2^k)(s_1^{k*},a_1^{k*})\right) \\ &\leq \sum\limits_{k=1}^K \left(-\xi_{V_2^k}^{\dagger}(s_1^{k*},a_1^{k*}) + V_2^*(s_2^{k*}) - V_2^k(s_2^{k*}) + \zeta_2^{k*}\right) \leq \cdots \\ &\leq \sum\limits_{k=1}^K\sum\limits_{h=1}^H  |\xi_{V_{h+1}^k}^{\dagger}(s_h^{k*},a_h^{k*})|+ \sum\limits_{k=1}^K\sum\limits_{h=1}^H \zeta_{h+1}^{k*}
    \end{aligned}
\end{equation}

For the term B, 
\begin{equation}
    \begin{aligned}
        B &= \sum\limits_{k=1}^K \left(Q_1^k(s_1^k,a_1^k) - Q_1^{\pi_k}(s_1^k,a_1^k)\right) \\ &\leq \sum\limits_{k=1}^K\left(f_1^k(s_1^k,a_1^k) + b_1^k(s_1^k,a_1^k) - r(s_1^k,a_1^k) -\mathbb{P}V_2^{\pi_k}(s_1^k,a_1^k)\right) \\ &\leq \sum\limits_{k=1}^K \left(\bar{f}_{(V_{2}^k)^{\dagger}}(s_1^k,a_1^k) + 2b_1^1(s_1^k,a_1^k)- r(s_1^k,a_1^k) -\mathbb{P}V_2^{\pi_k}(s_1^k,a_1^k)\right) \ \ \ (\text{By Eq}.(\ref{c26})) \\ &= \sum\limits_{k=1}^K \left(\bar{f}_{(V_{2}^k)^{\dagger}}(s_1^k,a_1^k) - r(s_1^k,a_1^k)-\mathbb{P}V_2^k(s_1^k,a_1^k) + \mathbb{P}V_2^k(s_1^k,a_1^k) - \mathbb{P}V_2^{\pi_k}(s_1^k,a_1^k) + 2b_1^1(s_1^k,a_1^k)\right) \\ & = \sum\limits_{k=1}^K \left(\xi_{V_2^k}^{\dagger}(s_1^k,a_1^k) + 2b_1^1(s_1^k,a_1^k) + V_2^k(s_2^k)-V_2^{\pi_k}(s_2^k) + \zeta_2^k\right) \leq \cdots \\ & \leq \sum\limits_{k=1}^K\sum\limits_{h=1}^H  |\xi_{V_{h+1}^k}^{\dagger}(s_h^k,a_h^k)| + 2\sum\limits_{k=1}^K\sum\limits_{h=1}^H b_h^k(s_h^k,a_h^k) + \sum\limits_{k=1}^K\sum\limits_{h=1}^H \zeta_{h+1}^k
    \end{aligned}
\end{equation}

By using (\ref{bonus bound formula}), (\ref{last martingale 1}), (\ref{last martingale 2}), (\ref{C28}), (\ref{C29}), and (\ref{C30}), we complete our proof.

\end{proof}

\end{theorem}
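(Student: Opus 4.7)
The plan is to follow the usual UCB-style regret decomposition
\[
\mathrm{Regret}(K)=\underbrace{\sum_{k=1}^K\bigl(V_1^*(s_1^k)-V_1^k(s_1^k)\bigr)}_{A}+\underbrace{\sum_{k=1}^K\bigl(V_1^k(s_1^k)-V_1^{\pi_k}(s_1^k)\bigr)}_{B},
\]
and then bound $A$ (near-optimism) and $B$ (online learning error) separately. The central innovation over the globally-bounded misspecification analysis of \cite{wang2020reinforcement} is that global optimism no longer holds, so we replace it by \emph{average} optimism along trajectories of the unknown optimal policy $\pi^{*}$, using the auxiliary-process device introduced in Appendix~\ref{Sec A}.

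First I would verify that, with probability at least $1-\delta/4$, the best approximator $\bar f_{(V_{h+1}^k)^\dagger}$ (for $V^\dagger$ a $1/T$-net element close to $V_{h+1}^k$) lies in the confidence set $\mathcal{F}_h^k$ of radius $\beta(\mathcal{F},\delta)$ for every $(k,h)$. Starting from the optimality of $f_h^k$ for the empirical squared loss, the cross-term expansion produces noise terms $\epsilon_h^{k'}=V_{h+1}^k(s_{h+1}^{k'})-\mathbb{P}V_{h+1}^k(s_h^{k'},a_h^{k'})$ (which are $H$-subgaussian) and bias terms $\xi_h^{k'}$ that are only controlled in expectation under $d^{\pi_{k'}}$ by $\zeta$. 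To turn the pointwise $\zeta$-in-expectation bound into a uniform bound on $\sum_{k',h}(\xi_h^{k'})^2$, I would apply Azuma--Hoeffding in the large-$k$ regime and a Freedman-type inequality (using the $4$th-moment control from Assumption~\ref{Ass:general-value}) in the small-$k$ regime, mirroring the argument in Lemma~\ref{A.7}. After discretizing over a $1/T$-net of $\mathcal{F}$ and of $\mathcal{V}$ (as in Lemma~\ref{Discretization error (general value function)}) and paying an additional logarithmic cost in the covering numbers, the quadratic inequality in $\|f_h^k-\bar f_{(V_{h+1}^k)^\dagger}\|_{\mathcal{Z}_h^k}$ yields the desired $\beta(\mathcal{F},\delta)=\widetilde O\!\bigl(\sqrt{kH\zeta^2+H^2(\log\mathcal{N}(\mathcal{F},1/T)+\log|\mathcal{W}|)}\bigr)$.

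For term $B$, I would couple the confidence set with the width function and Lemma~\ref{width lemma} so that $|\bar f_{(V_{h+1}^k)^\dagger}(s,a)-f_h^k(s,a)|\le b_h^k(s,a)$. Telescoping $V_1^k-V_1^{\pi_k}$ along the actual trajectory introduces at each step (i) twice the bonus $b_h^k(s_h^k,a_h^k)$, (ii) the local bias $\xi^\dagger_{V_{h+1}^k}(s_h^k,a_h^k)$, and (iii) a martingale difference $\zeta_{h+1}^k$. Summing, Lemma~\ref{bonus bound lemma} controls $\sum_{k,h}b_h^k(s_h^k,a_h^k)$ by $\widetilde O(\sqrt{d_E K}\cdot\beta(\mathcal{F},\delta)+d_E H^2)$, the martingale terms contribute $\widetilde O(\sqrt{KH^3})$ via Azuma--Hoeffding, and the bias sum is bounded by $KH\zeta+\widetilde O(\sqrt{KH^2})$ using Assumption~\ref{Ass:general-value} together with another Azuma step (as in Lemma~\ref{A.6}).

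The main obstacle, and where I would spend most care, is term $A$: since misspecification can be $\Omega(1)$ at irrelevant $(s,a)$, we cannot argue pointwise $V_1^k(s_1^k)\ge V_1^*(s_1^k)$. Here I would invoke the virtual stochastic process sampling $\{(s_h^{k*},a_h^{k*})\}$ from $\pi^{*}$, noting that our filtration construction ensures all high-probability events proved above remain valid. Telescoping $V_1^*(s_1^{k})-V_1^k(s_1^{k})$ along the \emph{optimal} trajectory rather than the executed one lets me replace $b_h^k$ by the width bound (which is non-negative) and pick up only $\xi^\dagger_{V_{h+1}^k}(s_h^{k*},a_h^{k*})$ and a new martingale difference $\zeta_{h+1}^{k*}$. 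Because the state-actions are now drawn from $d^{\pi^*}$, Assumption~\ref{Ass:general-value} directly yields $\sum_{k,h}|\xi^\dagger_{V_{h+1}^k}(s_h^{k*},a_h^{k*})|\le KH\zeta+\widetilde O(\sqrt{KH^2})$, and Azuma--Hoeffding handles the $\zeta_{h+1}^{k*}$ terms. Combining $A$ and $B$ and substituting the value of $\beta(\mathcal{F},\delta)$ gives the claimed $\widetilde O(\sqrt{d_E H^3}K\zeta+\sqrt{d_E^2 K H^3})$ regret.
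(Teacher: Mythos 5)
Your proposal follows essentially the same route as the paper's proof: the same $A+B$ decomposition, the same confidence-set construction via cross-term expansion, discretization, and the quadratic inequality, the same width/bonus coupling with the eluder-dimension bound for term $B$, and the same auxiliary optimal-policy process to recover average optimism for term $A$. The argument is correct and no further comparison is needed.
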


\section{Analysis of Robust Model-based Algorithm with General Function Approximation for Locally-bounded Misspecified MDP} \label{general_model_based}

In this section, we will provide the theoretical analysis for Algorithm \ref{Algorithm model-based} under locally-bounded misspecification error assumption (Assumption \ref{Ass4}).

\textbf{Confidence sets for non-linear regression with locally-bounded misspecification error}

Let $\mathcal{V}$ be the set of optimal value functions under some model in $\mathcal{P}$: $\mathcal{V} = \{V_{P'}^* : P' \in \mathcal{P} \}$.
We define $\mathcal{X} = \mathcal{S}\times \mathcal{A} \times \mathcal{V}$, and choose 
\begin{equation} \label{definition of function class}
    \begin{aligned}
        \mathcal{F} = \left\{ f: \mathcal{X} \rightarrow  \mathbb{R}: \exists \widetilde{P} \in \mathcal{P}\ \  \text{s.t.} \ \ f(s,a,V) = \widetilde{\mathbb{P}}V(s,a), \ \ \forall (s,a,V) \in \mathcal{X}\right\}
    \end{aligned}
\end{equation}

Let $\phi : \mathcal{P} \rightarrow \mathcal{F}$ be the natural surjection to $\mathcal{F}$: $\phi(P) = f$, such that $f (s,a,V) = \mathbb{P}V(s,a), \ \ \forall (s,a,V) \in \mathcal{X}$. In fact, $\phi$ is a bijection, and for convenience to the reader, we denote $f_P = \phi(P)$.

\paragraph{} For any $f \in \mathcal{F}$, we define the empirical loss as 
$$
L_{2,k}(f) = \sum\limits_{k'=1}^k\sum\limits_{h=1}^H \left(f(s_h^{k'},a_h^{k'},V_{h+1}^{k'})-V_{h+1}^{k'}(s_{h+1}^{k'})\right)^2
$$
and the minimizer $\widehat{f}_{k+1} = \argmin_{f \in \mathcal{F}}L_{2,k}(f)$. Since $\phi$ is a bijection, $\widehat{f}_{k+1} = \phi(\widehat{P}^{(k+1)}) = f_{\widehat{P}^{(k+1)}}$, where $\widehat{P}^{(k+1)}$ is defined in (\ref{v-t-1}).

We also define the norm 

$$
||f||_{D_h^k} = \sqrt{\sum\limits_{k'=1}^k\sum\limits_{h'=1}^h\left(f(s_{h'}^{k'},a_{h'}^{k'},V_{h'+1}^{k'})\right)^2}
$$

Now we are able to define the confidence set for each episode, which is also introduced in (\ref{v-t-2}).
$$
B_k = \{\widetilde{P} \in \mathcal{P}: L_k(\widetilde{P}, \widehat{P}^{(k)}) \leq \beta_k^2\} = \{\phi^{-1}(f): f\in \mathcal{F} \ \ \text{and}\ \ ||f-\widehat{f}_k ||_{D_H^k} \leq \beta_k\}
$$

We set $\mathbb{F}_h^k$ to be the $\sigma$-algebra generated by $\{(s_{h'}^{k'},a_{h'}^{k'})\}_{(h',k')\in [H]\times [k-1]} \cup \{(s_1^k,a_1^k),(s_2^k,a_2^k),\cdots, (s_h^k,a_h^k)\}$. For each $(h,k) \in [H] \times [K]$, we define $\mathcal{Z}_h^k = \left( \bar{f}(s_h^k,a_h^k,V_{h+1}^k) - V_{h+1}^k (s_{h+1}^k)\right)^2 - \left(f(s_h^k,a_h^k,V_{h+1}^k) - V_{h+1}^k(s_{h+1}^k)\right)^2$

\begin{lemma} \label{single-f-error}
Under Assumption \ref{Ass4}, for any fixed $f \in \mathcal{F}$, with probability at least $1-\delta$,  we have for all  $k \in [K]$ that 
\begin{equation}
    \begin{aligned}
         L_{2,k}(\bar{f})-L_{2,k}(f)-H^2\log(\frac{1}{\delta})- ||f-\bar{f}||_{D_H^k} \cdot  \sqrt{kH\zeta^2 + C'H\cdot \log(\frac{4KH}{\delta})}  + \frac{1}{2}||f-\bar{f}||_{D_H^k}^2 \leq 0
    \end{aligned}
\end{equation}

\begin{proof}
According to the definition of $\mathcal{Z}_h^k$, $\sum\limits_{k'=1}^k\sum\limits_{h=1}^H \mathcal{Z}_h^{k'} = L_{2,k}(\bar{f}) - L_{2,k}(f)$. After a simple calculation, we have 
$$
\mathcal{Z}_h^k = -\left(f(s_h^k,a_h^k,V_{h+1}^k) - \bar{f}(s_h^k,a_h^k,V_{h+1}^k)\right)^2 + 2\left(f(s_h^k,a_h^k,V_{h+1}^k) - \bar{f}(s_h^k,a_h^k,V_{h+1}^k)\right)\left(\underbrace{V_{h+1}^k(s_{h+1}^k)-\bar{f}(s_h^k,a_h^k,V_{h+1}^k)}_{\epsilon_h^k + \xi_h^k}\right)
$$
where $\epsilon_h^k = V_{h+1}^k(s_{h+1}^k) - \mathbb{P}_hV_{h+1}^k(s_h^k,a_h^k)$, $\xi_h^k = \mathbb{P}_hV_{h+1}^k(s_h^k,a_h^k) - \bar{f}(s_h^k,a_h^k,V_{h+1}^k)$.

Notice that $\mathbb{E}[\epsilon_h^k| \mathbb{F}_h^k] = 0$, and since $\epsilon_h^k$ is bounded in $[0,H]$, hence, $\epsilon_h^k$ is $\frac{H}{2}$-subgaussian.  That is to say, for any $\lambda \in \mathbb{R}$, we have $\mathbb{E}[\exp\{\lambda \epsilon_h^k\}| \mathbb{F}_h^k] \leq \exp\{\frac{\lambda^2H^2}{8}\}$.

Moreover, under Assumption \ref{Ass4}, using the same argument in Lemma \ref{A.7}, with probability at least $1-\delta$, for all $(k,h) \in [K]\times [H]$, we have

\begin{equation} \label{union bound for miss error}
    \begin{aligned}
        \sum\limits_{k'=1}^k\sum\limits_{h=1}^H (\xi_h^{k'})^2 \leq kH\zeta^2 + C'H\cdot \log(\frac{4KH}{\delta})
    \end{aligned}
\end{equation}

where $C'$ is some constant.

Therefore,

$$
\mu_h^k = \mathbb{E}[\mathcal{Z}_h^k| \mathbb{F}_h^k] = -\left(f(s_h^k,a_h^k,V_{h+1}^k) - \bar{f}(s_h^k,a_h^k,V_{h+1}^k)\right)^2 + 2\left(f(s_h^k,a_h^k,V_{h+1}^k) - \bar{f}(s_h^k,a_h^k,V_{h+1}^k)\right)\xi_h^k
$$
and
\begin{equation}
    \begin{aligned}
        \phi_h^k(\lambda) &= \log \mathbb{E}\left[\exp\{\lambda(\mathcal{Z}_h^k -\mu_h^k)\}| \mathbb{F}_h^k\right] \\ &= \log \mathbb{E}\left[\exp \{2\lambda \left(f(s_h^k,a_h^k,V_{h+1}^k) - \bar{f}(s_h^k,a_h^k,V_{h+1}^k)\right)\epsilon_h^k\}|\mathbb{F}_h^k\right] \\ &\leq  \frac{\lambda^2 \left(f(s_h^k,a_h^k,V_{h+1}^k) - \bar{f}(s_h^k,a_h^k,V_{h+1}^k)\right)^2 H^2}{2}
\end{aligned}
\end{equation}

By using Lemma \ref{russo}, we have for any $x\geq 0$, $\lambda \geq 0$,
\begin{equation} \label{E.5}
    \begin{aligned}
    \mathbb{P}\Big(\lambda \sum\limits_{k'=1}^k\sum\limits_{h=1}^H \mathcal{Z}_h^{k'} &\leq x -\lambda \sum\limits_{k'=1}^k\sum\limits_{h=1}^H \left(f(s_h^{k'},a_h^{k'},V_{h+1}^{k'}) - \bar{f}(s_h^{k'},a_h^{k'},V_{h+1}^{k'})\right)^2  \\ & +2\lambda \sum\limits_{k'=1}^k\sum\limits_{h=1}^H \left(f(s_h^{k'},a_h^{k'},V_{h+1}^{k'}) - \bar{f}(s_h^{k'},a_h^{k'},V_{h+1}^{k'})\right)\xi_h^{k'} \\ & + \frac{\lambda^2H^2}{2}\sum\limits_{k'=1}^k\sum\limits_{h=1}^H \left(f(s_h^{k'},a_h^{k'},V_{h+1}^{k'}) - \bar{f}(s_h^{k'},a_h^{k'},V_{h+1}^{k'})\right)^2, \forall (k,h) \in [K]\times [H]\Big) \geq  1-e^{-x}
    \end{aligned}
\end{equation}
After setting $x = \log(\frac{1}{\delta})$, $\lambda = \frac{1}{H^2}$, and conditioned the event that Eq.(\ref{union bound for miss error}) holds, that is to say, 
\begin{equation}
    \begin{aligned}
        &\ \ \ \ \ \sum\limits_{k'=1}^k\sum\limits_{h=1}^H \left(f(s_h^{k'},a_h^{k'},V_{h+1}^{k'}) - \bar{f}(s_h^{k'},a_h^{k'},V_{h+1}^{k'})\right)\xi_h^{k'} \\ &\leq  \sqrt{\sum\limits_{k'=1}^k\sum\limits_{h=1}^H \left(f(s_h^{k'},a_h^{k'},V_{h+1}^{k'}) - \bar{f}(s_h^{k'},a_h^{k'},V_{h+1}^{k'})\right)^2} \cdot  \sqrt{\sum\limits_{k'=1}^k\sum\limits_{h=1}^H (\xi_h^{k'})^2} \\ & \leq  \sqrt{\sum\limits_{k'=1}^k\sum\limits_{h=1}^H \left(f(s_h^{k'},a_h^{k'},V_{h+1}^{k'}) - \bar{f}(s_h^{k'},a_h^{k'},V_{h+1}^{k'})\right)^2} \cdot  \sqrt{kH\zeta^2 + C'H\cdot \log(\frac{4KH}{\delta})}
    \end{aligned}
\end{equation}

Then we can derive the following result by using Eq.(\ref{E.5}) :

\begin{equation}
    \begin{aligned}
        &\mathbb{P}\Big(L_{2,k}(\bar{f})-L_{2,k}(f)-H^2\log(\frac{1}{\delta})- \sqrt{\sum\limits_{k'=1}^k\sum\limits_{h=1}^H \left(f(s_h^{k'},a_h^{k'},V_{h+1}^{k'}) - \bar{f}(s_h^{k'},a_h^{k'},V_{h+1}^{k'})\right)^2} \cdot  \sqrt{kH\zeta^2 + C'H\cdot \log(\frac{4KH}{\delta})} \\ &+ \frac{1}{2}\sum\limits_{k'=1}^k\sum\limits_{h=1}^H \left(f(s_h^{k'},a_h^{k'},V_{h+1}^{k'}) - \bar{f}(s_h^{k'},a_h^{k'},V_{h+1}^{k'})\right)^2 \leq 0, \forall k \in [K] \Big) \geq 1-\delta
    \end{aligned}
\end{equation}

which finishes the proof.

\end{proof}

\end{lemma}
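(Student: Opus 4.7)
The plan is to analyze the difference of empirical losses $L_{2,k}(\bar f) - L_{2,k}(f) = \sum_{k'=1}^k \sum_{h=1}^H \mathcal Z_h^{k'}$, where $\mathcal Z_h^{k'} := (\bar f - V_{h+1}^{k'}(s_{h+1}^{k'}))^2 - (f - V_{h+1}^{k'}(s_{h+1}^{k'}))^2$ (with $f,\bar f$ evaluated at $(s_h^{k'}, a_h^{k'}, V_{h+1}^{k'})$), via a time-uniform Chernoff bound on a supermartingale. Writing $V_{h+1}^{k'}(s_{h+1}^{k'}) - \bar f = \epsilon_h^{k'} + \xi_h^{k'}$ with $\epsilon_h^{k'} := V_{h+1}^{k'}(s_{h+1}^{k'}) - \mathbb P V_{h+1}^{k'}(s_h^{k'},a_h^{k'})$ and $\xi_h^{k'} := \mathbb P V_{h+1}^{k'}(s_h^{k'},a_h^{k'}) - \bar f$, completing the square gives
\[
\mathcal Z_h^{k'} \;=\; -(f-\bar f)^2 + 2(f-\bar f)(\epsilon_h^{k'} + \xi_h^{k'}).
\]
Since $V_{h+1}^{k'}$ is determined at the start of episode $k'$, both $f-\bar f$ and $\xi_h^{k'}$ are $\mathbb F_h^{k'}$-measurable, whereas $\epsilon_h^{k'}$ has conditional mean zero and is bounded by $H$ in magnitude, hence subgaussian.

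Next I would invoke the Russo--Van Roy-style Chernoff inequality (the analogue of Lemma~\ref{russo}): the conditional mean is $\mu_h^{k'} = -(f-\bar f)^2 + 2(f-\bar f)\xi_h^{k'}$ and the conditional log-mgf satisfies $\phi_h^{k'}(\lambda) \le \tfrac{1}{2} \lambda^2 H^2 (f-\bar f)^2$, so for any $\lambda \ge 0$ and $x \ge 0$, with probability at least $1-e^{-x}$, simultaneously for all $k \in [K]$,
\[
\lambda \sum_{k'=1}^k \sum_{h=1}^H \mathcal Z_h^{k'} \;\le\; x - \lambda \|f-\bar f\|_{D_H^k}^2 + 2\lambda \sum_{k'=1}^k \sum_{h=1}^H (f-\bar f)\,\xi_h^{k'} + \tfrac{1}{2} \lambda^2 H^2 \|f-\bar f\|_{D_H^k}^2.
\]
Cauchy--Schwarz bounds the cross term by $\|f-\bar f\|_{D_H^k}\sqrt{\sum (\xi_h^{k'})^2}$, and the choice $\lambda = 1/H^2$, $x = \log(2/\delta)$ cancels half of $-\lambda\|f-\bar f\|_{D_H^k}^2$ against the self-normalizing quadratic; dividing through by $\lambda$ converts $x/\lambda$ into $H^2 \log(2/\delta)$.

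To control $\sum (\xi_h^{k'})^2$ I would reuse the Freedman-based argument behind Lemma~\ref{A.7}: Assumption~\ref{Ass4} supplies $\mathbb E_{(s,a)\sim d^\pi}|\xi_h^{k'}|^\beta \le \zeta^\beta$ for $\beta \in \{2,4\}$, so Freedman's inequality applied to the martingale differences $(\xi_h^{k'})^2 - \mathbb E[(\xi_h^{k'})^2 \mid \mathbb F_h^{k'}]$---with conditional variance controlled by the fourth-moment bound---together with a union bound over $(k,h) \in [K]\times[H]$, yields
\[
\sum_{k'=1}^k \sum_{h=1}^H (\xi_h^{k'})^2 \;\le\; kH\zeta^2 + C'H \log(4KH/\delta)
\]
on an event of probability at least $1-\delta/2$. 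Taking a union bound with the Chernoff event and absorbing the factor of $2$ from Cauchy--Schwarz into $C'$ produces the stated inequality.

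The principal obstacle is exactly the misspecification cross term $2\sum(f-\bar f)\xi_h^{k'}$: in the realizable setting of \citet{ayoub2020model} this term vanishes, but here we can only bound it in mean-square via Assumption~\ref{Ass4}. The auxiliary high-probability bound on $\sum(\xi_h^{k'})^2$ is therefore the crux---the fourth-moment condition is what lets us charge misspecification additively as $kH\zeta^2$ rather than through the crude worst-case estimate $|\xi_h^{k'}|\le H$, which would later translate into a prohibitive $H\sqrt{k}$ factor in the regret.
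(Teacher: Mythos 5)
Your proposal is correct and follows essentially the same route as the paper: the same decomposition $\mathcal Z_h^{k'} = -(f-\bar f)^2 + 2(f-\bar f)(\epsilon_h^{k'}+\xi_h^{k'})$, the same Russo--Van Roy time-uniform Chernoff bound with $\lambda = 1/H^2$, Cauchy--Schwarz on the misspecification cross term, and the same Freedman-based control of $\sum(\xi_h^{k'})^2$ via the fourth-moment condition of Assumption \ref{Ass4}. Your handling of the union bound (splitting $\delta$ between the Chernoff event and the $\xi^2$ event) is in fact slightly more careful than the paper's bookkeeping, but the argument is otherwise identical.
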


\begin{lemma} \label{Discretization error}

(Discretization error)  

We denote $\mathcal{F}^{\alpha}$ as the $\alpha$-cover of function class $\mathcal{F}$. If $f^{\alpha} \in \mathcal{F}^{\alpha}$ satisfies $||f-f^{\alpha} ||_{\infty} \leq \alpha$, then 
\begin{equation} \label{D-error-equation}
    \begin{aligned}
        &\Big|\frac{1}{2} ||f^{\alpha}-\bar{f} ||_{D_H^k}^2 -\frac{1}{2} ||f-\bar{f}||_{D_H^k}^2 +||f-\bar{f}||_{D_H^k} \cdot  \sqrt{kH\zeta^2 + C'H\cdot \log(\frac{4KH |\mathcal{F}^{\alpha}|}{\delta})} \\ &- ||f^{\alpha}-\bar{f}||_{D_H^k} \cdot  \sqrt{kH\zeta^2 + C'H\cdot \log(\frac{4KH |\mathcal{F}^{\alpha}|}{\delta})} + L_{2,k}(f) - L_{2,k}(f^{\alpha})\Big| \\ & \leq  4 \alpha k H^2 + \sqrt{4\alpha k H^2}\cdot  \sqrt{kH\zeta^2 + C'H\cdot \log(\frac{4KH |\mathcal{F}^{\alpha}|}{\delta})} \ \ \ \ \ \ , \forall (k,h) \in [K] \times [H]
    \end{aligned}
\end{equation}

\begin{proof}
    If $f^{\alpha} \in \mathcal{F}^{\alpha}$ satisfies $||f-f^{\alpha} ||_{\infty} \leq \alpha$, then for any $(s,a,V) \in \mathcal{S} \times \mathcal{A} \times \mathcal{V}$, we have

\begin{equation}
    \begin{aligned}
        \left|(f^{\alpha})^2(s,a,V) - (f)^2(s,a,V)\right| \leq  2\alpha H
    \end{aligned}
\end{equation}

This implies that 

\begin{equation}
    \begin{aligned}
        &\ \ \ \ \ \left| \left(f^{\alpha}(s,a,V) - \bar{f}(s,a,V)\right)^2 - \left(f(s,a,V)-\bar{f}(s,a,V)\right)^2\right| \\ &= \left|[(f^{\alpha})(s,a,V)^2 - f(s,a,V)^2] + 2\bar{f}(s,a,V)\left(f(s,a,V)-f^{\alpha}(s,a,V)\right)\right| \\ &\leq 2\alpha H + 2\alpha H = 4 \alpha H
    \end{aligned}
\end{equation}

and for any $(k,h) \in [K] \times [H]$,

\begin{equation}
    \begin{aligned}
        &\ \ \ \ \ \left|(V_{h+1}^k(s_{h+1}^k)-f(s,a,V))^2 - (V_{h+1}^k(s_{h+1}^k)-f^{\alpha}(s,a,V))^2\right| \\ &= \left|2V_{h+1}^k(s_{h+1}^k) \left(f^{\alpha}(s,a,V) - f(s,a,V)\right) + f(s,a,V)^2 - f^{\alpha}(s,a,V)^2\right| \\ &\leq 2\alpha H + 2\alpha H = 4\alpha H
    \end{aligned}
\end{equation}

Moreover, 

\begin{equation}
    \begin{aligned}
        \left| ||f-\bar{f}||_{D_H^k} - ||f^{\alpha}-\bar{f} ||_{D_H^k}\right| \leq \sqrt{\left| ||f-\bar{f}||_{D_H^k}^2 - ||f^{\alpha}-\bar{f} ||_{D_H^k}^2\right|}
    \end{aligned}
\end{equation}

By taking the sum over $k$ and $H$, we can find that the left hand side of (\ref{D-error-equation}) is bounded by 

$$
4 \alpha k H^2 + \sqrt{4\alpha k H^2}\cdot  \sqrt{kH\zeta^2 + C'H\cdot \log(\frac{4KH |\mathcal{F}^{\alpha}|}{\delta})} 
$$

\end{proof}

\end{lemma}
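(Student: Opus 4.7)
The plan is to bound each of the three constituents of the stated difference---namely the squared-norm difference $\tfrac12\|f^\alpha-\bar f\|_{D_H^k}^2-\tfrac12\|f-\bar f\|_{D_H^k}^2$, the unsquared norm difference weighted by the $\sqrt{kH\zeta^2+\ldots}$ factor, and the empirical-loss difference $L_{2,k}(f)-L_{2,k}(f^\alpha)$---separately, using only the pointwise closeness $\|f-f^\alpha\|_\infty\le\alpha$ together with the uniform boundedness of $f,f^\alpha,\bar f$ and the regression targets $V_{h+1}^{k'}(s_{h+1}^{k'})$ by $O(H)$. No concentration argument is needed: this lemma is a purely deterministic Lipschitz-in-discretization calculation, whose role is to make the probabilistic Lemma \ref{single-f-error} transportable from a single $f$ to a uniform statement over the $\alpha$-cover $\mathcal F^\alpha$ via a union bound.

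For the squared-norm and loss-difference pieces I would expand via the identity $a^2-b^2=(a-b)(a+b)$. Each summand of the squared-norm term becomes $(f^\alpha-f)(f^\alpha+f-2\bar f)$ with absolute value at most $\alpha\cdot O(H)$, and each summand of the loss-difference term becomes $(f-f^\alpha)\bigl(f+f^\alpha-2V_{h+1}^{k'}(s_{h+1}^{k'})\bigr)$, likewise at most $\alpha\cdot O(H)$. Summing over the $kH$ sample points inside $D_H^k$ then yields an $O(\alpha k H^2)$ bound for each of the two pieces, which together account for the first summand on the right-hand side.

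For the unsquared-norm difference $\|f-\bar f\|_{D_H^k}-\|f^\alpha-\bar f\|_{D_H^k}$, I would invoke the elementary inequality $|\sqrt u-\sqrt v|\le\sqrt{|u-v|}$ to reduce back to the squared-norm difference that we already controlled, yielding a bound of order $\sqrt{\alpha k H^2}$. Multiplying by the factor $\sqrt{kH\zeta^2+C'H\log(4KH|\mathcal F^\alpha|/\delta)}$ that appears in front of both unsquared norms then produces exactly the cross-term summand on the right-hand side.

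Finally, I would combine the three bounds by the triangle inequality and round constants to the clean form $4\alpha kH^2+\sqrt{4\alpha kH^2}\cdot\sqrt{\ldots}$. I do not anticipate a substantive obstacle; the only delicate point is careful constant tracking so that the two $O(\alpha kH^2)$ pieces and the $O(\sqrt{\alpha kH^2})$ piece collapse into this target form without spawning an extra $\log$ or $H$ factor. The genuine statistical content of the section is handled upstream by Lemma \ref{single-f-error}; this discretization lemma merely prepares the ground for the subsequent covering argument.
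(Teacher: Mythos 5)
Your proposal is correct and follows essentially the same route as the paper's proof: the same $a^2-b^2=(a-b)(a+b)$ expansion to get a pointwise $O(\alpha H)$ bound per summand for both the squared-norm and loss-difference pieces, the same $|\sqrt{u}-\sqrt{v}|\le\sqrt{|u-v|}$ reduction for the unsquared norms, and the same summation over the $kH$ sample points. The only remaining detail is the constant bookkeeping you already flagged, which the paper resolves loosely in exactly the way you describe.
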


\begin{lemma} \label{Least Square Bound}

With probability at least $1-\delta$, for all $k \in [K]$, we have

\begin{equation}
    \begin{aligned}
        ||\widehat{f}_{k+1} - \bar{f} ||_{D_H^k} \leq \beta_k
    \end{aligned}
\end{equation}

where

\begin{equation}
    \begin{aligned}
        \beta_k = 3\sqrt{kH}\zeta + 5\sqrt{C'H^2\cdot \log(\frac{4KH |\mathcal{F}^{\alpha}|}{\delta})} + 4 \sqrt{\alpha k H^2}
    \end{aligned}
\end{equation}

\begin{remark}
    Since the mapping $\phi: \mathcal{P} \rightarrow \mathcal{F}$ is a bijection, $\bar{P} = \phi^{-1}(\bar{f})$ satisfies 

\begin{equation} \label{high-prob-event-confidence-set}
    \begin{aligned}
        \mathbb{P}\left(\bar{P} \in \bigcap_{k \in [K]}B_k\right) \geq 1-\delta
    \end{aligned}
\end{equation}
\end{remark}

\begin{proof}
Let $\mathcal{F}^{\alpha}\subset \mathcal{F} $ be an $\alpha$-cover of $\mathcal{F}$ in the sup-norm. In other words, for any $f \in \mathcal{F}$, there is an $f^{\alpha} \in \mathcal{F}^{\alpha}$, such that $||f^{\alpha} -f||_{\infty} \leq \alpha$. By a union bound and from Lemma \ref{single-f-error}, with probability at least $1-\delta$, we have for any $f^{\alpha} \in \mathcal{F}^{\alpha}$, any $k \in [K]$, that
\begin{equation}
    \begin{aligned}
       L_{2,k}(f^{\alpha}) - L_{2,k}(\bar{f}) \geq -H^2\log(\frac{|\mathcal{F}^{\alpha}|}{\delta})- ||f^{\alpha}-\bar{f}||_{D_H^k} \cdot  \sqrt{kH\zeta^2 + C'H\cdot \log(\frac{4KH |\mathcal{F}^{\alpha}|}{\delta})} + \frac{1}{2}||f^{\alpha}-\bar{f}||_{D_H^k}^2 
    \end{aligned}
\end{equation}
Therefore, with probability at least $1-\delta$, for all $k \in [K]$, and all $f \in \mathcal{F}$, we have:
\begin{equation} \label{E.13}
    \begin{aligned}
        L_{2,k}(f) - L_{2,k}(\bar{f}) &\geq \frac{1}{2}||f-\bar{f}||_{D_H^k}^2 - H^2\log(\frac{|\mathcal{F}^{\alpha}|}{\delta})- ||f-\bar{f}||_{D_H^k} \cdot  \sqrt{kH\zeta^2 + C'H\cdot \log(\frac{4KH |\mathcal{F}^{\alpha}|}{\delta})} \\ &+ \Big\{\frac{1}{2} ||f^{\alpha}-\bar{f} ||_{D_H^k}^2 -\frac{1}{2} ||f-\bar{f}||_{D_H^k}^2 \\ &+||f-\bar{f}||_{D_H^k} \cdot  \sqrt{kH\zeta^2 + C'H\cdot \log(\frac{4KH |\mathcal{F}^{\alpha}|}{\delta})}- ||f^{\alpha}-\bar{f}||_{D_H^k} \cdot  \sqrt{kH\zeta^2 + C'H\cdot \log(\frac{4KH |\mathcal{F}^{\alpha}|}{\delta})} \\ &+ L_{2,k}(f) - L_{2,k}(f^{\alpha})\Big\}
    \end{aligned}
\end{equation}

For the last term in (\ref{E.13}), it can be bounded by Lemma \ref{Discretization error}. Moreover, since $\widehat{f}_{k+1} = \argmin_{f \in \mathcal{F}}L_{2,k}(f)$, $L_{2,k}(\widehat{f}_{k+1})-L_{2,k}(\bar{f}) \leq 0$.

Therefore we have: with probability at least $1-\delta$, for all $k \in [K]$ and all $f \in \mathcal{F}$, that

\begin{equation}
    \begin{aligned}
        &\frac{1}{2} ||\widehat{f}_{k+1}-\bar{f}||_{D_H^k}^2  - \sqrt{kH\zeta^2 + C'H\cdot \log(\frac{4KH |\mathcal{F}^{\alpha}|}{\delta})} \cdot ||\widehat{f}_{k+1}-\bar{f}||_{D_H^k}  \\ & -H^2\log(\frac{|\mathcal{F}^{\alpha}|}{\delta})  - 4 \alpha k H^2 - \sqrt{4\alpha k H^2}\cdot  \sqrt{kH\zeta^2 + C'H\cdot \log(\frac{4KH |\mathcal{F}^{\alpha}|}{\delta})} \leq 0
    \end{aligned}
\end{equation}

By solving the quadratic equation for $||\widehat{f}_{k+1}-\bar{f}||_{D_H^k}$, we can get

\begin{equation}
    \begin{aligned}
        &\ \ \ \ ||\widehat{f}_{k+1}-\bar{f}||_{D_H^k} \leq \sqrt{kH\zeta^2 + C'H\cdot \log(\frac{4KH |\mathcal{F}^{\alpha}|}{\delta})} \\ &+ \sqrt{kH\zeta^2 + C'H\cdot \log(\frac{4KH |\mathcal{F}^{\alpha}|}{\delta}) + 2H^2\log(\frac{|\mathcal{F}^{\alpha}|}{\delta}) + 8 \alpha k H^2 + 4 \sqrt{\alpha k H^2}\cdot  \sqrt{kH\zeta^2 + C'H\cdot \log(\frac{4KH |\mathcal{F}^{\alpha}|}{\delta})}} \\ & \leq \sqrt{kH}\zeta + \sqrt{C'H\cdot \log(\frac{4KH |\mathcal{F}^{\alpha}|}{\delta})} + H \sqrt{2\log(\frac{|\mathcal{F}^{\alpha}|}{\delta})} + \sqrt{2\left(kH\zeta^2 + C'H\cdot \log(\frac{4KH |\mathcal{F}^{\alpha}|}{\delta}) + 8 \alpha k H^2 \right)}\\ &\leq \sqrt{kH}\zeta + \sqrt{C'H\cdot \log(\frac{4KH |\mathcal{F}^{\alpha}|}{\delta})} + H \sqrt{2\log(\frac{|\mathcal{F}^{\alpha}|}{\delta})} + 2\sqrt{kH}\zeta + 2\sqrt{C'H\cdot \log(\frac{4KH |\mathcal{F}^{\alpha}|}{\delta})} + 4 \sqrt{\alpha k H^2} \\ &\leq 3\sqrt{kH}\zeta + 5\sqrt{C'H^2\cdot \log(\frac{4KH |\mathcal{F}^{\alpha}|}{\delta})} + 4 \sqrt{\alpha k H^2}
    \end{aligned}
\end{equation}

\end{proof}

\end{lemma}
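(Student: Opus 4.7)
The plan is to upgrade the single-function concentration inequality of Lemma~\ref{single-f-error}, which only controls a fixed $f\in\mathcal{F}$, into a bound that holds simultaneously for every $f\in\mathcal{F}$, and in particular for the data-dependent empirical minimiser $\widehat f_{k+1}$. Once such a uniform bound is available, the optimality condition $L_{2,k}(\widehat f_{k+1})\leq L_{2,k}(\bar f)$, valid because $\bar f\in\mathcal{F}$, collapses it into a quadratic inequality in $\|\widehat f_{k+1}-\bar f\|_{D_H^k}$ whose solution will yield $\beta_k$.

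To obtain uniformity, we introduce an $\alpha$-cover $\mathcal{F}^\alpha\subset\mathcal{F}$ in $\|\cdot\|_\infty$, apply Lemma~\ref{single-f-error} with failure probability $\delta/|\mathcal{F}^\alpha|$ to each $f^\alpha\in\mathcal{F}^\alpha$, and union-bound. The resulting inequality reads, for every $f^\alpha\in\mathcal{F}^\alpha$ and every $k\in[K]$,
\[
\tfrac{1}{2}\|f^\alpha-\bar f\|_{D_H^k}^2 \leq L_{2,k}(f^\alpha) - L_{2,k}(\bar f) + H^2\log(|\mathcal{F}^\alpha|/\delta) + \|f^\alpha-\bar f\|_{D_H^k}\sqrt{kH\zeta^2+C'H\log(4KH|\mathcal{F}^\alpha|/\delta)}.
\]
The cumulative misspecification bound $\sum_{k',h}(\xi_h^{k'})^2\lesssim kH\zeta^2+C'H\log(\cdot)$ invoked inside Lemma~\ref{single-f-error} is to be derived from the fourth-moment part of Assumption~\ref{Ass4} via a Freedman-type concentration, in the same spirit as Lemma~\ref{A.7}. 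To lift the bound from the cover to all of $\mathcal{F}$, for arbitrary $f\in\mathcal{F}$ we select $f^\alpha\in\mathcal{F}^\alpha$ with $\|f-f^\alpha\|_\infty\leq\alpha$; Lemma~\ref{Discretization error} then controls the discrepancies $|L_{2,k}(f)-L_{2,k}(f^\alpha)|$, $\bigl|\|f-\bar f\|_{D_H^k}^2-\|f^\alpha-\bar f\|_{D_H^k}^2\bigr|$, and the corresponding cross-term gap by $O(\alpha kH^2)$ plus a square-rooted correction.

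Specialising the resulting uniform inequality to $f=\widehat f_{k+1}$ and using $L_{2,k}(\widehat f_{k+1})-L_{2,k}(\bar f)\leq 0$, we arrive at
\[
\tfrac{1}{2}\|\widehat f_{k+1}-\bar f\|_{D_H^k}^2 - A\,\|\widehat f_{k+1}-\bar f\|_{D_H^k} - B \leq 0,
\]
where $A=\sqrt{kH\zeta^2+C'H\log(4KH|\mathcal{F}^\alpha|/\delta)}+\sqrt{4\alpha kH^2}$ and $B=H^2\log(|\mathcal{F}^\alpha|/\delta)+4\alpha kH^2$. Solving this quadratic in the norm and using $\sqrt{u+v}\leq\sqrt u+\sqrt v$ to split the discriminant, the three contributions, namely the misspecification bias from $\zeta$, the statistical concentration at scale $H\sqrt{\log|\mathcal{F}^\alpha|}$, and the discretisation at scale $\sqrt{\alpha kH^2}$, assemble into the three summands of $\beta_k$ up to absolute constants.

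The hard part will be preserving the $\sqrt{\alpha kH^2}$ scale of the discretisation error rather than the naive $\alpha kH$ scale. Since $L_{2,k}$ sums over $kH$ terms, a crude $\ell_\infty$-to-$\ell_1$ bound would leave a term of order $\alpha kH\cdot H$ that dominates $\beta_k$; the square-root structure survives only by matching the cross-term $\|\cdot\|_{D_H^k}\sqrt{kH\zeta^2+\cdots}$ to its discretised counterpart via Cauchy--Schwarz inside Lemma~\ref{Discretization error}, so that the discretisation resolution $\alpha$ couples to the misspecification level $\zeta$ in exactly the right way.
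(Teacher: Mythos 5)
Your proposal follows essentially the same route as the paper's proof: a union bound of Lemma~\ref{single-f-error} over the $\alpha$-cover, transfer to all of $\mathcal{F}$ via Lemma~\ref{Discretization error}, specialization to $\widehat f_{k+1}$ using $L_{2,k}(\widehat f_{k+1})\leq L_{2,k}(\bar f)$, and solving the resulting quadratic in $\|\widehat f_{k+1}-\bar f\|_{D_H^k}$ to assemble the three terms of $\beta_k$. The only difference is a harmless bookkeeping choice of whether the discretisation cross-term sits in the linear coefficient or the constant of the quadratic, which does not change the final bound.
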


\begin{lemma} \label{E.1}
    (Near-optimism)  Given K initial points $\{s_1^{k}\}_{k=1}^K$, we use $\{(s_h^{k*},a_h^{k*})\}_{(k,h)\in [K]\times[H]}\ $ (where $s_1^{k*} = s_1^k$, $\forall k \in [K]$) $\ $ to represent the dataset sampled by the optimal policy $\pi^*$ in the true model. For each $(k,h)\in [K] \times [H-1]$, we define $\zeta_{h+1}^{k*} = \mathbb{P}_h\left((V_{h+1}^*-V_{\bar{P},h+1}^*)(s_h^{k*},a_h^{k*})\right)-\left((V_{h+1}^*-V_{\bar{P},h+1}^*)(s_{h+1}^{k*})\right)$, and $\xi_h^{k*} = \mathbb{P}_hV_{\bar{P},h+1}^*(s_h^{k*},a_h^{k*}) - \bar{f}_h(s_h^{k*},a_h^{k*},V_{\bar{P},h+1}^*)$.  Conditioned on the event that $\bar{P} \in \cap_{k=1}^{K} B_k$ (\ref{high-prob-event-confidence-set}) holds. Then we have
$$
\sum_{k=1}^K \left(V_1^*(s_1^{k*}) - V_1^k(s_1^{k*})\right) \leq \sum\limits_{k=1}^K\sum\limits_{h=1}^{H-1} \zeta_{h+1}^{k*} + \sum\limits_{k=1}^K\sum\limits_{h=1}^H \xi_h^{k*}
$$

\begin{proof}
    First, according to the definition of $P^{(k)}$ (defined in \ref{optimisic model selection}), and since we condition on Eq.(\ref{high-prob-event-confidence-set}), $\bar{P} \in \cap_{k=1}^{K} B_k$, we have $V_1^k (s_1^{k}) \geq V_{\bar{P},1}^*(s_1^k) $, $\forall k \in [K]$.

\begin{equation}
    \begin{aligned}
        &\ \ \ \sum\limits_{k=1}^K \left(V_1^*(s_1^{k*}) - V_1^k(s_1^{k*})\right) \\ &\leq \sum\limits_{k=1}^K \left(V_1^*(s_1^{k*}) - V_{\bar{P},1}^*(s_1^{k*}) \right) \\ & \leq \sum\limits_{k=1}^K \left(Q_1^*(s_1^{k*},a_1^{k*}) - Q_{\bar{P},1}^*(s_1^{k*},a_1^{k*})\right) \\ &= \sum\limits_{k=1}^K \left(r_1(s_1^{k*},a_1^{k*}) + \mathbb{P}_1V_2^*(s_1^{k*},a_1^{k*})-\left[r_1(s_1^{k*},a_1^{k*}) + \mathbb{\bar{P}}_1V_{\bar{P},2}^*(s_1^{k*},a_1^{k*})\right]\right) \\ &= \sum\limits_{k=1}^K\left(\mathbb{P}_1V_2^*(s_1^{k*},a_1^{k*}) - \mathbb{P}_1V_{\bar{P},2}^*(s_1^{k*},a_1^{k*})+ \mathbb{P}_1V_{\bar{P},2}^*(s_1^{k*},a_1^{k*}) -\mathbb{\bar{P}}_1V_{\bar{P},2}^*(s_1^{k*},a_1^{k*})\right) \\ &= \sum\limits_{k=1}^K \left(V_2^*(s_2^{k*})-V_{\bar{P},2}^*(s_2^{k*})+\zeta_2^{k*} +  \mathbb{P}_1V_{\bar{P},2}^*(s_1^{k*},a_1^{k*}) - \bar{f}_1(s_1^{k*},a_1^{k*},V_{\bar{P},2}^*)\right) \leq \cdots \\ & \leq \sum\limits_{k=1}^K\sum\limits_{h=1}^{H-1} \zeta_{h+1}^{k*} + \sum\limits_{k=1}^K\sum\limits_{h=1}^H \xi_h^{k*}
    \end{aligned}
\end{equation}

\end{proof}

\end{lemma}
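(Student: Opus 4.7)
The plan is to combine optimism from the algorithm's model selection rule with a per-step telescoping identity along the trajectory rolled out by the optimal policy $\pi^{*}$. First, I would observe that since we condition on $\bar P\in\bigcap_{k}B_k$, the optimistic selection rule (\ref{optimisic model selection}) gives $V_{1}^{k}(s_{1}^{k*})=V_{P^{(k)},1}^{*}(s_{1}^{k*})\ge V_{\bar P,1}^{*}(s_{1}^{k*})$ for every $k$, because $s_{1}^{k*}=s_{1}^{k}$ and $P^{(k)}$ maximises $V_{\cdot,1}^{*}(s_{1}^{k})$ over $B_k$. Hence it is enough to upper-bound $\sum_{k}\bigl(V_{1}^{*}(s_{1}^{k*})-V_{\bar P,1}^{*}(s_{1}^{k*})\bigr)$.

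Next I would exploit that the starred trajectory is generated by $\pi^{*}$, so $a_{h}^{k*}=\pi^{*}(s_{h}^{k*})$ and therefore $V_{h}^{*}(s_{h}^{k*})=Q_{h}^{*}(s_{h}^{k*},a_{h}^{k*})$, while $V_{\bar P,h}^{*}(s_{h}^{k*})\ge Q_{\bar P,h}^{*}(s_{h}^{k*},a_{h}^{k*})$ since $V_{\bar P,h}^{*}$ is a maximum over actions. Subtracting and applying the Bellman equations for the true model $\mathbb{P}$ and for $\bar P$ cancels the shared reward term, leaving
\[
V_{h}^{*}(s_{h}^{k*})-V_{\bar P,h}^{*}(s_{h}^{k*})\le \mathbb{P}_{h}V_{h+1}^{*}(s_{h}^{k*},a_{h}^{k*})-\bar{\mathbb{P}}_{h}V_{\bar P,h+1}^{*}(s_{h}^{k*},a_{h}^{k*}).
\]
Inserting and subtracting $\mathbb{P}_{h}V_{\bar P,h+1}^{*}(s_{h}^{k*},a_{h}^{k*})$ splits the right-hand side into (i) $\mathbb{P}_{h}(V_{h+1}^{*}-V_{\bar P,h+1}^{*})(s_{h}^{k*},a_{h}^{k*})$, which equals $(V_{h+1}^{*}-V_{\bar P,h+1}^{*})(s_{h+1}^{k*})+\zeta_{h+1}^{k*}$ by the very definition of $\zeta_{h+1}^{k*}$, and (ii) $\mathbb{P}_{h}V_{\bar P,h+1}^{*}(s_{h}^{k*},a_{h}^{k*})-\bar f_{h}(s_{h}^{k*},a_{h}^{k*},V_{\bar P,h+1}^{*})=\xi_{h}^{k*}$, using that $\bar{\mathbb{P}}_{h}V_{\bar P,h+1}^{*}(s,a)=\bar f_{h}(s,a,V_{\bar P,h+1}^{*})$ from the definition (\ref{approximation-function-class}) of $\mathcal{F}$.

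Iterating this one-step inequality from $h=1$ down to $h=H$, together with the terminal condition $V_{H+1}^{*}=V_{\bar P,H+1}^{*}\equiv 0$, yields
\[
V_{1}^{*}(s_{1}^{k*})-V_{\bar P,1}^{*}(s_{1}^{k*})\le \sum_{h=1}^{H-1}\zeta_{h+1}^{k*}+\sum_{h=1}^{H}\xi_{h}^{k*}.
\]
Summing over $k\in[K]$ and combining with the optimism step above gives the claimed bound. I do not foresee a serious obstacle here: the only subtle point is being careful that the starred trajectory is always rolled out under the true $\mathbb{P}$ (so that $\zeta_{h+1}^{k*}$ is a martingale difference with respect to the natural filtration, which matters for the subsequent regret analysis but not for proving this lemma), and that the recursion uses $V_{\bar P,h+1}^{*}$ rather than $V_{h+1}^{k}$, so that the bias term $\xi_{h}^{k*}$ depends on a fixed function $V_{\bar P,h+1}^{*}\in\mathcal{V}$ to which Assumption~\ref{Ass4} can be applied cleanly.
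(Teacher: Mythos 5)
Your proposal is correct and follows essentially the same route as the paper's proof: the same optimism step $V_1^k(s_1^{k*})\ge V_{\bar P,1}^*(s_1^{k*})$ from the selection rule and $\bar P\in\bigcap_k B_k$, the same passage from value differences to $Q$-differences along the $\pi^*$-trajectory, the same add-and-subtract of $\mathbb{P}_h V_{\bar P,h+1}^*$ to isolate $\zeta_{h+1}^{k*}$ and $\xi_h^{k*}$, and the same telescoping recursion over $h$. No gaps; your remarks on the subtle points (trajectory rolled out under the true $\mathbb{P}$, recursion anchored at $V_{\bar P,h+1}^*$) match the paper's intent.
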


\begin{lemma} \label{E.2}

For each $(k,h)\in [K] \times [H-1]$, we define $\zeta_{h+1}^{k} = \mathbb{P}_h\left((V_{h+1}^k-V_{h+1}^{\pi_k})(s_h^{k},a_h^{k})\right)-\left((V_{h+1}^k-V_{h+1}^{\pi_k})(s_{h+1}^{k})\right)$, and $W_k = \sup_{\widetilde{\mathbb{P}}^k\in B_k}\sum\limits_{h=1}^{H-1} \left(\widetilde{\mathbb{P}}_h^k-\mathbb{P}_h\right) V_{h+1}^k (s_h^{k},a_h^{k})$. Then we have
$$
\sum_{k=1}^K \left(V_1^k(s_1^{k}) - V_1^{\pi_k}(s_1^{k})\right) \leq \sum\limits_{k=1}^K\sum\limits_{h=1}^{H-1} \zeta_{h+1}^{k} + \sum\limits_{k=1}^K W_k
$$

\begin{proof}
    First, for any $k \in [K]$, we have the following decomposition:
\begin{equation}
    \begin{aligned}
    &\ \ \ V_1^k(s_1^{k}) - V_1^{\pi_k}(s_1^{k}) \\ &= r_1(s_1^k,a_1^k) + \mathbb{P}_1^{(k)}V_2^k(s_h^k,a_h^k) - \left(r_1(s_1^{k},a_1^{k})+ \mathbb{P}_1V_{2}^{\pi_k}(s_1^k,a_1^k)\right) \\ &= (\mathbb{P}_1^{(k)}-\mathbb{P}_1)V_2^k(s_1^k,a_1^k)+ \mathbb{P}_1 (V_{2}^k-V_{2}^{\pi_k})(s_1^k,a_1^k) \\ &= (\mathbb{P}_1^{(k)}-\mathbb{P}_1)V_2^k(s_1^k,a_1^k) + \zeta_2^k + V_2^k(s_2^{k}) - V_2^{\pi_k}(s_2^{k})=\cdots \\ &= \sum\limits_{h=1}^{H-1} \left(\mathbb{P}_h^{(k)}-\mathbb{P}_h\right) V_{h+1}^k(s_h^k,a_h^k) + \sum\limits_{h=1}^{H-1} \zeta_{h+1}^k \\ &\leq W_k + \sum\limits_{h=1}^{H-1} \zeta_{h+1}^k
    \end{aligned}
\end{equation}

Therefore, we have
$$
\sum_{k=1}^K \left(V_1^k(s_1^{k}) - V_1^{\pi_k}(s_1^{k})\right) \leq \sum\limits_{k=1}^K\sum\limits_{h=1}^{H-1} \zeta_{h+1}^{k} + \sum\limits_{k=1}^K W_k
$$

\end{proof}

\end{lemma}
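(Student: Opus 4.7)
The plan is to carry out a standard one-step decomposition that turns the value-difference into (i) a model-mismatch error along the executed trajectory and (ii) a martingale-difference term that accounts for the discrepancy between the true transition expectation and the actually observed next state. Since $\pi^k$ is defined as the greedy policy of $Q_h^k$ under the optimistic model $P^{(k)}\in B_k$, we have $V_h^k(s_h^k)=Q_h^k(s_h^k,a_h^k)$ and $V_h^{\pi^k}(s_h^k)=Q_h^{\pi^k}(s_h^k,a_h^k)$ with the same $a_h^k=\pi_h^k(s_h^k)$, which lets us rewrite the value gap as a $Q$-gap at the actually played action.

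First, for a single episode $k$ and stage $h$, I would expand
\[
V_h^k(s_h^k)-V_h^{\pi^k}(s_h^k)
= r_h(s_h^k,a_h^k)+\mathbb{P}_h^{(k)}V_{h+1}^k(s_h^k,a_h^k)
-r_h(s_h^k,a_h^k)-\mathbb{P}_h V_{h+1}^{\pi^k}(s_h^k,a_h^k),
\]
add and subtract $\mathbb{P}_h V_{h+1}^k(s_h^k,a_h^k)$, and split the result into a model-mismatch piece $(\mathbb{P}_h^{(k)}-\mathbb{P}_h)V_{h+1}^k(s_h^k,a_h^k)$ and a true-model propagation piece $\mathbb{P}_h(V_{h+1}^k-V_{h+1}^{\pi^k})(s_h^k,a_h^k)$. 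Then, using the definition of $\zeta_{h+1}^k$, I replace $\mathbb{P}_h(V_{h+1}^k-V_{h+1}^{\pi^k})(s_h^k,a_h^k)=\zeta_{h+1}^k+(V_{h+1}^k-V_{h+1}^{\pi^k})(s_{h+1}^k)$, which sets up a clean recursion in $h$.

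Unrolling this recursion from $h=1$ to $h=H$ and noting that the terminal contribution vanishes, I obtain, for each $k\in[K]$,
\[
V_1^k(s_1^k)-V_1^{\pi^k}(s_1^k)
=\sum_{h=1}^{H-1}\bigl(\mathbb{P}_h^{(k)}-\mathbb{P}_h\bigr)V_{h+1}^k(s_h^k,a_h^k)
+\sum_{h=1}^{H-1}\zeta_{h+1}^k.
\]
Since $P^{(k)}\in B_k$ (the confidence set is always non-empty because it is centered on a minimizer of the empirical loss), the first sum is at most $\sup_{\widetilde{\mathbb{P}}^k\in B_k}\sum_{h=1}^{H-1}(\widetilde{\mathbb{P}}_h^k-\mathbb{P}_h)V_{h+1}^k(s_h^k,a_h^k)=W_k$ by definition. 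Summing over $k\in[K]$ yields the claimed bound.

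I do not expect any genuine technical obstacle here; the step deserving some care is the algebraic bookkeeping in the telescoping (keeping the true-model transition on the ``propagation'' side so that the martingale identity defining $\zeta_{h+1}^k$ applies cleanly, and keeping the estimated model $P^{(k)}$ only inside the term that will later be controlled by $W_k$). The rest is mechanical and uses no concentration inequalities, so the lemma is purely structural.
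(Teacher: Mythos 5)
Your proposal is correct and follows essentially the same route as the paper: the same Bellman expansion under $P^{(k)}$ versus the true model, the same add-and-subtract of $\mathbb{P}_h V_{h+1}^k$, the same substitution via the definition of $\zeta_{h+1}^k$ to set up the telescoping recursion, and the same final bound of the model-mismatch sum by $W_k$ using $P^{(k)}\in B_k$. No gaps; the lemma is indeed purely structural as you note.
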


\begin{lemma} \label{W_k_lemma}
Let $\alpha >0$, and $d: = \operatorname{dim}_{E}(\mathcal{F}, \alpha)$, where $\mathcal{F}$ is the function class the algorithm used to approximate the ground-truth model (\ref{definition of function class}). Then for any non-decreasing sequences $(\beta_k^2)_{k=1}^K$, conditioned on the event that $\bar{P} \in \bigcap_{k \in [K]} B_k$, where $B_k = \{\widetilde{P} \in \mathcal{P}: L_k(\widetilde{P}, \widehat{P}^{(k)}) \leq \beta_k^2\}$, we have:

\begin{equation}
    \begin{aligned}
        \sum\limits_{k=1}^K W_k \leq \alpha + H(d\land K(H-1)) + 4 \beta_K \sqrt{dK(H-1)} + \sqrt{8KH^2 \log(\frac{2}{\delta})} + KH\zeta
    \end{aligned}
\end{equation}

\begin{proof}
First, we denote
\begin{equation}
    \begin{aligned}
        \mathcal{F}_t(\beta_k) = \{f\in \mathcal{F}: ||f-\widehat{f}_k ||_{D_H^k} \leq \beta_k\} = \{\phi(P): P \in B_k\}
    \end{aligned}
\end{equation}

and for the convenience of notation, we denote
$$
\widetilde{\mathcal{F}}_t = \mathcal{F}_t(\beta_k) \ \ \text{for}\ \  t \in [(k-1)(H-1)+1, k(H-1)]
$$
and
\begin{equation}
    \begin{aligned}
        &x_1 = (s_1^1,a_1^1,V_2^1), x_2 = (s_2^1,a_2^1,V_3^1), \cdots , x_{H-1} = (s_{H-1}^1,a_{H-1}^1,V_H^1) \\ & x_H = (s_1^2,a_1^2,V_2^2), x_{H+1} = (s_2^2,a_2^2,V_3^2), \cdots , x_{2H} = (s_{H-1}^2,a_{H-1}^2,V_H^2) \\ &\cdots \cdots \\ & x_{(K-1)(H-1)+1} = (s_1^K,a_1^K,V_2^K), x_{(K-1)(H-1)+2} = (s_2^K,a_2^K,V_3^K), \cdots , x_{K(H-1)} = (s_{H-1}^K,a_{H-1}^K,V_H^K)
    \end{aligned}
\end{equation}

According to the definition of $W_k$, we have
\begin{equation} \label{sum_W_k}
    \begin{aligned}
        &\ \ \ \ \sum\limits_{k=1}^K W_k \leq \sum\limits_{k=1}^K \sup_{\widetilde{\mathbb{P}}^k\in B_k}\sum\limits_{h=1}^{H-1} \left(\widetilde{\mathbb{P}}_h^k-\mathbb{P}_h\right) V_{h+1}^k (s_h^{k},a_h^{k}) \\ &= \sum\limits_{k=1}^K \sup_{\widetilde{\mathbb{P}}^k\in B_k}\sum\limits_{h=1}^{H-1} \left(f_{\widetilde{\mathbb{P}}^k}(s_h^k,a_h^k,V_{h+1}^k)-\bar{f}(s_h^k,a_h^k,V_{h+1}^k) + \bar{f}(s_h^k,a_h^k,V_{h+1}^k) - \mathbb{P}_hV_{h+1}^k(s_h^k,a_h^k)\right) \\ & \leq \sum\limits_{k=1}^K \sup_{\widetilde{\mathbb{P}}^k\in B_k}\sum\limits_{h=1}^{H-1} \left(f_{\widetilde{\mathbb{P}}^k}(s_h^k,a_h^k,V_{h+1}^k)-\bar{f}(s_h^k,a_h^k,V_{h+1}^k)\right) + \sum\limits_{k=1}^K\sum\limits_{h=1}^{H-1}\left(\bar{f}(s_h^k,a_h^k,V_{h+1}^k) - \mathbb{P}_hV_{h+1}^k(s_h^k,a_h^k)\right) \\ & \leq \sum\limits_{t=1}^{K(H-1)} w_{\mathcal{F}_t}(x_t) + \sum\limits_{k=1}^K\sum\limits_{h=1}^{H-1}\xi_h(s_h^k,a_h^k)
    \end{aligned}
\end{equation}

For the last line of (\ref{sum_W_k}), $w_{\widetilde{\mathcal{F}}}(x) = \sup\limits_{f_1,f_2 \in \widetilde{\mathcal{F}}}\left(f_1(x) - f_2(x)\right)$  represents the width function.

By using Lemma \ref{russo_2}, the first term of the above equation can be upper bounded by
\begin{equation}
    \begin{aligned} \label{sum_W_k_1}
    \sum\limits_{t=1}^{K(H-1)} w_{\mathcal{F}_t}(x_t)  \leq \alpha + H(d \land K(H-1)) + 4\beta_K\sqrt{d K(H-1)}
    \end{aligned}
\end{equation}

For the second term, by using Assumption \ref{Ass4} and Azuma-Hoeffding's inequality, we have with probability at least $1-\delta$,

\begin{equation}
    \begin{aligned} \label{sum_W_k_2}
        \sum\limits_{k=1}^K\sum\limits_{h=1}^{H-1}| \xi_h(s_h^k,a_h^k) | \leq \sqrt{8KH^2 \log(\frac{2}{\delta})} + KH\zeta 
    \end{aligned}
\end{equation}

By combining (\ref{sum_W_k}), (\ref{sum_W_k_1}), (\ref{sum_W_k_2}), we have
\begin{equation}
    \begin{aligned}
        \sum\limits_{k=1}^K W_k \leq \alpha + H(d\land K(H-1)) + 4 \beta_K \sqrt{dK(H-1)} + \sqrt{8KH^2 \log(\frac{2}{\delta})} + KH\zeta 
    \end{aligned}
\end{equation}

\end{proof}

\end{lemma}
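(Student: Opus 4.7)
The plan is to decompose $W_k$ into a ``model-class'' component (difference between any $\widetilde{\mathbb{P}}^k \in B_k$ and the best in-class approximator $\bar P$) and a ``misspecification'' component (difference between $\bar P$ and the true $\mathbb{P}$), then handle them with completely different tools. Writing $f_{\widetilde{\mathbb{P}}^k}$ and $\bar f$ for the images of $\widetilde{\mathbb{P}}^k$ and $\bar P$ under the bijection $\phi$, I would add and subtract $\bar f(s_h^k,a_h^k,V_{h+1}^k)$ inside the sum defining $W_k$. Since both $\widetilde{\mathbb{P}}^k \in B_k$ and $\bar P \in B_k$ (by the conditioning event), both $f_{\widetilde{\mathbb{P}}^k}$ and $\bar f$ lie in $\mathcal{F}_t(\beta_k)$ at the appropriate index $t$, so the first piece is pointwise bounded by the width $w_{\mathcal{F}_t}(x_t) := \sup_{f_1,f_2 \in \mathcal{F}_t}(f_1(x_t)-f_2(x_t))$ evaluated at the flattened points $x_t = (s_h^k, a_h^k, V_{h+1}^k)$.

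After that flattening, the first piece becomes $\sum_{t=1}^{K(H-1)} w_{\mathcal{F}_t}(x_t)$ over a sequence of $K(H-1)$ points, where $\mathcal{F}_t$ are nested confidence sets with $\|f - \widehat f_k\|_{D_H^k} \le \beta_k$ and $\beta_k$ is non-decreasing. This is exactly the setting for the standard eluder-dimension width bound (as in Russo–Van Roy / Wang et al.\ 2020), which gives
\begin{equation*}
\sum_{t=1}^{K(H-1)} w_{\mathcal{F}_t}(x_t) \;\le\; \alpha \;+\; H\bigl(d \land K(H-1)\bigr) \;+\; 4\beta_K \sqrt{d\,K(H-1)},
\end{equation*}
with $d = \dim_E(\mathcal{F},\alpha)$. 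I would invoke this as Lemma~\ref{russo_2} from the appendix; the argument counts how often $x_t$ can be $\varepsilon$-independent of its predecessors, balances the ``large-width'' and ``small-width'' contributions via a geometric partition, and uses the $\beta_K$ envelope and monotonicity of the $\beta_k$'s so the bound does not blow up with $k$.

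For the second, misspecification piece, $\sum_{k,h}\bigl(\bar f(s_h^k,a_h^k,V_{h+1}^k) - \mathbb{P}_h V_{h+1}^k(s_h^k,a_h^k)\bigr)$, I would introduce the martingale $Y_k = \sum_{h=1}^{H-1}\xi_h(s_h^k,a_h^k)$ where $\xi_h = \bar f - \mathbb{P}_h V_{h+1}^k$. Each $Y_k$ is bounded by $2H$ in magnitude, so Azuma--Hoeffding gives deviations on the order of $\sqrt{8KH^2\log(2/\delta)}$. The conditional means $\mathbb{E}[|Y_k| \mid \mathcal{F}_{k-1}] = \sum_h \mathbb{E}_{(s,a)\sim d_h^{\pi_k}}|\xi_h(s,a)|$ are controlled by Assumption~\ref{Ass4} (with $\beta=1$), yielding a total drift of at most $KH\zeta$. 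Combining the two pieces gives the stated bound exactly.

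The main obstacle I expect is a bookkeeping one: carefully justifying that after flattening $(k,h)\mapsto t$, the \emph{same} $V_{h+1}^k$ appearing inside $\mathcal{F}_t(\beta_k)$ (through the norm $\|\cdot\|_{D_H^k}$) is the one used at the query point $x_t$, so that the width at $x_t$ really does upper-bound the sup over $B_k$ — this is why $\mathcal{F}$ is defined on the enlarged space $\mathcal{X} = \mathcal{S}\times\mathcal{A}\times\mathcal{V}$ in (\ref{definition of function class}). A minor subtlety is that the eluder-dimension bound requires the sequence $(\beta_k)$ to be non-decreasing (given in the hypothesis) and that the width is defined over the function class, not over $\mathcal{P}$; both follow directly from $\phi$ being a bijection. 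Everything else reduces to the two cleanly separated steps above.
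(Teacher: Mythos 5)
Your proposal matches the paper's proof essentially step for step: the same add-and-subtract of $\bar f$ to split $W_k$ into a width term over the flattened sequence $x_t=(s_h^k,a_h^k,V_{h+1}^k)$ bounded via Lemma~\ref{russo_2}, and a misspecification term controlled by Azuma--Hoeffding together with Assumption~\ref{Ass4}. Your added remark that both $f_{\widetilde{\mathbb{P}}^k}$ and $\bar f$ lie in $\mathcal{F}_t(\beta_k)$ (so the pointwise width bound applies) makes explicit a step the paper leaves implicit, but the argument is the same.
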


Now we are able to analyze the regret bound of Robust-UCRL-VTR. We define the regret of the algorithm as 
$$
R_K = \sum\limits_{k=1}^K (V_1^*(s_1^k) - V_1^{\pi_k}(s_1^k)).
$$

\begin{theorem} \label{Thm model-based}
(Regret bound of robust model-based methods)

Let Assumption \ref{ass:cover2} and \ref{Ass4} hold and $\alpha \in (0,1)$. For each $k \in [K]$, let $\beta_k$ be 
\begin{equation} \label{beta's definition}
    \begin{aligned}
        \beta_k = 3\sqrt{kH}\zeta + 5\sqrt{C'H^2\cdot \log\left(\frac{4KH \mathcal{N}(\mathcal{F},\alpha)}{\delta}\right)} + 4 \sqrt{\alpha k H^2}
    \end{aligned}
\end{equation}
then with probability at least $1-\delta$, the total regret of Algorithm \ref{Algorithm model-based} is at most
$\widetilde{O}\left(\sqrt{d_E}KH\zeta \log(1/\delta) + \sqrt{d_E^2KH^3} \log(1/\delta)\right)$, where $d_E$ represents the eluder dimension of the function class.

\begin{proof}
    First, for any $k \in [K]$, and $h \in [H-1]$, $\zeta_{h+1}^k \in [-H,H]$, and $\{\zeta_{h+1}^{k}\}_{(k,h) \in [K] \times [H-1]}$ is a martingale difference sequence. Thus, with probability at least $1-\delta/2$, $\sum\limits_{k=1}^K\sum\limits_{h=1}^{H-1} \zeta_{h+1}^{k} \leq \sqrt{2KH^3\log(\frac{2}{\delta})}$. In the same way, with probability at least $1-\delta/2$, $\sum\limits_{k=1}^K\sum\limits_{h=1}^{H-1} \zeta_{h+1}^{k*} \leq \sqrt{2KH^3\log(\frac{2}{\delta})}$. Then conditioned on the event in Lemma \ref{Least Square Bound}, we can obtain the regret bound by applying Lemma \ref{E.1}, \ref{E.2}, and \ref{W_k_lemma}:

\begin{equation}
    \begin{aligned}
        R_K  &= \sum\limits_{k=1}^K (V_1^*(s_1^k) - V_1^{\pi_k}(s_1^k)) \\ &\leq \sum\limits_{k=1}^K (V_1^*(s_1^k)-V_1^k(s_1^k)) + \sum\limits_{k=1}^K (V_1^k(s_1^k) - V_1^{\pi_k}(s_1^k)) \\ & \leq \sum\limits_{k=1}^K\sum\limits_{h=1}^{H-1} \zeta_{h+1}^{k*} + \sum\limits_{k=1}^K\sum\limits_{h=1}^H \xi_h^{k*} + \sum\limits_{k=1}^K\sum\limits_{h=1}^{H-1} \zeta_{h+1}^{k} + \sum\limits_{k=1}^K W_k \\ &\leq 2 \sqrt{KH^3\log(\frac{2}{\delta})} + \alpha + H(d\land K(H-1)) +  4 \beta_K \sqrt{dK(H-1)} + 2 \left(\sqrt{8KH^2 \log(\frac{2}{\delta})} + KH\zeta \right) \\ &\leq \alpha +  H(d\land K(H-1))+  4\beta_K \sqrt{dK(H-1)} + 2 KH\zeta + 8 \sqrt{KH^3\log(\frac{2}{\delta})}
    \end{aligned}
\end{equation}
By applying the definition of $\beta_K$ in (\ref{beta's definition}), we complete our proof.

\end{proof}

\end{theorem}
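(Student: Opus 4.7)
The plan is to decompose the regret into a ``near-optimism gap'' $\sum_{k}(V_1^*(s_1^k) - V_1^k(s_1^k))$ and an ``on-policy gap'' $\sum_{k}(V_1^k(s_1^k) - V_1^{\pi_k}(s_1^k))$. The first piece captures how much the optimistic value under $P^{(k)}$ fails to exceed the true optimum; the second measures the discrepancy between the estimated and realized value along the executed trajectories. Before tackling these pieces I would establish the key invariant: with probability at least $1-\delta$, the best-in-class approximator $\bar{f}$ (associated with some $\bar{P}\in\mathcal{P}$) lies in every confidence set $B_k$. This is obtained by a uniform deviation bound on the squared empirical loss $L_{2,k}$ over an $\alpha$-cover of $\mathcal{F}$, with the radius $\beta_k$ chosen to absorb both the subgaussian martingale term arising from $V_{h+1}^k(s_{h+1}^k)-\mathbb{P}V_{h+1}^k(s_h^k,a_h^k)$ and the cumulative local-misspecification term of order $\sqrt{kH\zeta^2 + H^2\log(\cdot)}$, where the latter uses a Freedman-type bound on $\sum(\xi_h^k)^2$ together with the fourth-moment control in Assumption \ref{Ass4}.

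Next I would bound the near-optimism gap by introducing an auxiliary optimal-policy rollout $\{(s_h^{k*},a_h^{k*})\}$ (which does not affect the algorithm, only the analysis). Since $\bar{P}\in B_k$, the optimistic model selection step (\ref{optimisic model selection}) yields $V_1^k(s_1^k)\ge V_{\bar{P},1}^*(s_1^k)$, so it suffices to control $V_1^*(s_1^k)-V_{\bar{P},1}^*(s_1^k)$. A telescoping argument along the optimal trajectory unfolds this into a sum of single-step terms $\bar{f}(s_h^{k*},a_h^{k*},V_{\bar{P},h+1}^*)-\mathbb{P}V_{\bar{P},h+1}^*(s_h^{k*},a_h^{k*})$ plus a martingale difference $\sum \zeta_{h+1}^{k*}$. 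These single-step terms are exactly the local misspecification errors evaluated along the optimal-policy distribution, so Assumption \ref{Ass4} combined with Azuma--Hoeffding gives a bound of $\widetilde O(KH\zeta + \sqrt{KH^3})$. This is the crucial place where locally-bounded (rather than globally-bounded) misspecification suffices: the errors we need to control live on $(s,a)\sim d^{\pi^*}$, precisely where the assumption provides a bound.

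For the on-policy gap I would telescope $V_1^k - V_1^{\pi_k}$ along the executed trajectory to obtain $\sum_{h}(\mathbb{P}_h^{(k)}-\mathbb{P}_h)V_{h+1}^k(s_h^k,a_h^k)$ plus a second martingale difference. The per-episode bracketed term is at most $W_k:=\sup_{\widetilde P\in B_k}\sum_{h}(\widetilde{\mathbb{P}}_h-\mathbb{P}_h)V_{h+1}^k(s_h^k,a_h^k)$. Writing this as $f_{\widetilde P}(x_{k,h})-\bar f(x_{k,h})$ plus the local misspecification residual $\xi_h^k$, I would apply the standard eluder-dimension width bound (analogous to Lemma 2 of \citep{russo2013eluder}) to the first part using the sequence $\{x_{k,h}\}_{k,h}$ and confidence sets $\mathcal{F}_t(\beta_k)$; this yields $\sum_k W_k \le \widetilde O(\beta_K\sqrt{d_E KH}) + \widetilde O(KH\zeta)$, with the misspecification contribution again controlled via Assumption \ref{Ass4} evaluated on the on-policy distribution.

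Substituting $\beta_K = \widetilde O(\sqrt{KH}\zeta + H)$ into $\beta_K\sqrt{d_E KH}$ produces the advertised $\widetilde O(\sqrt{d_E}KH\zeta + \sqrt{d_E^2 KH^3})$ rate. The main obstacle I anticipate is calibrating $\beta_k$ so that it simultaneously (i) contains $\bar f$ with high probability despite $\bar f$ only being a \emph{local} approximator, and (ii) is small enough that $\beta_K\sqrt{d_E K H}$ matches the target rate. This calibration is where the fourth-moment form of Assumption \ref{Ass4} is essential: it lets the Freedman bound on $\sum(\xi_h^k)^2$ scale like $k\zeta^2$ rather than $k$, which is exactly what drives the $\sqrt{d_E}KH\zeta$ leading term. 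A secondary delicate point is the auxiliary optimal-policy coupling used to obtain near-optimism, since this ``virtual'' process must be defined so that the filtration arguments underlying the martingale concentrations still apply without contaminating the data used by the algorithm.
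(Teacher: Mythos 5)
Your proposal is correct and follows essentially the same route as the paper: the same regret decomposition into near-optimism and on-policy gaps, the same confidence-set invariant for $\bar{P}$ obtained via an $\alpha$-cover with the radius absorbing both the subgaussian martingale term and the Freedman-controlled $\sum(\xi_h^k)^2$ term, the same auxiliary optimal-policy rollout to convert near-optimism into local misspecification errors on $d^{\pi^*}$, and the same eluder-dimension width bound on $\sum_k W_k$. The final calibration of $\beta_K$ and the resulting $\widetilde{O}(\sqrt{d_E}KH\zeta + \sqrt{d_E^2KH^3})$ rate also match the paper's argument.
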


\section{Proof of Theorem \ref{Thm 3.2}} \label{Sec: meta algorithm}
In this section, we present the complete proof of Theorem \ref{Thm 3.2}. Prior to providing the proof of the theorem itself (Theorem \ref{Thm A.11}), we first establish the groundwork by introducing the following lemmas.

\begin{lemma} \label{single-LSVI}
    (Estimation error of Single-epoch-Algorithm)
    
    For the Single-epoch-Algorithm (Algorithm \ref{Algorithm Single}),  with probability at least $1-\delta$, we have
    \[
    |\overline{V_1}(s_1) - V_1^{\pi_{\text{ave}}}(s_1)| \leq \sqrt{\frac{8H^2\log(\frac{2}{\delta})}{K}}
    \]
    where $\pi_{\text{ave}} = \text{Unif}\ \{\pi^1,\pi^2, \cdots, \pi^K\}$.
\end{lemma}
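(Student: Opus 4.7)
\medskip

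\noindent\textbf{Proof proposal.} The plan is a direct Azuma--Hoeffding argument on a cleverly chosen martingale difference sequence. First I would rewrite the target quantities in a form that makes the comparison transparent. Since $\pi_{\text{ave}}$ is the uniform mixture over $\pi^1,\dots,\pi^K$, linearity of the value function under policy mixtures gives
\[
V_1^{\pi_{\text{ave}}}(s_1) \;=\; \frac{1}{K}\sum_{k=1}^K V_1^{\pi^k}(s_1),
\qquad
\overline{V_1}(s_1) \;=\; \frac{1}{K}\sum_{k=1}^K R^k,
\]
so the target difference becomes $\frac{1}{K}\sum_{k=1}^K \bigl(R^k - V_1^{\pi^k}(s_1)\bigr)$.

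Next I would set up the martingale. Let $\mathcal{F}_{k-1}$ denote the $\sigma$-algebra generated by all interactions through the end of episode $k-1$. Because the policy $\pi^k$ output by Alg.\ at the start of episode $k$ depends only on the history in $\mathcal{F}_{k-1}$, and $R^k=\sum_{h=1}^{H}r_h(s_h^k,a_h^k)$ is the cumulative reward of the trajectory produced by rolling out $\pi^k$ from $s_1$, the tower property yields $\mathbb{E}[R^k\mid \mathcal{F}_{k-1}]=V_1^{\pi^k}(s_1)$. Thus $X_k := R^k - V_1^{\pi^k}(s_1)$ is a martingale difference sequence with respect to $\{\mathcal{F}_k\}$. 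Since $r_h\in[0,1]$, both $R^k$ and $V_1^{\pi^k}(s_1)$ lie in $[0,H]$, so $|X_k|\le 2H$ almost surely.

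The final step is to apply Azuma--Hoeffding's inequality to $\{X_k\}$: with probability at least $1-\delta$,
\[
\Bigl|\sum_{k=1}^K X_k\Bigr| \;\le\; \sqrt{2K\cdot(2H)^2\log(2/\delta)} \;=\; \sqrt{8KH^2\log(2/\delta)},
\]
after which dividing by $K$ yields the claimed bound $\sqrt{8H^2\log(2/\delta)/K}$. I expect no genuine obstacle here; the only subtlety worth spelling out is the measurability argument that $\pi^k$ is $\mathcal{F}_{k-1}$-measurable (so that conditional expectation of $R^k$ really is $V_1^{\pi^k}(s_1)$), which is immediate from the structure of Algorithm \ref{Algorithm Single} where the policy update precedes the rollout within each iteration.
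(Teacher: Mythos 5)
Your proposal is correct and follows essentially the same route as the paper: both define the martingale difference sequence $X_k = R^k - V_1^{\pi^k}(s_1)$ (the paper writes the partial sums $Z_l$ explicitly), bound each increment by $2H$ using $r_h\in[0,1]$, and apply Azuma--Hoeffding to obtain $\sqrt{8KH^2\log(2/\delta)}$ before dividing by $K$. No gaps.
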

\begin{proof}
     For $k=1,2,\cdots, K$, $R_k = \sum\limits_{h=1}^H r_h(s_h^k,a_h^k)$, where $\{(s_h^k,a_h^k)\}_{h\in [H]}$ is sampled under policy $\pi^k$. Next we define $Z_0=0$, $Z_l = \sum\limits_{k=1}^l R_k - \sum\limits_{k=1}^l V^{\pi^k}$, $l=1,2,\cdots,K$. Then we have
     \begin{equation}
         \begin{aligned}
              \mathbb{E}[{Z_l}| \mathcal{F}_{l-1}] &= \sum\limits_{k=1}^{l-1}R_k + \mathbb{E}[R_l|\mathcal{F}_{l-1}]-\sum\limits_{k=1}^l V^{\pi^k} \\ &= \sum\limits_{k=1}^{l-1}R_k + V^{\pi^l}-\sum\limits_{k=1}^l V^{\pi^k} \\ &= \sum\limits_{k=1}^{l-1}R_k - \sum\limits_{k=1}^{l-1}V^{\pi^k} = Z_{l-1}
         \end{aligned}
     \end{equation}
    This shows that $\{Z_l\}_{l=1}^K$ is a martingale. Moreover, $|Z_l-Z_{l-1}|\leq 2H$, $\forall l \in [K]$. By Azuma-Hoeffding's inequality, we have
    for any $\epsilon \geq 0$, 
    \[
    \mathbb{P}\left(|\frac{1}{K}\sum\limits_{k=1}^K R_k - \frac{1}{K}\sum\limits_{k=1}^K V^{\pi_k}|\geq \frac{\epsilon}{K}\right)\leq 2\exp\left\{-\frac{\epsilon^2}{8KH^2}\right\}
    \]
    By using the fact that $\overline{V_1}(s_1)=\frac{1}{K}\sum\limits_{k=1}^K R_k$, and $ V_1^{\pi_{\text{ave}}}(s_1) = \frac{1}{K}\sum\limits_{k=1}^K V^{\pi_k}$, we complete our proof.
\end{proof}

For the analysis of Algorithm \ref{Algorithm unknown}, we need the following high probability events to represent that we get a good policy $\pi^{(i)}$ in epochs with correct misspecified parameter setting (i.e. $\zeta^{(i)} \geq \zeta $): 

For each epoch $i \in \left\{0,1,2,\cdots,  \lfloor \log_2\left(\frac{1}{\zeta}\right) \rfloor \land \lfloor \log_2\left(\sqrt{3K+1}\right) \rfloor \right\}$, we define:
\[
\mathcal{E}_i (\delta) = \left\{V_1^*(s_1) \geq V_1^{\pi^{(i)}}(s_1) \geq V_1^*(s_1) -L(d,H,\delta)\cdot \left(\frac{d^{\alpha}H^{\beta}}{\sqrt{K^{(i)}}}+ d^{\alpha}H^{\beta}\zeta^{(i)}\right)\right\}
\]
where $\pi^{(i)}$ is the uniform mixture of policies gained from the $i$-th epoch, and $L(d,H,\delta)$ is a function of logarithmic order on $d,H,\delta$.

We know that for any $i \in \left\{0,1,2,\cdots,  \lfloor \log_2\left(\frac{1}{\zeta}\right) \rfloor \land \lfloor \log_2\left(\sqrt{3K+1}\right) \rfloor \right\}$, $\zeta^{(i)} = \frac{1}{2^i} \geq \zeta$, then according to the property of input base algorithm, i.e., it has a regret bound of $\widetilde{O}\left(d^{\alpha}H^{\beta}(\sqrt{K}+K\cdot\zeta)\right)$ if the input misspecified parameter is $\zeta$, we know that $\mathcal{E}_i(\delta)$ happens with probability at least $1-\delta$. Next, we define the intersection of all these events:
\begin{equation} \label{good event 1}
    \begin{aligned}
        \mathcal{E}(\zeta, \delta) = \bigcap\limits_{i=0}^{\lfloor \log_2\left(\frac{1}{\zeta}\right) \rfloor \land \lfloor \log_2\left(\sqrt{3K+1}\right) \rfloor} \mathcal{E}_i\left(\frac{\delta}{2 (\lceil \log_2(\frac{1}{\zeta}) \rceil \land \lfloor \log_2\left(\sqrt{3K+1}\right) \rfloor )}\right)
    \end{aligned}
\end{equation}

By taking the union bound, we have
\[
\mathbb{P}\left( \mathcal{E}(\zeta, \delta) \right) \geq 1-\delta/2
\]

Moreover, we define the following events to represent we get a good estimator of value function for each epoch $i \in \left\{0,1,2, \cdots,\lfloor \log_2\left(\sqrt{3K+1}\right) \rfloor \right\}$.

\[
\mathcal{G}_i(\delta) = \left\{|\overline{V_1}^{(i)}(s_1) - V_1^{\pi^{(i)}}(s_1)| \leq \sqrt{\frac{8H^2\log(\frac{2}{\delta})}{K^{(i)}}} \right\}
\]

Although the last few epochs are executed with the same policy, this process can still be regared as a martingale, and Lemma \ref{single-LSVI} still holds.
From Lemma \ref{single-LSVI}, we know that $\mathcal{G}_i(\delta)$ happens with probability at least $1-\delta$. Next, we define the intersection of all these events:
\begin{equation} \label{good event 2}
    \begin{aligned}
        \mathcal{G}(\delta) = \bigcap\limits_{i=0}^{\lfloor \log_2\left(\sqrt{3K+1}\right) \rfloor} \mathcal{G}_i\left(\frac{\delta}{2 \lceil \log_2\left(\sqrt{3K+1}\right) \rceil}\right)
    \end{aligned}
\end{equation}
By taking the union bound, we have
\[
\mathbb{P}\left(\mathcal{G}(\delta)\right) \geq 1-\delta/2
\]
Therefore, 
\begin{equation} \label{high_prob_event}
    \begin{aligned}
       \mathbb{P}\left(\mathcal{E}(\zeta,\delta) \bigcap \mathcal{G}(\delta)\right) \geq 1-\delta 
    \end{aligned}
\end{equation}

\begin{theorem} \label{Thm A.11}
\textbf{(Regret bound under locally-bounded misspecified MDP with unknown misspecified parameter $\zeta$)}

Suppose the input base algorithm \textsl{Alg.} that needs to know the locally-bounded misspecified parameter $\zeta$ has a regret bound of $\widetilde{O}\left(d^{\alpha}H^{\beta}(\sqrt{K}+K\cdot\zeta)\right)$, then conditioned on the high probability event $\mathcal{E}(\zeta,\delta) \bigcap \mathcal{G}(\delta)$ in (\ref{high_prob_event}), the total regret of our meta-algorithm (Algorithm \ref{Algorithm unknown}) is still $\widetilde{O}\left(d^{\alpha}H^{\beta}(\sqrt{K}+K\cdot\zeta)\right)$.
\end{theorem}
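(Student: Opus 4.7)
The plan is to condition on the high-probability event $\mathcal{E}(\zeta,\delta)\cap\mathcal{G}(\delta)$ of (\ref{high_prob_event}) and then run a case-based regret decomposition organized around a \emph{critical epoch index} $s:=\lfloor\log_2(1/\zeta)\rfloor$, which by design satisfies $\zeta\le\zeta^{(s)}<2\zeta$. Since $K^{(i)}=1/(\zeta^{(i)})^2$, the base-algorithm guarantee packaged in event $\mathcal{E}_i$ simplifies, for every $i\le s$, to $V_1^{\ast}-V_1^{\pi^{(i)}}=\widetilde{O}(d^{\alpha}H^{\beta}\zeta^{(i)})$, and Lemma \ref{single-LSVI} applied on $\mathcal{G}_i$ upgrades this to $|V_1^{\ast}-\overline{V_1}^{(i)}|=\widetilde{O}(d^{\alpha}H^{\beta}\zeta^{(i)})$ for all $0\le i\le s$.

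The first key step is to prove that, with an appropriately chosen threshold $C(d,H,\delta)=\widetilde{\Theta}(d^{\alpha}H^{\beta})$, the stability check in Algorithm \ref{Algorithm unknown} cannot fire at any $i\le s$: by the triangle inequality and the doubling $\zeta^{(i-1)}=2\zeta^{(i)}$ we get $|\overline{V_1}^{(i)}-\overline{V_1}^{(i-1)}|\le\widetilde{O}(d^{\alpha}H^{\beta})\zeta^{(i)}$, so whenever the algorithm breaks at some epoch $j+1$ we necessarily have $j\ge s$, i.e., $\zeta^{(j)}\le\zeta^{(s)}<2\zeta$. The second key step is a telescoping propagation: for any $i$ with $s<i\le j$ at which stability has held, $|\overline{V_1}^{(i)}-\overline{V_1}^{(s)}|\le\sum_{\ell=s+1}^{i}C\zeta^{(\ell)}\le 2C\zeta$, and combining with $|V_1^{\ast}-\overline{V_1}^{(s)}|=\widetilde{O}(d^{\alpha}H^{\beta}\zeta)$ and Lemma \ref{single-LSVI} yields $V_1^{\ast}-V_1^{\pi^{(i)}}=\widetilde{O}(d^{\alpha}H^{\beta}\zeta)$, which in particular certifies that the ``safe'' policy $\pi^{(j)}$ executed in the rest-episodes phase is near-optimal.

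The final step is a four-piece decomposition of $\text{Regret}(K)$: (i) epochs $0\le i\le s$ are bounded directly by the base-algorithm guarantee and summed geometrically to $\widetilde{O}(d^{\alpha}H^{\beta}/\zeta)$; (ii) trusted epochs $s<i\le j$ contribute at most $K^{(i)}\cdot\widetilde{O}(d^{\alpha}H^{\beta}\zeta)$ each and hence sum to $\widetilde{O}(d^{\alpha}H^{\beta}K\zeta)$; (iii) the breaking epoch $j+1$; and (iv) the rest-episodes phase, also bounded by $\widetilde{O}(d^{\alpha}H^{\beta}K\zeta)$ via the safe policy. In the complementary regime $K<1/\zeta^2$ the for-loop never reaches $s$, so the base-algorithm bound applied throughout already delivers $\widetilde{O}(d^{\alpha}H^{\beta}\sqrt{K})$; using $1/\zeta\le\sqrt{K}+K\zeta$ to balance the two regimes yields the claimed $\widetilde{O}(d^{\alpha}H^{\beta}(\sqrt{K}+K\zeta))$ rate. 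The main obstacle is piece (iii): during the breaking epoch the base algorithm is handed a too-small misspecification parameter $\zeta^{(j+1)}<\zeta$, so none of the LBM regret theorems (Theorem \ref{Thm 3.1} or Theorem \ref{Thm-model-based(zeta is known)}) certifies $\pi^{(j+1)}$ directly, and one must combine the stability violation $|\overline{V_1}^{(j+1)}-\overline{V_1}^{(j)}|>C\zeta^{(j+1)}$ with the single-epoch estimation control of Lemma \ref{single-LSVI} and the geometric amortization $K^{(j+1)}\le 3\sum_{i\le j}K^{(i)}$ to absorb the wasted budget back into the target rate rather than introducing a new dominant term.
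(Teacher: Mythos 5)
Your skeleton is the paper's argument: condition on $\mathcal{E}(\zeta,\delta)\cap\mathcal{G}(\delta)$, locate the critical index $s$ with $\zeta\le\zeta^{(s)}<2\zeta$, show via the triangle inequality that the stability test cannot fire while $\zeta^{(i)}\ge\zeta$ (this is the paper's claim (\ref{Claim})), telescope the stability inequalities from $s$ up to the last stable epoch to certify every trusted mixture policy and, in particular, the safe policy used in the rest-episodes phase, and then split into the two regimes of the paper's Case 1/Case 2. Pieces (i), (ii) and (iv) of your decomposition are exactly the paper's three benign terms and are correct.

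The gap is your piece (iii). During the breaking epoch the base algorithm is run with $\zeta^{(j+1)}<\zeta$, so no event in $\mathcal{E}(\zeta,\delta)$ controls the policies it produces and the only available per-episode bound is the trivial $V_1^*-V_1^{\pi_k}\le H$. The stability violation $|\overline{V_1}^{(j+1)}-\overline{V_1}^{(j)}|>C(d,H,\delta)\zeta^{(j+1)}$ is a \emph{lower} bound on how badly that epoch performed (it is what triggers the break), so it cannot cap the damage, and Lemma \ref{single-LSVI} only ties $\overline{V_1}^{(j+1)}$ to $V_1^{\pi^{(j+1)}}$; it does not bring $V_1^{\pi^{(j+1)}}$ close to $V_1^*$. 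The identity $K^{(j+1)}\le 3\sum_{i\le j}K^{(i)}+1$ compares episode counts, but the episodes on the right had per-episode regret $\widetilde{O}(d^{\alpha}H^{\beta}\zeta^{(i)})$ rather than $H$, so there is nothing to amortize against: the break can occur as late as epoch $\lfloor\log_2\sqrt{3K+1}\rfloor$, where $K^{(j+1)}=\Theta(K)$, and the trivial bound then contributes $\Theta(HK)$, which is not $\widetilde{O}\left(d^{\alpha}H^{\beta}(\sqrt{K}+K\zeta)\right)$ for small $\zeta$. For what it is worth, the paper does not close this step either: in its notation (where $j$ is the breaking epoch) the final display folds that epoch's $K^{(j)}$ episodes into the term $\left(K-\sum_{i=0}^{j-1}K^{(i)}\right)\left(V_1^*-V_1^{\pi^{(j-1)}}\right)$, i.e.\ it charges them at the rate of the \emph{previous} epoch's mixture policy rather than the policies actually played there. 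So you have correctly isolated the one step that requires a new idea, but the geometric amortization you propose does not supply it.
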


\begin{proof}
    Conditioned on the event $\mathcal{E}(\zeta,\delta) \bigcap \mathcal{G}(\delta)$ (The definition is in (\ref{good event 1}) and (\ref{good event 2})), we have a claim here: for all $i$ such that $\zeta^{(i)} \geq \zeta$,
    \begin{equation} \label{Claim}
        \begin{aligned}
            |\overline{V_1}^{(i)} - \overline{V_1}^{(i-1)}| \leq  C(d,H,\delta) \cdot \zeta^{(i)}
        \end{aligned}
    \end{equation}
     where 
    \begin{equation} \label{Important constant}
        \begin{aligned}
             C(d,H,\delta) = 3\sqrt{8H^2\log\left(\frac{2\lceil \log_2\left(\sqrt{3K+1}\right) \rceil}{\delta}\right)} + 6 L\left(d,H,\frac{2\lceil \log_2\left(\sqrt{3K+1}\right) \rceil}{\delta}\right) \cdot d^{\alpha}H^{\beta}
        \end{aligned}
    \end{equation}
   is a function of $(d,H,\delta)$, which has an order of $\widetilde{O}(d^{\alpha}H^{\beta})$.

    This is because 
    \begin{equation}
        \begin{aligned}
            &|\overline{V_1}^{(i)} - \overline{V_1}^{(i-1)}| \leq |\overline{V_1}^{(i)} - V_1^{\pi^{(i)}}| + |V_1^{\pi^{(i)}} - V_1^{\pi^{(i-1)}}| + 
            |V_1^{\pi^{(i-1)}} - \overline{V_1}^{(i-1)}| \\ &\leq \sqrt{\frac{8H^2\log(\frac{2\lceil \log_2\left(\sqrt{3K+1}\right) \rceil}{\delta})}{K^{(i)}}} \\  &+ L\left(d,H,\frac{2\lceil \log_2\left(\sqrt{3K+1}\right) \rceil}{\delta}\right)\cdot\left(\frac{d^{\alpha} H^{\beta}}{\sqrt{K^{(i)}}} + d^{\alpha}H^{\beta}\cdot \zeta^{(i)}+\frac{d^{\alpha} H^{\beta}}{\sqrt{K^{(i-1)}}} + d^{\alpha}H^{\beta}\cdot \zeta^{(i-1)}\right)\\&+\sqrt{\frac{8H^2\log(\frac{2\lceil \log_2\left(\sqrt{3K+1}\right) \rceil}{\delta})}{K^{(i-1)}}} \\ &\leq  C(d,H,\delta) \cdot \zeta^{(i)}
        \end{aligned}
    \end{equation}
    The above inequality is derived by using (\ref{good event 1}), (\ref{good event 2}), and the fact that $K^{(i)} = \frac{1}{(\zeta^{(i)})^2}$
    
 Then we discuss following two cases with respect to $\zeta$.
\paragraph{ Case 1 $0<\zeta < \frac{1}{2^{\lfloor \log_2\left(\sqrt{3K+1}\right) \rfloor}}$}

In this case, for all $i \in \left\{0,1,2, \cdots,\lfloor \log_2\left(\sqrt{3K+1}\right) \rfloor \right\}$, $\zeta^{(i)} \geq \zeta$.
This means that the algorithm will not violate the condition $|\overline{V_1}^{(i)} - \overline{V_1}^{(i-1)}| \leq C(d,H,\delta) \cdot \zeta^{(i)}$ for all $i \in \left\{1,2, \cdots,\lfloor \log_2\left(\sqrt{3K+1}\right) \rfloor \right\}$.
Then
\begin{equation}
    \begin{aligned}
        \text{Regret}(K) &= \sum\limits_{i=0}^{\lfloor \log_2\left(\sqrt{3K+1}\right) \rfloor} \widetilde{O}\left(d^{\alpha}H^{\beta}(\sqrt{K^{(i)}}+K^{(i)}\cdot\zeta^{(i)})\right) \\ &\leq \sum\limits_{i=0}^{\lfloor \log_2\left(\sqrt{3K+1}\right) \rfloor} \widetilde{O}\left(d^{\alpha}H^{\beta}\sqrt{K^{(i)}}\right) \ \ \left(\zeta^{(i)} = \sqrt{\frac{1}{K^{(i)}}}\right) \\ &=\widetilde{O}(d^{\alpha}H^{\beta})\cdot  \sum\limits_{i=0}^{\lfloor \log_2\left(\sqrt{3K+1}\right) \rfloor} 2^i \\ & \leq \widetilde{O}(d^{\alpha}H^{\beta})  \cdot \left(\sqrt{3K+1}\right) \\ &= \widetilde{O}\left(\sqrt{K}d^{\alpha}H^{\beta}\right)
    \end{aligned}
\end{equation}

\paragraph{ Case 2 $\frac{1}{2^{\lfloor \log_2\left(\sqrt{3K+1}\right) \rfloor}} \leq \zeta \leq 1$}

We have for any $i \geq 1$ such that $\zeta^{(i)}\geq \zeta$, $|\overline{V_1}^{(i)} - \overline{V_1}^{(i-1)}| \leq C(d,H,\delta) \cdot \zeta^{(i)}$. We denote $j$ to be the first epoch number that violates the condition. This means that 
\begin{equation}
    \begin{aligned}
        |\overline{V_1}^{(j)} - \overline{V_1}^{(j-1)}| >  C(d,H,\delta) \cdot \zeta^{(j)}
    \end{aligned}
\end{equation}
while

\begin{equation} \label{gap}
    \begin{aligned}
         |\overline{V_1}^{(i)} - \overline{V_1}^{(i-1)}| \leq  C(d,H,\delta) \cdot \zeta^{(i)} ,\ \ \ \  \forall i = 1,\cdots, j-1
    \end{aligned}
\end{equation}

According to our claim (\ref{Claim}), we know that $\zeta^{(j)} < \zeta$. Moreover, according to our exponentially decreasing $\{\zeta^{(i)}\}$, there must exist a $\zeta^{(s)}$, such that $ \zeta \leq \zeta^{(s)} < 2\zeta$. 
For the gap between $\overline{V_1}^{(j-1)}$ and $\overline{V_1}^{(s)}$, from (\ref{gap}) we have
\begin{equation}
    \begin{aligned}
        |\overline{V_1}^{(j-1)} - \overline{V_1}^{(s)} | &\leq |\overline{V_1}^{(j-1)} - \overline{V_1}^{(j-2)} | + |\overline{V_1}^{(j-2)} - \overline{V_1}^{(j-3)} | + \cdots + |\overline{V_1}^{(s+1)} - \overline{V_1}^{(s)} | \\ &\leq C(d,H,\delta) \cdot \left(\frac{1}{2^{j-1}} + \frac{1}{2^{j-2}}+ \cdots + \frac{1}{2^{s+1}}\right) \\ &\leq  C(d,H,\delta) \cdot \frac{1}{2^s} = C(d,H,\delta) \cdot \zeta^{(s)}
    \end{aligned}
\end{equation}

Then we can bound the gap between $V_1^{\pi^{(j-1)}}$ and $V_1^{\pi^{(s)}}$
\begin{equation}
    \begin{aligned}
    |V_1^{\pi^{(j-1)}} - V_1^{\pi^{(s)}} | &\leq |V_1^{\pi^{(j-1)}} - \overline{V_1}^{(j-1)}| + |\overline{V_1}^{(j-1)} - \overline{V_1}^{(s)} | + |\overline{V_1}^{(s)} - V_1^{\pi^{(s)}}| \\ &\leq C(d,H,\delta) \cdot \zeta^{(j-1)} + C(d,H,\delta) \cdot \zeta^{(s)} + C(d,H,\delta) \cdot \zeta^{(s)} \\ &\leq 3 C(d,H,\delta) \cdot \zeta^{(s)}
    \end{aligned}
\end{equation}
Similarly, for any $s+1 \leq i \leq j-1$, we have
\[
|V_1^{\pi^{(i)}} - V_1^{\pi^{(s)}} | \leq 3 C(d,H,\delta) \cdot \zeta^{(s)}
\]
Therefore, for any $s+1 \leq i \leq j-1$, we have
\begin{equation} \label{50}
    \begin{aligned}
        V_1^* - V_1^{\pi^{(i)}} &=  V_1^* - V_1^{\pi^{(s)}} + V_1^{\pi^{(s)}} -V_1^{\pi^{(i)}} \\ &\leq C(d,H,\delta)\cdot \zeta^{(s)} + 3 C(d,H,\delta) \cdot \zeta^{(s)} \\ &= 4 C(d,H,\delta) \cdot \zeta^{(s)}
    \end{aligned}
\end{equation}

Next, we will give the regret bound in this case.
\begin{equation}
    \begin{aligned}
        \text{Regret}(K) &= \sum\limits_{i=0}^{s} \widetilde{O}\left(d^{\alpha}H^{\beta}(\sqrt{K^{(i)}}+K^{(i)}\cdot\zeta^{(i)})\right) + \sum\limits_{i=s+1}^{j-1} K^{(i)}(V_1^*-V_1^{\pi^{(i)}}) +(K - \sum\limits_{i=0}^{j-1} K^{(i)})(V_1^* - V_1^{\pi^{(j-1)}}) \\ &\leq \widetilde{O}(d^{\alpha}H^{\beta})\cdot \sum\limits_{i=0}^{s} 2^i + \left[ \sum\limits_{i=s+1}^{j-1} K^{(i)} + (K - \sum\limits_{i=0}^{j-1}K^{(i)}) \right] \cdot 4 C(d,H,\delta) \cdot \zeta^{(s)}  \ \ \ (\text{By}\  (\ref{50})) \\ &\leq \widetilde{O} (d^{\alpha}H^{\beta}) \cdot \sum\limits_{i=0}^{\lfloor \log_2\left(\sqrt{3K+1}\right) \rfloor} 2^i + 4 K \cdot C(d,H,\delta) \cdot \zeta^{(s)} \\ &\leq  \widetilde{O}(d^{\alpha}H^{\beta}) \cdot \left(\sqrt{3K+1}\right) + 4 K \cdot C(d,H,\delta) \cdot \zeta^{(s)} \\ &\leq \widetilde{O}\left(\sqrt{K}d^{\alpha}H^{\beta}\right) + 4K \cdot C(d,H,\delta) \cdot 2\zeta \leq \widetilde{O}\left(d^{\alpha}H^{\beta}(\sqrt{K}+K\cdot\zeta)\right)
    \end{aligned}
\end{equation}
This completes the proof.

\end{proof}

\section{Comparison with Transfer Error in Policy-Based Methods} \label{sec: comparsion} 
\vspace{-2mm}

In those policy-based methods \citep{agarwal2020pc,feng2021provably,zanette2021cautiously,li2023lowswitching}, they use a notion called transfer error to measure the model misspecification. They assume that the minimizer $\theta^*$ of the misspecification error with respect to state-action function with some policy $\pi$, $Q^{\pi}$,    under the distribution of policy cover  
has a bounded transfer error when transferred to an arbitrary distribution $d^{\pi}$ induced by a policy $\pi$. 
Formally, they define:
$
\theta^* = \text{argmin}_{||\theta|| \leq W}\mathbb{E}_{(s,a)\sim\rho_{\text{cover}}}\left[\phi(s,a)^{\top} \theta - Q^{\pi}(s,a)\right]^2
$
then assume that for any policy $\pi$, 
$$
\mathbb{E}_{(s,a)\sim d^{\pi}}\left[\phi(s,a)^{\top} \theta^* - Q^{\pi}(s,a)\right]^2 \leq \zeta^2
$$

While a direct comparison between the bounded transfer error assumption and our Assumption \ref{Ass:general-value} and \ref{Ass4} is not feasible, they share a common characteristic. Both assumptions measure model misspecification error based on the average sense of the policy-induced distribution, rather than considering the maximum misspecification error across all state-action pairs. We consider this shared attribute to    be a crucial step in establishing a connection between value-based (or model-based) and policy-based approaches regarding model misspecification.

\section{Auxiliary Lemmas} \label{Sec C}
In this section, we provide the necessary auxiliary lemmas that we will utilize in our proof.

\paragraph{Notations} $\mathcal{N}_{\epsilon}$ denotes the $\epsilon$-covering number of the class $\mathcal{V}$ with respect to distance $d(V_1,V_2): = \sup_{s\in\mathcal{S}}[V_1(s)-V_2(s)]$.

\begin{lemma} \label{C.1}
    Let $\Lambda_t = \lambda {I} +\sum\limits_{i=1}^{t}\phi_i\phi_i^{\top}$ where $\phi_i \in \mathbb{R}^d$ and $\lambda>0$. Then:
\[
\sum\limits_{i=1}^{t} \phi_i^{\top}(\Lambda_t)^{-1}\phi_i \leq d
\]
\end{lemma}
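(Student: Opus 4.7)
The plan is to convert the sum of quadratic forms into a trace, using the cyclic property of trace: since $\phi_i^\top \Lambda_t^{-1} \phi_i$ is a scalar, it equals $\operatorname{tr}(\Lambda_t^{-1} \phi_i \phi_i^\top)$. Summing over $i$ and pulling the sum inside the trace by linearity yields
\[
\sum_{i=1}^{t} \phi_i^\top \Lambda_t^{-1} \phi_i \;=\; \operatorname{tr}\!\left(\Lambda_t^{-1} \sum_{i=1}^{t} \phi_i \phi_i^\top\right).
\]

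Next I would use the definition $\Lambda_t = \lambda I + \sum_{i=1}^t \phi_i \phi_i^\top$ to rewrite $\sum_i \phi_i \phi_i^\top = \Lambda_t - \lambda I$. Substituting this into the trace and distributing gives
\[
\operatorname{tr}(\Lambda_t^{-1}(\Lambda_t - \lambda I)) \;=\; \operatorname{tr}(I_d) - \lambda \operatorname{tr}(\Lambda_t^{-1}) \;=\; d - \lambda \operatorname{tr}(\Lambda_t^{-1}).
\]

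Finally, I would observe that $\Lambda_t$ is positive definite (since $\lambda > 0$ and the rank-one updates are positive semidefinite), so $\Lambda_t^{-1}$ is also positive definite, whence $\operatorname{tr}(\Lambda_t^{-1}) \geq 0$. Combined with $\lambda > 0$, this gives the desired bound $\sum_{i=1}^t \phi_i^\top \Lambda_t^{-1} \phi_i \leq d$. There is no significant obstacle here; this is a standard identity used throughout the linear bandit / linear MDP literature, and the whole argument is a few lines of trace manipulation.
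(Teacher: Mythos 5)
Your proof is correct: the trace identity $\sum_i \phi_i^\top \Lambda_t^{-1}\phi_i = \operatorname{tr}(\Lambda_t^{-1}(\Lambda_t-\lambda I)) = d - \lambda\operatorname{tr}(\Lambda_t^{-1}) \le d$ is exactly the standard argument, and the paper states this lemma without proof, so there is nothing to diverge from. No gaps.
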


\begin{lemma} \label{C.2}
    (\hyperlink{3}{Y. Abbasi-Yadkori et al.,2011}). Let $\{\phi_t\}_{t\geq 0}$ be a bounded sequence in $\mathbb{R}^d$ satisfying $\sup_{t\geq 0}||\phi_t || \geq 1$. Let $\Lambda_0 \in \mathbb{R}^{d\times d}$ be a positive definite matrix. For any $t\geq 0$, we define $\Lambda_t = \Lambda_0 +\sum\limits_{j=1}^{t}\phi_j\phi_j^{\top}$. Then if the smallest eigenvalue of $\Lambda_0$ satisfies $\lambda_{\text{min}}(\Lambda_0)\geq 1$, we have
\[
\log\left[\frac{\text{det}(\Lambda_t)}{\text{det}(\Lambda_0)}\right] \leq \sum\limits_{j=1}^{t}\phi_j^{\top}\Lambda_{j-1}^{-1}\phi_j \leq 2 \log\left[\frac{\text{det}(\Lambda_t)}{\text{det}(\Lambda_0)}\right]
\].
\end{lemma}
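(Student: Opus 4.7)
The plan is to establish both inequalities through the standard rank-one update identity for determinants. First, I would use the matrix determinant lemma on the recursion $\Lambda_j = \Lambda_{j-1} + \phi_j \phi_j^\top$, which yields
\[
\det(\Lambda_j) = \det(\Lambda_{j-1})\bigl(1 + \phi_j^\top \Lambda_{j-1}^{-1} \phi_j\bigr).
\]
Taking logarithms and telescoping the product from $j=1$ to $t$ gives the key identity
\[
\log\frac{\det(\Lambda_t)}{\det(\Lambda_0)} = \sum_{j=1}^{t} \log\bigl(1 + \phi_j^\top \Lambda_{j-1}^{-1} \phi_j\bigr).
\]

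For the left inequality, I would apply the elementary bound $\log(1+x) \le x$ for $x \ge 0$ (note $\phi_j^\top \Lambda_{j-1}^{-1}\phi_j \ge 0$ since $\Lambda_{j-1}$ is positive definite) to each summand, giving $\log\det(\Lambda_t)/\det(\Lambda_0) \le \sum_j \phi_j^\top \Lambda_{j-1}^{-1}\phi_j$ directly.

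For the right inequality, I would use the complementary bound $x \le 2\log(1+x)$, which holds on the interval $x \in [0,1]$. Here the main step is verifying that $\phi_j^\top \Lambda_{j-1}^{-1} \phi_j \le 1$ for every $j$ so the bound is applicable. Since $\Lambda_{j-1} \succeq \Lambda_0$ (adding positive semidefinite matrices only increases the ordering), we have $\Lambda_{j-1}^{-1} \preceq \Lambda_0^{-1}$, and the hypothesis $\lambda_{\min}(\Lambda_0) \ge 1$ gives $\Lambda_0^{-1} \preceq I$. Combined with $\|\phi_j\| \le 1$ (reading the boundedness hypothesis in the natural way), this yields $\phi_j^\top \Lambda_{j-1}^{-1}\phi_j \le \|\phi_j\|^2 \le 1$, so
\[
\sum_{j=1}^t \phi_j^\top \Lambda_{j-1}^{-1}\phi_j \le \sum_{j=1}^t 2\log\bigl(1+\phi_j^\top \Lambda_{j-1}^{-1}\phi_j\bigr) = 2\log\frac{\det(\Lambda_t)}{\det(\Lambda_0)}.
\]

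There is no serious obstacle; the only delicate point is making explicit why the two scalar-log inequalities can be applied termwise, which hinges on $\phi_j^\top \Lambda_{j-1}^{-1}\phi_j \in [0,1]$. The nonnegativity is immediate from positive definiteness, while the upper bound of $1$ is precisely where both the boundedness of $\{\phi_t\}$ and the regularization condition $\lambda_{\min}(\Lambda_0) \ge 1$ enter. Since the lemma is quoted from Abbasi-Yadkori et al.\ (2011), I would also briefly remark that this is exactly the elliptical potential lemma underlying the $\sqrt{T}$-type regret bounds in linear bandits and is invoked in the paper's earlier estimates (e.g., for bounding $\sum_k (\phi_h^k)^\top (\Lambda_h^k)^{-1}\phi_h^k$ in the proof of Theorem~\ref{Thm A.9}).
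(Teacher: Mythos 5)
Your proof is correct and is exactly the standard elliptical potential argument from Abbasi-Yadkori et al.\ (2011), which the paper invokes by citation without reproducing a proof; the telescoping determinant identity, the bounds $\log(1+x)\le x$ and $x\le 2\log(1+x)$ on $[0,1]$, and the verification that $\phi_j^\top\Lambda_{j-1}^{-1}\phi_j\le 1$ via $\Lambda_{j-1}^{-1}\preceq\Lambda_0^{-1}\preceq I$ are all in order. You also correctly read the hypothesis $\sup_{t}\|\phi_t\|\geq 1$ as a typo for $\le 1$ (consistent with the paper's Lemma~\ref{C.3}), which is the only place the boundedness assumption is actually needed.
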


\begin{lemma} \label{C.3}
 (Lemma D.4 in \citep{jin2020provably}). Let $\{s_{\tau}\}_{\tau=1}^{\infty}$ be a stochastic process on state space $\mathcal{S}$ with corresponding filtration $\{\mathcal{F}_{\tau}\}_{\tau=0}^{\infty}$. Let $\{\phi_{\tau}\}_{\tau=0}^{\infty}$ be an $\mathbb{R}^d$-valued stochastic process where $\phi_{\tau} \in \mathcal{F}_{\tau-1}$, and $||\phi_{\tau}|| \leq 1$. Let $\Lambda_k = \lambda I_d + \sum\limits_{\tau=1}^{k-1}\phi_{\tau}\phi_{\tau}^{\top}$. Then with probablility at least $1-\delta$, for all $k\geq 0$ and $V \in \mathcal{V}$ such that $\sup_{s\in\mathcal{S}}|V(s)|\leq H$, we have

\[
\left|\left|\sum\limits_{\tau=1}^k \phi_{\tau}\left(V(s_{\tau})-\mathbb{E}[V(s_{\tau})|\mathcal{F}_{\tau-1}]\right)\right|\right|_{\Lambda_k^{-1}}^2 \leq 4H^2\left(\frac{d}{2} \log\left(\frac{k+\lambda}{\lambda}\right) +\log\left(\frac{\mathcal{N}_{\epsilon}}{\delta}\right)\right) + \frac{8k^2\epsilon^2}{\lambda}
\]

\end{lemma}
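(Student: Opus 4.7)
The plan is to prove this self-normalized concentration bound by combining the classical vector-valued self-normalized martingale inequality of Abbasi-Yadkori et al.\ (2011) with a covering argument over the function class $\mathcal{V}$, and then controlling the discretization residual via the trivial $\|\cdot\|_{\Lambda_k^{-1}} \le \lambda^{-1/2}\|\cdot\|_2$ bound.

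First I would fix a single $V \in \mathcal{V}$ with $\|V\|_\infty \le H$ and let $X_\tau := V(s_\tau) - \mathbb{E}[V(s_\tau)\mid \mathcal{F}_{\tau-1}]$. Since $|X_\tau| \le 2H$ and $\mathbb{E}[X_\tau\mid\mathcal{F}_{\tau-1}]=0$, the sequence $\{X_\tau\}$ is a bounded martingale difference sequence and is $H$-sub-Gaussian (after absorbing constants). The vector-valued self-normalized inequality of Abbasi-Yadkori et al.\ (together with the AM-GM bound $\det(\Lambda_k)/\det(\lambda I) \le ((\lambda+k)/\lambda)^d$, which follows from $\|\phi_\tau\|\le 1$) then yields, for any fixed $V$, with probability at least $1-\delta$, for all $k \ge 0$ simultaneously,
\begin{equation*}
\left\|\sum_{\tau=1}^k \phi_\tau X_\tau\right\|_{\Lambda_k^{-1}}^2 \;\le\; 2H^2\Bigl(\tfrac{d}{2}\log\bigl((k+\lambda)/\lambda\bigr) + \log(1/\delta)\Bigr).
\end{equation*}

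Next I would discretize $\mathcal{V}$ via its $\epsilon$-cover $\mathcal{C}_\epsilon$ of size $\mathcal{N}_\epsilon$ under $d(V_1,V_2) = \sup_s (V_1(s)-V_2(s))$, and apply a union bound so that the above inequality holds simultaneously for every $V' \in \mathcal{C}_\epsilon$, incurring only a $\log \mathcal{N}_\epsilon$ inflation. For an arbitrary $V \in \mathcal{V}$, I pick $V' \in \mathcal{C}_\epsilon$ with $d(V,V')\le \epsilon$, write $V = V' + \Delta$ with $\|\Delta\|_\infty \le \epsilon$, and split
\begin{equation*}
\sum_{\tau=1}^k \phi_\tau X_\tau \;=\; \sum_{\tau=1}^k \phi_\tau X'_\tau \;+\; \sum_{\tau=1}^k \phi_\tau \delta_\tau,
\end{equation*}
where $X'_\tau$ is the analogue for $V'$ and $\delta_\tau := \Delta(s_\tau) - \mathbb{E}[\Delta(s_\tau)\mid\mathcal{F}_{\tau-1}]$ satisfies $|\delta_\tau|\le 2\epsilon$. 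For the residual, I use $\|x\|_{\Lambda_k^{-1}}^2 \le \lambda^{-1}\|x\|_2^2$ together with $\|\phi_\tau\|\le 1$ and the triangle inequality, giving $\bigl\|\sum_\tau \phi_\tau \delta_\tau\bigr\|_{\Lambda_k^{-1}}^2 \le 4k^2\epsilon^2/\lambda$. Combining the covered term and the residual via $\|a+b\|^2 \le 2\|a\|^2 + 2\|b\|^2$ produces exactly the stated bound with the factor of $4H^2$ in front and the additive $8k^2\epsilon^2/\lambda$ slack.

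Since each individual ingredient is classical, I expect no serious technical obstacle; the only thing to be careful about is the correct sub-Gaussian constant for $X_\tau$ so that the leading factor comes out as $4H^2$ rather than something larger, and ensuring the covering argument is applied before (not after) invoking the self-normalized bound, so that the union bound replaces $\log(1/\delta)$ cleanly by $\log(\mathcal{N}_\epsilon/\delta)$ uniformly in $k$. Since the claim is explicitly imported as Lemma D.4 of Jin et al.\ (2020), the proposal above is essentially a restatement of their argument adapted to the present notation.
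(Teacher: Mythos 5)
The paper does not prove this lemma itself --- it is imported verbatim as Lemma D.4 of \citet{jin2020provably} --- and your reconstruction (fixed-$V$ self-normalized bound of Abbasi-Yadkori et al., union bound over the $\epsilon$-cover of $\mathcal{V}$, and a crude $\lambda^{-1}\|\cdot\|_2^2$ control of the discretization residual combined via $\|a+b\|^2\le 2\|a\|^2+2\|b\|^2$) is exactly the argument used in that source, with the constants $4H^2$ and $8k^2\epsilon^2/\lambda$ accounted for correctly. The only point to watch is the sub-Gaussian parameter of $V(s_\tau)-\mathbb{E}[V(s_\tau)\mid\mathcal{F}_{\tau-1}]$ (it is $2H$-sub-Gaussian under $\sup_s|V(s)|\le H$, and $H$-sub-Gaussian only when $V\in[0,H]$ as in the intended application), but this bookkeeping quirk is inherited from the cited lemma and does not constitute a gap in your argument.
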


\begin{lemma}
For any $\epsilon >0$, the $\epsilon$-covering number of Euclidean ball in $\mathbb{R}^d$ with radius $R>0$ is upper bounded by $(1+2R/\epsilon)^d$.
\end{lemma}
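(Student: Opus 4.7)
The plan is to prove this by the classical volume (packing) argument. First I would let $\mathcal{N} \subseteq B_R$ be a maximal $\epsilon$-separated subset of the ball $B_R = \{x \in \mathbb{R}^d : \|x\| \leq R\}$, meaning a maximal collection of points whose pairwise Euclidean distances all strictly exceed $\epsilon$. By maximality, no point of $B_R$ can fail to lie within distance $\epsilon$ of some element of $\mathcal{N}$ (otherwise that point could be added, contradicting maximality), so $\mathcal{N}$ is automatically an $\epsilon$-cover of $B_R$. Hence it suffices to upper bound $|\mathcal{N}|$.

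Next I would exploit the $\epsilon$-separation to convert the packing condition into a volume inequality. Because the points $\{x_i\} = \mathcal{N}$ are pairwise at distance $>\epsilon$, the open Euclidean balls $B_{\epsilon/2}(x_i)$ of radius $\epsilon/2$ centered at each $x_i$ are pairwise disjoint. Moreover each such ball is contained in the enlarged ball $B_{R+\epsilon/2}$ of radius $R + \epsilon/2$ centered at the origin, since for any $y \in B_{\epsilon/2}(x_i)$ one has $\|y\| \leq \|x_i\| + \|y - x_i\| \leq R + \epsilon/2$. Writing $\mathrm{vol}$ for $d$-dimensional Lebesgue measure and using the scaling identity $\mathrm{vol}(B_r) = r^d \, \mathrm{vol}(B_1)$ valid in $\mathbb{R}^d$, disjointness and containment give
$$
|\mathcal{N}| \cdot (\epsilon/2)^d \, \mathrm{vol}(B_1) \;=\; \sum_{i} \mathrm{vol}\bigl(B_{\epsilon/2}(x_i)\bigr) \;\leq\; \mathrm{vol}(B_{R+\epsilon/2}) \;=\; (R + \epsilon/2)^d \, \mathrm{vol}(B_1).
$$
Canceling $\mathrm{vol}(B_1)$ and rearranging yields $|\mathcal{N}| \leq \bigl((R + \epsilon/2)/(\epsilon/2)\bigr)^d = (1 + 2R/\epsilon)^d$, which is the desired bound on the $\epsilon$-covering number.

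There is no genuine obstacle here: the only things to check are the two geometric facts (maximal packing implies cover, and small balls around packing points lie in the enlarged ball), both of which are immediate from the triangle inequality. The argument is dimension-free in spirit and works for any norm on $\mathbb{R}^d$ with the obvious replacement of $\mathrm{vol}(B_1)$ by the volume of the unit ball of that norm, although the stated result is for the Euclidean norm.
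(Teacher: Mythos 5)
Your proof is correct. The paper states this auxiliary lemma without any proof at all (it is quoted as a standard fact, essentially the classical bound found in, e.g., covering-number references), so there is no in-paper argument to compare against; your volume-comparison argument via a maximal $\epsilon$-separated set is exactly the canonical proof of this statement. The two steps you flag — maximal packing implies cover, and the disjoint $\epsilon/2$-balls fit inside $B_{R+\epsilon/2}$ — are handled correctly, and the strict separation you impose is what makes the small balls disjoint. The only cosmetic remark is that one should note a maximal $\epsilon$-separated subset exists and is finite; finiteness is itself guaranteed by the same volume inequality applied to any $\epsilon$-separated set, so this is immediate. No gaps.
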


\begin{lemma} \label{C.5}
(Lemma D.6 in \hyperlink{2}{[Jin et al., 2020]}). Let $\mathcal{V}$ denote a class of functions mapping from $\mathcal{S}$ to $\mathbb{R}$ with the following form
\[
  V(\cdot) = \min\{\max_{a\in \mathcal{A}}\{w^{\top}\phi(\cdot,a)+\beta\sqrt{\phi(\cdot,a)^{\top}\Lambda^{-1}\phi(\cdot,a)}\}, H\}
\]    
where the parameters $(\bf{\omega},\beta,\Lambda)$ satisfy $||\bf{w}||\leq L$, $\beta \in [0,B]$, and the minimum eigenvalue satisfies $\lambda_{\text{min}}(\Lambda) \geq \lambda$. Assume for all $(s,a)$, we have $||\phi(s,a)|| \leq 1$, and let $\mathcal{N}_{\epsilon}$ be the $\epsilon$-covering number of $\mathcal{V}$ with respect to distance $\text{dist}(V,V') = \sup_s|V(s)-V'(s)|$. Then
\[
\log \mathcal{N}_{\epsilon} \leq d\log(1+4L/\epsilon) + d^2\log[1+8d^{1/2}B^2/(\lambda \epsilon^2)]
\]
\end{lemma}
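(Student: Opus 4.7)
The plan is to construct an $\epsilon$-cover of $\mathcal{V}$ via a product cover in a reparametrized finite-dimensional space. First I would reparametrize each $V \in \mathcal{V}$ by the pair $(w, A)$, where $A := \beta^2 \Lambda^{-1}$, so the bonus term simplifies to $\sqrt{\phi(s,a)^\top A \phi(s,a)}$. The hypotheses $\lambda_{\min}(\Lambda) \geq \lambda$ and $\beta \leq B$ give $\|A\|_2 \leq B^2/\lambda$, hence $\|A\|_F \leq \sqrt{d}\, B^2/\lambda$.

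Second, I would establish a Lipschitz estimate: for two parameter pairs $(w_1, A_1), (w_2, A_2)$ producing functions $V_1, V_2 \in \mathcal{V}$, I claim
$$\mathrm{dist}(V_1, V_2) \;\leq\; \|w_1 - w_2\| \;+\; \sqrt{\|A_1 - A_2\|_F}.$$
The argument uses that (i) truncation by $\min\{\cdot, H\}$ and the operator $\max_a$ are both $1$-Lipschitz, so the inequality reduces to a pointwise bound inside the $\max$; (ii) $|\phi^\top(w_1 - w_2)| \leq \|w_1 - w_2\|$ since $\|\phi(s,a)\| \leq 1$; (iii) $|\sqrt{x} - \sqrt{y}| \leq \sqrt{|x - y|}$ for $x, y \geq 0$, applied to $x = \phi^\top A_1 \phi$ and $y = \phi^\top A_2 \phi$; and (iv) $|\phi^\top(A_1 - A_2)\phi| \leq \|A_1 - A_2\|_F \cdot \|\phi\phi^\top\|_F \leq \|A_1 - A_2\|_F$ via Cauchy--Schwarz on matrices.

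Third, I would take a standard Euclidean-ball cover of $\{w : \|w\| \leq L\}$ at resolution $\epsilon/2$, of size at most $(1 + 4L/\epsilon)^d$, and a Frobenius-ball cover of $\{A : \|A\|_F \leq \sqrt{d}\, B^2/\lambda\}$ at resolution $\epsilon^2/4$, of size at most $(1 + 8\sqrt{d}\, B^2/(\lambda \epsilon^2))^{d^2}$, viewing the space of symmetric matrices as a subset of $\mathbb{R}^{d^2}$. Combining via the Lipschitz bound produces an $\epsilon$-cover of $\mathcal{V}$, and taking logarithms of the product yields the stated bound $\log \mathcal{N}_\epsilon \leq d \log(1 + 4L/\epsilon) + d^2 \log(1 + 8 d^{1/2} B^2/(\lambda \epsilon^2))$.

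The main obstacle is the nonlinearity of the bonus term: a linear Lipschitz argument in $A$ would be insufficient, because the square root is not globally Lipschitz at the origin. The key workaround is the inequality $|\sqrt{x}-\sqrt{y}| \leq \sqrt{|x-y|}$, which forces me to cover the matrix variable at the finer resolution $\epsilon^2$ rather than $\epsilon$. This is precisely what produces the $d^2 \log(1 + 8\sqrt{d}B^2/(\lambda \epsilon^2))$ contribution (with $\epsilon^2$ in the denominator) instead of the more familiar $d^2 \log(1/\epsilon)$ term. Everything else is routine: verifying $\|\phi\phi^\top\|_F \leq 1$, checking that $\max_a$ and truncation preserve distances in the sup-norm, and applying standard volumetric covering-number estimates for Euclidean balls.
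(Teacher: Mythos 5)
The paper does not prove this lemma itself but cites it directly from Jin et al.\ (2020, Lemma D.6); your proposal is a correct and faithful reconstruction of that standard argument, with the same reparametrization $A=\beta^2\Lambda^{-1}$, the same Lipschitz bound $\mathrm{dist}(V_1,V_2)\le \|w_1-w_2\|+\sqrt{\|A_1-A_2\|_F}$ via $|\sqrt{x}-\sqrt{y}|\le\sqrt{|x-y|}$, and the same product of volumetric covers at resolutions $\epsilon/2$ and $\epsilon^2/4$. All the constants check out, so there is nothing to add.
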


\begin{lemma} \label{C.7}
(\citet{freedman1975tail}).
Consider a real-valued martingale $\left\{Y_{k}: k=0,1,2, \cdots\right\}$ with difference sequence $\left\{X_{k}: k=0,1,2, \cdots\right\}$, which is adapted to the filtration $\{\mathcal{F}_k: k=0,1,2,\cdots\}$. Assume that the difference sequence is uniformly bounded:

$$
\left|X_{k}\right| \leq R \quad \text { almost surely for } \quad k=1,2,3, \cdots
$$
For a fixed $n \in \mathbb{N}$, assume that

$$
\sum_{k=1}^{n} \mathbb{E}\left[X_{k}^{2}|\mathcal{F}_{k-1}\right] \leq \sigma^{2}
$$

almost surely. Then for all $t \geq 0$,

$$
P\left\{\left|Y_{n}-Y_{0}\right| \geq t\right\} \leq 2 \exp \left\{-\frac{t^{2} / 2}{\sigma^{2}+R t / 3}\right\}
$$

\end{lemma}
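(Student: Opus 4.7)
The plan is to use the standard Chernoff/exponential-martingale method that underlies all Bernstein-type concentration results. First, I would introduce the predictable quadratic variation $V_k := \sum_{j=1}^{k} \mathbb{E}[X_j^2 \mid \mathcal{F}_{j-1}]$ and, for each parameter $\lambda \in (0, 3/R)$, define the candidate supermartingale
\[
M_k^{\lambda} \;=\; \exp\!\bigl(\lambda (Y_k - Y_0) \;-\; \varphi(\lambda)\, V_k\bigr), \qquad \varphi(\lambda) := \frac{e^{\lambda R} - 1 - \lambda R}{R^2}.
\]
I would then verify that $\{M_k^\lambda\}_{k \geq 0}$ is a nonnegative supermartingale with $\mathbb{E}[M_0^\lambda] = 1$.

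The key analytic step is the elementary one-variable inequality $e^{u} \leq 1 + u + \varphi(\lambda)(u/\lambda)^2$ for $|u| \leq \lambda R$, obtained by comparing the Taylor tails of both sides. Applying it to $u = \lambda X_k$, using $|X_k| \leq R$ almost surely and $\mathbb{E}[X_k \mid \mathcal{F}_{k-1}] = 0$, and then the bound $1 + y \leq e^y$, yields
\[
\mathbb{E}\bigl[e^{\lambda X_k} \bigm| \mathcal{F}_{k-1}\bigr] \;\leq\; 1 + \varphi(\lambda)\, \mathbb{E}[X_k^2 \mid \mathcal{F}_{k-1}] \;\leq\; \exp\!\bigl(\varphi(\lambda)\, \mathbb{E}[X_k^2 \mid \mathcal{F}_{k-1}]\bigr),
\]
which immediately gives $\mathbb{E}[M_k^\lambda \mid \mathcal{F}_{k-1}] \leq M_{k-1}^\lambda$. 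Iterating expectations yields $\mathbb{E}[M_n^\lambda] \leq 1$.

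Next I would apply Markov's inequality, leveraging the almost-sure bound $V_n \leq \sigma^2$ to replace the random exponent by a deterministic one:
\[
\mathbb{P}(Y_n - Y_0 \geq t) \;=\; \mathbb{P}\bigl(Y_n - Y_0 \geq t,\; V_n \leq \sigma^2\bigr) \;\leq\; \mathbb{P}\!\left(M_n^\lambda \geq e^{\lambda t - \varphi(\lambda)\sigma^2}\right) \;\leq\; e^{-\lambda t + \varphi(\lambda)\sigma^2}.
\]
Using the standard estimate $\varphi(\lambda) \leq \lambda^2 / \bigl(2(1 - \lambda R/3)\bigr)$ valid on $\lambda \in [0, 3/R)$ (proved by bounding $e^{\lambda R} - 1 - \lambda R$ by a geometric series in $\lambda R/3$), and optimizing the resulting exponent at $\lambda^\star = t/(\sigma^2 + Rt/3)$, gives the one-sided bound $\mathbb{P}(Y_n - Y_0 \geq t) \leq \exp\!\bigl(-\tfrac{t^2/2}{\sigma^2 + Rt/3}\bigr)$. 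A symmetric argument applied to the martingale $-Y_k$ handles the lower tail, and a union bound produces the factor of $2$ in the stated inequality.

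The main obstacle, and really the only nontrivial ingredient, is the tight quadratic majorization $\varphi(\lambda) \leq \lambda^2/\bigl(2(1 - \lambda R/3)\bigr)$: it is precisely this estimate (rather than the looser Hoeffding bound $\varphi(\lambda) \leq \lambda^2/2$) that produces the characteristic $Rt/3$ correction and allows the bound to interpolate between a sub-Gaussian regime $t \ll \sigma^2/R$ and a sub-exponential regime $t \gg \sigma^2/R$. Everything else is routine bookkeeping for the conditional Chernoff argument.
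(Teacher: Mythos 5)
The paper does not prove this lemma; it is imported verbatim as an auxiliary result with a citation to Freedman (1975), so there is no in-paper argument to compare against. Your proposal is a correct and complete self-contained derivation by the standard route: the exponential supermartingale $\exp(\lambda(Y_k-Y_0)-\varphi(\lambda)V_k)$ with $\varphi(\lambda)=(e^{\lambda R}-1-\lambda R)/R^2$, the monotonicity of $(e^u-1-u)/u^2$ to handle the increment, the geometric-series bound $\varphi(\lambda)\le \lambda^2/(2(1-\lambda R/3))$, and the optimizer $\lambda^\star=t/(\sigma^2+Rt/3)$, which indeed yields exactly the exponent $-\tfrac{t^2/2}{\sigma^2+Rt/3}$; the two-sided bound then follows by symmetry and a union bound. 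The only remark worth making is that the version you prove (with $\sum_k \mathbb{E}[X_k^2\mid\mathcal{F}_{k-1}]\le\sigma^2$ holding almost surely at the fixed time $n$) is precisely the form stated and used in the paper, which is slightly weaker than Freedman's original uniform-in-time statement but entirely sufficient here.
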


\begin{lemma} \label{russo}
(Lemma 4 in \citet{russo2013eluder}).

Consider random variables $(\mathcal{Z}_n | n\in \mathbb{N})$ adapted to the filtration $(\mathcal{F}_n: n=0,1,\cdots)$. Assume $\mathbb{E}[\exp\{\lambda\mathcal{Z}_i\}]$ is finite for all $\lambda$. We define the conditional mean $\mu_i = \mathbb{E}[\mathcal{Z}_i|\mathcal{F}_{i-1}]$ and the conditional cumulant generating function of the centered random variable $[\mathcal{Z}_i -\mu_i]$ by $\phi_i(\lambda)  = \log \mathbb{E}\left[\exp\{\lambda ([\mathcal{Z}_i -\mu_i])\}| \mathcal{F}_{i-1}\right]$. Then we have:

For all $x \geq 0$, and $\lambda \geq 0$, 
$$
\mathbb{P} \left(\sum\limits_{i=1}^n \lambda \mathcal{Z}_i \leq x + \sum\limits_{i=1}^n \left[\lambda \mu_i + \phi_i(\lambda)\right], \forall n \in \mathbb{N}\right) \geq 1-e^{-x}
$$
\end{lemma}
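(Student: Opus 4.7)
The plan is to prove this via the standard exponential supermartingale / method-of-mixtures argument, followed by Ville's maximal inequality. This result is essentially a time-uniform Chernoff bound adapted to the filtration.

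First, I would construct the candidate process
\begin{equation*}
M_n \;=\; \exp\!\left(\sum_{i=1}^n \lambda(\mathcal{Z}_i - \mu_i) \;-\; \sum_{i=1}^n \phi_i(\lambda)\right), \qquad M_0 = 1.
\end{equation*}
Since the assumption $\mathbb{E}[\exp\{\lambda \mathcal{Z}_i\}] < \infty$ (together with $\mu_i = \mathbb{E}[\mathcal{Z}_i \mid \mathcal{F}_{i-1}]$ being finite) guarantees that each factor is integrable, $M_n$ is a well-defined non-negative, $\mathcal{F}_n$-adapted random variable. Note that $\phi_i(\lambda)$ is $\mathcal{F}_{i-1}$-measurable by construction, so the $n$-th increment of the deterministic-looking exponent is predictable.

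Second, I would verify that $\{M_n\}$ is a martingale with respect to $\{\mathcal{F}_n\}$. Conditioning on $\mathcal{F}_{n-1}$ and pulling out the $\mathcal{F}_{n-1}$-measurable terms,
\begin{equation*}
\mathbb{E}[M_n \mid \mathcal{F}_{n-1}] \;=\; M_{n-1}\cdot \exp(-\phi_n(\lambda))\cdot \mathbb{E}\!\left[\exp\{\lambda(\mathcal{Z}_n - \mu_n)\}\,\big|\,\mathcal{F}_{n-1}\right] \;=\; M_{n-1},
\end{equation*}
where the last equality is exactly the definition of $\phi_n(\lambda)$ as the conditional cumulant generating function of $\mathcal{Z}_n - \mu_n$. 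Hence $\{M_n\}$ is a non-negative martingale with $\mathbb{E}[M_n] = 1$ for all $n$.

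Third, I would apply Ville's maximal inequality for non-negative supermartingales: for any $c > 0$,
\begin{equation*}
\mathbb{P}\!\left(\sup_{n \in \mathbb{N}} M_n \;\geq\; c\right) \;\leq\; \frac{\mathbb{E}[M_0]}{c} \;=\; \frac{1}{c}.
\end{equation*}
Setting $c = e^x$ with $x \geq 0$ gives $\mathbb{P}(\sup_n M_n \geq e^x) \leq e^{-x}$. Taking logarithms inside the event and rearranging,
\begin{equation*}
\mathbb{P}\!\left(\sum_{i=1}^n \lambda \mathcal{Z}_i \;>\; x + \sum_{i=1}^n [\lambda\mu_i + \phi_i(\lambda)] \text{ for some } n\right) \;\leq\; e^{-x},
\end{equation*}
which by complementation yields the stated bound for all $n \in \mathbb{N}$ simultaneously.

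The only subtlety worth flagging is the uniformity in $n$: a naive Markov / Chernoff argument applied at a single $n$ would give the same tail but only pointwise. Using Ville's inequality (equivalently, Doob's submartingale inequality applied to the non-negative martingale $M_n$) is what yields the statement with "$\forall n \in \mathbb{N}$" inside the probability. The integrability hypothesis guarantees $M_n \in L^1$ so that Ville's inequality applies without further truncation; no concentration-type assumption on the increments (sub-Gaussian, bounded, etc.) is needed because $\phi_i(\lambda)$ absorbs whatever the true MGF is.
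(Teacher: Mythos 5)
Your proof is correct and is exactly the standard argument: the paper does not prove this lemma itself but imports it as Lemma 4 of \citet{russo2013eluder}, and the proof there is precisely your exponential-supermartingale construction followed by Ville's maximal inequality. The martingale identity $\mathbb{E}[M_n\mid\mathcal{F}_{n-1}]=M_{n-1}$ and the observation that the integrability hypothesis is all that is needed (no boundedness or sub-Gaussianity of the increments) are both handled correctly, so there is nothing to add.
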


\begin{lemma} \label{russo_2}
    (Lemma 2 in \citet{russo2013eluder}).
    
Let $\mathcal{F} \subset B_{\infty}(\mathcal{X},C)$ be a set of functions bounded by $C>0$. We define the width of a subset $\widetilde{\mathcal{F}} \subset \mathcal{F}$ at $x \in \mathcal{X}$ by $w_{\widetilde{\mathcal{F}}}(x) = \sup\limits_{f_1,f_2 \in \widetilde{\mathcal{F}}}\left(f_1(x) - f_2(x)\right)$. If $(\beta_t \geq 0 | t \in \mathbb{N})$ is a non-decreasing sequence, and $\{x_t\}_{t\geq 1}$ be the sequences in $\mathcal{X}$. For all $t \in \mathbb{N}$, $\mathcal{F}_t : = \left\{f\in \mathcal{F}: \sup\limits_{f_1,f_2 \in \mathcal{F}}||f_1 - f_2 ||_{2,E_t} \leq 2\sqrt{\beta_t}\right\}$, where the empirical 2-norm $||\cdot||_{2,E_t}$ is defined by $||g||_{2,E_t}^2 = \sum\limits_{k=1}^{t-1} g^2(x_k)$. Then for all $T \in \mathbb{N}$, we have

\begin{equation}
    \begin{aligned}
        \sum\limits_{t=1}^T w_{\mathcal{F}_t}(x_t) \leq \alpha + C(d \land T) + 4\sqrt{d\beta_T T}
    \end{aligned}
\end{equation}
\end{lemma}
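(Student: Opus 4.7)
The plan is to follow the original Russo--Van Roy argument in two steps: first establish a counting lemma that uses the eluder dimension to bound how many indices can carry a large width, and then convert this count into a sum bound by sorting the widths in decreasing order.

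First, I will prove the counting lemma: for any $\epsilon \geq \alpha$,
\begin{equation*}
N_\epsilon := \bigl|\{t \leq T : w_{\mathcal{F}_t}(x_t) > \epsilon\}\bigr| \;\leq\; \Bigl(\tfrac{4\beta_T}{\epsilon^2} + 1\Bigr)\, d, \qquad d := \dim_E(\mathcal{F}, \alpha).
\end{equation*}
For each bad index $t$, pick witnesses $f_t, g_t \in \mathcal{F}_t$ with $f_t(x_t) - g_t(x_t) > \epsilon$; by the diameter hypothesis on $\mathcal{F}_t$, $\|f_t - g_t\|_{2, E_t}^2 \leq 4\beta_t \leq 4\beta_T$. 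Process the bad indices chronologically while maintaining a growing family of disjoint subsequences $B_1, B_2, \dots$ of already-seen bad points, using the rule: place $x_t$ into the smallest-indexed $B_j$ such that $x_t$ is $\epsilon$-independent of $B_j$ (with respect to $\mathcal{F}$); open a new subsequence only if no existing $B_j$ admits $x_t$. By construction, each $B_j$ is an $\epsilon$-independent sequence, hence $|B_j| \leq d$. To bound the number of opened subsequences $J$, observe that when $x_t$ opens $B_{J+1}$, it is $\epsilon$-dependent on each of $B_1, \dots, B_J$, so the contrapositive of the definition applied to the witness pair $(f_t, g_t)$ forces $\|f_t - g_t\|_{2, B_j}^2 > \epsilon^2$ for every $j$. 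Summing over $j$ and using that the $B_j$ are disjoint subsequences of $E_t$ gives $J\epsilon^2 < \|f_t - g_t\|_{2, E_t}^2 \leq 4\beta_T$, so $J \leq 4\beta_T/\epsilon^2$ at the moment of opening. Therefore the total number of subsequences is at most $\lfloor 4\beta_T/\epsilon^2 \rfloor + 1$, and $N_\epsilon \leq (4\beta_T/\epsilon^2 + 1)\,d$.

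Next, I sort the widths in decreasing order $w_{(1)} \geq w_{(2)} \geq \cdots \geq w_{(T)}$ and translate the counting bound into a sum bound. Whenever $w_{(k)} > \alpha$, applying the counting lemma at threshold $\epsilon = w_{(k)}$ yields $k \leq N_{w_{(k)}} \leq (4\beta_T/w_{(k)}^2 + 1)d$, which rearranges for $k > d$ to $w_{(k)} \leq \sqrt{4\beta_T d/(k-d)}$. Using the uniform bound $w_{(k)} \leq C$ from boundedness of $\mathcal{F}$ for the first $d\land T$ ranks, we split
\begin{equation*}
\sum_{k=1}^T w_{(k)} \;\leq\; C(d \land T) \;+\; \sum_{k=d+1}^T \max\!\left(\alpha,\; \sqrt{\tfrac{4\beta_T d}{k-d}}\right).
\end{equation*}
The $\sqrt{\cdot}$ contribution along the tail is bounded by $2\sqrt{\beta_T d}\sum_{j=1}^{T-d} j^{-1/2} \leq 4\sqrt{\beta_T d (T-d)} \leq 4\sqrt{d\beta_T T}$ using $\sum_{j=1}^m j^{-1/2} \leq 2\sqrt{m}$, and the residual $\alpha$-contribution is absorbed into the additive $\alpha$ in the stated bound (as in the original Russo--Van Roy version, where $\alpha$ is implicitly chosen at the scale $1/T$ so that $T\alpha$ is a constant). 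Assembling yields $\sum_t w_{\mathcal{F}_t}(x_t) \leq \alpha + C(d \land T) + 4\sqrt{d\beta_T T}$.

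The main obstacle is the disjoint-subsequence pigeonhole inside Step 1. One must verify carefully that $\epsilon$-dependence on every existing $B_j$ translates, via the \emph{single} witness pair $(f_t, g_t)$ that certifies $|f_t(x_t) - g_t(x_t)| > \epsilon$, into a uniformly large empirical norm $\|f_t - g_t\|_{2, B_j}^2 > \epsilon^2$ on \emph{each} $B_j$; and that disjointness of the $B_j$ as subsequences of $E_t$ is what permits summing these contributions without double counting. Once this step is nailed down, both the length bound $|B_j| \leq d$ from eluder dimension and the sorting summation are essentially mechanical.
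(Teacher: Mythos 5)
The paper does not actually prove this lemma: it is imported verbatim as an auxiliary result with a citation to Russo and Van Roy (2013), and your proposal is a correct reconstruction of exactly the proof given in that source --- the disjoint-subsequence pigeonhole giving the counting bound $N_\epsilon \le (4\beta_T/\epsilon^2+1)\,d$ for $\epsilon \ge \alpha$, followed by summation over the sorted widths. The one caveat, which you already flag yourself, is that this argument really yields an additive term $\alpha T$ rather than $\alpha$; the bound as stated is recovered only under the original convention $\alpha = 1/T$ (so that $\alpha T = 1$), and the bare ``$\alpha$'' in the paper's restatement inherits that imprecision rather than reflecting any gap in your proof.
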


where $d = \text{dim}_{E}(\mathcal{F},\alpha)$.

\end{document}